\pdfoutput=1
\documentclass{article}

\usepackage[final]{ewrl_2026}

\PassOptionsToPackage{sort&compress,numbers}{natbib}

\usepackage[utf8]{inputenc} % allow utf-8 input
\usepackage[T1]{fontenc}    % use 8-bit T1 fonts
\usepackage{hyperref}       % hyperlinks
\usepackage{url}            % simple URL typesetting
\usepackage{booktabs}       % professional-quality tables
\usepackage{amsfonts}       % blackboard math symbols
\usepackage{nicefrac}       % compact symbols for 1/2, etc.
\usepackage{microtype}      % microtypography
\usepackage{xcolor}         % colors
\usepackage{titletoc}
\usepackage{amsfonts}
\usepackage{amsmath}
\usepackage{amssymb}
\usepackage{amsthm}
\usepackage{times}

\newtheorem{proposition}{Proposition}
\newtheorem{conjecture}{Conjecture}
\newtheorem{lemma}{Lemma}
\newtheorem{theorem}{Theorem}
\newtheorem{assumption}{Assumption}

\theoremstyle{definition}

\theoremstyle{plain}
\theoremstyle{remark}
\newtheorem{remark}{Remark}

\definecolor{linkblue}{RGB}{0, 70, 160}

\usepackage{balance}        % for balancing columns on the final page
\usepackage[utf8]{inputenc} % allow utf-8 input
\usepackage[T1]{fontenc}    % use 8-bit T1 fonts
\usepackage{hyperref}       % hyperlinks
\usepackage{url}            % simple URL typesetting
\usepackage{booktabs}       % professional-quality tables
\usepackage{amsfonts}       % blackboard math symbols
\usepackage{nicefrac}       % compact symbols for 1/2, etc.
\usepackage{tikz}
\usetikzlibrary{positioning, arrows.meta, calc, backgrounds, fit, spy}
\usepackage{bbm}
\usepackage{amssymb}
\usepackage{mathtools}
\usepackage{amsthm}
\usepackage{changepage}
\usepackage[most]{tcolorbox}
\usepackage{caption}

\usepackage[bottom]{footmisc}

\usepackage{graphics, graphicx}
\usepackage{xparse, xspace}
\usepackage[english]{babel}
\usepackage{upgreek}

\usepackage{tabularx}
\usepackage{enumerate, enumitem}
\usepackage{wrapfig}
\usepackage[subrefformat=parens, labelformat=parens]{subcaption}

\usepackage{setspace}
\usepackage[linesnumbered,ruled,noend]{algorithm2e}
\SetKwBlock{Cycle}{}{}
\SetKwInOut{Input}{input} 
\SetKw{KwOr}{or}
\SetAlCapHSkip{0em}

\usepackage{xfrac}
\usepackage{diagbox}
\usepackage{multirow}
\usepackage{colortbl}
\usepackage{makecell}
\usepackage{xcolor}
\usepackage{ifthen}

\definecolor{snsblue}{rgb}{0.12156862745098039, 0.4666666666666667, 0.7058823529411765}
\definecolor{snsorange}{rgb}{1.0, 0.4980392156862745, 0.054901960784313725}
\definecolor{snsgreen}{rgb}{0.17254901960784313, 0.6274509803921569, 0.17254901960784313}
\definecolor{snsred}{rgb}{0.8392156862745098, 0.15294117647058825, 0.1568627450980392}
\definecolor{snspurple}{rgb}{0.5803921568627451, 0.403921568627451, 0.7411764705882353}
\definecolor{snsbrown}{rgb}{0.5490196078431373, 0.33725490196078434, 0.29411764705882354}
\definecolor{lightblack}{rgb}{0.4, 0.4, 0.4}
\definecolor{snspink}{rgb}{0.8, 0.47058823529411764, 0.7372549019607844}

\usepackage{xifthen}
\usepackage{amsmath}
\usepackage{amsfonts}

\makeatletter
\protected\def\xvcenter{%
  \hbox\bgroup$\everyvbox{\everyvbox{}\aftergroup\m@th\aftergroup$\aftergroup\egroup}%
  \vcenter
}
\DeclareRobustCommand{\midscript}[1]{
  \mathchoice{\mid@script\scriptstyle{#1}}
    {\mid@script\scriptstyle{#1}}
    {\mid@script\scriptscriptstyle{#1}}
    {\mid@script\scriptscriptstyle{#1}}
}
\newcommand{\mid@script}[2]{
  \vcenter{\hbox{$\m@th#1#2$}}
}

\makeatletter

\usepackage{color}
\usepackage{pifont}
\makeatletter
\newcommand{\colorlabel}[1]{%
	\global\tag@true%
	\nonumber%
	\refstepcounter{equation}%
	\gdef\df@tag{\maketag@@@{{\color{#1}(\theequation)}}\def\@currentlabel{\theequation}}}
\makeatother

\newcommand{\mytilde}[0]{\mathds{\raise.17ex\hbox{$\scriptstyle\sim$}}} % tilde used for negation
\newcommand{\spacedmid}{\mspace{2mu} | \mspace{2mu}} % | with a bit of spacing
\newcommand{\evmid}{\mspace{3mu}\ifnum\currentgrouptype=16 \middle\fi|\mspace{3mu}} % adaptive-size | with some spacing, use in E[... | ...]

\DeclareMathOperator{\EV}{\mathbb{E}}

\DeclareMathOperator*{\argmax}{arg\,max}

\providecommand{\probmodel}{\mathcal{P}}

\providecommand{\statespace}{\mathcal S}
\providecommand{\actionspace}{\mathcal A}

\newcommand{\goalspace}{\mathcal{G}}

\newcommand{\ind}{\mathbf{1}}
\newcommand{\Vtheta}{V^{\raisebox{-1pt}{$\scriptstyle\theta$}}}
\newcommand{\Qtheta}{Q^{\raisebox{-1pt}{$\scriptstyle\theta$}}}

\usepackage{relsize}
\newcommand{\stt}[1]{{\relscale{0.92}\texttt{#1}}}
\newcommand{\sttt}[1]{{\relscale{0.84}\texttt{#1}}}

\title{SUN: Reaching for Novelty in Reinforcement Learning}

\author{%
  Wenyan Yang\\
  Aalto University\\
  \sttt{wenyan.yang@aalto.fi}
  \And
  Arsenii Mustafin\\
  Aalto University\\
  \sttt{arsenii.mustafin@aalto.fi}
  \And
  Dominik Baumann\\
  Aalto University\\
  \sttt{dominik.baumann@aalto.fi}
  \And
  Joni Pajarinen\\
  Aalto University\\
  \sttt{joni.pajarinen@aalto.fi}
  \And
  Simone Parisi\\
  Tampere University\\
  \sttt{simone.parisi@tuni.fi}
}

\begin{document}
\maketitle

\begin{abstract}
Exploration in reinforcement learning (RL) remains a fundamental challenge.
% : classical strategies are sample-inefficient or rely on non-stationary intrinsic rewards that may destabilize learning.
Recent goal-conditioned RL strategies (which select goals to encourage broader state coverage) have shown promising results, but none scores a goal by novelty and reachability jointly: the two signals are traded off by hand, applied in sequence, or one is neglected outright.
In this paper, we introduce a reachability-aware goal-selection framework that explicitly integrates these two aspects, and that can be seamlessly incorporated into any off-policy RL algorithm.
To this aim, we propose \textit{SUccessor-to-Novelty (SUN)}, an indicator derived from successor value functions to identify goals that are both novel and reachable. 
% We discuss its properties and establish three theoretical results: an equivalence to count-based exploration bonuses, a short-horizon hitting-probability bound, and a guarantee that unreachable goals are rejected.
We prove that SUN recovers count-based bonuses in the limit, bounds short-horizon hitting probabilities, and provably rejects unreachable goals.
We further present an adaptive goal-selection strategy that leverages these properties, and an accurate yet lightweight pseudocount to avoid the overhead of classic methods.
% Finally, we introduce new benchmarking environments (with unreachable states and irreversible transitions) that directly test reachability-aware exploration. SUN consistently outperforms state-of-the-art methods on these and standard benchmarks.
We back up all our claims with thorough benchmarks: SUN consistently outperforms state-of-the-art methods in standard and novel environments with unreachable or hard-to-reach states, irreversible transitions, obstacles, mazes, and unbounded spaces.

\end{abstract}

% \begin{figure}[h]
%     \vspace*{-0.5em}
%     \centering
%     \begin{minipage}{0.44\textwidth}
%         % \includegraphics[width=\linewidth]{plots/plots_intro/legend_auc}
%         % \\[-2pt]
%         \begin{subfigure}[b]{0.49\textwidth}
%             \includegraphics[width=\linewidth]{\detokenize{plots/plots_intro/sa_pct_auc}}
%             \caption{State coverage.}
%             \label{fig:intro_coverage}
%         \end{subfigure}
%         \hfill
%         \begin{subfigure}[b]{0.49\textwidth}
%             \includegraphics[width=\linewidth]{plots/plots_intro/sa_h_auc}
%             \caption{State entropy.}
%             \label{fig:intro_entropy}
%         \end{subfigure}
%     \end{minipage}
%     \hfill
%     \begin{minipage}{0.54\textwidth}
%         \includegraphics[width=\linewidth]{plots/plots_intro/legend_auc}
%         \\[-18pt]
%         \caption{\textbf{Results summary.} Area under the curve of plots in Section \ref{subsec:results}, averaged over all environments: SUN explores more states \subref{fig:intro_coverage} more uniformly \subref{fig:intro_entropy}.}
%         \label{fig:intro_bars}
%     \end{minipage}
%     \vspace*{-1.2em}
% \end{figure}

\section{Introduction}
\label{sec:intro}

Exploration is fundamental to reinforcement learning (RL): without effective exploration, agents collect uninformative data and fail to learn.
%, especially in sparse-reward or reward-free settings.
Classical dithering schemes, such as $\varepsilon$-greedy and entropy regularization, ignore environment structure and are sample-inefficient. Provably efficient algorithms \citep{auer2002finite,strehl2008analysis,jaksch2010near} offer strong guarantees but do not scale to large state spaces. Intrinsic motivation methods \citep{pathak2017curiosity,burda2019exploration,parisi2021interesting} require careful tuning and are non-stationary by construction: as the agent explores, the intrinsic reward shifts beneath the policy trained on it, destabilizing learning \citep{burda2019exploration}. 
\\[2pt]
A more recent family casts exploration as \emph{goal-conditioned RL} (GCRL) \citep{liu2022goal,colas2022autotelic}, where the agent follows a goal-conditioned policy trained on a stationary goal-reaching objective. Different \emph{goal-selection} mechanisms lead to different exploration strategies, but most of the existing work captures only half the picture. Density-based methods such as MEGA \citep{pitis2020maximumentropy}, Skew-Fit \citep{pong2020skewfit}, GoalGAN \citep{florensa2018automatic}, and Hindsight Goal Generation \citep{ren2019exploration} score goals by novelty, committing to rare goals that may be unreachable. Conversely, methods based on distances or success probabilities \citep{schaul2015universal,hartikainen2016dynamical} optimize reachability alone, neglecting rare but achievable goals. Neither extreme captures the right intuition: \emph{a useful exploration goal is one that is novel \textbf{and} reachable}. Figure \ref{fig:3room_intro} summarizes this problem.
% \\[2pt]
In this paper, we address this gap with the following contributions. 
\\[1pt]
\textbf{(1)} We present a GCRL exploration framework with a goal-selection mechanism to identify goals that are both novel and reachable. Its core is the \textit{{SU}ccessor-to-{N}ovelty {(SUN)}} indicator: \emph{reachability} is estimated via \emph{successor value functions} \citep{dayan1993improving}, and \emph{novelty} via \emph{pseudocounts}. SUN is compatible with any off-policy RL algorithm; in this paper, we instantiate it with DQN \citep{mnih2015human} and TD3 \citep{fujimoto2018addressing}.
%, and SAC \citep{haarnoja2018soft}.
\\[1pt]
\textbf{(2)} We present an accurate yet lightweight pseudocount that avoids the overhead of density-based methods, enabling efficient exploration with $\mathcal{O}(1)$ query cost.
\\[1pt]
\textbf{(3)} We show that SUN is the value of a goal-conditioned count-bonus reward, gives a closed-form lower bound on the short-horizon hitting probability, and provably suppresses unreachable goals.
% --- properties that no pure-novelty or pure-reachability score satisfies simultaneously.
\\[1pt]
\textbf{(4)} We introduce new benchmarking environments with unreachable states and irreversible transitions that directly stress-test reachability-aware exploration, and show that SUN consistently outperforms state-of-the-art methods on these and standard benchmarks.
% \vspace*{-4pt}
% \begin{center}
% \textit{Together, these contributions establish SUN as a principled and practical solution to the long-standing tension between novelty and reachability in exploration.}
% \end{center}
% \vspace*{-4pt}
\begin{tcolorbox}[colback=gray!10, colframe=gray!40, boxrule=0.5pt, arc=2pt, left=6pt, right=6pt, top=3pt, bottom=3pt]
Together, these contributions establish SUN as a principled and practical solution to the long-standing tension between novelty and reachability in exploration.
\end{tcolorbox}

SUN exploration fits in the field of reward-free exploration and goal-conditioned RL, and is especially close to the work of \citet{tarbouriech2022adaptive} (AdaGoal) and \citet{diaz-bone2025discover} (DISCOVER) in its use of successor value functions to drive exploration. Both also balance novelty and reachability, but estimate novelty through critic-ensemble disagreement, which is computationally expensive and, as we show, leaves both methods poorly calibrated between the two signals. SUN sidesteps these issues with a lightweight pseudocount and a novel goal-selection strategy (Section~\ref{sec:method}). Across all our benchmarks, it consistently and substantially outperforms both methods.

% \begin{figure}[t]
% \centering
% \includegraphics[width=0.46\columnwidth]{fig/3room_intro}
% \caption{\label{fig:3room_intro}\textbf{Reachability or novelty are not enough.} At every episode, the agent spawns in the top-left tile of one of two isolated rooms. After exploring for some time, the second room has been rarely visited due to its lower spawning rate. Heatmaps show the score assigned to each tile by different goal-selection indicators when the agent is in the top-left corner (red boxes mark the selected goal). \emph{Novelty alone} (e.g., visit counts inverse) picks tiles in the second room, which the agent cannot reach. \emph{Reachability alone} (e.g., distance) picks the agent's current tile, leading to no exploration. Only \emph{novelty and reachability combined} selects the least-visited tile \emph{within reach}, in the first room. While simple, this example highlights the importance of considering both reachability and novelty in RL exploration, and raises the central question of this paper: how to encode, learn, and combine reachability and novelty? Our SUN indicator provides principled answers.}
% \end{figure}

\begin{figure}[t]
\begin{minipage}{\columnwidth}
\begin{wrapfigure}{l}{0.481\columnwidth}
  \vspace{-\baselineskip}
  \vspace*{-3pt}
  \centering
  \includegraphics[width=0.48\columnwidth]{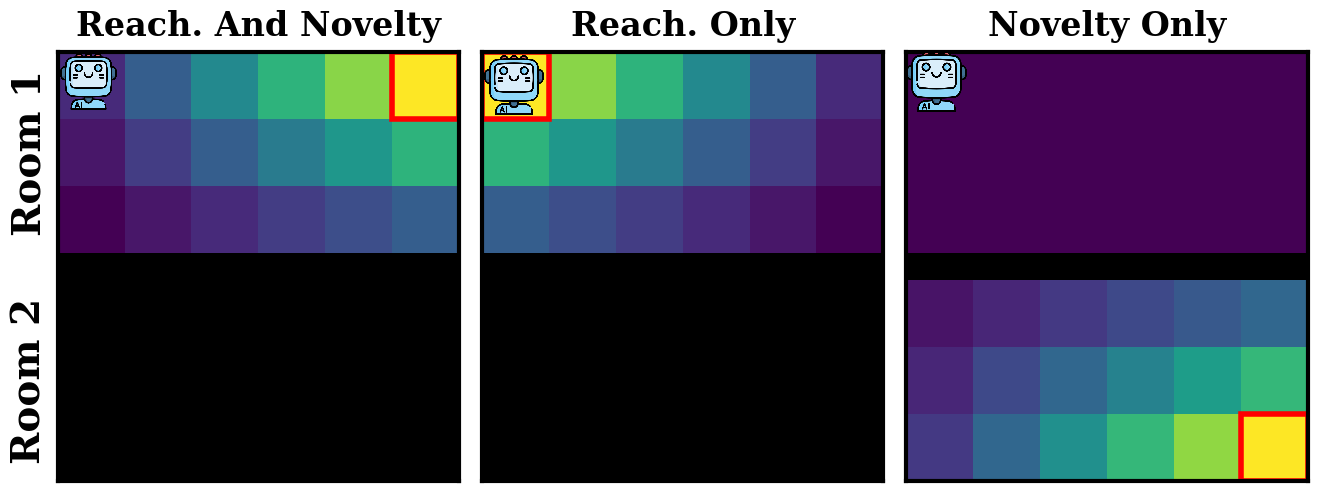}
  \vspace{-\baselineskip}
  \vspace*{-12pt}
\end{wrapfigure}
\refstepcounter{figure}\label{fig:3room_intro}%
\noindent\textbf{Figure \thefigure:} \textbf{Reachability or novelty are not enough.} At every episode, the agent spawns in one of two isolated rooms. After exploring for some time, the second room has been rarely visited due to its lower spawning rate. Heatmaps show the score assigned to each tile by different goal-selection scores when the agent is in the top-left corner (red boxes mark the selected goal). \emph{Novelty alone} (e.g., visit counts inverse) picks tiles in the second room, which the agent cannot reach. \emph{Reachability alone} (e.g., distance) picks the agent's current tile, leading to no exploration. Only \emph{novelty and reachability combined} selects the least-visited tile \emph{within reach}. While simple, this example shows the importance of considering both reachability and novelty in exploration, and raises the central question of this paper: how to encode, learn, and combine reachability and novelty? Our SUN indicator provides principled answers.
\end{minipage}
\end{figure}

\section{Problem Setting}
\label{sec:prelim}

\textbf{Optimal exploration.} A \textit{reward-free} 
% (or task-agnostic) 
Markov Decision Process (MDP) is defined by the tuple $\langle \statespace, \actionspace, \probmodel, p_0 \rangle$, where $\statespace$ is the state space, $\actionspace$ is the action space, $\probmodel(s' \spacedmid s, a)$ is the transition function, and $p_0$ is the initial state distribution. The objective is to explore the state space ``optimally'' without any task-specific reward. Two main lines of work formalize this notion of optimality differently.% --- \textit{and SUN bridges the two}.
\\[2pt]
The first line targets the state-visitation distribution: the goal is to learn a policy whose induced distribution maximizes a desired criterion, typically the entropy~\citep{hazan2019provably,lee2019efficient,mutti2021task,zhang2021renyi,jain2023maximum,adamczyk2026eve}. A maximum-entropy state-visitation distribution corresponds to uniform coverage of the state space, and provably efficient algorithms exist for this objective in tabular MDPs.
\\[2pt]
The second line frames exploration as goal-conditioned RL (GCRL) and the agent learns goal-conditioned policies $\pi(a \spacedmid s, g)$~\citep{lim2012autonomous,tarbouriech2020improved,tarbouriech2022adaptive}. The goal $g \in \goalspace$ may be a subset of the state, of the joint state-action, or of a learned representation thereof.\footnote{The goal space is environment-dependent. For example, discrete actions may highlight relevant dynamics (e.g., ``pick'' or ``push'' may terminate the episode) and exploration should \textit{explicitly} consider them. On the contrary, the state alone may be sufficient if it carries all information (e.g., agent pose in control tasks).} A goal $g$ is said to be \emph{reachable} from a reference state $s_0$ if there exists a policy $\pi$ that reaches $g$ from $s_0$ in bounded expected time. The objective is then to learn policies that can visit every goal reachable given different reference states. This formulation directly captures the intuition that exploration should focus on states the agent can actually reach, but does not specify a target visitation distribution.
\\[2pt]
Both objectives are principled but solving them exactly requires machinery --- e.g., Frank-Wolfe schemes for max-entropy~\citep{hazan2019provably}, PAC-style algorithms for reachable coverage~\citep{tarbouriech2020improved,tarbouriech2022adaptive} --- that does not scale to deep RL. Practical methods therefore approximate these objectives with greedy or local heuristics: thanks to careful goal-selection mechanisms, following $\pi(a \spacedmid s, g)$ induces a state-visitation distribution with broad and uniform state coverage. Our work follows this pragmatic line: we design a goal-selection rule that, at each step, prefers goals that are both underrepresented in the agent's current visitation distribution \emph{and} reachable.% under the current policy. 

\textbf{Successor Value Functions.}
% \label{subsec:w-functions}
In GCRL literature, reachability is often encoded with the \emph{successor value function (SVF)}~\citep{dayan1993improving, kulkarni2016deep,blier2021learning,eysenbach2022contrastive,zheng2024contrastive}, which generalizes the value function and represents the cumulative $\gamma$-discounted occurrence of a goal $g$ under a policy $\pi$, i.e.,
\begin{equation}
    \setlength{\abovedisplayskip}{3pt}
    \setlength{\belowdisplayskip}{3pt}
    V^\pi(s_t, g) = \mathbb{E}\left[{\footnotesize\sum}_{k=t}^{\infty}\gamma^{k - t}\ind{\scriptstyle{\{s_k = g\}}} \evmid \pi, \probmodel, s_t\right], \label{eq:w-function}
\end{equation}
where $\gamma \in [0, 1)$ and $\ind\scriptstyle{\{s_k = g\}}$ is the reward function returning 1 if $s_k = g$ and 0 otherwise. 
The state-action analogue $Q^\pi(s_t, a_t, g)$ is defined likewise, and both admit a Bellman recursion as in standard value functions, with $V^\pi(s_t, g) = \max_a Q^\pi(s_t, a, g)$.
% The goal-conditioned policy is then trained to maximize the SVF as in classic RL. 
Similarly to classic value functions, SVFs are often approximated with parameterized functions $\Vtheta(s_t, g)$, and training them is a well-studied problem. In this paper, we rely on Hindsight Experience Replay (HER)~\citep{andrychowicz2017hindsight}.
\\[2pt]
In GCRL, once the goal is given (e.g., a desired robot pose or an environment coordinate), greedily following the SVF leads the agent to it, as the value increases the fewer steps are needed to reach the goal.\footnote{This holds because goals are \textit{terminal}: reaching $g$ ends the episode, so Eq.~\eqref{eq:w-function} reduces to $V^\pi(s_t, g) = \EV_\pi[\gamma^{\tau_g}] \in [0, 1]$, where $\tau_g$ is the hitting time of $g$. The SVF is thus a discounted reachability score.}
% rather than an occupancy.}
% \\[2pt]
This same mechanism extends naturally to reward-free exploration: \textit{select a goal appropriately} --- unlike in GCRL, the goal is not given by the task --- and then follow the SVF to reach it. The goal-selection is what determines whether exploration is optimal: a well-designed mechanism would guarantee coverage and uniformity over the goal space $\goalspace \subseteq \statespace \!\times\! \actionspace$. 
\begin{tcolorbox}[colback=gray!10, colframe=gray!40, boxrule=0.5pt, arc=2pt, left=6pt, right=6pt, top=4pt, bottom=4pt]
With these tools in hand, our method must address three concrete subproblems. First, how to combine the reachability and novelty signals into a single indicator. Second, how to design an effective goal-selection strategy given the above indicator. Third, how to compute a novelty signal that is both accurate and cheap enough to query at scale.
\end{tcolorbox}

% The reward in Eq.~\eqref{eq:w-function} is sometimes replaced by alternatives that target the same quantity through different reward shapings. For example, returning $-1$ until $s_i$ is reached and $0$ thereafter recovers a negative-distance interpretation~\citep{schaul2015universal,andrychowicz2017hindsight}. 
% % Continuous variants based on similarity kernels have also been used~\citep{pong2018temporal}. 
% The condition $s_k = s_i$ can also be relaxed to apply over a feature map rather than raw states, yielding the {successor features} of~\citet{barreto2017successor}, which generalize the SVFs to settings where exact state matching is impractical.

\section{Exploration via SUN}
\label{sec:method}

We present our answer to the three subproblems above: \textbf{SUN} (\textbf{SU}ccessor-to-\textbf{N}ovelty), a goal-selection indicator that combines an SVF-based reachability signal with a novelty signal in a single score.
\begin{tcolorbox}[colback=gray!10, colframe=gray!40, boxrule=0.5pt, arc=2pt, left=6pt, right=6pt, top=0pt, bottom=0pt]
\begin{equation}
\label{eq:sun-score}
\mathrm{SUN}(g \spacedmid s) \triangleq V^\pi(s, g) \cdot \nu(g),
\qquad
g_t = \argmax_{g \in \goalspace} \; \mathrm{SUN}(g \spacedmid s_t),
\end{equation}
\end{tcolorbox}
where $V^\pi(s_t, g)$ is the SVF estimating reachability of $g$ from $s_t$ under the goal-conditioned policy, and $\nu(g)$ is a novelty signal. Computing the $\argmax$ in Eq.~\eqref{eq:sun-score} is not feasible in continuous or large goal spaces, so we restrict it to a finite candidate set $\mathcal{C}_t \subset \goalspace$ sampled from a replay buffer. %~\citep{mnih2015human}. 
This choice pairs naturally with off-policy algorithms, which already maintain a buffer for training.
\\[2pt]
The rest of this section is organized as follows. Section~\ref{subsec:theory} presents properties that justify the indicator; Section~\ref{subsec:adaptive_selection} describes a novel goal-selection strategy that leverages these properties; Section~\ref{subsec:pseudo} introduces our novelty estimator; and Section~\ref{subsec:summary} summarizes SUN and its relation to prior work.

\subsection{The Indicator and Its Properties}
\label{subsec:theory}
We informally describe three properties of SUN that justify Eq. \eqref{eq:sun-score} and that we will invoke in subsequent sections; proofs are in Appendix~\ref{app:theory}.
\begin{itemize}[leftmargin=15pt, noitemsep, topsep=2pt]
\item[(a)] \textit{Count-bonus equivalence.} If $\nu(g) = 1/n_g$ where $n_g$ is the goal visit count, SUN equals the value function of a reward inversely proportional to $n_g$, i.e., $\EV[\sum_{k=t} \gamma^{k-t} \ind{\scriptstyle{\{s_k = g\}}} / n_g]$. Thus SUN is not simply a product of two signals, but the value of a count-bonus objective.
\item[(b)] \textit{Reachability guarantee.} The reachability factor $V^\pi(s, g)$ controls hitting time: a high SVF value implies a high probability of reaching $g$ within a short horizon, $\Pr_{\pi_g}[\tau_g \leq n] \geq V^{\pi_g}(s, g) - \gamma^{n+1}$, where $\tau_g$ is the hitting time. The horizon scales as $\mathcal{O}(\log(1/V) / (1 - \gamma))$.
\item[(c)] \textit{Unreachable goals are suppressed.} If no policy in the agent's class can reach $g$, then $V^\pi(s, g) = 0$ for all such policies, and $\mathrm{SUN}(g \spacedmid s) = 0$ regardless of $\nu(g)$. This is the formal counterpart of Figure~\ref{fig:3room_intro}: novelty alone selects unreachable goals, while SUN does not.
\end{itemize}

\textbf{Why not an additive indicator?}
Common exploration strategies combine reachability and novelty \textit{additively}~\citep{diaz-bone2025discover}. Indeed, SUN's indicator could equally be defined additively as $V^\pi(s, g) + \nu(g)$, which admits standard UCB-style confidence bounds and PAC guarantees when $\nu$ is based on visit counts (see Appendix~\ref{app:pac}). However, additive formulations are sensitive to the relative scale of the two terms and typically require a tuning coefficient to balance them, especially if the SVF is approximated as in $\Vtheta$. 
The multiplicative form removes the need to \textit{calibrate} the two terms against each other: they share a common ``zero'' (an unreachable or already-saturated goal scores zero on either factor and is rejected regardless of the other) and a common, known scale (both are non-negative and bounded by one). The trade-off is not thereby eliminated --- in log-space it is set by $\kappa = -\log\gamma$ (Appendix~\ref{subsec:thm-logspace}) --- but it is fixed and inherited from the discount used for value learning, rather than being a free coefficient that must be re-tuned whenever the scale of the novelty signal changes (see Section \ref{subsec:summary}).
In Section~\ref{subsec:ablation_results} we compare SUN against an additive UCB-style variant and show that the multiplicative indicator performs significantly better.

\subsection{When Should The Agent Select A Goal? Adaptive Goal-Selection Strategy}
\label{subsec:adaptive_selection}
If $\Vtheta$ were exact, acting greedily with respect to it would be optimal --- the best goal would be selected and reached in finite time (Appendix~\ref{subsec:thm-perstep}). Thus, \textit{episodic goal-selection} --- selecting the goal at the start of an episode and keeping it fixed until reached --- would be optimal. However, $\Vtheta$ is learned and approximate, and the agent may commit to unreachable goals, potentially not exploring at all. Similarly, under stochastic transitions the agent may suddenly find itself in states where the previously-selected goal is no longer reachable. 
% Episodic goal-selection therefore performs poorly in practice.
% \\[1pt]
The opposite strategy, \textit{per-step goal-selection}, compares the current goal against a fresh candidate set at every timestep to find a potentially better one. This can prevent commitment to unreachable goals, e.g., after a wrong action or a noisy transition. However, this strategy can be too unstable: as $\Vtheta$ is being learned, goal values shift quickly and the agent may pick different goals at every timestep, acting near-randomly.
\\[2pt]
For these reasons, we propose a novel \textit{adaptive} strategy, inspired by the theoretical properties of the SVF. Under deterministic dynamics, the true value at the current state should be monotonically non-decreasing along the trajectory toward the selected goal: as the agent moves closer, $V^\pi(s_t, g)$ grows. Thus, a drop in $\Vtheta(s_t, g)$ signals that the goal is either unreachable from the current state, or that the \textit{approximate} SVF was inaccurate at the time of selection --- in either case, the goal is no longer a reliable target. Concretely, at each step $t$ we compare the current value against the value at selection time $t_{\mathrm{sel}}$: if $\Vtheta(s_t, g) < \Vtheta(s_{t_{\mathrm{sel}}}, g)$, the current goal is discarded and a new one is selected from a freshly sampled candidate set; otherwise, the current goal is kept.\footnote{Note that under exact $\Vtheta$ and deterministic dynamics, all three strategies coincide (Appendix \ref{subsec:thm-perstep}).}

\begin{tcolorbox}[colback=gray!10, colframe=gray!40, boxrule=0.5pt, arc=2pt, left=6pt, right=6pt, top=4pt, bottom=4pt]
This adaptive strategy preserves the stability of episodic commitment, and inherits per-step reselection's ability to escape bad commitments --- but only if the SVF changes frequently (because it is still being learned) or if it signals that something has gone wrong (e.g., due to environment noise). Section \ref{subsec:ablation_results} empirically validates our strategy.
\end{tcolorbox}

\subsection{Novelty Via Lightweight Pseudocounts}
\label{subsec:pseudo}
SUN combines two signals: reachability via SVFs and novelty. The reachability side is handled by learning $V^\pi$ with HER~\citep{andrychowicz2017hindsight} (see Appendix \ref{app:notes}). 
% We now describe our novelty estimator, which is the main contribution of this subsection.
% \\[2pt]
The novelty signal $\nu(g)$ in Eq.~\eqref{eq:sun-score} can be instantiated in many ways~\citep{pathak2017curiosity,burda2019exploration}. A principled choice is visit counts or density estimates~\citep{bellemare2016unifying,tang2017exploration}, so that rarely-visited goals receive a high novelty score: $\nu(g) = 1/n_g$, where $n_g$ is the number of times $g$ has been visited. In continuous spaces $n_g$ cannot be tracked exactly and must be approximated by a pseudocount. Since we query $\nu(g)$ against many candidate goals $\mathcal{C}_t$ multiple times per episode, the pseudocount must be lightweight to compute --- standard approaches such as kernel density estimation (KDE) or neural density models do not satisfy this requirement. We instead propose a pseudocount that amortizes its cost into buffer insertion: counts are precomputed and stored alongside each buffer entry, making queries cheap.
\\[2pt]
For each entry in the replay buffer at index $i$, we store a count $n_i$ of entries within a neighborhood of radius $\rho$ of $s_i$ in standardized feature space. When a new sample is inserted, we compute its count (number of neighbors plus one for itself) and increment the neighbors' counts. Figure~\ref{fig:pseudocount} illustrates the procedure; Appendix~\ref{app:pseudocount} gives the standardization scheme and implementation details.
% \\[2pt]
This is a fixed-radius nearest-neighbor density estimator with the cost moved from query time to insertion time. Its benefits are the following.
\begin{itemize}[leftmargin=*, noitemsep, topsep=-2pt]
\item The radius $\rho$ is a single hyperparameter, applied in standardized space. Standardization makes a single scalar radius meaningful across features: without it, a separate radius would be needed for each feature dimension to account for differences in native scale.
\item By incrementing the count of every neighbor of the new sample, the stored counts are maintained across the entire buffer without ever recomputing them from scratch. At query time, the novelty of a candidate goal $g$ is read directly from the replay buffer: $\nu(g) = 1/n_g$. This has cost $\mathcal{O}(1)$.
\item The insertion cost is $\mathcal{O}(N \!\cdot\! M)$ in the buffer size $N$ and goal dimensionality $M$. Classic KDE costs $\mathcal{O}(N \!\cdot\! M \!\cdot\! B)$ per step for $B$ candidate goals, and neural density models can be even more expensive.
\end{itemize}

\begin{figure}[t]
\centering
\begin{minipage}[c]{0.37\textwidth}
\centering
\resizebox{\linewidth}{!}{%
\begin{tikzpicture}[
  % baseline=(current bounding box.north),
  every node/.style={inner sep=0pt},
  buf/.style={circle, draw=black, fill=gray!15, line width=0.5pt, minimum size=0.50cm, font=\scriptsize},
  bufN/.style={circle, draw=black, fill=blue!25, line width=0.7pt, minimum size=0.50cm, font=\scriptsize},
  bufNew/.style={circle, draw=black, fill=orange!60, line width=0.9pt, minimum size=0.60cm, font=\scriptsize\bfseries},
  radius/.style={draw=orange!80!black, line width=0.7pt, dashed},
  panellabel/.style={font=\small\bfseries, anchor=south},
]

% =========================================================
% LEFT PANEL: buffer state BEFORE insertion
% =========================================================
\begin{scope}[xshift=0cm]

  \node[buf]  at (0.4, 0.3) {3};
  \node[buf]  at (1.0, 0.8) {2};
  \node[buf]  at (1.4, 0.0) {4};
  \node[buf]  at (2.0, 0.6) {2};
  \node[buf]  at (0.6, 1.6) {1};
  \node[buf]  at (1.8, 1.7) {2};
  \node[buf]  at (0.2, 1.0) {1};
\end{scope}

% =========================================================
% RIGHT PANEL: after inserting new sample
% =========================================================
\begin{scope}[xshift=3.5cm]

  \node[buf]  at (0.4, 0.3) {3};
  \node[bufN] at (1.0, 0.8) {3};
  \node[buf]  at (1.4, 0.0) {4};
  \node[bufN] at (2.0, 0.6) {3};
  \node[buf]  at (0.6, 1.6) {1};
  \node[buf]  at (1.8, 1.7) {2};
  \node[buf]  at (0.2, 1.0) {1};

  % New sample
  \node[bufNew] at (1.5, 0.95) {3};

  % Radius circle around new sample
  \draw[radius] (1.5, 0.95) circle (0.65cm);

  % Small radius indicator
  % \draw[->, >=stealth, line width=0.5pt, orange!80!black] (1.5, 0.95) -- (2.16, 1.1);
  \node[anchor=west, color=orange!80!black] at (2.2, 1.1) {$\rho$};
\end{scope}

% =========================================================
% ARROW between panels
% =========================================================
\draw[-{Stealth[length=4mm, width=3mm]}, line width=1.5pt] (2.4, 0.85) -- (3.3, 0.85);
\node[anchor=south, align=center] at (2.8, 1.1) {Insert};
\end{tikzpicture}%
}
\end{minipage}%
\hfill
\begin{minipage}[c]{0.605\textwidth}
\caption{\textbf{Pseudocount via replay buffer neighbors.} Each buffer entry stores a count of its neighbors within radius $\rho$ (\textit{in standardized feature space}). When a new sample is inserted, its count is set to the number of neighbors within $\rho$ plus one (for itself), and each neighbor's count is incremented.}
\label{fig:pseudocount}
\end{minipage}
\vspace*{-5pt}
\end{figure}

\begin{tcolorbox}[colback=gray!10, colframe=gray!40, boxrule=0.5pt, arc=2pt, left=6pt, right=6pt, top=4pt, bottom=4pt]
When goals may be reselected at any step, classic methods are prohibitively expensive while our pseudocount remains tractable. By moving the cost from query to insertion time and standardizing across features, we obtain a novelty estimator that is accurate yet lightweight. We validate its accuracy in Appendix~\ref{app:pseudocount_ablation} and report wall-clock costs in Appendix~\ref{app:compute}.
\end{tcolorbox}

\subsection{Summary and Related Work}
\label{subsec:summary}
\begin{wrapfigure}{l}{0.45\linewidth}
\vspace*{-12pt}
\begin{algorithm}[H]
\footnotesize
\DontPrintSemicolon
\SetKwFunction{FSelectGoal}{SelectGoal}
\SetKwProg{Fn}{Function}{:}{\KwRet}
\caption{SUN Exploration \\(During One Episode)}\label{alg:sun}
\Fn{\FSelectGoal{$s_t, \mathcal{D}$}}{
    $s_{t_{\mathrm{sel}}} \leftarrow s_t$\;
    $\mathcal{C}_t \sim \mathcal{D}$\;
    \Return $\argmax_{g \in \mathcal{C}_t} \mathrm{SUN}(g \spacedmid s_t)$\;
}
\BlankLine
\For{$t = 0 \ldots T$}{
    $\texttt{begin} \leftarrow t = 0$\;
    $\texttt{reached} \leftarrow \|s_t - g_t\| < \eta$\;
    $\texttt{adapt} \leftarrow \Vtheta(s_t, g_t) < \Vtheta(s_{t_{\mathrm{sel}}}, g_t)$\;
    \uIf{\texttt{begin} \KwOr \texttt{reached} \KwOr \texttt{adapt}}{
        $g_t \leftarrow$ \FSelectGoal{$s_t, \mathcal{D}$}\;
    }
    \Else{
        $g_t \leftarrow g_{t-1}$\;
    }
    $a_t \sim \pi(\cdot \mid s_t, g_t)$\;
    $s_{t+1} \sim \probmodel(\cdot \mid s_t, a_t)$\;
    \texttt{Update(}$\mathcal{D}, s_t, a_t, s_{t+1}$\texttt{)}
}
\end{algorithm}
\vspace*{-15pt}
\end{wrapfigure}
Algorithm \ref{alg:sun} summarizes SUN exploration. 
At the beginning of an episode, the agent selects the goal $g_0$ according to Eq.~\eqref{eq:sun-score}, with candidates (and their pseudocounts) sampled from the replay buffer. The initial state $s_0$ is saved as $s_{t_\mathrm{sel}}$ (state at selection time). At every timestep, if $\Vtheta(s_t, g_t) < \Vtheta(s_{t_\mathrm{sel}}, g_t)$ or if the agent has reached the current goal, a new goal $g_t$ is selected from a fresh batch of candidates and $s_{t_\mathrm{sel}}$ is updated; if not, $g_t$ is kept.
Then, the agent acts to explore with $a_t \sim \pi(\cdot \spacedmid s_t, g_t)$, the new sample is inserted into the replay buffer, and pseudocounts are updated. 
% \\[2pt]
The policy and the SVF are trained with any off-policy algorithm and goal relabeling \citep{andrychowicz2017hindsight} (Appendix~\ref{app:notes}).
\\[2pt]
This scheme can be applied to classic reward-driven RL: the goal-conditioned policy drives exploration to collect environment rewards, and task-specific value function and policy are trained off-policy.

% \begin{algorithm}[b]
% \footnotesize                 
% \DontPrintSemicolon
% \vspace{3pt}
% \SetKwFunction{FSelectGoal}{SelectGoal}
% \SetKwProg{Fn}{Function}{:}{\KwRet}
% \Fn{\FSelectGoal{$s_t, \mathcal{D}$}}{
%     $s_{t_{\mathrm{sel}}} \leftarrow s_t$ \tcp*{store state at selection time}
%     $\mathcal{C}_t \sim \mathcal{D}$ \tcp*{batch of candidates $(g, n_g)$ from buffer $\mathcal{D}$}
%     \Return $\argmax_{g \in \mathcal{C}_t} \mathrm{SUN}(g \spacedmid s_t)$ \tcp*{with $\nu(g) = 1/n_{g}$}
% }
% \vspace{3pt}
% \caption{\label{alg:sun}SUN Exploration (During One Episode)}
% % \setstretch{1.1}
% \For{$t = 0 \ldots T$}{
%     \If{$t = 0$ \KwOr \textup{\texttt{Reached(}$g_t$\texttt{)}} \KwOr $\Vtheta(s_t, g_t) < \Vtheta(s_{t_{\mathrm{sel}}}, g_t)$}{
%         $g_t \leftarrow$ \FSelectGoal{$s_t, \mathcal{D}$} %\tcp*{value dropped: reselect}
%     }
%     \Else{
%         $g_t \leftarrow g_{t-1}$
%     }
%     $a_t \sim \pi(\cdot \mid s_t, g_t)$ \tcp*{exploration action}
%     $s_{t+1} \sim \probmodel(\cdot \mid s_t, a_t)$ \tcp*{environment step}
%     \texttt{Update(}$\mathcal{D}, s_t, a_t, s_{t+1}$\texttt{)} \tcp*{update buffer and pseudocounts}
% }
% \end{algorithm}

\textbf{AdaGoal and DISCOVER.} Throughout Sections~\ref{sec:prelim}--\ref{sec:method}, we have discussed how SUN relates to prior work along several axes: reward-free exploration~\citep{hazan2019provably,tarbouriech2020improved,tarbouriech2022adaptive}, GCRL~\citep{schaul2015universal}, hindsight relabeling~\citep{andrychowicz2017hindsight,eysenbach2022contrastive,zheng2024contrastive}, and count- or density-based novelty~\citep{bellemare2016unifying,pong2020skewfit,pitis2020maximumentropy,burda2019exploration}. Here, we focus on the two methods closest in spirit to SUN: AdaGoal~\citep{tarbouriech2022adaptive} and DISCOVER~\citep{diaz-bone2025discover}. Both select goals using an ensemble of SVFs $\{V^{\theta_1}, \ldots, V^{\theta_K}\}$, with mean $\mu(s, g)$ and standard deviation $\sigma(s, g)$. Formally,

{%
\setlength{\abovedisplayskip}{6pt}%
\setlength{\abovedisplayshortskip}{6pt}%
\setlength{\belowdisplayskip}{-2pt}%
\setlength{\belowdisplayshortskip}{-2pt}%
% \begin{align*}
% \text{SUN} \quad & g_t = \argmax_{g \in \mathcal{C}_t} \; \Vtheta(s_t, g) \, / \, n_g \\
% \text{DISCOVER}\footnotemark \quad & g_t = \argmax_{g \in \mathcal{C}_0} \; \mu(s_0, g) + \beta\, \sigma(s_0, g) \\
% \text{AdaGoal}\footnotemark \quad & g_t = \argmax_{g \in \mathcal{C}_0} \; \sigma(s_0, g)
% \end{align*}
\begin{align*}
\small
\underbrace{g_t = \argmax_{g \in \mathcal{C}_t} \; \Vtheta(s_t, g) \, / \, n_g \,}_{\text{SUN}\vphantom{\footnotemark[1]}}
\qquad
\underbrace{g_t = \argmax_{g \in \mathcal{C}_0} \; \mu(s_0, g) + \beta\, \sigma(s_0, g) \,}_{\text{DISCOVER}\footnotemark}
\qquad
\underbrace{g_t = \argmax_{g \in \mathcal{C}_0} \; \sigma(s_0, g)}_{\text{AdaGoal}\footnotemark}
\end{align*}
}%
\footnotetext[4]{This is the pure-exploration variant of DISCOVER (referred to as ``achievability + novelty'' in its paper). Its full objective, designed for GCRL, additionally combines this with a direction term toward a task-specific goal.}%
% \footnotetext[5]{Two notes. First, this is the deep-RL variant of AdaGoal, where $\sigma$ approximates a prediction-error upper bound. The tabular version further constrains the $\argmax$ to the expected hitting time as a proxy for reachability, but the deep-RL variant does not. Second, \citeauthor {tarbouriech2022adaptive} apply the $\softmax$ operator rather than $\max$. Here, we use $\max$ for consistency with SUN and DISCOVER, but all methods can use $\softmax$ as well.}%
\footnotetext[5]{This is the deep-RL variant of AdaGoal, where $\sigma$ approximates a prediction-error. The tabular version constrains the $\argmax$ to the expected hitting time as a proxy for reachability, but the deep-RL variant does not.}%

The mean $\mu$ serves as a reachability signal, and the disagreement $\sigma$ as an epistemic-uncertainty signal (goals on which the ensemble disagrees are those the agent cannot reliably reach). AdaGoal explicitly frames this as ``selecting uncertain goals'' and proves PAC guarantees; DISCOVER frames the same quantity as ``novelty'' since uncertainty correlates with under-exploration.
\\[1pt]
Two main differences distinguish SUN from this line of work. \textit{First}, AdaGoal and DISCOVER select the goal once at the beginning of the episode ($g_t$ is selected once from $\mathcal{C}_0$). SUN instead continuously monitors the value of the current goal, and reselects it from a freshly-resampled candidate set $\mathcal{C}_t$ if needed. \textit{Second}, SUN replaces the ensemble with a lightweight pseudocount, removing the cost of training and querying $K$ critics, and introduces an effective balance between reachability and novelty. Section~\ref{sec:experiments} shows these differences matter: episodic commitment fails when goals become unreachable mid-episode, DISCOVER is sensitive to $\beta$, and AdaGoal has no effective reachability proxy.
\\[2pt]
\textbf{Proto-Goals.} \citet{bagaria2023scaling} also combine novelty and reachability, but \textit{sequentially}: first, goal candidates are sampled proportionally to a count-based novelty; then, the one with highest SVF is pursued. Thus, reachability cannot recover from a novelty draw that misses, and novelty cannot override a reachability $\argmax$. The authors report that local reachability hurt performance by biasing selection toward easy goals, until an additional timescale-stratification mechanism was introduced --- exactly the bias our multiplicative indicator avoids (Section~\ref{subsec:ablation_results}).
Similarly, their goal is selected once per episode and pursued until achieved, and the authors list finer-grained goal switching~\citep{pislar2022when} as future work, which is precisely what our adaptive strategy provides.
\\[2pt]
\textbf{Directed Exploration.} Closest to our pseudocount is the episodic novelty module of NGU~\citep{badia2020never}, which also estimates counts with a nearest-neighbor density estimator. 
Two differences matter. \textit{First}, its memory is cleared at every episode, so its counts measure {within-episode} novelty, while SUN leverages {lifetime} novelty. \textit{Second}, it recomputes the kernel sum at query time, which is affordable only for a memory at most one episode long.
% The normalization of \citet{badia2020never} is nonetheless a sensible alternative to our fixed radius $\rho$ wherever the difficulty is one of scale rather than of anisotropy; we discuss this in Appendix~\ref{app:pseudocount_ablation}.

\begin{figure}[t]
    \centering
    \setlength{\fboxsep}{0pt}%
    \setlength{\fboxrule}{0.3pt}%
    % 0.3x2x5pt of black border in control envs, 2x12pt of spacing between all pics = 27pt
    \def\envpanelw{\dimexpr(\linewidth-27pt)/13\relax}%
    \def\envname#1{\begin{minipage}{\envpanelw}\centering\fontsize{5.5pt}{6pt}\selectfont{#1}\end{minipage}}%
    \envname{3Room}\hfill
    \envname{4RoomStuck}\hfill
    \envname{GridMaze}\hfill
    \envname{L.Lander}\hfill
    \envname{M.Car}\hfill
    \envname{Pendulum}\hfill
    \envname{Acrobot}\hfill
    \envname{C.Pole}\hfill
    \envname{P.Maze-S}\hfill
    \envname{P.Maze-H}\hfill
    \envname{A.Maze-S}\hfill
    \envname{A.Maze-H}\hfill
    \envname{A.Push-H}
    \\[0pt]
    % ---- Environment renders ----
    \begin{minipage}{0.98\envpanelw}
    \resizebox{\linewidth}{!}{%
\begin{tikzpicture}[
  empty/.style={fill=black},
  separator/.style={gray!60, line width=2.4pt},
  start/.style={draw=cyan, line width=0.3pt, fill=cyan},
  goal/.style={draw=green!70!black, line width=0.3pt, fill=green!70!black},
]
  \def\cs{0.20}
  \foreach \row [count=\y from 0] in {
    {.,.,.,.,.,.,.,.},
    {.,.,.,.,.,.,.,.},
    {.,.,.,.,.,.,.,.},
    {.,.,.,.,.,.,.,.},
    {.,.,.,.,.,.,.,.},
    {.,.,.,.,.,.,.,.},
    {.,.,.,.,.,.,.,.},
    {.,.,.,.,.,.,.,.},
    {.,.,.,.,.,.,.,.},
    {.}
  } {
    \ifnum\y<9
      \foreach \cell [count=\x from 0] in \row {
        \fill[empty] (\x*\cs, -\y*\cs) rectangle ++(\cs, -\cs);
      }
    \fi
  }
  % Starting positions (white border keeps the fill inside the tile grid)
  \fill[fill=cyan] (0*\cs, -2*\cs) rectangle ++(\cs, -\cs);
  \fill[fill=cyan] (0*\cs, -3*\cs) rectangle ++(\cs, -\cs);
  \fill[fill=cyan] (0*\cs, -8*\cs) rectangle ++(\cs, -\cs);
  % Gray lines separating the three rooms (drawn last, on top of tiles)
  \draw[separator] (0, -3*\cs) -- (8*\cs, -3*\cs);
  \draw[separator] (0, -6*\cs) -- (8*\cs, -6*\cs);
\end{tikzpicture}%
}%
    \end{minipage}%
    \hfill%
    \begin{minipage}{\envpanelw}
    \resizebox{\linewidth}{!}{%
\begin{tikzpicture}[
  empty/.style={fill=black},
  start/.style={fill=cyan},
  wall/.style={fill=gray!60},
  arrow/.style={->, >=stealth, red, line width=1.2pt},
  goal/.style={fill=green!70!black},
  qmark/.style={yellow, font=\fontsize{4}{4}\selectfont\bfseries},
]
  \def\cs{0.20}
  \foreach \row [count=\y from 0] in {
    {W,W,W,W,W,W,W,W,W,W,W,W,W},
    {W,.,.,.,.,.,W,.,.,.,.,.,W},
    {W,.,.,.,.,.,W,.,.,.,.,.,W},
    {W,Q,Q,Q,.,.,.,.,.,.,.,.,W},
    {W,W,.,W,W,W,W,W,.,.,.,.,W},
    {W,.,.,.,.,.,.,W,.,.,.,.,W},
    {W,.,.,.,.,.,.,W,.,.,.,.,W},
    {W,.,.,.,.,.,.,W,W,W,.,W,W},
    {W,.,.,.,.,.,.,W,.,.,.,.,W},
    {W,.,.,.,.,.,.,W,.,.,.,.,W},
    {W,.,.,.,.,.,.,.,.,.,.,.,W},
    {W,.,.,.,.,.,.,W,.,.,.,.,W},
    {W,W,W,W,W,W,W,W,W,W,W,W,W},
    {.}
  } {
    \ifnum\y<13
      \foreach \cell [count=\x from 0] in \row {
        \ifthenelse{\equal{\cell}{W}}{
          \fill[wall] (\x*\cs, -\y*\cs) rectangle ++(\cs, -\cs);
        }{
          \fill[empty] (\x*\cs, -\y*\cs) rectangle ++(\cs, -\cs);
          \ifthenelse{\equal{\cell}{Q}}{
            \node[qmark] at (\x*\cs + \cs/2, -\y*\cs - \cs/2) {?};
          }{}
        }
      }
    \fi
  }
  % (row 4, col 2): down
  \draw[arrow] (2*\cs + \cs/2, -4*\cs - 0.02) -- (2*\cs + \cs/2, -5*\cs + 0.02);
  % Column 3, rows 5-10: left
  \foreach \r in {5,...,10} {
    \draw[arrow] (4*\cs - 0.02, -\r*\cs - \cs/2) -- (3*\cs + 0.02, -\r*\cs - \cs/2);
  }
  % (row 5, col 4): left
  \draw[arrow] (5*\cs - 0.02, -5*\cs - \cs/2) -- (4*\cs + 0.02, -5*\cs - \cs/2);
  % Column 4, rows 6-10: right
  \foreach \r in {6,...,10} {
    \draw[arrow] (4*\cs + 0.02, -\r*\cs - \cs/2) -- (5*\cs - 0.02, -\r*\cs - \cs/2);
  }
  % Row 11, cols 3-4: right
  \foreach \c in {3,4} {
    \draw[arrow] (\c*\cs + 0.02, -11*\cs - \cs/2) -- (\c*\cs + \cs - 0.02, -11*\cs - \cs/2);
  }
  % (row 10, col 7): left
  \draw[arrow] (8*\cs - 0.02, -10*\cs - \cs/2) -- (7*\cs + 0.02, -10*\cs - \cs/2);
  \fill[goal] (11*\cs, -11*\cs) rectangle ++(\cs, -\cs);
  \fill[start] (1*\cs, -1*\cs) rectangle ++(\cs, -\cs);
\end{tikzpicture}%
}%
    \end{minipage}%
    \hfill%
    \begin{minipage}{\envpanelw}
    \resizebox{\linewidth}{!}{%
\begin{tikzpicture}[
  empty/.style={fill=black},
  wall/.style={fill=gray!60},
  start/.style={fill=cyan},
  goal/.style={fill=green!70!black},
]
  \def\cs{0.20}
  \foreach \row [count=\y from 0] in {
    {.,.,.,.,.,.,.,.,.,.,.,.},
    {.,W,.,.,.,.,.,.,.,.,.,.},
    {W,.,.,.,.,W,W,W,W,.,W,.},
    {.,W,.,.,W,W,.,.,.,.,W,.},
    {.,.,.,W,W,.,.,.,.,.,W,.},
    {.,.,W,W,.,.,.,.,.,.,W,.},
    {.,W,W,.,.,.,.,.,W,W,W,.},
    {.,.,W,.,.,.,.,.,.,.,W,.},
    {.,.,.,.,.,W,W,W,.,.,W,.},
    {W,W,.,.,W,.,.,.,.,.,W,.},
    {.,.,W,.,W,.,.,.,.,.,W,.},
    {.,.,W,.,.,.,.,.,.,.,.,.},
    {.}
  } {
    \ifnum\y<12
      \foreach \cell [count=\x from 0] in \row {
        \ifthenelse{\equal{\cell}{W}}{
          \fill[wall] (\x*\cs, -\y*\cs) rectangle ++(\cs, -\cs);
        }{
          \fill[empty] (\x*\cs, -\y*\cs) rectangle ++(\cs, -\cs);
        }
      }
    \fi
  }
  % Starting position (bottom-left)
  \fill[start] (0*\cs, -11*\cs) rectangle ++(\cs, -\cs);
  % Goal position (row 1, col 10)
  \fill[goal] (10*\cs, -1*\cs) rectangle ++(\cs, -\cs);
\end{tikzpicture}%
}%
    \end{minipage}%
    \hfill%
    \begin{minipage}{\envpanelw}
    \fbox{\includegraphics[width=\dimexpr\linewidth-2\fboxrule\relax,keepaspectratio]{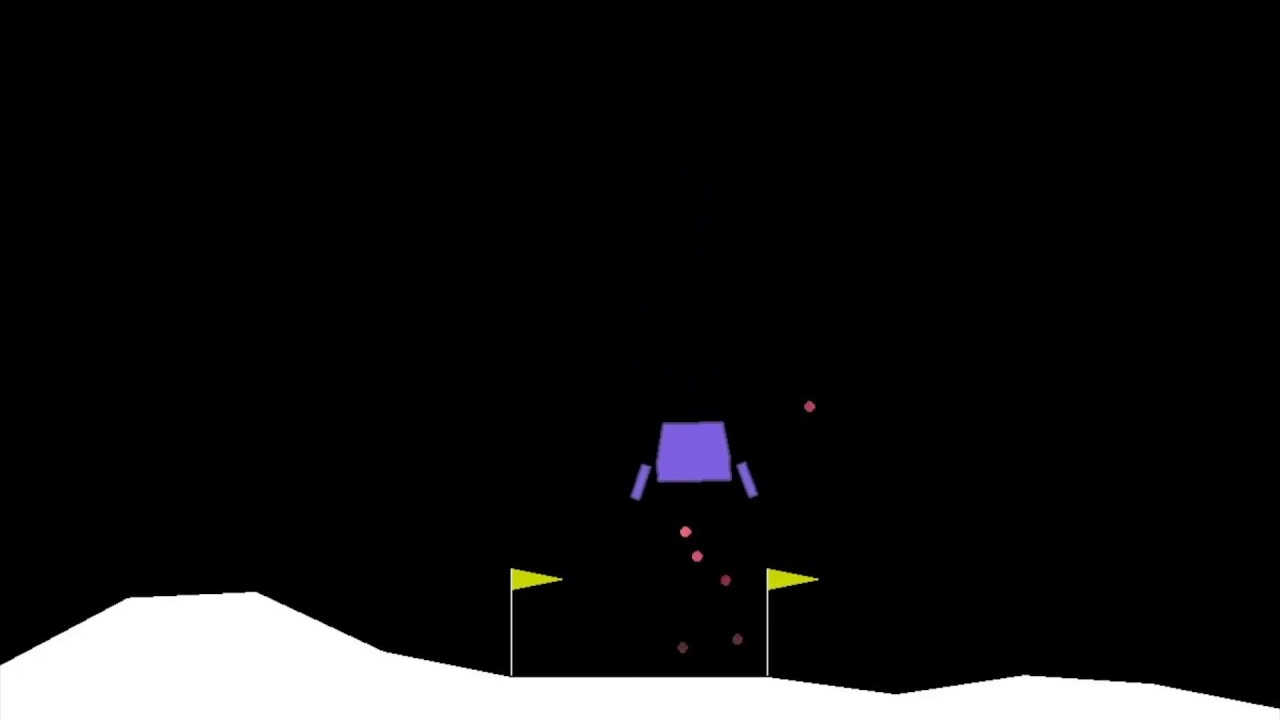}}%
    \end{minipage}%
    \hfill%
    \begin{minipage}{\envpanelw}
    \fbox{\includegraphics[width=\dimexpr\linewidth-2\fboxrule\relax,keepaspectratio]{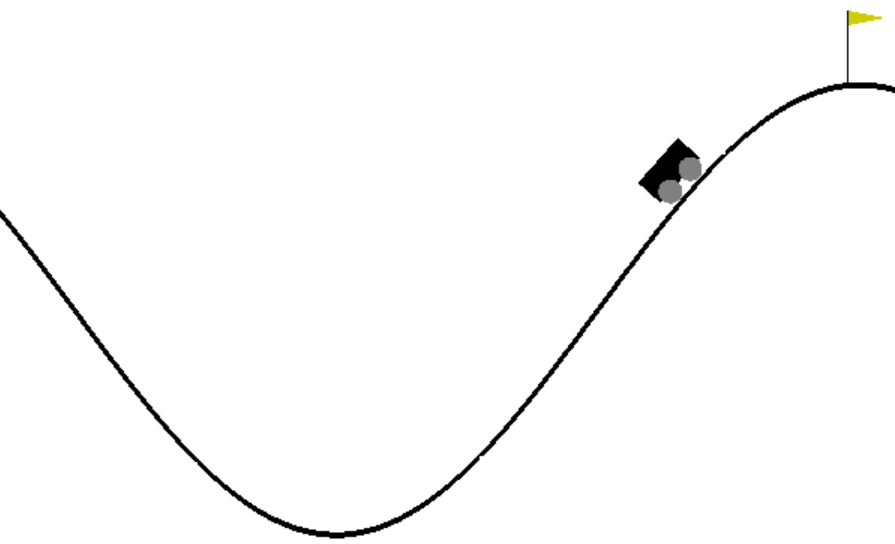}}%
    \end{minipage}%
    \hfill%
    \begin{minipage}{\envpanelw}
    \fbox{\includegraphics[width=\dimexpr\linewidth-2\fboxrule\relax,keepaspectratio]{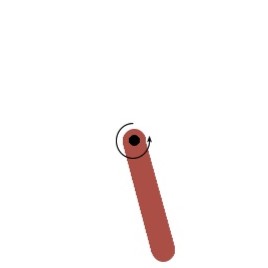}}%
    \end{minipage}%
    \hfill%
    \begin{minipage}{\envpanelw}
    \fbox{\includegraphics[width=\dimexpr\linewidth-2\fboxrule\relax,keepaspectratio]{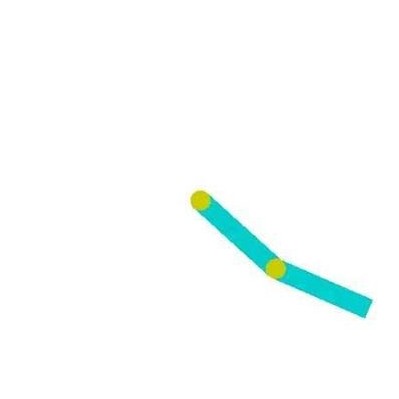}}%
    \end{minipage}%
    \hfill%
    \begin{minipage}{\envpanelw}
    \fbox{\includegraphics[width=\dimexpr\linewidth-2\fboxrule\relax,keepaspectratio]{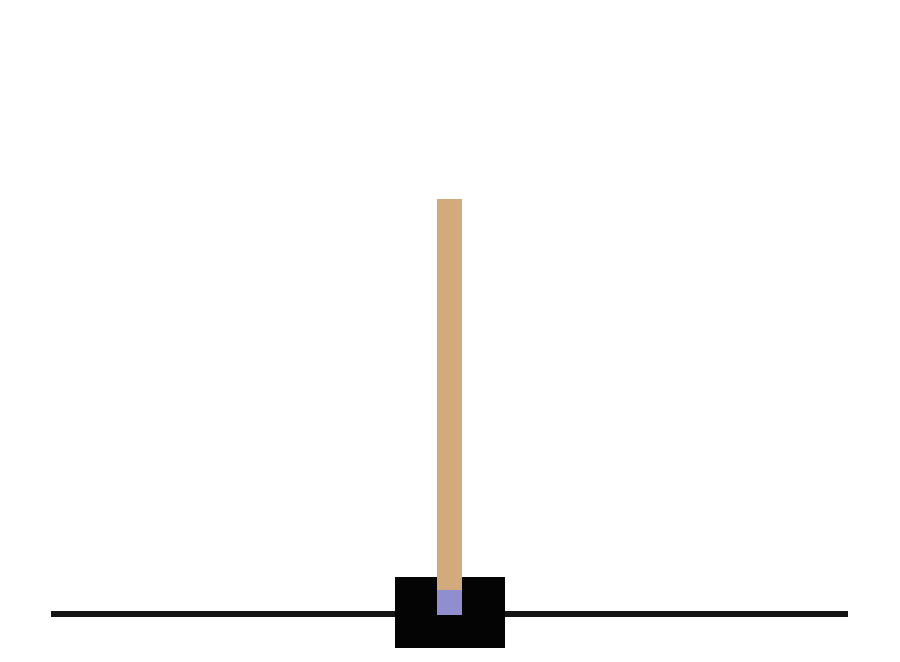}}%
    \end{minipage}%
    \hfill%
    \begin{minipage}{\envpanelw}
    \includegraphics[width=\linewidth,keepaspectratio]{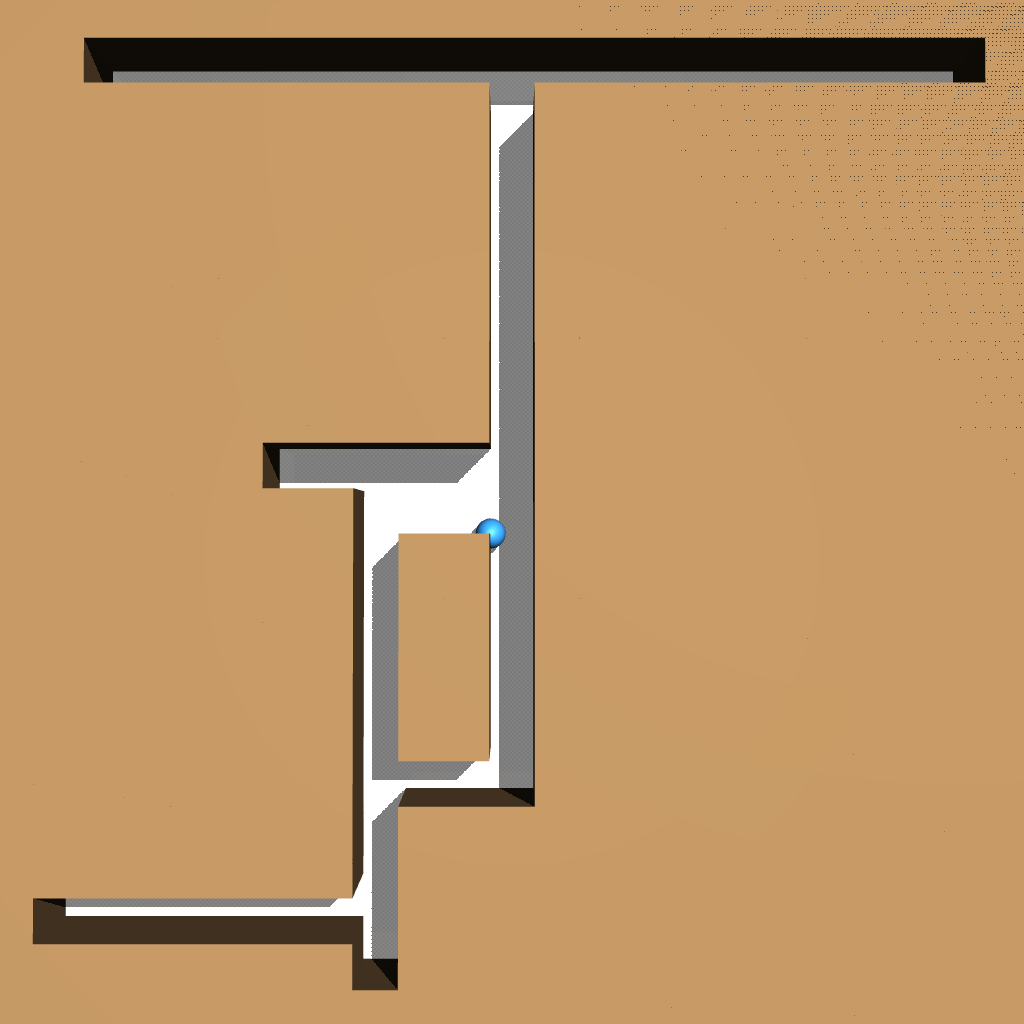}%
    \end{minipage}%
    \hfill%
    \begin{minipage}{\envpanelw}
    \includegraphics[width=\linewidth,keepaspectratio]{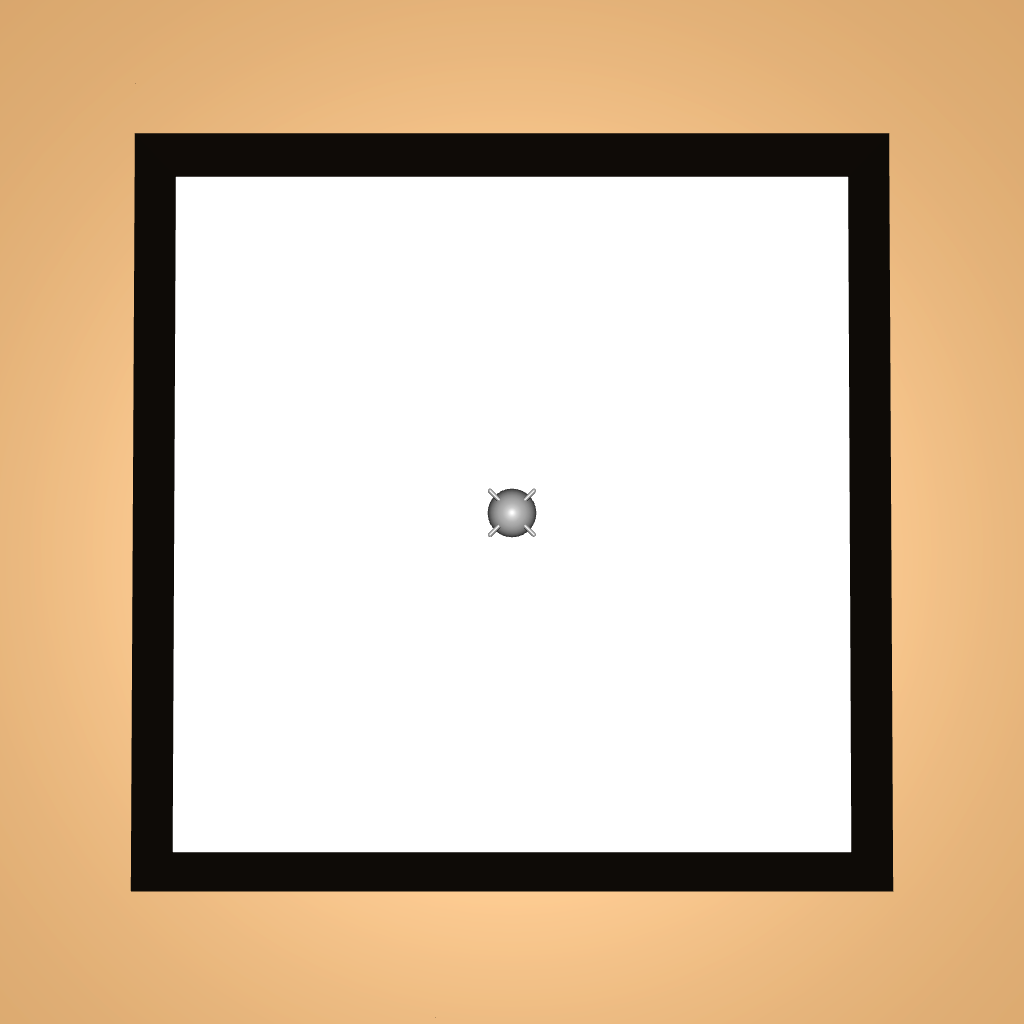}%
    \end{minipage}%
    \hfill%
    \begin{minipage}{\envpanelw}
    \includegraphics[width=\linewidth,keepaspectratio]{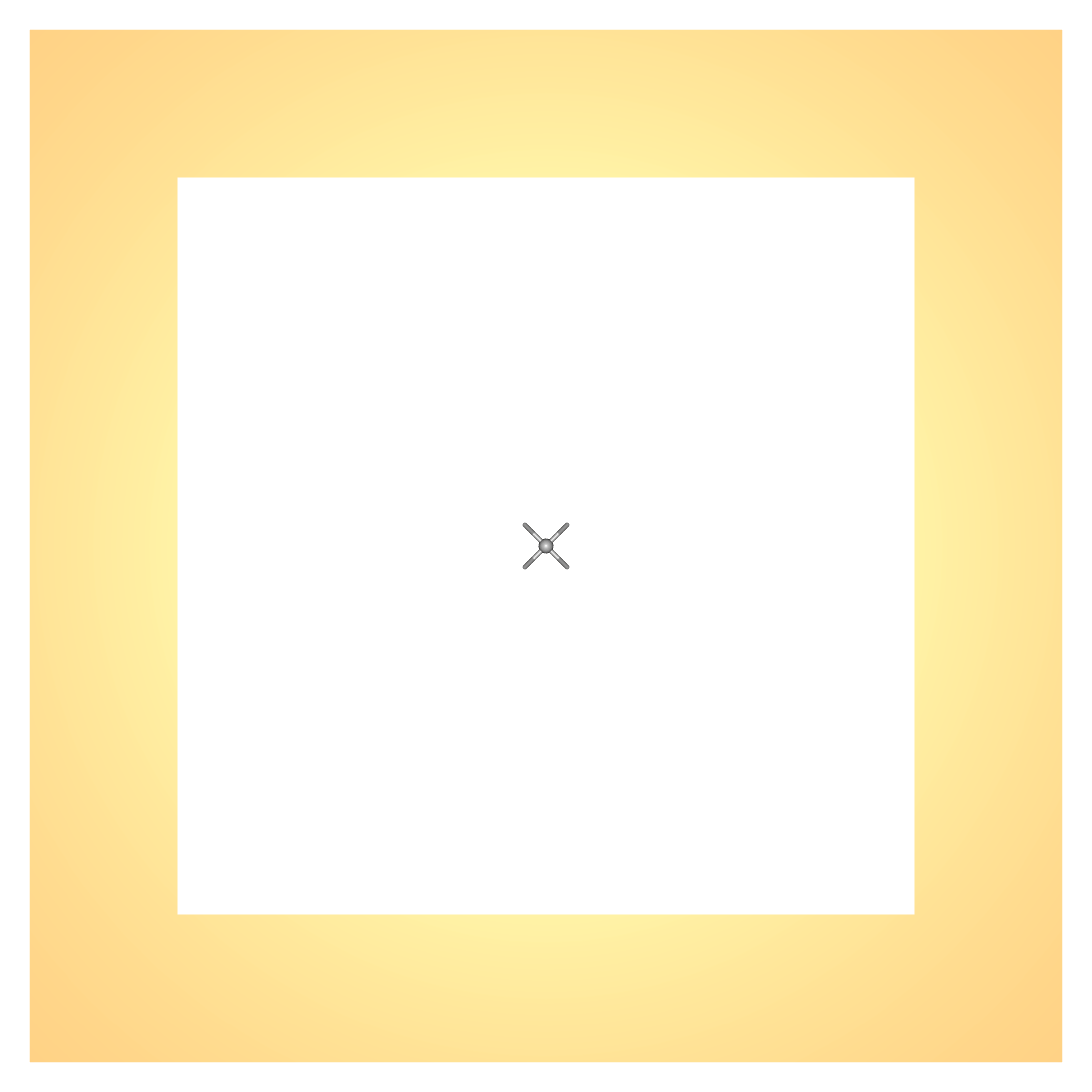}%
    \end{minipage}%
    \hfill%
    \begin{minipage}{\envpanelw}
    \includegraphics[width=\linewidth,keepaspectratio]{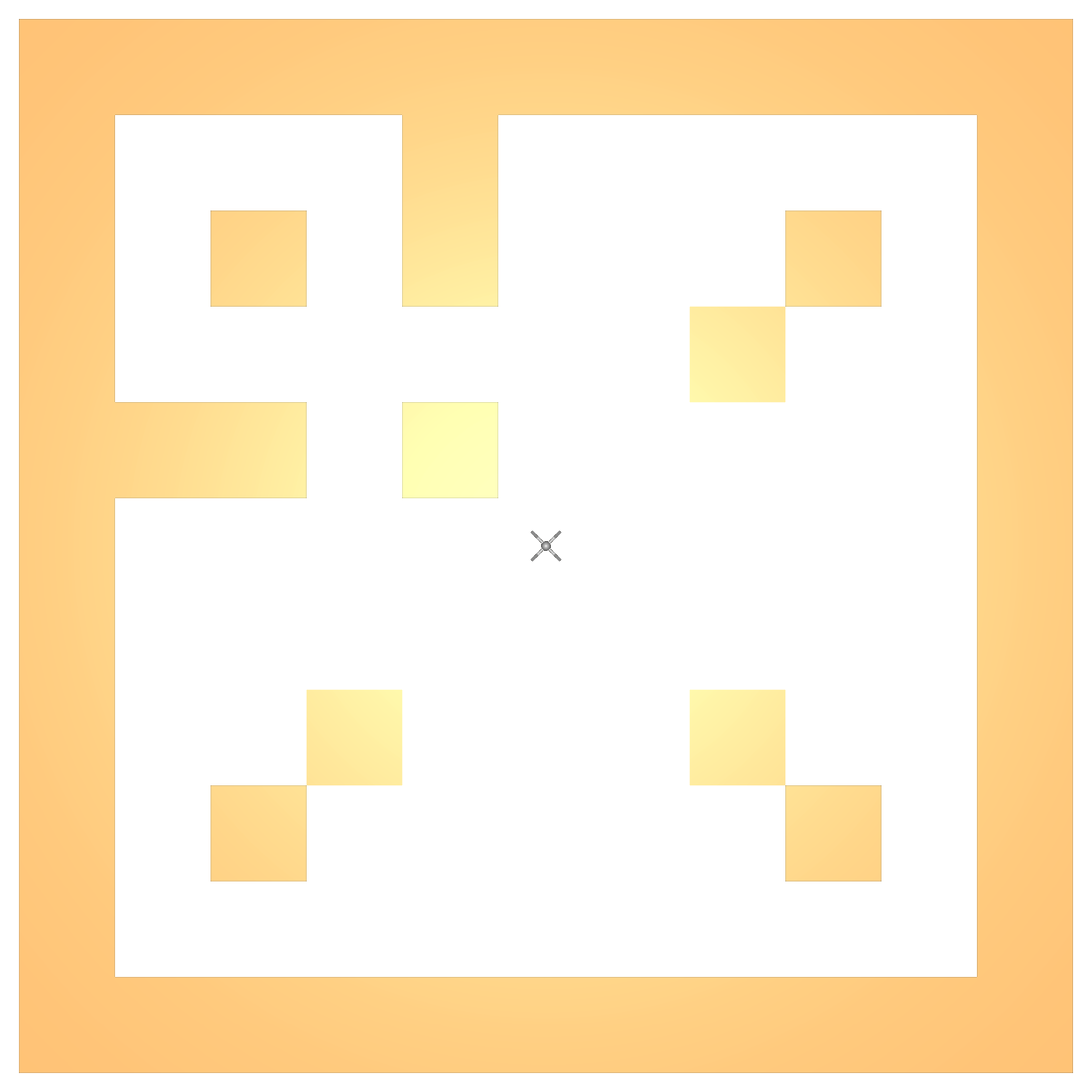}%
    \end{minipage}%
    \hfill%
    \begin{minipage}{\envpanelw}
    \includegraphics[width=\linewidth,keepaspectratio]{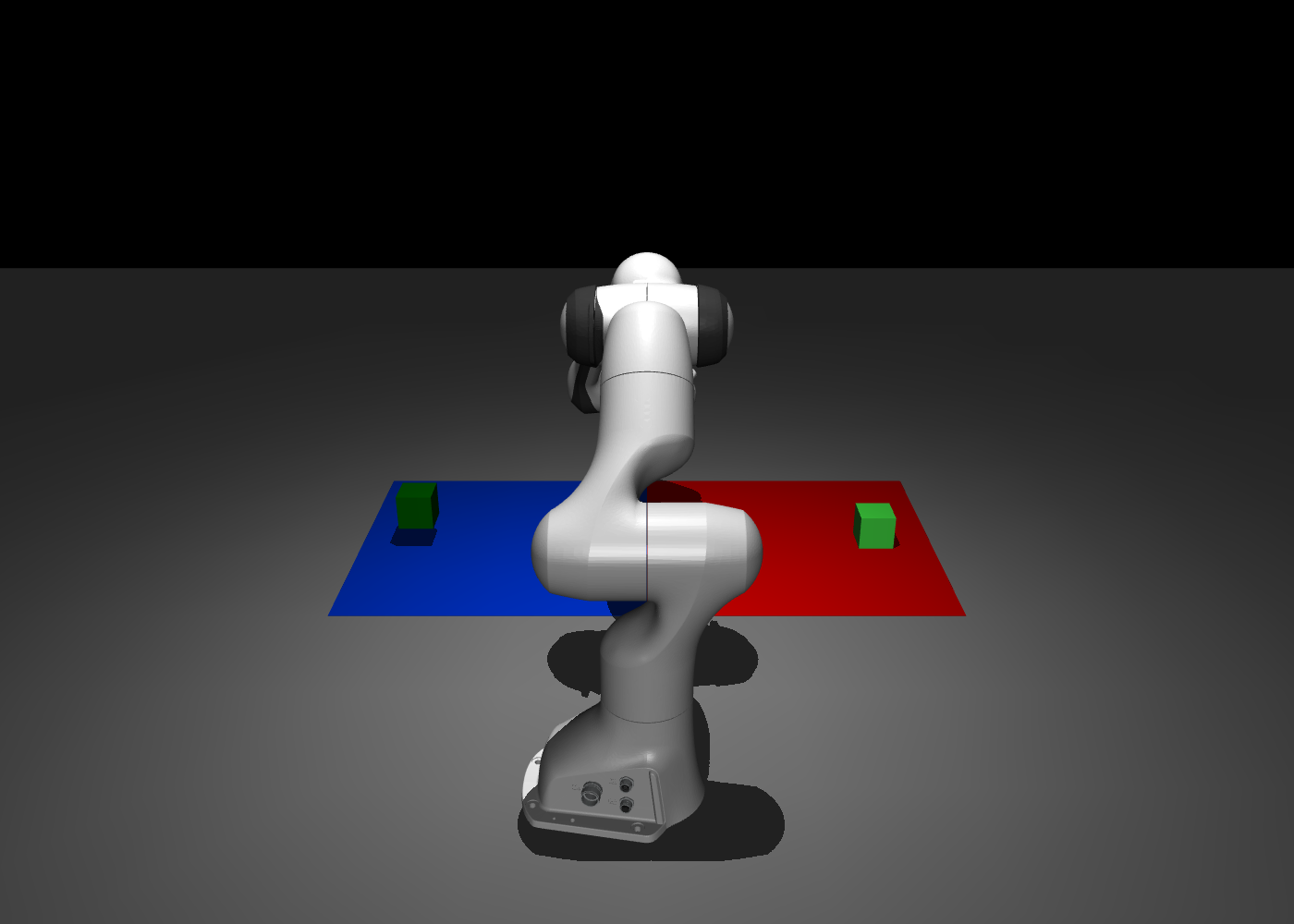}%
    \end{minipage}
    \caption{\label{fig:envs}\textbf{Environments.} Control tasks are standard benchmarks. \textit{Gridworlds are a novel contribution of this paper.} Blue tiles mark starting positions and green tiles mark terminal states. Tiles with a yellow \stt{?} have a 50\% chance of triggering a random movement; tiles with red arrows admit only the corresponding movement. In \stt{FourRoomStuck}, this creates a trap: the agent can accidentally enter the bottom-left room, where any action that does not match the red arrow leaves the agent in place.}
\vspace*{-2.1em}
\end{figure}

\section{Experiments}
\label{sec:experiments}

We select benchmarks that illustrate three challenges in RL exploration: (a) unreachable states or irreversible transitions, (b) large or unbounded goal spaces, and (c) hard exploration (hard-to-reach states, obstacles, mazes). 
Goal dimensionality ranges from two (most environments) to eight (\stt{LunarLander(Full)}).
Details are in Figure \ref{fig:envs} and Appendix~\ref{app:envs}.

\begin{itemize}[leftmargin=*, itemsep=-1pt, topsep=-2pt]
\item \textbf{Gridworlds} (a, c): \stt{ThreeRoom} is a larger variant of Figure~\ref{fig:3room_intro}; the agent spawns in a random isolated room, with higher spawn rate in the first two. In \stt{FourRoomStuck}, the bottom-left room cannot be exited or traversed freely, and some transitions are stochastic. \stt{GridMaze} features a narrow passage near the agent's starting state.
\item \textbf{Classic control}~\citep{towers2024gymnasium} (a, b, c): \stt{MountainCar} is a well-known hard-exploration benchmark. In \stt{CartPole}, episodes terminate quickly when the pole falls, making the corners of the state space hard to reach. \stt{LunarLander} has an unbounded state space (the agent can fly arbitrarily high). \stt{Acrobot} has unreachable configurations, and \stt{Pendulum} hard-to-reach ones.
\item \textbf{GCRL control}~\citep{bortkiewicz2025accelerating} (b, c): \stt{PointMaze-S/H} and \stt{AntMaze-S/H} are locomotion environments of increasing difficulty featuring corridors and dead-ends. \stt{ArmPush-H} is a manipulation task where a Franka Panda pushes a cube on an unbounded plane.  
\end{itemize}

\textbf{Evaluations.} \textit{First}, we compare SUN against a random uniform exploration baseline, AdaGoal~\citep{tarbouriech2022adaptive} and DISCOVER~\citep{diaz-bone2025discover}. Not only are these two close to SUN, but they have achieved state-of-the-art performance and outperformed algorithms like MEGA \citep{pitis2020maximumentropy}. SUN, AdaGoal, and DISCOVER all use HER~\citep{andrychowicz2017hindsight} to learn their SVFs.  
\textit{Second}, we ablate SUN components, i.e., its indicator and goal-selection strategy. \textit{Finally}, we analyze why AdaGoal, DISCOVER, and non-adaptive goal-selection fail.
% More evaluations are in the Appendix. 
% Full details of the algorithms are in Appendix~\ref{app:hyper}.
% \\[2pt]
\begin{tcolorbox}[colback=gray!10, colframe=gray!40, boxrule=0.5pt, arc=2pt, left=6pt, right=6pt, top=3pt, bottom=3pt]
We evaluate two metrics over the goal space: \textbf{coverage} (fraction of goal space visited) and Shannon \textbf{entropy} of goal visit counts normalized to $[0, 1]$. We also report heatmaps for a qualitative analysis. In continuous spaces, we discretize the space into 50 bins per dimension (only for computing these metrics, not for learning).
In \stt{LunarLander(Full)}, where the goal is eight-dimensional, we report only the entropy approximated via Kozachenko-Leonenko $k$-NN. More details are in Appendix \ref{app:envs}.
All results are averaged across \textbf{20 seeds}.
% Plots show means (thick line) with 95\% confidence intervals (shaded area) across \textbf{20 seeds}; heatmaps show log-scale counts over the first seed only.
\end{tcolorbox}

\begin{figure}[t]
    % \centering
    \includegraphics[trim={0 2.0em 0 0}, clip, width=\linewidth]{\detokenize{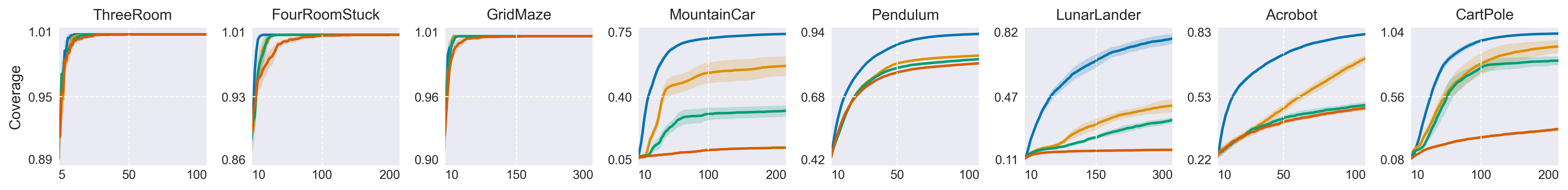}}
    \\[1pt]
    \includegraphics[trim={0 0 0 2.2em}, clip, width=\linewidth]{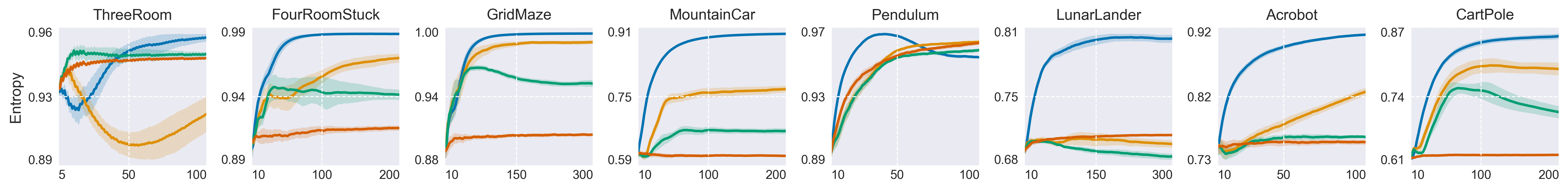}
    \\[1pt]
    \begin{minipage}[b]{0.63\linewidth}
    \includegraphics[trim={0 2.0em 0 0}, clip, width=\linewidth]{\detokenize{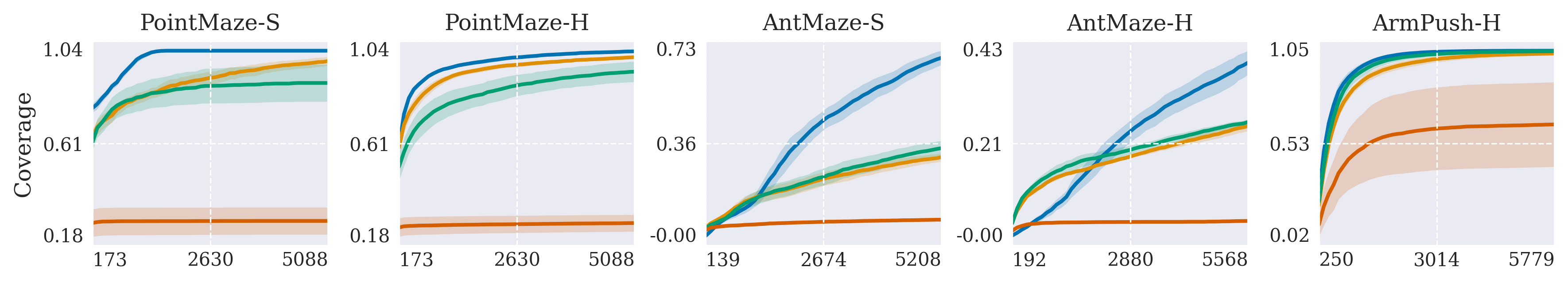}}%
    \\[1pt]
    \includegraphics[trim={0 0 0 2.2em}, clip, width=\linewidth]{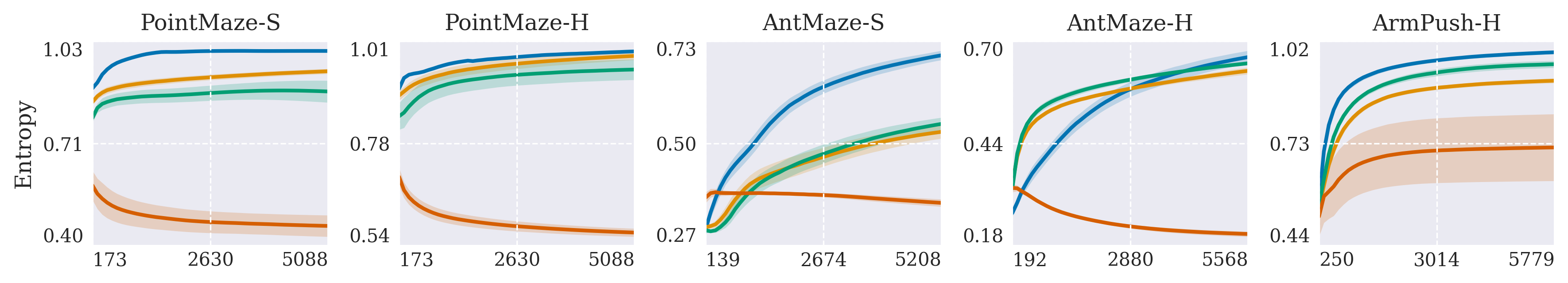}%
    \end{minipage}
    \hfill
    \begin{minipage}[b]{0.36\linewidth}
    \centering
    \begin{minipage}[b]{0.5\linewidth}
    \includegraphics[trim={0 0.1em 0 0.1em}, clip, width=\linewidth]{\detokenize{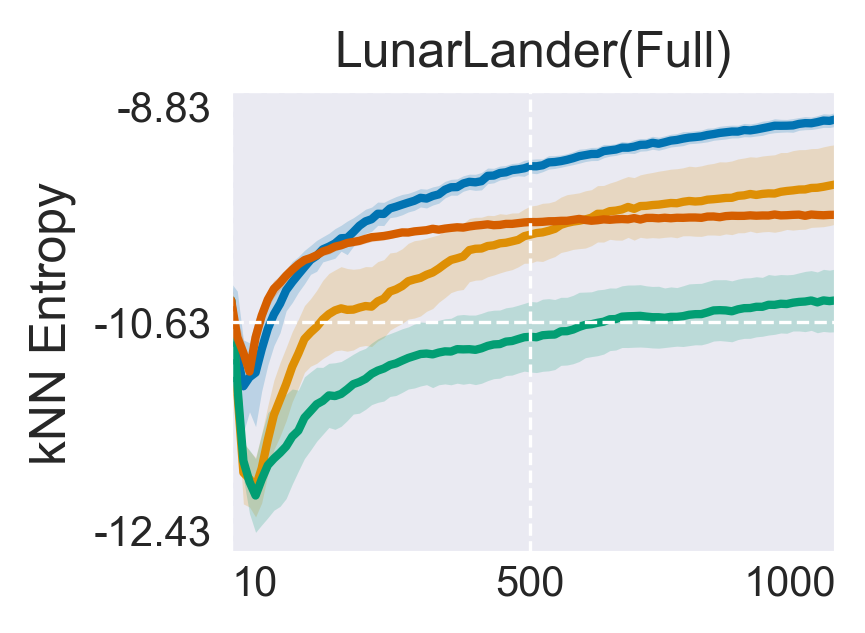}}\hfill
    \end{minipage}\hfill
    \begin{minipage}[b]{0.37\linewidth}
    \includegraphics[width=\linewidth]{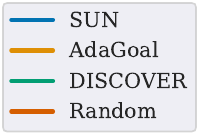}
    \end{minipage}
    \subcaption*{\tiny Means (thick lines) with 95\% confidence intervals (shaded areas) across {20 seeds}; the $x$-axis shows training steps (thousands).}
    \end{minipage}
    \caption{\label{fig:main_results}\textbf{Main results}. \textit{SUN outperforms all baselines in all environments.} Note that full coverage is easy in Gridworlds, but uniform exploration (high entropy) is not.}
\end{figure}

\subsection{SUN vs Baselines}
\label{subsec:main_results}
\textbf{Quantitative results.}
Figure~\ref{fig:main_results} shows that the two metrics are complementary: coverage measures whether a state has been visited at least once, while entropy measures how uniformly visits are distributed. A method can saturate coverage while still concentrating most of its visits in small regions of the state space, which is what happens in Gridworlds: all methods reach perfect coverage, yet their entropy values differ substantially, with SUN at the top. 
% \\[1pt]
Indeed, Gridworlds are challenging due to their non-uniform initial-state distributions and unreachable goals (\stt{ThreeRoom}), stochastic transitions and irreversible traps (\stt{FourRoomStuck}), and bottlenecks that must be traversed to reach the rest of the grid (\stt{GridMaze}). Coverage alone hides these difficulties; entropy reveals them.
\\[1pt]
% , as reflected by the larger gap in the area-under-the-curve bars. 
Continuous-control environments further strengthen SUN's advantage: its curves rise faster and plateau only at (near-)full coverage. One interesting exception is \stt{Pendulum}, the only environment where SUN's entropy actually decreases. The reason is structural: reaching some configurations requires passing through the same intermediate states repeatedly. For example, reaching certain angles at a specific velocity requires accumulating momentum through many swings, which means the agent must revisit the same low-momentum positions over and over. Rare states are therefore rare precisely because they can only be reached by re-traversing common ones many times; visiting the tail of the distribution comes at the cost of re-visiting the mode.
This explains why SUN entropy decreases even though its coverage keeps increasing. 
% \\[1pt]
Importantly, SUN also attains the best performance on \stt{LunarLander(Full)}, despite its higher dimensionality. This supports the proposed pseudocount as an effective tool for estimating visits and encoding novelty.
% \\[1pt]
% In \stt{ArmPush-H}, \simone{continue}

% This is also shown in Figure \ref{fig:heat_default}.
% GCRL environments reveal a similar dynamic: AdaGoal and DISCOVER initially grow coverage and entropy faster than SUN, but then plateau early; SUN, in contrast, keeps growing throughout training. 
% \\[2pt]
% In Appendix \ref{app:goal_progression}, we further consolidate this intuition by showing the goals selected during training.
% \\[2pt]
% Note that SUN performs poorly is ArmPush. We are unsure about the reasons, but we suspect it may be due to the need to tune the pseudocount radius hyperparameters (we did not tune any tuning at all).

% \begin{wrapfigure}{l}{0.281\linewidth}
%     \vspace{-\baselineskip}
%     \vspace*{-3pt}
%     \centering
%     \includegraphics[width=\linewidth]{\detokenize{plots/default/sa_h_full_curves.png}}%
%     \captionsetup{skip=1pt}
%     \caption{\label{fig:lunar_lander_full}Goal selections over training in \stt{M.Car}.}
%     \vspace{-\baselineskip}
%     \vspace*{-8pt}
%     \end{wrapfigure}
% \textit{DISCOVER} (last ro

% \clearpage

\textbf{Qualitative results.} Figure~\ref{fig:visitation_heatmaps} provides a qualitative view of the visitation distributions at the end of training (black regions are unvisited or unreachable). Across all environments, SUN covers a broader portion of the goal space and produces smoother visitation patterns, while AdaGoal and DISCOVER concentrate their visits in narrow regions. For example, in \stt{FourRoomStuck} their visits are highly non-uniform inside the bottom-left room; in \stt{GridMaze} they cluster near the bottom-left corner and rarely pass through the narrow passage into the rest of the maze; in control environments they fail to explore far from the initial state.
This confirms visually the entropy ranking of Figure~\ref{fig:main_results}: methods can cover many goals while still over-concentrating their visits.
% \\[1pt]
% The heatmaps also display some of the challenges of the control tasks discussed earlier. In \stt{Pendulum}, states corresponding to the pendulum's downright position are the most visited by SUN, though at different velocities. In \stt{CartPole}, goals near the start state (the middle cart position) dominate the visitation of all methods, and only SUN reaches the edges of the goal space.

\begin{figure}[t]
    \centering
    \includegraphics[trim={0 0 5pt 0}, clip, width=0.617\linewidth]{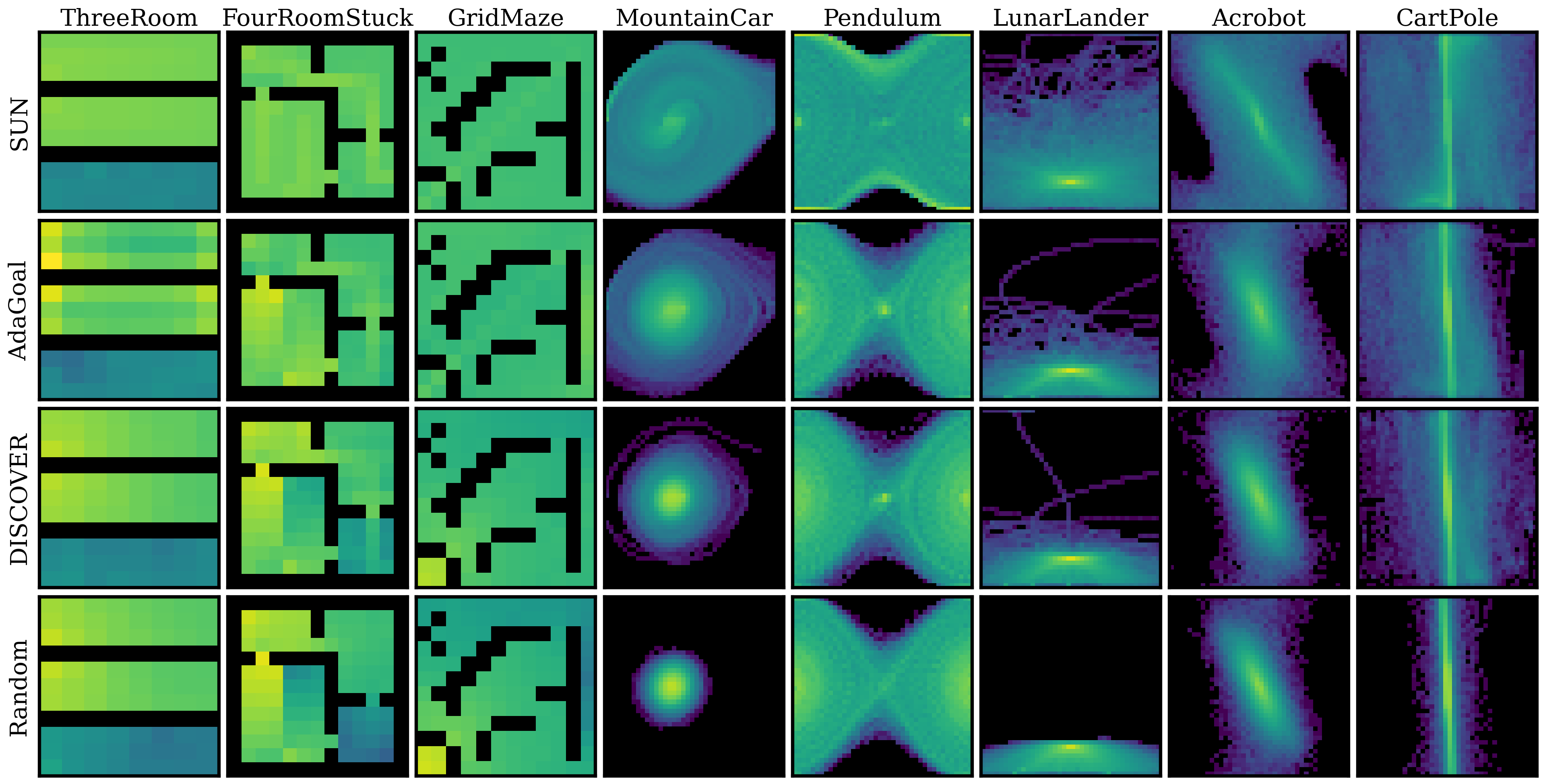}%
    \hfill
    \includegraphics[trim={14pt 0 0 0}, clip, width=0.382\linewidth]{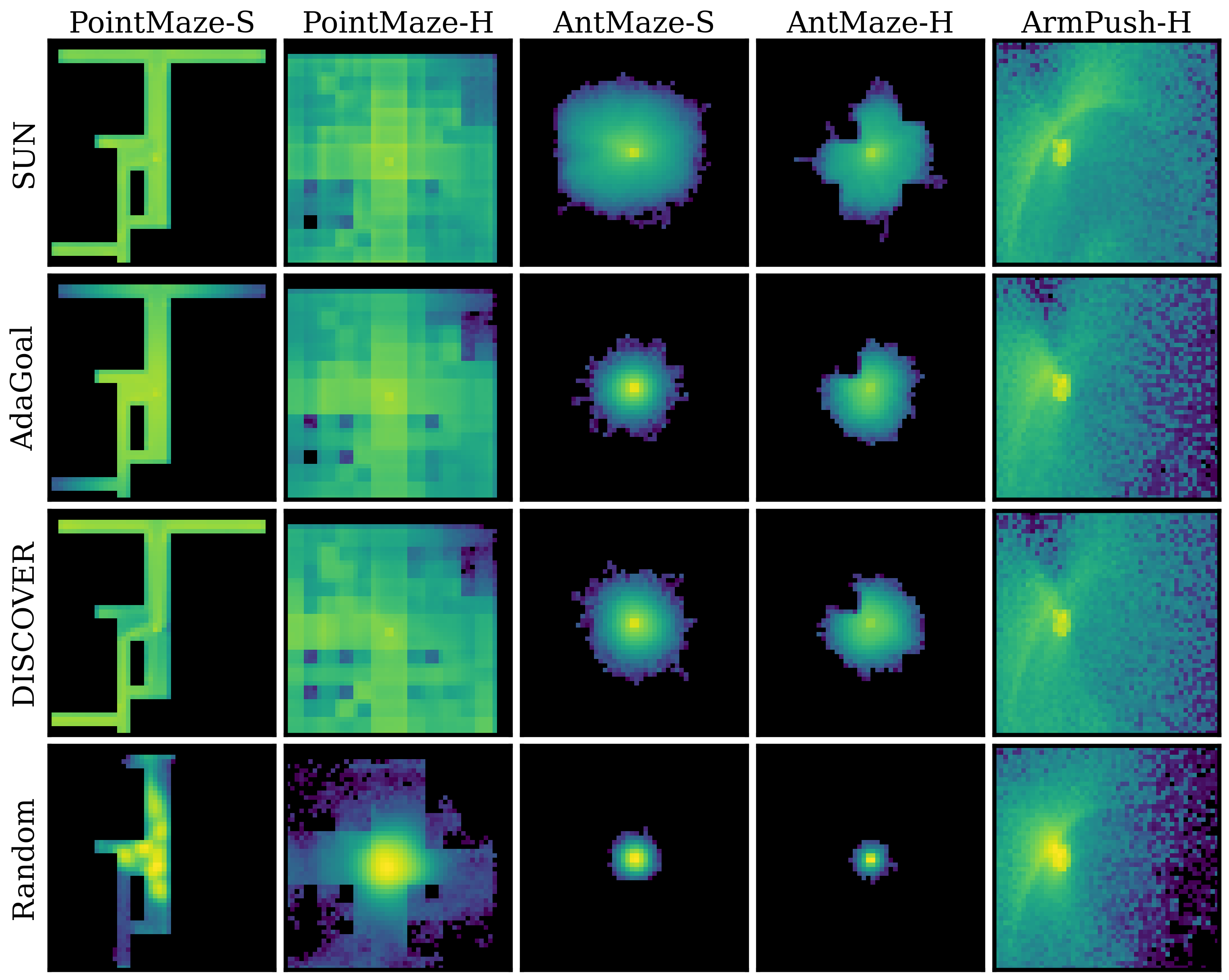}
    \caption{\label{fig:visitation_heatmaps}\textbf{Log-scale visitation heatmaps} at the end of training (first seed). In \stt{Pendulum} and \stt{Acrobot}, the sine/cosine coordinates are combined into the angle for visualization. Axes denote positional coordinates (e.g., the lander position), except in \stt{MountainCar}, \stt{Pendulum}, and \stt{CartPole}, where they denote position (x-axis) and velocity (y-axis).
    \stt{PointMaze-H} goal space is actually 3D, and the heatmaps show only the planar position. 
    % \stt{PointMaze-H} and \stt{ArmPush-H} goal spaces are actually larger (Appendix \ref{app:envs}), and the heatmaps show only planar positions. 
    % \stt{LunarLander(Full)} is not included. 
    \textit{SUN is the only algorithm not clustering its visits, except in the regions corresponding to the episode's starting state (that are naturally visited more).}}
% \end{figure}

% \begin{figure}[t]
% \vspace*{-9pt}
\vspace*{6pt}
\begin{minipage}[c]{0.43\textwidth}
\centering
\includegraphics[width=0.5\linewidth]{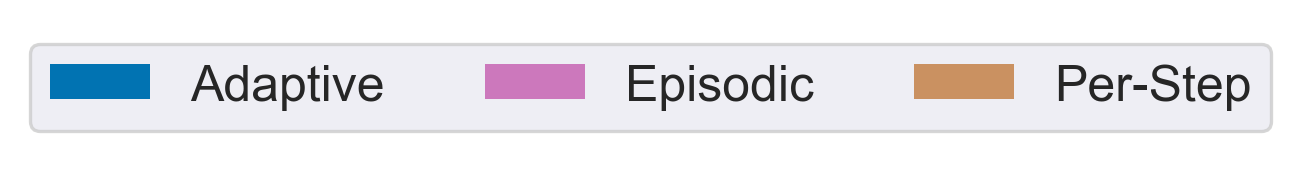}\\[-2pt]
\includegraphics[width=\linewidth]{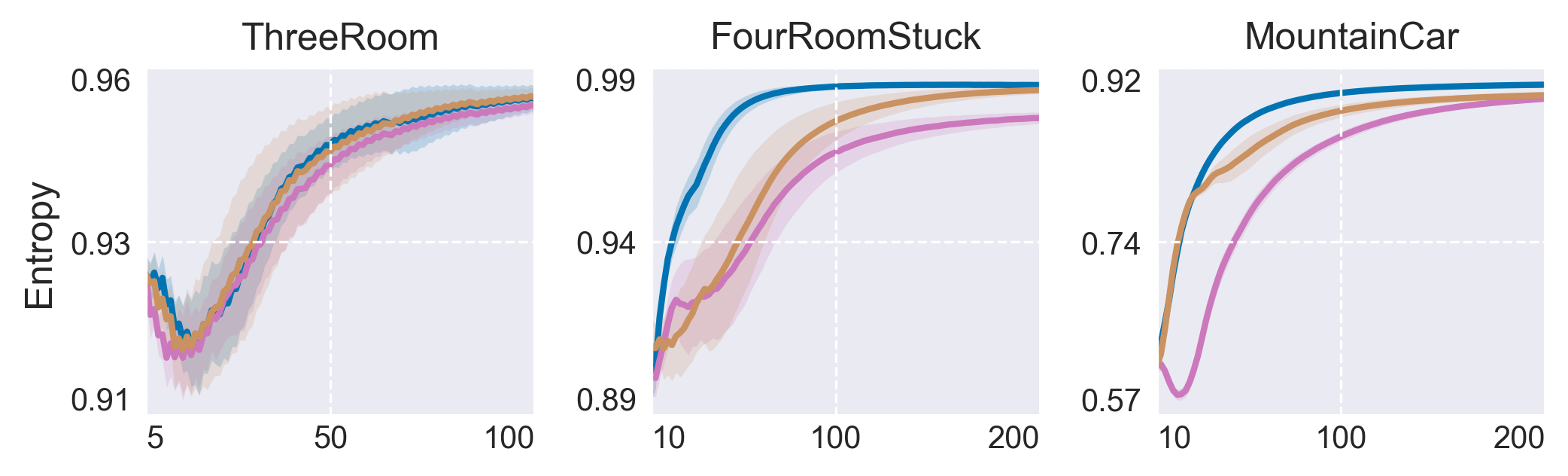}\\
\includegraphics[width=\linewidth]{\detokenize{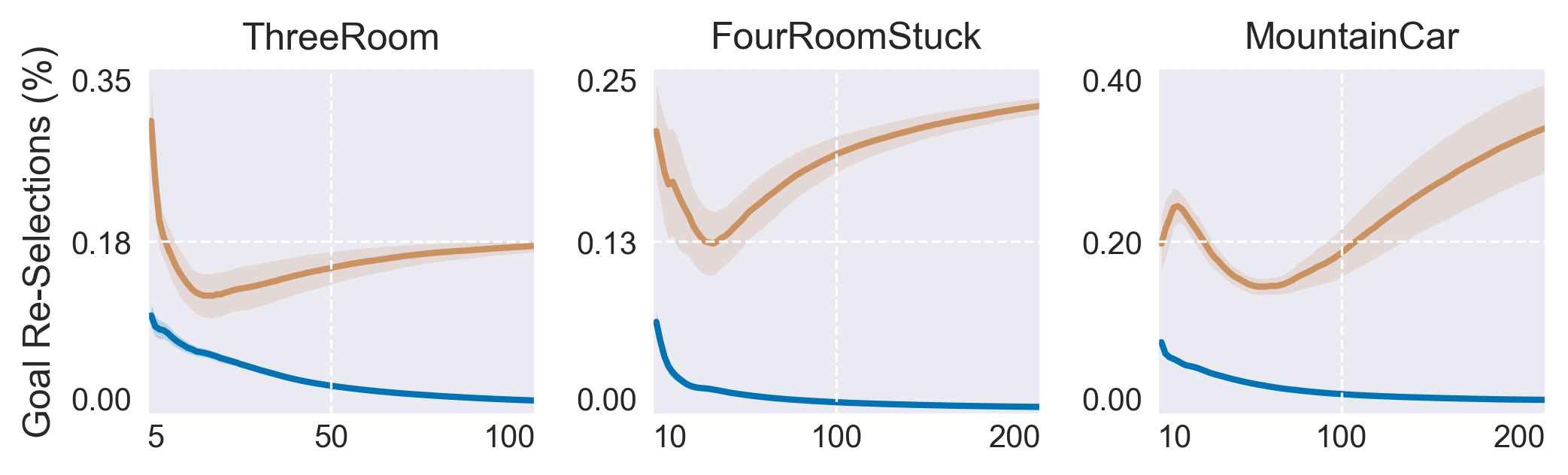}}
\end{minipage}\hfill
\begin{minipage}[c]{0.265\textwidth}
\centering
\includegraphics[trim={0 0 0 2pt}, clip, width=\linewidth]{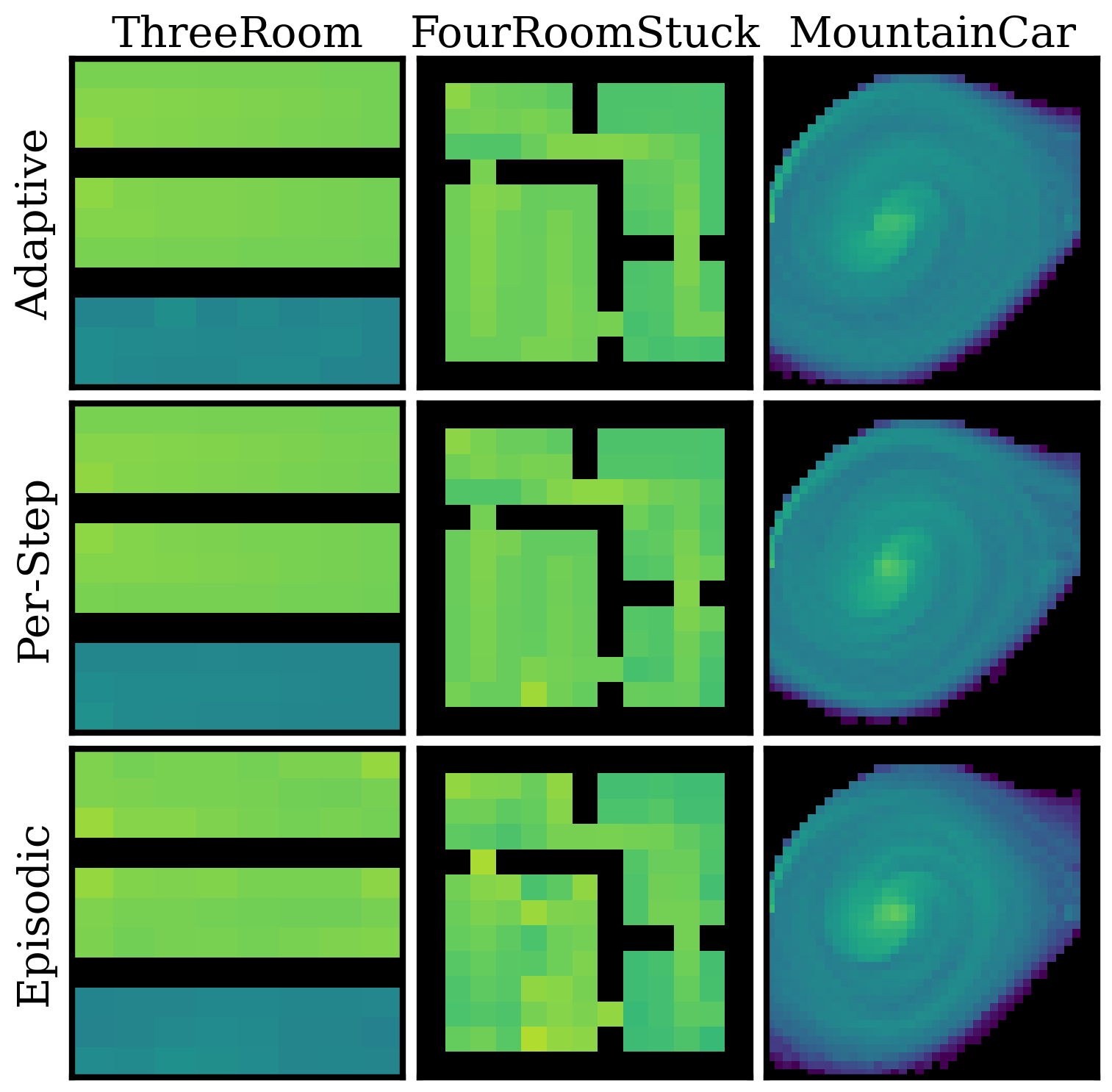}
\\[-5pt]
{\sffamily\tiny Goal visits}
\end{minipage}
\hfill
\begin{minipage}[c]{0.265\textwidth}
\centering
\includegraphics[trim={0 0 0 2pt}, clip, width=\linewidth]{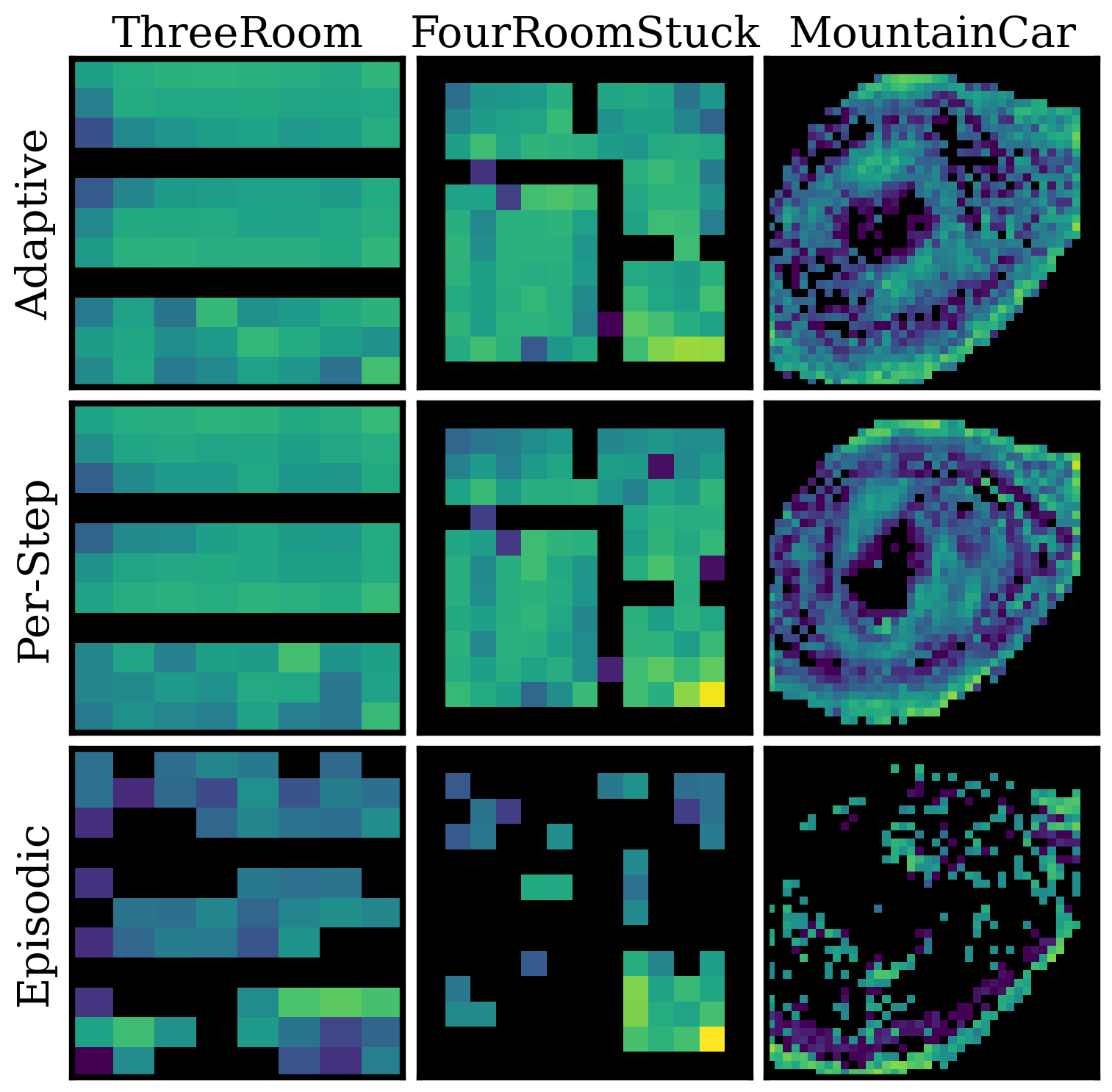}
\\[-5pt]
{\sffamily\tiny Goal selections}
\end{minipage}
% \captionsetup{skip=4pt}
\vspace*{-2pt}
\caption{\label{fig:ablation_selection}\textbf{Goal-selection ablation}. \textit{Episodic} selection fails when goals become unreachable mid-episode (as in \stt{FourRoomStuck}). \textit{Per-step} selection avoids this failure mode, but its reselection rate never drops to zero, undermining overall performance. In contrast, the reselection rate of \textit{adaptive} selection drops to zero over training, indicating that the agent has learned to reach the goals it selects.}
\end{figure}

% \begin{figure}[t]
% \begin{minipage}[c]{0.44\textwidth}
% \centering
% \includegraphics[width=\linewidth]{plots/ablation_sun_selection/sa_h_gridworlds_and_control_some_curves.png}
% \begin{minipage}[c]{0.49\textwidth}
% \centering
% \includegraphics[trim={0 0 0 2pt}, clip, width=\linewidth]{heatmaps/ablation_sun_selection/visit_count/gridworlds_and_control_some_100.png}
% \\[-5pt]
% {\sffamily\tiny Goal visits}
% \end{minipage}
% \hfill
% \begin{minipage}[c]{0.49\textwidth}
% \centering
% \includegraphics[trim={0 0 0 2pt}, clip, width=\linewidth]{heatmaps/ablation_sun_selection/goal_count/gridworlds_and_control_some_100.png}
% \\[-5pt]
% {\sffamily\tiny Goal selections}
% \end{minipage}
% \end{minipage}%
% \hfill
% \begin{minipage}[c]{0.545\textwidth}
% \vspace*{5pt}
% \includegraphics[width=0.8\linewidth]{\detokenize{plots/goal_stats/goal_reselections_pct_gridworlds_and_control_some_curves.png}}
% \hfill
% \includegraphics[trim={15pt 0 0 0}, clip, width=0.19\linewidth]{plots/ablation_sun_selection/legend_vertical.png}
% \captionsetup{skip=4pt}
% \caption{\label{fig:ablation_selection}\textbf{Goal-selection ablation}. \textit{Episodic} selection fails when goals become unreachable mid-episode (as in \stt{FourRoomStuck}). \textit{Per-step} selection avoids this failure mode, but its reselection rate never drops to zero, undermining overall performance. In contrast, the reselection rate of \textit{adaptive} selection drops to zero over training, indicating that the agent has learned to reach the goals it commits to.}
% \end{minipage}
% % \vspace*{-8pt}
% \end{figure}

% \clearpage

\subsection{SUN Ablations}
\label{subsec:ablation_results}
SUN is made of three components: the multiplicative indicator, the adaptive goal-selection, and the pseudocounts. 
Figure~\ref{fig:ablation_selection} ablates goal-selection and validates the importance of adaptive selection, highlighting the shortcomings of episodic\footnote{More specifically, we implement DISCOVER's episodic goal-selection, where the agent enters a ``random exploration phase'' if the goal is reached before the end of the episode~\citep{diaz-bone2025discover}. The rationale is that a goal selected for its high novelty is itself a point of interest, so nearby states are likely to be novel as well.} and per-step\footnote{In \emph{per-step} selection, a fresh batch of candidates is sampled at every step, and the goal changes whenever the batch contains a candidate with a higher SUN score (Eq.~\eqref{eq:sun-score}) than the current one.} strategies.
Figure~\ref{fig:ablation_score} ablates indicators, and reinforces the main motivation of this paper: \textit{exploration is ineffective when guided by either reachability or novelty alone; both must be considered.} 
% \\[1pt]
All plots and heatmaps are in Appendix \ref{app:ablation_full} and \ref{app:extra_details}. Ablation on pseudocounts is in Appendix \ref{app:pseudocount_ablation}.

\clearpage

\begin{figure}[t]
\begin{minipage}[c]{0.43\textwidth}
\centering
\includegraphics[width=\linewidth]{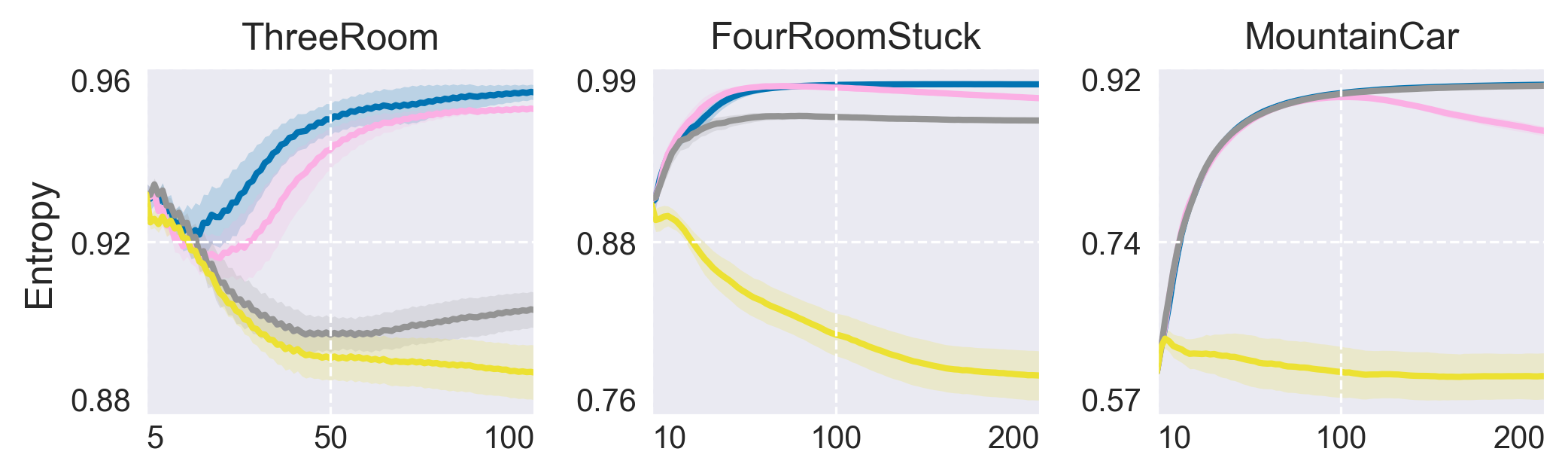}
\begin{minipage}[c]{0.49\textwidth}
\centering
\includegraphics[trim={0 0 0 2pt}, clip, width=\linewidth]{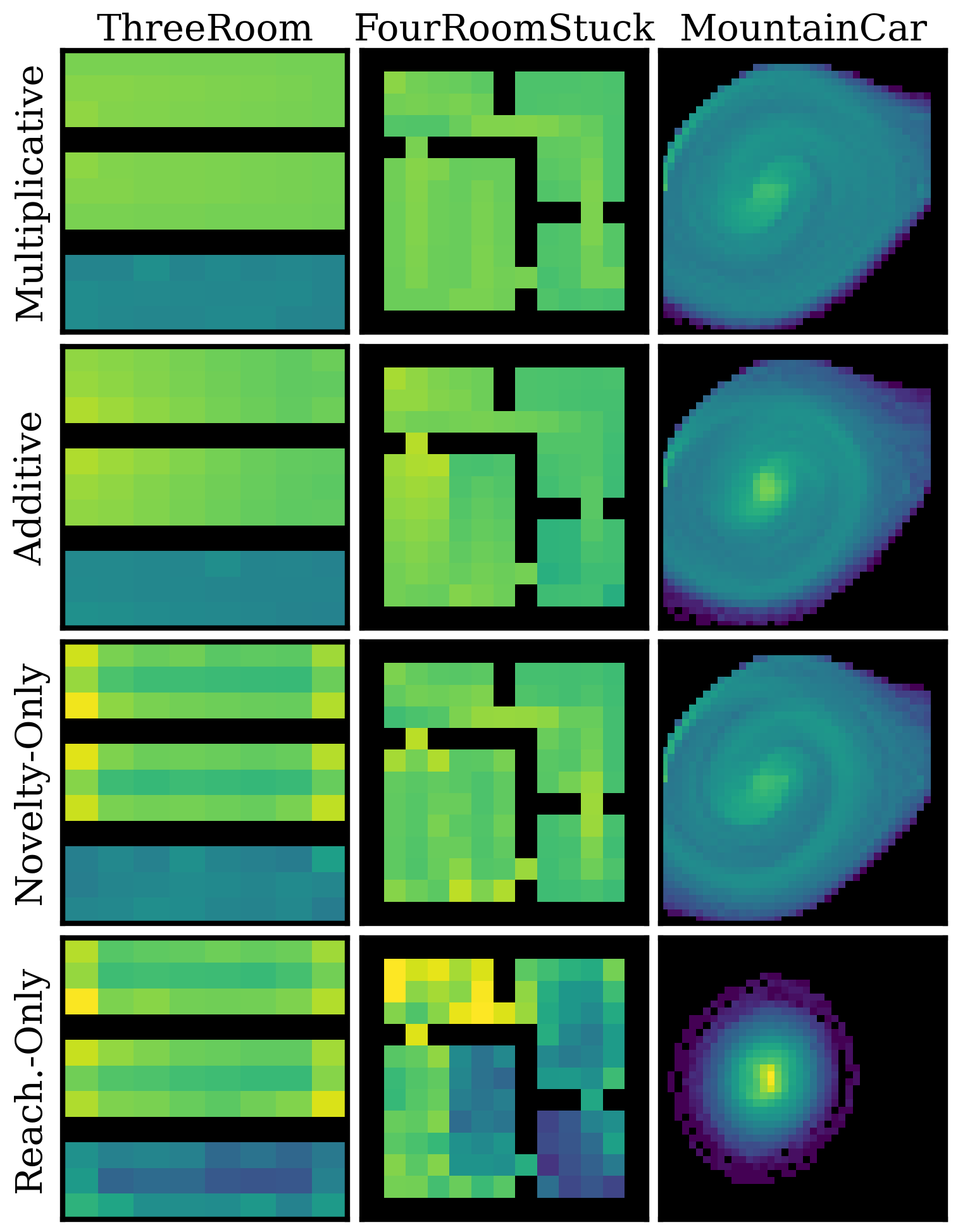}
\\[-5pt]
{\sffamily\tiny Goal visits}
\end{minipage}
\hfill
\begin{minipage}[c]{0.49\textwidth}
\centering
\includegraphics[trim={0 0 0 2pt}, clip, width=\linewidth]{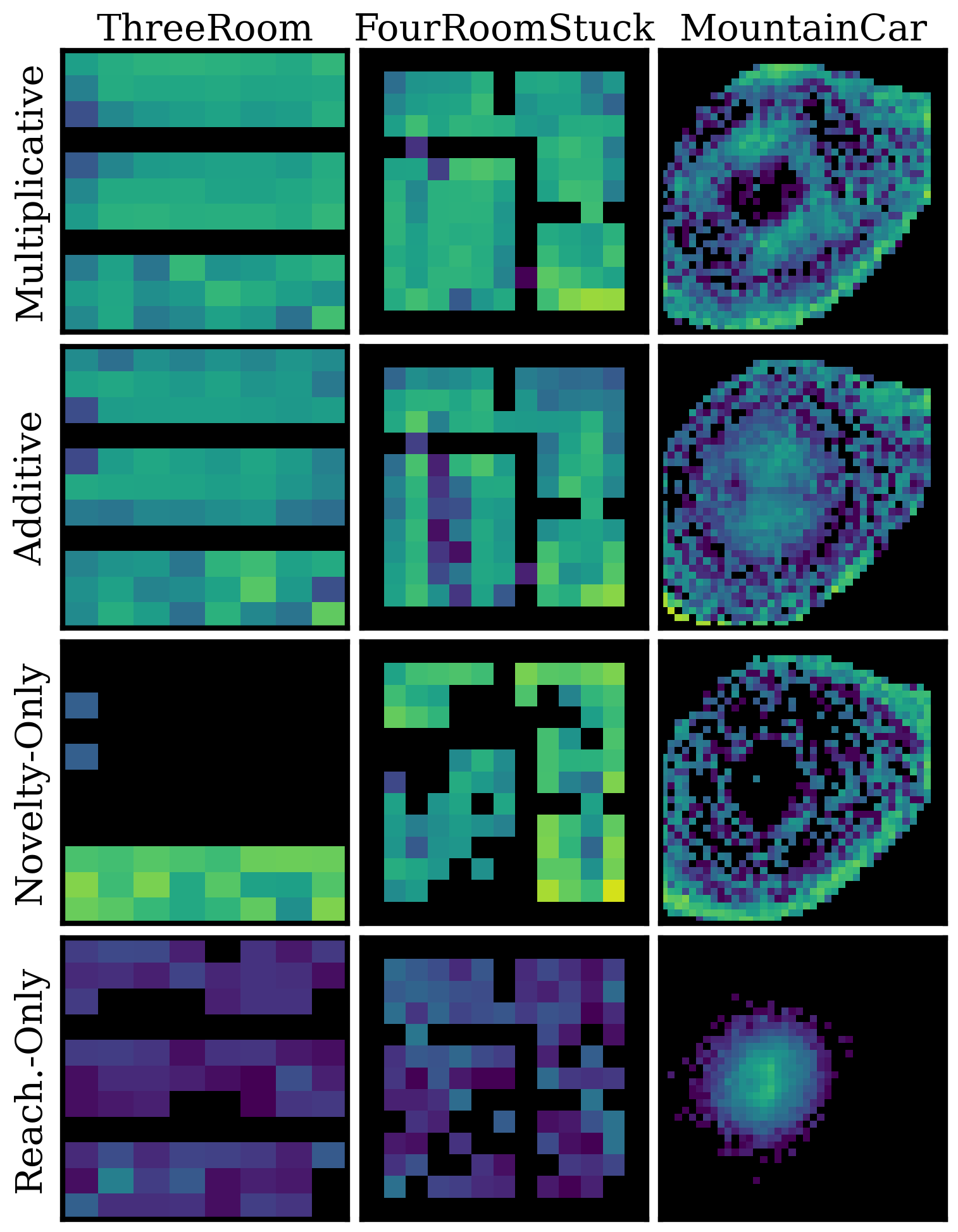}
\\[-5pt]
{\sffamily\tiny Goal selections}
\end{minipage}
\end{minipage}%
\hfill
\begin{minipage}[c]{0.555\textwidth}
\includegraphics[width=\linewidth]{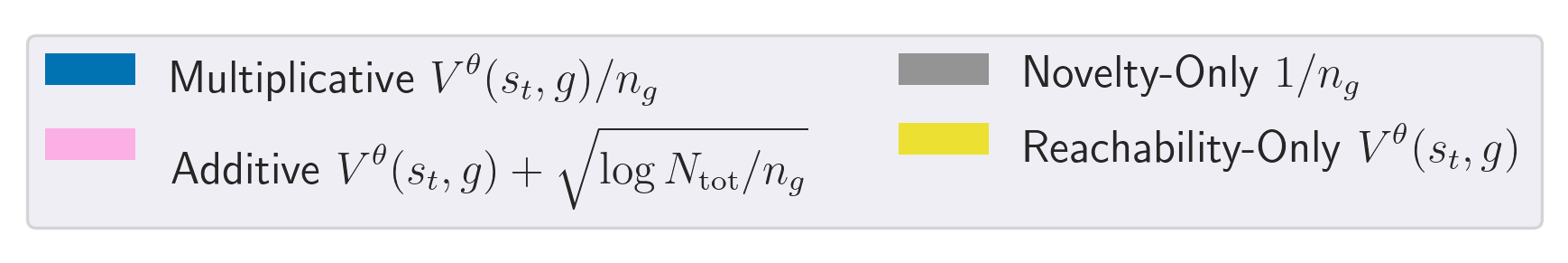}
\captionsetup{skip=4pt}
\caption{\label{fig:ablation_score}\textbf{Indicator ablation}. If driven by \textit{reachability only}, the agent barely explores --- the most reachable state is the current one. \textit{Novelty-only} performs well if goals are always reachable (\stt{MountainCar}), but fails otherwise (Gridworlds). The \textit{additive indicator} can saturate: as counts increase, the novelty bonus vanishes and the indicator reduces to pure reachability; the entropy thus drops, and goals and visits begin to cluster near the starting states --- this is clearly visible in \stt{MountainCar}, and to a lesser extent in Gridworlds. The \textit{multiplicative indicator} does not display these failure modes.}
\end{minipage}
\end{figure}

\begin{figure}[b]
\centering
\begin{minipage}[c]{0.42\textwidth}
    \centering
    \includegraphics[width=\linewidth]{\detokenize{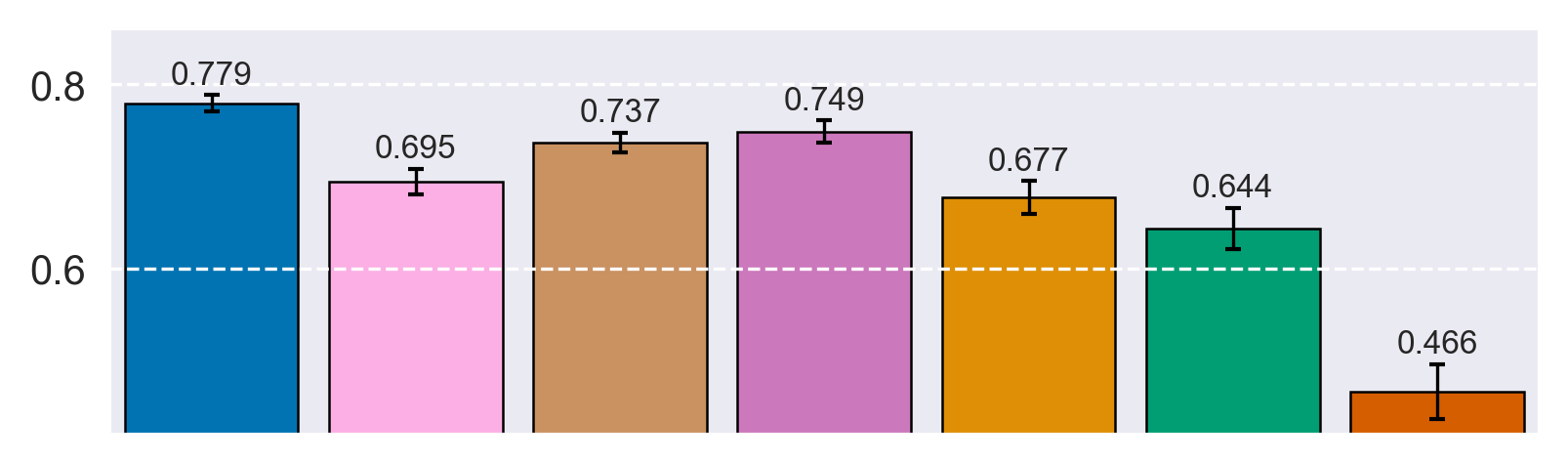}}\\[-7pt]
    {\sffamily\tiny Coverage AUC}\\[3pt]
    \includegraphics[width=\linewidth]{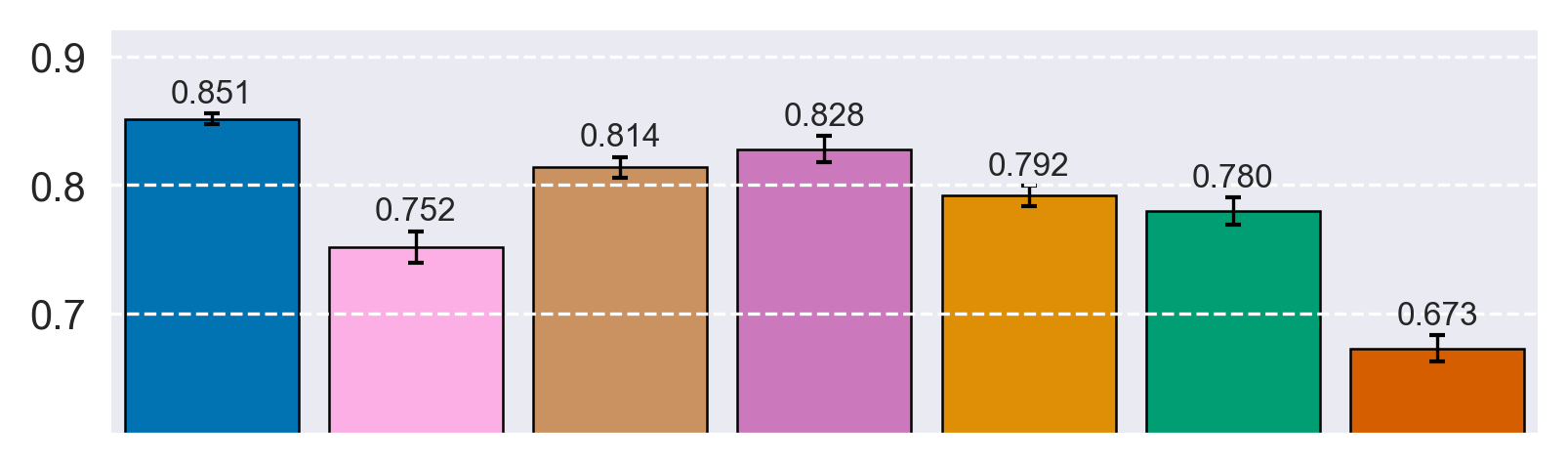}\\[-7pt]
    {\sffamily\tiny Entropy AUC}
\end{minipage}
\hfill
\begin{minipage}[c]{0.57\textwidth}
    \vspace*{-8pt}
    \centering
    \includegraphics[width=\linewidth]{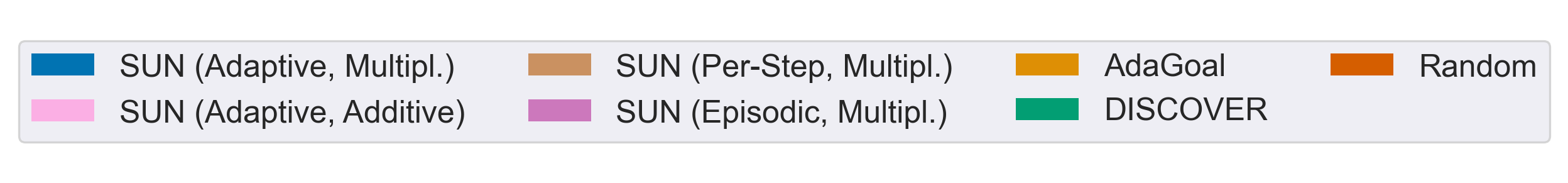}\\[0pt]
    \scriptsize
    \renewcommand{\arraystretch}{1.25}
    \setlength{\tabcolsep}{2pt}
    \begin{tabular}{lcccccc}
    \toprule
     & \multicolumn{3}{c}{\textbf{Coverage}} & \multicolumn{3}{c}{\textbf{Entropy}} \\
    \cmidrule(lr){2-4} \cmidrule(lr){5-7}
     & Ada & Disc & Rand & Ada & Disc & Rand \\
    \midrule
    \textbf{SUN (Adaptive, Multipl.)} & \textbf{+15.1}\% & \textbf{+21.0}\% & \textbf{+67.1}\% & \textbf{+7.6}\% & \textbf{+9.2}\% & \textbf{+26.5}\% \\
    SUN (Adaptive, Additive) & +2.5\%  & +7.9\%  & +48.9\% & -5.0\% & -3.5\% & +11.7\% \\
    SUN (Per-Step, Multipl.) & +8.8\%  & +14.4\% & +57.9\% & +2.8\% & +4.4\% & +20.9\% \\
    SUN (Episodic, Multipl.) & +10.6\% & +16.3\% & +60.6\% & +4.6\% & +6.2\% & +23.0\% \\
    \bottomrule
    \end{tabular}
    % \\[2pt]
    % \tiny Ada = AdaGoal, Disc = DISCOVER, Rand = Random
\end{minipage}
\caption{\textbf{(Left)} Area under the curve (AUC) for entropy and coverage across all baselines and SUN versions, averaged over all environments (excluding \stt{LunarLander(Full)}). \textbf{(Right)} Relative AUC improvement of each SUN version over AdaGoal, DISCOVER, and Random.}
\label{fig:sun_all_relative_improvement}
\end{figure}

\subsection{Why AdaGoal and DISCOVER Fail}
\label{subsec:sun_better}
The results so far are clear: SUN attains the best entropy and coverage, as confirmed visually by the visitation heatmaps, and the ablations validate the importance of its components. 
Yet another result stands out (Figure \ref{fig:sun_all_relative_improvement}): \textit{all SUN versions attain better coverage than AdaGoal and DISCOVER, and all multiplicative versions also attain better entropy} --- strong evidence that SUN's advantage comes from the indicator itself rather than from the goal-selection strategy. The additive version is the exception on entropy, consistent with the saturation failure mode of Figure~\ref{fig:ablation_score}.
To understand this better, Figure~\ref{fig:failure_analysis} shows the goals selected by all SUN versions, AdaGoal, and DISCOVER over training in \stt{MountainCar}.
\\[1pt]
\begin{wrapfigure}{l}{0.481\linewidth}
    \vspace{-\baselineskip}
    \vspace*{-3pt}
    \centering
    \includegraphics[trim={0 0 0 2pt}, clip, width=\linewidth]{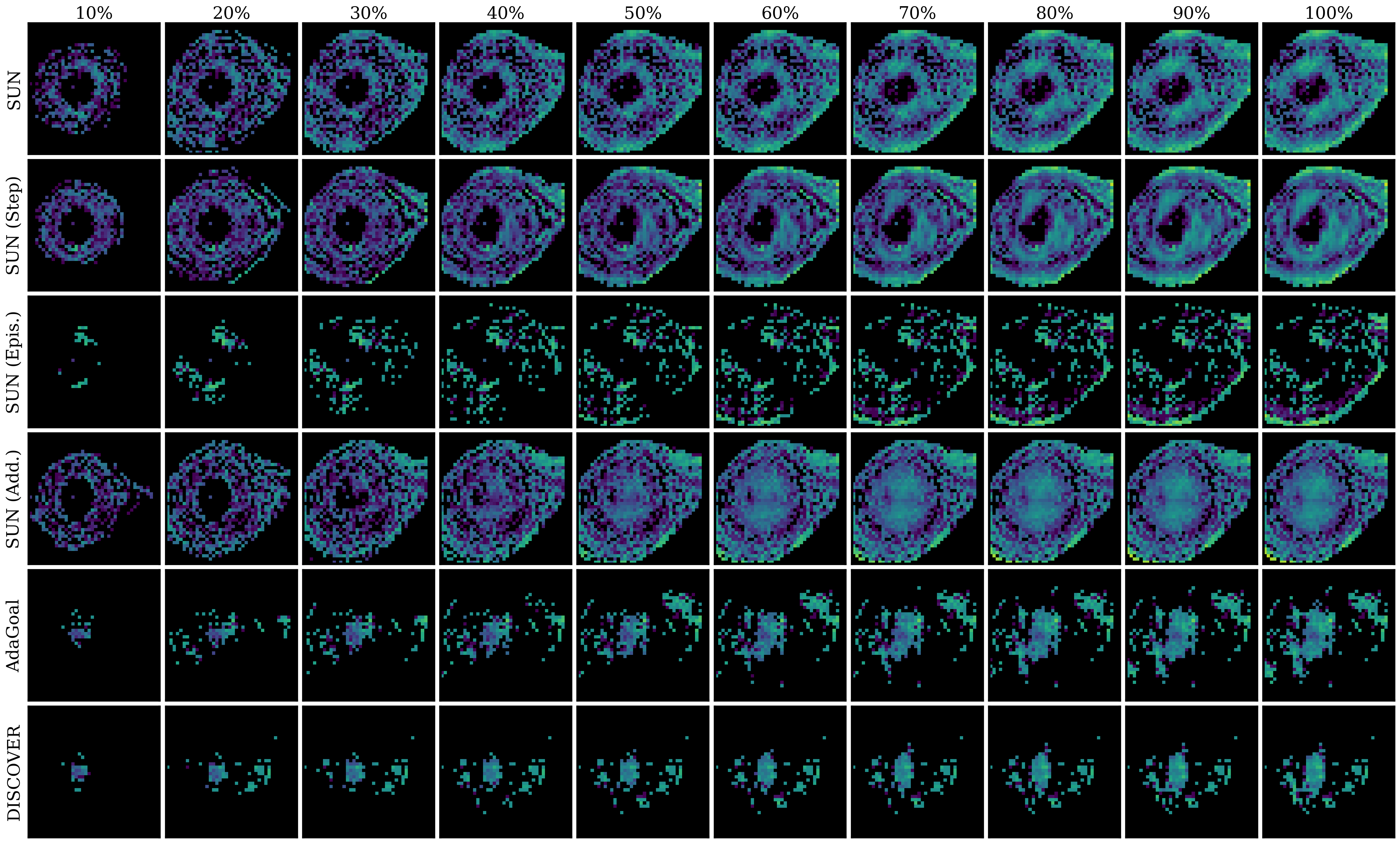}
    \captionsetup{skip=1pt}
    \caption{\label{fig:failure_analysis}Goal selections over training in \stt{M.Car}.}
    \vspace{-\baselineskip}
    \vspace*{-8pt}
    \end{wrapfigure}
\textit{DISCOVER} (last row) is biased toward reachability: it starts by selecting goals near the starting state and barely expands beyond them. This may stem from its coefficient $\beta$, which must balance the scales of the reachability and novelty terms. SUN is unaffected by this issue, except slightly in its Additive version (fourth row), which starts selecting easy-to-reach goals late in training, once the novelty bonus begins to vanish.
\\[1pt]
\textit{AdaGoal} (fifth row), conversely, is biased toward novelty at the expense of reachability. Already at 20\% of training it selects goals progressively further from the starting state --- yet at that stage the SVFs are still inaccurate and the agent does not know how to reach them.
This is not unexpected. As noted in Section~\ref{subsec:summary}, we evaluate the deep-RL variant of AdaGoal, which --- unlike the tabular version --- does not constrain goal selection by the estimated goal-hitting time. 
% \citeauthor{tarbouriech2022adaptive} argue that this constraint is redundant. However, our results suggest this does not hold when approximating novelty via ensemble disagreement.
\citeauthor{tarbouriech2022adaptive} argue that this is approximated implicitly by the disagreement of the value ensemble. Our results suggest that this does not hold in more complex environments. Novelty alone is a reasonable signal for goals the agent can actually reach, since visiting them resolves the disagreement at little cost. Unreachable goals, however, resolve only after enough failed attempts for every ensemble member to recognize them as such, and each of those attempts is an episode spent without useful experience. The rule cannot distinguish the two cases, and where the reachable set is a small fraction of the goal space the latter dominates --- precisely the behavior the reachability constraint was meant to prevent.
\\[1pt]
\textit{All SUN versions} (first three rows) instead \textit{progressively select goals that are further away}. This natural progression provides the best balance between reachability and novelty, and holds for SUN Episodic as well (third row). Thus, this progression is intrinsic to the indicator rather than a product of reselection --- AdaGoal and DISCOVER both use episodic selection too, yet show no such trend.

\begin{tcolorbox}[colback=gray!10, colframe=gray!40, boxrule=0.5pt, arc=2pt, left=6pt, right=6pt, top=3pt, bottom=3pt]
The results in this section validate our claims. (1) SUN attains the best entropy and coverage across all environments, visiting the goal space significantly more uniformly than every baseline. (2) Its pseudocount effectively approximates novelty even in high-dimensional goal spaces. (3) Its multiplicative indicator does not vanish and selects goals that are both reachable and novel, and its adaptive goal-selection outperforms the baselines' episodic strategy.
\end{tcolorbox}

% \clearpage

\section{Discussion}
\label{sec:discussion}
In this paper, we tackled the challenge of exploration in RL via goal-conditioned policies, focusing on the selection of goals that are simultaneously novel and reachable. We argued that existing methods cannot fully capture the tension between the two and end up neglecting one or the other. We thus proposed SUN, a unified and principled indicator and goal-selection rule grounded in SVFs and counts. 
We further introduced a lightweight pseudocount that scales to per-step goal-selection in continuous spaces. Finally, we validated SUN on standard and new benchmarks, where it consistently outperforms state-of-the-art methods AdaGoal and DISCOVER.
\\[3pt]
\textbf{Strengths.} The strength of SUN lies in its principled and modular design. Its score admits a natural interpretation as the optimal value of a count-bonus exploration objective, driving the agent toward rarely-visited states \textit{within reach}. This is made possible by our lightweight pseudocount, which avoids the overhead of classical pseudocount and density-based approaches while preserving their accuracy. Our results confirm this: AdaGoal and DISCOVER, which instead score goals by critic-ensemble disagreement, fail to balance the two signals and collapse toward one or the other.
\\[3pt]
\textbf{Limitations and Future Work.}
\textit{First}, although principled, count-based novelty is ineffective in large spaces, such as image observations, \textit{regardless of their pseudocount approximations}. Since most observed states are effectively unique, counts become nearly uniform across the buffer and do not provide a useful novelty signal. In such regimes, exploration likely requires richer signals, such as learned curiosity~\citep{raileanu2020ride, parisi2021interesting} or mutual information between trajectories and learned latents~\citep{eysenbach2019diversity, sharma2020dynamics}. Combining these signals with SUN's reachability factor is a promising direction.
\\[2pt]
\textit{Second}, SUN selects the goal greedily based on the SUN score of the candidate alone, without accounting for the states traversed to reach it. A longer path through many novel states may be preferable to a shorter path to a marginally rarer goal
% (see Appendix \ref{app:pendulum}) 
--- a distinction the current formulation cannot make. Extending SUN indicator to a cumulative form would more directly target a maximum-entropy state distribution~\citep{hazan2019provably,mutti2021task}. We see this as a natural next step.
\\[2pt]
\textit{Finally}, SUN draws candidate goals from the replay buffer, which means it can only propose goals it has already encountered. 
% While the negative shuffling in Section~\ref{subsec:relabeling} provides some signal for out-of-distribution goals during \emph{training}, the selection rule itself remains tied to the buffer manifold and cannot extrapolate to genuinely unseen states. 
A natural way to relax this is to exploit the geometry of the goal space. Recent work on temporal distances and quasimetric value functions~\citep{wang2023optimal,myers2024learning} learns goal-space metrics that generalize beyond observed pairs, Laplacian-style representations~\citep{shehmar2026laplacian} provide a similar latent geometry, and studies on out-of-distribution generalization in goal-conditioned RL~\citep{yang2023what} characterize design choices that enable extrapolation to unseen goals. Combining SUN's reachability factor with such learned metrics or generalization-aware training --- sampling goals as points in a continuous metric space rather than from the buffer --- is a promising direction for unblocking the buffer-manifold limitation.
% \\[2pt]
% \textit{Finally}, we do not provide regret or sample-complexity bounds, because candidates are sampled from replay rather than enumerated. Extending these structural results to a PAC-style analysis, as in AdaGoal~\citep{tarbouriech2022adaptive}, remains open.

\clearpage

\begin{ack}
This research was supported by grants from the European Laboratory for Learning and Intelligent Systems (ELLIS) and Finnish IT Center for Science (CSC).
\end{ack}

\setlength{\bibsep}{6pt}
\bibliography{main.bib}
\bibliographystyle{abbrvnat}

\clearpage
\appendix

\startcontents[appendices]
\section*{Appendices}

\begingroup
\hypersetup{
  colorlinks=true,
  linkcolor=linkblue
}
\printcontents[appendices]{}{1}{\setcounter{tocdepth}{2}}
\endgroup

\clearpage
\begin{appendix}

\section{Theoretical Properties}
\label{app:theory}

This section analyzes the SUN selection rule under an oracle: the SVF $V^\pi(s, g)$ and the count $n_g$ are exact, the replay buffer is frozen, and the goal-conditioned policy $\pi$ acts to reach $g$. The analysis is structural --- it characterizes what SUN does given perfect estimates, not what it learns from samples. Within this scope, we establish three properties (Sections~\ref{subsec:thm-countbonus}--\ref{subsec:thm-unreachable}) that together formalize how SUN balances novelty and reachability, and we draw connections to existing methods through a log-space decomposition (Section~\ref{subsec:thm-logspace}).

\subsection{Setup and Notation}
\label{subsec:thm-setup}

Let $\mu(g)$ denote the replay marginal over goals --- the empirical distribution of states stored in the buffer. The pseudocount $n_g$ from Section~\ref{subsec:pseudo} is a finite-sample estimate of $\mu(g)$ up to a normalization constant; under the oracle, we treat $n_g$ as exact and $\mu(g) > 0$ for every candidate. We define the \emph{log-rarity potential}
\begin{equation}
  \Phi(g) \;\triangleq\; \log \mu(g),
\end{equation}
which is monotone in $n_g$: small $n_g$ corresponds to small $\mu(g)$, hence small (very negative) $\Phi(g)$. The SUN novelty score $\nu(g) = 1/n_g$ in Section \ref{sec:method} therefore corresponds to $-\Phi(g)$ in log-space, up to a constant.

Throughout the analysis, $\pi$ denotes the goal-conditioned policy targeting $g$, and $\tau_g \triangleq \inf\{t \geq 0 : s_t = g\}$ is its (random) hitting time of $g$ from a starting state $s$. The SVF (Eq.~\ref{eq:w-function}) under $\pi$ admits the equivalent form
\begin{equation}
  V^{\pi}(s, g) = \EV_{\pi}\!\left[\gamma^{\tau_g} \,\ind{\scriptstyle\{\tau_g < \infty\}}\right],
\end{equation}
when $g$ is terminal (i.e., once reached, the episode is considered ended). This identity will be used repeatedly. 
\begin{assumption}[Oracle setting] From now on, we assume the following conditions are true.
\label{ass:ideal}
\begin{enumerate}[leftmargin=*, itemsep=1pt, topsep=0pt, label=\textup{(C\arabic*)}]
\item The state space $\statespace$ is finite.
\item Transitions are deterministic.
\item The replay marginal $\mu$, the SVF $V^\pi$, and the count $n_g$ are frozen during the analysis.
\item Under $\pi$, if $g$ is reachable the trajectory hits it after $k_\pi(s, g) - 1 \in \{0, 1, 2, \dots\}$ steps, where $k_\pi(s, g)$ counts the states on the path from $s$ to $g$ inclusive (so $k_\pi(g, g) = 1$), and the episode is considered ended at the first hit (i.e., goal-reaching transitions are terminal). If $g$ is unreachable, $k_\pi(s, g) = \infty$. We further assume $\pi$ is optimal for its own goal, i.e.\ $k_\pi(s, g) = \min_{\pi'} k_{\pi'}(s, g)$ for all $s$; this is what licenses the shortest-path inequality in the proof of Proposition~\ref{thm:goal-stability}(ii).
\end{enumerate}
\end{assumption}

Assumption~\ref{ass:ideal} mirrors the structural setup used in the autonomous-exploration literature~\citep{lim2012autonomous,tarbouriech2020improved,tarbouriech2022adaptive}. Section~\ref{subsec:thm-stochastic} relaxes (C2)--(C3) to stochastic dynamics with almost-sure hitting.

\subsection{Count-Bonus Equivalence}
\label{subsec:thm-countbonus}

The first result links SUN to the count-based intrinsic motivation literature~\citep{bellemare2016unifying}: SUN is exactly the value of a goal-specific count-bonus reward.

\begin{theorem}[Count-bonus equivalence]
\label{thm:countbonus}
Under Assumption~\ref{ass:ideal}, for any goal $g$ and any policy $\pi$, define the goal-specific reward $r^g_t \triangleq \ind{\scriptstyle\{s_t = g\}} / n_g$, where $n_g$ is the (frozen) count from Assumption~\ref{ass:ideal}(C4). Then
\begin{equation}
  \EV_\pi\!\left[\sum_{t \geq 0} \gamma^t \, r^g_t \;\Big|\; s_0 = s\right]
  \;=\;
  \frac{V^\pi(s, g)}{n_g}
  \;=\;
  \mathrm{SUN}(g \spacedmid s).
\end{equation}
\end{theorem}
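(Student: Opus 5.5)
The proof is essentially linearity of expectation, carried out carefully enough that every interchange of sum and expectation is justified under Assumption~\ref{ass:ideal}.

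\emph{Pulling out the count.} By (C3) the count $n_g$ is frozen. It is therefore a deterministic positive constant that does not depend on the trajectory or on $t$; positivity follows from $\mu(g)>0$ for every candidate. So we may write $r^g_t = \ind{\scriptstyle\{s_t = g\}}/n_g$ and factor $1/n_g$ out of both the sum and the expectation:
\[
\EV_\pi\!\Big[\sum_{t\ge 0}\gamma^t r^g_t \,\Big|\, s_0=s\Big] = \frac{1}{n_g}\,\EV_\pi\!\Big[\sum_{t\ge 0}\gamma^t \ind{\scriptstyle\{s_t=g\}} \,\Big|\, s_0=s\Big].
\]
This step is legitimate because the summands are nonnegative and bounded by $\gamma^t$ with $\gamma<1$. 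Tonelli's theorem, or equivalently dominated convergence with the summable dominator $\sum_t \gamma^t = 1/(1-\gamma)$, lets us interchange sum and expectation, and pulling a constant through each term is then trivial.

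\emph{Identifying the SVF.} The remaining expectation is exactly the definition of $V^\pi(s,g)$ in Eq.~\eqref{eq:w-function} with $t=0$. The one subtlety is the termination convention in (C4): the episode ends at the first hit, so the sum runs only up to $\tau_g$. I will handle this by reading both sides under the same convention, since the count-bonus return and the SVF are defined over the same terminated trajectory. With that reading, the only nonzero term is $\gamma^{\tau_g}\ind{\scriptstyle\{\tau_g<\infty\}}$, which matches the equivalent form $V^\pi(s,g)=\EV_\pi[\gamma^{\tau_g}\ind{\scriptstyle\{\tau_g<\infty\}}]$ stated in the setup. Under (C2) and (C4) this is simply $\gamma^{k_\pi(s,g)-1}$, or $0$ if $g$ is unreachable.

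\emph{Matching the SUN definition.} Finally, with $\nu(g)=1/n_g$, Eq.~\eqref{eq:sun-score} gives $\mathrm{SUN}(g\spacedmid s)=V^\pi(s,g)/n_g$, which closes the chain of equalities.

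The only real obstacle is bookkeeping rather than mathematics. One must verify that the same termination convention governs both sides. One must also confirm that $n_g$ is a trajectory-independent constant, so that visits to $g$ during the rollout do not update it; (C3) explicitly supplies this freezing.
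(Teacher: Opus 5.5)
Your proposal is correct and matches the paper's proof, which likewise factors the frozen constant $1/n_g$ out of the expectation, recognizes the remaining expectation as $V^\pi(s,g)$ from Eq.~\eqref{eq:w-function}, and identifies the result with $\mathrm{SUN}(g \spacedmid s)$ via Eq.~\eqref{eq:sun-score}. Your extra bookkeeping goes beyond the paper but is harmless: positivity of $n_g$, the Tonelli justification (not strictly needed, since the factoring is pointwise), and the shared termination convention. You also correctly attribute the freezing of $n_g$ to (C3), whereas the paper's proof cites (C4).
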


\begin{proof}
By Assumption~\ref{ass:ideal}(C4), $n_g$ is frozen during the analysis and therefore independent of the trajectory. It thus factors out of the expectation:
\begin{align}
  \EV_\pi\!\left[\sum_{t \geq 0} \gamma^t \, r^g_t \;\Big|\; s_0 = s\right]
  &\;=\; \EV_\pi\!\left[\sum_{t \geq 0} \gamma^t \, \frac{\ind{\scriptstyle\{s_t = g\}}}{n_g} \;\Big|\; s_0 = s\right] \\
  &\;=\; \frac{1}{n_g} \, \EV_\pi\!\left[\sum_{t \geq 0} \gamma^t \, \ind{\scriptstyle\{s_t = g\}} \;\Big|\; s_0 = s\right] \\
  &\;=\; \frac{V^\pi(s, g)}{n_g}
  \;=\; \mathrm{SUN}(g \spacedmid s),
\end{align}
where the last equality uses the definition of the SVF (Eq.~\ref{eq:w-function}) and the SUN score (Eq.~\ref{eq:sun-score}).
\end{proof}

This identity is the simplest interpretation of SUN: maximizing $\mathrm{SUN}(g \spacedmid s)$ over $g$ is equivalent to acting greedily with respect to the optimal value of an exploration reward that pays inversely proportional to how often $g$ has been visited. SUN is therefore not just a heuristic combination of two signals --- it is the value of a single, principled exploration objective.

\begin{remark}[Frozen vs. online counts]
The equivalence holds when $n_g$ is treated as fixed during the trajectory --- the standard frozen-replay setting (C4). When counts are updated online (as in our practical algorithm and in the standard count-bonus literature), the surrogate reward becomes non-stationary and the equivalence holds only approximately, with the size of the gap controlled by how much $n_g$ changes over the trajectory.
\end{remark}

\begin{remark}[Form vs. exponent]
Theorem~\ref{thm:countbonus} holds for any $\nu(g) = f(n_g)$, since $n_g$ is frozen and factors out of the expectation. It therefore justifies the multiplicative \textit{form} but not the exponent: $\nu(g) = 1/n_g$ is a choice, with $1/\sqrt{n_g}$~\citep{bellemare2016unifying} the natural alternative.
\end{remark}

\subsection{Hitting-Probability Bound}
\label{subsec:thm-hitting}

The SVF $V^\pi(s, g)$ is a discounted occupancy, but the property we ultimately care about is whether the agent reaches $g$ from $s$ within a reasonable horizon. The next theorem connects the two for the goal-conditioned policy $\pi$, under the terminal-goal assumption (C4) of Assumption~\ref{ass:ideal}: once the agent reaches $g$, it stays. In this setting, $V^{\pi}(s, g) = \EV[\gamma^{\tau_g} \ind{\scriptstyle\{\tau_g < \infty\}}] \leq 1$ (Section~\ref{subsec:thm-setup}), and a high SVF guarantees a high hitting probability over a short horizon.

\begin{theorem}[Short-horizon hitting bound]
\label{thm:hitting}
Under Assumption~\ref{ass:ideal} with (C2) relaxed to stochastic transitions (Section~\ref{subsec:thm-stochastic}), for every horizon $n \geq 0$,
\begin{equation}
  \Pr_{\pi}\!\left[\tau_g \leq n\right]
  \;\geq\;
  V^{\pi}(s, g) \;-\; \gamma^{n+1}.
\end{equation}
In particular, choosing $n$ such that $\gamma^{n+1} \leq V^{\pi}(s, g)/2$ gives $\Pr_{\pi}[\tau_g \leq n] \geq V^{\pi}(s, g)/2$. This horizon is $n = \mathcal{O}(\log(1/V^{\pi}) / (1 - \gamma))$.
\end{theorem}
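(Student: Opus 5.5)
The plan is to work directly from the hitting-time representation of the SVF established in Section~\ref{subsec:thm-setup}, namely $V^{\pi}(s,g) = \EV_\pi[\gamma^{\tau_g}\ind{\scriptstyle\{\tau_g<\infty\}}]$. This identity relies only on goals being terminal (C4), not on determinism, so it survives the relaxation of (C2) to stochastic transitions. The argument does not need a Bellman recursion or any property of the dynamics; it is a one-line split of the expectation according to whether the hitting time falls inside or outside the horizon $n$.

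\textbf{Step 1 (split the expectation).} Write the integrand as $\gamma^{\tau_g}\ind{\scriptstyle\{\tau_g<\infty\}} = \gamma^{\tau_g}\ind{\scriptstyle\{\tau_g\le n\}} + \gamma^{\tau_g}\ind{\scriptstyle\{n<\tau_g<\infty\}}$ and take expectations of each term separately.

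\textbf{Step 2 (bound the two terms).} On $\{\tau_g\le n\}$ we have $\gamma^{\tau_g}\le 1$, since $\gamma\in[0,1)$ and $\tau_g\ge 0$, so the first expectation is at most $\Pr_\pi[\tau_g\le n]$. On $\{n<\tau_g<\infty\}$ we have $\tau_g\ge n+1$, hence $\gamma^{\tau_g}\le\gamma^{n+1}$. The second expectation is therefore at most $\gamma^{n+1}\Pr_\pi[n<\tau_g<\infty]\le\gamma^{n+1}$. Summing the two bounds gives $V^\pi(s,g)\le \Pr_\pi[\tau_g\le n]+\gamma^{n+1}$, which rearranges to the claimed inequality. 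As a byproduct, this yields the sharper bound with $\gamma^{n+1}\Pr[\tau_g>n]$, which I will not need.

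\textbf{Step 3 (horizon corollary).} Assume $V^\pi(s,g)>0$, since otherwise the corollary is vacuous. Choose $n+1\ge \log(2/V)/\log(1/\gamma)$, so that $\gamma^{n+1}\le V/2$. Substituting into Step~2 gives $\Pr_\pi[\tau_g\le n]\ge V/2$. To obtain the stated order, I use $\log(1/\gamma)\ge 1-\gamma$, which follows from $\log x\le x-1$ applied at $x=\gamma$. This makes $n=\lceil \log(2/V)/(1-\gamma)\rceil$ sufficient, and $\log(2/V)=\log 2+\log(1/V)$ is $\mathcal{O}(\log(1/V))$ once $V\le 1/2$. For $V>1/2$ the horizon is $\mathcal{O}(1/(1-\gamma))$, which I will note is absorbed into the stated order.

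\textbf{Main obstacle.} No step is technically hard. The only delicate point is justifying the hitting-time representation in the stochastic setting, which amounts to checking that termination at the first hit makes $\sum_k\gamma^k\ind{\scriptstyle\{s_k=g\}}$ equal to $\gamma^{\tau_g}$ pathwise. The minor bookkeeping is the constant in the $\mathcal{O}$ when $V$ is close to $1$.
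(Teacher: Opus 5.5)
Your proof is correct and follows the paper's argument step for step: the same split of $\EV_\pi[\gamma^{\tau_g}\ind{\scriptstyle\{\tau_g<\infty\}}]$ into $\{\tau_g\le n\}$ and $\{n<\tau_g<\infty\}$, the same two bounds on the pieces, and the same use of $\log(1/\gamma)\ge 1-\gamma$ for the horizon. The one slip is your claim that the $\log 2$ term is absorbed when $V>1/2$, and the paper's proof has the same looseness. For $V=\gamma\to 1$, $\log(1/V)/(1-\gamma)\to 1$, while $\gamma^{n+1}\le V/2$ forces $n\gtrsim \log 2/(1-\gamma)$. The sharper bound you set aside fixes this: it gives $\Pr_\pi[\tau_g\le n]\ge (V-\gamma^{n+1})/(1-\gamma^{n+1})$, and together with $2-V\le 1/V$ this shows $n=\lceil 2\log(1/V)/(1-\gamma)\rceil$ yields $\Pr_\pi[\tau_g\le n]\ge V/2$ uniformly in $V\in(0,1]$.
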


\begin{proof}
Under Assumption (C4), $V^{\pi}(s, g)$ admits the equivalent form $V^{\pi}(s, g) = \EV[\gamma^{\tau_g} \ind{\scriptstyle\{\tau_g < \infty\}}]$ derived in Section~\ref{subsec:thm-setup}. We split the expectation by hitting horizon:
\begin{align}
  V^{\pi}(s, g)
  &\;=\; \EV\!\left[\gamma^{\tau_g} \, \ind{\scriptstyle\{\tau_g < \infty\}}\right] \\
  &\;=\; \EV\!\left[\gamma^{\tau_g} \, \ind{\scriptstyle\{\tau_g \leq n\}}\right]
        \,+\, \EV\!\left[\gamma^{\tau_g} \, \ind{\scriptstyle\{n < \tau_g < \infty\}}\right].
\end{align}
We bound each term separately.
\\
For the first term, $\gamma^{\tau_g} \leq 1$ when $\tau_g \leq n$, so
\begin{equation}
  \EV\!\left[\gamma^{\tau_g} \, \ind{\scriptstyle\{\tau_g \leq n\}}\right]
  \;\leq\;
  \EV\!\left[\ind{\scriptstyle\{\tau_g \leq n\}}\right]
  \;=\;
  \Pr_{\pi}[\tau_g \leq n].
\end{equation}
For the second term, $\gamma^{\tau_g} \leq \gamma^{n+1}$ when $\tau_g > n$, so
\begin{equation}
  \EV\!\left[\gamma^{\tau_g} \, \ind{\scriptstyle\{n < \tau_g < \infty\}}\right]
  \;\leq\;
  \gamma^{n+1} \, \Pr_{\pi}[\tau_g > n]
  \;\leq\;
  \gamma^{n+1}.
\end{equation}
Combining the two bounds gives $V^{\pi}(s, g) \leq \Pr_{\pi}[\tau_g \leq n] + \gamma^{n+1}$, and rearranging yields
\begin{equation}
  \Pr_{\pi}[\tau_g \leq n] \;\geq\; V^{\pi}(s, g) \,-\, \gamma^{n+1}.
\end{equation}
For the horizon claim, set $\gamma^{n+1} \leq V^{\pi}/2$ and solve: $n+1 \geq \log(2/V^{\pi}) / \log(1/\gamma)$, which gives $n = \mathcal{O}(\log(1/V^{\pi}) / (1 - \gamma))$ since $\log(1/\gamma) \geq 1 - \gamma$ for $\gamma \in (0, 1)$.
\end{proof}

The horizon scales logarithmically in $1/V^{\pi}$ and inversely in $1 - \gamma$: high-SVF goals are hit quickly with high probability, while low-SVF goals require long horizons. This is the formal version of the intuition that the SVF is a reachability signal.

\subsection{Rejection of Unreachable Goals}
\label{subsec:thm-unreachable}

The third result is the dual of Theorem~\ref{thm:hitting}: goals that no policy can reach receive zero score, so SUN never selects them.

\begin{theorem}[Unreachability rejection]
\label{thm:unreachable}
Let $\Pi$ be any class of policies. If $g$ is unreachable from $s$ under every $\pi \in \Pi$ --- that is, $\Pr_\pi[\tau_g < \infty] = 0$ for all $\pi \in \Pi$ --- then
\begin{equation}
  \sup_{\pi \in \Pi} V^\pi(s, g) = 0,
  \qquad
  \mathrm{SUN}(g \spacedmid s) = 0,
\end{equation}
regardless of $n_g$.
\end{theorem}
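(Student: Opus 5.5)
The plan is to reduce everything to the hitting-time representation of the SVF from Section~\ref{subsec:thm-setup}, namely $V^\pi(s,g) = \EV_\pi[\gamma^{\tau_g}\ind{\scriptstyle\{\tau_g<\infty\}}]$, which holds under the terminal-goal convention (C4) and also in the stochastic relaxation of Section~\ref{subsec:thm-stochastic}. Fix an arbitrary $\pi \in \Pi$. The integrand $\gamma^{\tau_g}\ind{\scriptstyle\{\tau_g<\infty\}}$ is nonnegative, bounded by $1$, and vanishes outside the event $\{\tau_g<\infty\}$. This gives
\begin{equation*}
0 \;\le\; V^\pi(s,g) \;\le\; \Pr_\pi[\tau_g<\infty] \;=\; 0 .
\end{equation*}
Hence $V^\pi(s,g)=0$. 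Since $\pi\in\Pi$ was arbitrary, taking the supremum over $\Pi$ of the identically-zero quantity gives $\sup_{\pi\in\Pi}V^\pi(s,g)=0$.

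For the SUN claim, I will substitute into Eq.~\eqref{eq:sun-score}: $\mathrm{SUN}(g\spacedmid s)=V^\pi(s,g)\,\nu(g)=0\cdot\nu(g)=0$. This uses only that $\nu(g)=1/n_g$ is finite, which holds because $n_g\ge 1$ for any candidate drawn from the buffer (the pseudocount counts the entry itself). The same argument works for any finite $\nu(g)=f(n_g)$, as in the ``form vs.\ exponent'' remark, which justifies ``regardless of $n_g$.'' Equivalently, I can appeal to Theorem~\ref{thm:countbonus}: the count-bonus reward $r^g_t$ is zero almost surely along every trajectory that never hits $g$, so its value is zero.

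The main subtlety, rather than a real obstacle, is that the hitting-time identity relies on $g$ being terminal. If one instead uses the raw occupancy definition in Eq.~\eqref{eq:w-function} with non-terminal goals, I will argue directly. On the event $\{\tau_g=\infty\}$, every indicator $\ind{\scriptstyle\{s_k=g\}}$ is zero, so the discounted sum is zero pathwise. Because that event has probability one, the expectation is zero by monotone convergence, since the summands are nonnegative. This covers both conventions. A final remark will note that the result also holds for the SVF of any policy in $\Pi$ used as the reachability factor, so it does not depend on $\pi$ being optimal as in (C4).
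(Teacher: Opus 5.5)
Your proposal is correct and follows the paper's argument: unreachability makes $\ind_{\{\tau_g<\infty\}}=0$ almost surely, so the hitting-time form of the SVF gives $V^\pi(s,g)=0$ for every $\pi\in\Pi$, and hence $\mathrm{SUN}(g \spacedmid s)=0\cdot\nu(g)=0$. Your additions are sound refinements of details the paper leaves implicit: the pathwise argument for the non-terminal occupancy definition, the point that $n_g\ge 1$ keeps $\nu(g)$ finite so that $0\cdot\nu(g)$ is well defined, and the point that the SUN claim needs the scoring policy to lie in $\Pi$.
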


\begin{proof}
For any $\pi \in \Pi$, the unreachability hypothesis $\Pr_\pi[\tau_g < \infty] = 0$ means the reward $\ind{\scriptstyle\{\tau_g < \infty\}}$ is zero with probability one. Hence
\begin{equation}
  V^\pi(s, g) = \EV_\pi\!\left[\gamma^{\tau_g} \, \ind{\scriptstyle\{\tau_g < \infty\}}\right] = 0,
\end{equation}
and $\mathrm{SUN}(g \spacedmid s) = 0/n_g = 0$.
\end{proof}

This property fails for novelty-only scores ($1/n_g$ alone), which assign maximal value to never-visited unreachable goals --- exactly the failure mode shown in Figure~\ref{fig:3room_intro}. SUN is therefore guaranteed to ignore goals that the agent cannot reach, regardless of how rare they are.

\subsection{Log-space Decomposition and Connections to Prior Work}
\label{subsec:thm-logspace}

A useful consequence of the count-bonus form is that $\log \mathrm{SUN}$ decomposes additively:
\begin{equation}
  \log \mathrm{SUN}(g \spacedmid s) \;=\; \log V^\pi(s, g) \,-\, \Phi(g).
  \label{eq:sun-log}
\end{equation}
Under Assumption~\ref{ass:ideal}, $V^{\pi}(s, g) = \gamma^{k_\pi(s, g) - 1}$ for reachable $g$, so the score takes the closed form
\begin{equation}
  \log \mathrm{SUN}(g \spacedmid s) \;=\; -\bigl(k_\pi(s, g) - 1\bigr) \, \kappa \,-\, \Phi(g),
  \qquad \kappa \triangleq -\log \gamma > 0.
  \label{eq:sun-closedform}
\end{equation}
SUN therefore takes the form of a soft Lagrangian: $-\Phi(g)$ is the rarity reward, $(k_\pi - 1)\kappa$ is the discounted distance cost, and $\kappa$ is \emph{derived} from the discount $\gamma$ rather than introduced as a separate hyperparameter.
This is structurally similar to AdaGoal~\citep{tarbouriech2022adaptive}, which solves a hard-constrained version with a user-specified radius. The two are not equivalent --- AdaGoal uses sample-variance epistemic uncertainty as its novelty signal, while SUN uses distributional rarity --- but both balance the same two ingredients.

The log-space form also yields a state-dependent admissibility condition: a goal $g$ is preferred over staying at $s$ if and only if
\begin{equation}
  k_\pi(s, g) \;<\; 1 \,+\, \frac{\Phi(s) - \Phi(g)}{\kappa}.
\end{equation}
A goal that is no rarer than the current state ($\Phi(g) \geq \Phi(s)$) is never preferred, and the maximum admissible distance scales with the rarity gap $\Phi(s) - \Phi(g)$. This is a tighter analogue of the ''neither too easy nor too hard`` intuition behind AdaGoal and DISCOVER~\citep{diaz-bone2025discover}, with the trade-off automatically calibrated by the discount.

\subsection{Extension to Stochastic Dynamics}
\label{subsec:thm-stochastic}

Theorems~\ref{thm:countbonus} and~\ref{thm:unreachable} hold beyond deterministic dynamics (Theorem~\ref{thm:hitting} is already stated in that setting). Replacing (C2)--(C3) with stochastic transitions and the assumption that $\tau_g$ is almost-surely finite for reachable $g$, the equivalent SVF form $V^{\pi}(s, g) = \EV[\gamma^{\tau_g}]$ continues to hold, and both theorems transfer verbatim.
The closed-form $V^{\pi} = \gamma^{k_\pi - 1}$ in Eq.~\ref{eq:sun-closedform} no longer holds in general --- $k_\pi$ becomes a random hitting time --- but the log-space decomposition (Eq.~\ref{eq:sun-log}) is unchanged.

\subsection{Adaptive Goal-Selection: Theoretical Consistency and Practical Motivation}
\label{subsec:thm-perstep}

A natural concern with the SUN formulation is that the SVF $V^{\pi}(s, g)$ is defined under a policy $\pi$ that pursues $g$ for the entire trajectory, while SUN's adaptive strategy may reselect the goal mid-episode based on a value-consistency check. We show that this concern dissolves in two ways. In the deterministic, oracle setting, the SUN indicator is monotone along any trajectory that pursues its own $\argmax$: the value never decreases, so the adaptive check never fires, and the goal remains fixed for the entire episode (Proposition~\ref{thm:goal-stability}). In stochastic or approximate settings, monotonicity can fail --- but this failure is precisely what the adaptive check is designed to detect.

\begin{proposition}[Value monotonicity and goal stability under deterministic dynamics]
\label{thm:goal-stability}
Under Assumption~\ref{ass:ideal} (deterministic dynamics, known $V^\pi$ and $n_g$, frozen replay), if SUN selects $g^*_t$ at state $s_t$ and the agent takes one step under $\pi_{g^*_t}$ to reach $s_{t+1}$, then:
\begin{enumerate}[leftmargin=*, noitemsep, topsep=2pt]
\item[(i)] the value along the pursued goal is non-decreasing: $V^\pi(s_{t+1}, g^*_t) \geq V^\pi(s_t, g^*_t)$;
\item[(ii)] $g^*_t$ remains the SUN $\argmax$ at $s_{t+1}$:
\begin{equation}
g^*_t \,=\, \argmax_{g \in \goalspace} \frac{V^\pi(s_{t+1}, g)}{n_g}.
\end{equation}
\end{enumerate}
By (i), the adaptive check $V^\pi(s_t, g^*_t) < V^\pi(s_{t_{\mathrm{sel}}}, g^*_t)$ never fires; by (ii), the argmax is stable. By induction, the same goal is pursued throughout the episode until $g^*_t$ is reached.
\end{proposition}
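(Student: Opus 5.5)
The plan is to work entirely with the closed form $V^\pi(s,g) = \gamma^{k_\pi(s,g)-1}$ that Assumption~\ref{ass:ideal}(C4) gives for reachable $g$, with $V^\pi(s,g)=0$ when $k_\pi(s,g)=\infty$. Both claims then reduce to statements about shortest-path lengths in a deterministic graph. Throughout I assume $s_t \neq g^*_t$, since otherwise the goal is already reached and the episode ends by (C4).

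For (i), the key step is that one step of the optimal goal-conditioned policy along a shortest path shortens the remaining path by exactly one. Concretely, I will show $k_\pi(s_{t+1}, g^*_t) = k_\pi(s_t, g^*_t) - 1$.
\begin{itemize}
\item Because transitions are deterministic and $\pi$ realizes $\min_{\pi'} k_{\pi'}$, the trajectory from $s_t$ is a shortest path.
\item Its suffix from $s_{t+1}$ is therefore a path of length $k_\pi(s_t,g^*_t)-1$, which gives $\le$.
\item The reverse inequality $\ge$ follows because any shorter path from $s_{t+1}$, prefixed with the step $s_t \to s_{t+1}$, would beat the optimum from $s_t$.
\end{itemize}
Hence $V^\pi(s_{t+1},g^*_t) = V^\pi(s_t,g^*_t)/\gamma \ge V^\pi(s_t,g^*_t)$. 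If $g^*_t$ were unreachable, $\mathrm{SUN}(g^*_t \spacedmid s_t)=0$ by Theorem~\ref{thm:unreachable}, and it could only be the argmax if all scores vanish. I would treat that degenerate case separately, where both sides are $0$ and (i) holds trivially.

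For (ii), I will use a one-step triangle inequality valid for every goal $g$: $k_\pi(s_t,g) \le 1 + k_\pi(s_{t+1},g)$. This holds because $s_{t+1}$ is reachable from $s_t$ in one deterministic step, and composing that step with an optimal path from $s_{t+1}$ is admissible for $\min_{\pi'}$; this is exactly where optimality of $\pi$ in (C4) is used. Two consequences follow.
\begin{itemize}
\item It gives $V^\pi(s_{t+1},g) \le V^\pi(s_t,g)/\gamma$, which also holds trivially for unreachable $g$.
\item Dividing by the frozen $n_g$ (C3) and chaining with optimality of $g^*_t$ at $s_t$ and with (i) yields
\[
\frac{V^\pi(s_{t+1},g)}{n_g} \;\le\; \frac{1}{\gamma}\,\frac{V^\pi(s_t,g)}{n_g} \;\le\; \frac{1}{\gamma}\,\frac{V^\pi(s_t,g^*_t)}{n_{g^*_t}} \;=\; \frac{V^\pi(s_{t+1},g^*_t)}{n_{g^*_t}}.
\]
\end{itemize}
So $g^*_t$ remains a maximizer at $s_{t+1}$.

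The induction is then immediate. Apply (i) and (ii) at each step, comparing against $s_{t_{\mathrm{sel}}}$ via the monotone chain of values, until $k_\pi$ reaches $1$, which happens after finitely many steps.

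I expect the main obstacle to be the precise meaning of ``$=\argmax$'' in (ii): the chain only shows $g^*_t$ is \emph{a} maximizer, and new ties may appear at $s_{t+1}$. I would resolve this by either assuming a fixed tie-breaking rule that keeps the incumbent goal, or by stating (ii) as set membership. I would also make explicit that the $\min_{\pi'}$ in (C4) ranges over a policy class closed under prepending a single action. That closure is what justifies the triangle inequality.
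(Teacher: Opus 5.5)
Your proposal is correct and follows the paper's proof essentially step for step. Both use the closed form $V^\pi = \gamma^{k_\pi - 1}$, the exact decrement $k_\pi(s_{t+1}, g^*_t) = k_\pi(s_t, g^*_t) - 1$ for (i), and for (ii) the one-step triangle inequality $k_\pi(s_{t+1}, g) \geq k_\pi(s_t, g) - 1$ chained through the optimality of $g^*_t$; your handling of the degenerate unreachable case, the explicit value chain back to $s_{t_{\mathrm{sel}}}$, and the closure-under-prepending caveat make explicit what the paper leaves implicit. On ties, your chain is strict whenever $V^\pi(s_t,g)/n_g < V^\pi(s_t,g^*_t)/n_{g^*_t}$, so no new ties can arise at $s_{t+1}$ and a unique maximizer stays unique; the tie-breaking caveat is needed only when ties already existed at $s_t$.
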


\begin{proof}
Let $k_g \triangleq k_\pi(s_t, g)$. Under deterministic dynamics, one step along the optimal path to $g^*_t$ reduces its hitting time by exactly one: $k_\pi(s_{t+1}, g^*_t) = k_{g^*_t} - 1$. Applying $V^\pi(s, g) = \gamma^{k_\pi(s, g) - 1}$ gives $V^\pi(s_{t+1}, g^*_t) = \gamma^{-1} V^\pi(s_t, g^*_t) \geq V^\pi(s_t, g^*_t)$ (since $\gamma \in (0, 1]$), proving (i).

For (ii), for any other goal $g$, the triangle inequality on hitting times (a one-step transition can increase the distance to $g$ by at most one) gives $k_\pi(s_{t+1}, g) \geq k_g - 1$, hence $V^\pi(s_{t+1}, g) \leq \gamma^{-1} V^\pi(s_t, g)$. By Assumption~\ref{ass:ideal}(C4), $\mu$ is frozen, so the counts $n_g$ do not change between $t$ and $t+1$. (Even outside this assumption, $n_{g^*_t}$ would not increment within an episode since $g^*_t$ has not yet been reached.) Therefore
\begin{align*}
\frac{V^\pi(s_{t+1}, g^*_t)}{n_{g^*_t}} &= \gamma^{-1} \, \frac{V^\pi(s_t, g^*_t)}{n_{g^*_t}}, \\
\frac{V^\pi(s_{t+1}, g)}{n_g} &\leq \gamma^{-1} \, \frac{V^\pi(s_t, g)}{n_g} \quad \forall g.
\end{align*}
By the optimality of $g^*_t$ at $s_t$, the ordering is preserved at $s_{t+1}$. Theorem~\ref{thm:unreachable} ensures unreachable goals remain excluded.
\end{proof}

Proposition~\ref{thm:goal-stability} shows that, in the oracle setting, the adaptive check never fires, so adaptive selection, per-step selection, and fixed-goal commitment produce the same trajectory. The fixed-goal semantics of $V^{\pi}$ and the adaptive semantics of SUN are therefore consistent.

\textbf{Stochastic dynamics.} Under stochastic transitions, the next state $s_{t+1}$ is random. The hitting-time identity $k_\pi(s_{t+1}, g^*_t) = k_\pi(s_t, g^*_t) - 1$ holds only \emph{in expectation}, and a realized transition may push the agent to a state where $V^\pi(s_{t+1}, g^*_t) < V^\pi(s_{t_{\mathrm{sel}}}, g^*_t)$ --- exactly the condition the adaptive check flags for reselection. A fixed-goal policy would continue pursuing $g^*_t$ regardless; per-step reselection would reselect at every step, discarding stable information. The adaptive check reselects only when the SVF signals a value drop, which under stochastic dynamics corresponds to a transition into a genuinely less favorable region.

\textbf{Approximate $\Vtheta$.} The same argument applies when $\Vtheta$ is learned rather than exact. Errors in $\Vtheta$ can cause monotonicity to fail even under deterministic dynamics: if $\Vtheta(s_t, g^*_t)$ was overestimated at selection time, the true value at $s_{t+1}$ may be lower. The adaptive check detects this and triggers reselection, effectively withdrawing commitment when the SVF's estimates are proved unreliable by their own subsequent values.

\begin{tcolorbox}[colback=gray!10, colframe=gray!40, boxrule=0.5pt, arc=2pt, left=6pt, right=6pt, top=4pt, bottom=4pt]
SUN's adaptive strategy is therefore the right choice in both regimes. When $\Vtheta$ is exact and dynamics are deterministic, Proposition~\ref{thm:goal-stability} shows that the check never fires and the agent commits to a single goal per episode --- matching fixed-goal commitment and inheriting its optimality. When $\Vtheta$ is inaccurate or dynamics are stochastic, the check acts as an SVF-driven trigger: it withdraws commitment precisely when the SVF's own subsequent values signal that the current goal is no longer reliable.
\end{tcolorbox}

% \clearpage

\section{Pseudocount Radius and Standardization}
\label{app:pseudocount}

The pseudocount of Section~\ref{subsec:pseudo} relies on a single scalar radius $\rho$ in standardized feature space. We elaborate on the standardization step here. 

\textbf{Standardization.} Let $\sigma_m$ denote the standard deviation of feature $m$ across all currently stored buffer entries. Naively dividing each feature by its $\sigma_m$ scales all features to unit variance, so that a single radius $\rho$ has the same meaning across features. However, this fails when a feature is nearly constant: a small $\sigma_m$ in the denominator amplifies tiny variations in that feature, so two points that are almost identical in dimension $m$ end up far apart in the standardized space.

To prevent this amplification, we floor $\sigma_m$ at the median standard deviation across features:
\begin{equation}
  \tilde\sigma_m \;=\; \max\!\left(\sigma_m, \; \mathrm{median}_{m'}(\sigma_{m'})\right),
\end{equation}
and standardize using $\tilde\sigma_m$ rather than $\sigma_m$. As a safeguard, if $\tilde\sigma_m = 0$ (the feature is constant across the buffer), we fall back to $\tilde\sigma_m = 1$. The squared distance used by the pseudocount then becomes
\begin{equation}
  \|s - s'\|^2_{/\tilde\sigma} \;=\; \sum_m \frac{(s_m - s'_m)^2}{\tilde\sigma_m^2}.
\end{equation}
% \textbf{Choice of radius.} We use $\rho = 0.1$ throughout, i.e., a sample is a neighbor if it lies within 10\% of one (floored) feature standard deviation in this normalized metric. 
Note that the standard deviation $\{\sigma_m\}$ is recomputed on each insertion from the current buffer. In our experiments, this added negligible overhead since the dominant cost is the pairwise distance computation. 
For very large buffers, $\sigma_m$ can be recomputed less frequently (e.g., every few thousand insertions). % without affecting the reported results.

\begin{figure}[!b]
\centering
\begin{minipage}[c]{0.43\linewidth}
\centering
\resizebox{\linewidth}{!}{
\begin{tikzpicture}[
  trajnode/.style={circle, draw=black!50, fill=gray!15, minimum size=6mm, font=\small, inner sep=0pt},
  posnode/.style={circle, draw=green!50!black, fill=green!25, minimum size=6mm, font=\small, line width=0.7pt, inner sep=0pt},
  goalnode/.style={circle, draw=blue!70!black, fill=blue!20, minimum size=6mm, font=\small, line width=0.7pt, inner sep=0pt},
  negnode/.style={circle, draw=red!60!black, fill=red!15, minimum size=6mm, font=\small, line width=0.7pt, inner sep=0pt},
  mainarrow/.style={-{Stealth[length=2mm]}, line width=0.8pt, black},
  subarrow/.style={-{Stealth[length=1.5mm]}, line width=0.5pt, gray!70},
  branch/.style={dashed, line width=0.5pt, gray!60},
  threshold/.style={draw=green!50!black, dashed, line width=0.5pt, fill=green!8, fill opacity=0.4},
]
% Main trajectory (horizontal, top row)
\node[trajnode] (m1) at (0, 3.2) {$s_1$};
\node[trajnode] (m2) at (1.2, 3.2) {$s_2$};
\node[trajnode] (m3) at (2.4, 3.2) {$s_3$};
\node[trajnode] (m4) at (3.6, 3.2) {$s_4$};
\node[trajnode] (m5) at (4.8, 3.2) {$s_5$};
\node[font=\Large, gray!70] at (5.6, 3.2) {$\cdots$};
\node[negnode] (mg) at (6.4, 3.2) {$g$};
\draw[mainarrow] (m1) -- (m2);
\draw[mainarrow] (m2) -- (m3);
\draw[mainarrow] (m3) -- (m4);
\draw[mainarrow] (m4) -- (m5);
% Column 1 (below s_1): sub-trajectory s_1 s_2 s_3 s_4, goal = s_4
\node[trajnode] (c1n1) at (0, 2.4) {$s_1$};
\node[trajnode] (c1n2) at (0, 1.6) {$s_2$};
\node[trajnode] (c1n3) at (0, 0.8) {$s_3$};
\node[goalnode] (c1n4) at (0, 0.0) {$s_4$};
\draw[subarrow] (c1n1) -- (c1n2);
\draw[subarrow] (c1n2) -- (c1n3);
\draw[subarrow] (c1n3) -- (c1n4);
\draw[branch] (m1) -- (c1n1);
% Column 2 (below s_2): sub-trajectory s_2 s_3, goal = s_3
\node[trajnode] (c2n1) at (1.2, 2.4) {$s_2$};
\node[goalnode] (c2n2) at (1.2, 1.6) {$s_3$};
\draw[subarrow] (c2n1) -- (c2n2);
\draw[branch] (m2) -- (c2n1);
% Column 3 (below s_3): sub-trajectory s_3 s_4 s_5, goal = s_5
\node[trajnode] (c3n1) at (2.4, 2.4) {$s_3$};
\node[trajnode] (c3n2) at (2.4, 1.6) {$s_4$};
\node[goalnode] (c3n3) at (2.4, 0.8) {$s_5$};
\draw[subarrow] (c3n1) -- (c3n2);
\draw[subarrow] (c3n2) -- (c3n3);
\draw[branch] (m3) -- (c3n1);
% Column 4 (below s_4): single state s_4, goal = s_4
\node[goalnode] (c4n1) at (3.6, 2.4) {$s_4$};
\draw[branch] (m4) -- (c4n1);
% Column 5 (below s_5): single state s_5, goal = s_5
\node[goalnode] (c5n1) at (4.8, 2.4) {$s_5$};
\draw[branch] (m5) -- (c5n1);
\end{tikzpicture}}
\end{minipage}%
\hfill
\begin{minipage}[c]{0.55\linewidth}
\caption{\textbf{Example of HER ``future'' relabeling.} The agent explores states $s_1 \ldots s_5$ (top row, black arrows) while trying to reach goal $g$ (red). To provide \textit{positive} rewards, each state $s_t$ is assigned a goal $g_t$ (blue) among future states within the same trajectory. For example, $s_1$ is assigned $g_1 \leftarrow s_4$. 
TD targets for $s_t$ are then computed according to the sub-trajectory from $s_t$ to $g_t$ (gray downwards arrows).}
\label{fig:her-relabel}
\end{minipage}
\end{figure}

\section{Practical Notes}
\label{app:notes}
% Here, we discuss a few practical details related to Algorithm \ref{alg:sun}.

\textbf{SVF reward.} In RL literature, the reward in Eq.~\eqref{eq:w-function} is sometimes replaced by alternatives that target the same quantity through different reward shapings. For example, returning $-1$ until $s_i$ is reached and $0$ thereafter recovers a negative-distance interpretation~\citep{schaul2015universal,andrychowicz2017hindsight}. However, those rewards performed worse in our experiments.
% % Continuous variants based on similarity kernels have also been used~\citep{pong2018temporal}. 
% The condition $s_k = s_i$ can also be relaxed to apply over a feature map rather than raw states, yielding the {successor features} of~\citet{barreto2017successor}, which generalize the SVFs to settings where exact state matching is impractical.
\\[2pt]
\textbf{Goal relabeling.}
Training the SVF is a self-supervised process: given tuples $(s_t, a_t, g_t, s_{t+1})$, $V^\pi$ can be trained with TD learning using the reward in Eq.~\eqref{eq:w-function}, i.e., $r_t = \ind{\scriptstyle{\{s_t = g\}}}$. Effective training requires tuples in which the agent both reaches and fails to reach the goal, i.e., \textit{positive} and \textit{negative} samples. Early in training, however, the agent rarely reaches its commanded goal $g_t$, so the replay buffer contains almost exclusively negatives. Hindsight Experience Replay (HER)~\citep{andrychowicz2017hindsight} addresses this imbalance by relabeling $g_t$ with a state \textit{from the trajectory itself}: the selected state fires the reward and yields positive samples. We adopt HER's ``future'' strategy: given a trajectory of $T$ steps, $g_t$ is sampled from $\{s_t, s_{t+1}, \ldots, s_T\}$. Figure~\ref{fig:her-relabel} illustrates the procedure.
Note that training is off-policy by construction: the relabeled goal is not the goal under which the trajectory was collected.

\clearpage

\textbf{Action-level noise.} Ideally, the exploration policy would act greedily with respect to the SVF (Algorithm \ref{alg:sun}:15). However, since $\Vtheta$ is inaccurate early in training, it may be appropriate to inject a small amount of noise into $\pi(a \spacedmid s_t, g_t)$. More details are in Appendix \ref{app:hyper}.
\\[2pt]
\textbf{Replay buffer eviction.} If the buffer is large enough (as in our experiments), no data eviction occurs when new samples are inserted (Algorithm \ref{alg:sun}:17). If eviction happens (e.g.\ in a fixed-size FIFO buffer), the principled option is to decrement the counts of the evicted sample's neighbors, preserving exactness at the same per-step cost as insertion. %\footnote{This is why $n_{g_{t-1}}$ is computed at every step in Algorithm \ref{alg:sun}, and not read from the buffer ($g_{t-1}$ may have been evicted).} 
A simpler approximation is to leave counts as-is, but this introduces an upward bias on the evicted sample's neighbors --- their stored counts no longer reflect the current buffer. 
% The effect of this approximation on the SUN ranking depends on $n_g$: rarer goals are disproportionately deflated by the nonlinearity of $1/n_g$. Whether this approximation is acceptable in practice depends on the eviction rate.

\section{Environment Details}
\label{app:envs}

\begin{figure}[!b]
\centering
\begin{minipage}[c]{0.58\linewidth}
\centering
\begin{subfigure}[b]{0.26\textwidth}
\centering
\resizebox{\linewidth}{!}{%
\begin{tikzpicture}[
  empty/.style={fill=black, draw=white, line width=0.3pt},
  separator/.style={gray!60, line width=2.4pt},
  start/.style={draw=cyan, line width=0.3pt, fill=cyan},
  goal/.style={draw=green!70!black, line width=0.3pt, fill=green!70!black},
]
  \def\cs{0.20}
  \foreach \row [count=\y from 0] in {
    {.,.,.,.,.,.,.,.},
    {.,.,.,.,.,.,.,.},
    {.,.,.,.,.,.,.,.},
    {.,.,.,.,.,.,.,.},
    {.,.,.,.,.,.,.,.},
    {.,.,.,.,.,.,.,.},
    {.,.,.,.,.,.,.,.},
    {.,.,.,.,.,.,.,.},
    {.,.,.,.,.,.,.,.},
    {.}
  } {
    \ifnum\y<9
      \foreach \cell [count=\x from 0] in \row {
        \fill[empty] (\x*\cs, -\y*\cs) rectangle ++(\cs, -\cs);
        \draw[empty] (\x*\cs, -\y*\cs) rectangle ++(\cs, -\cs);
      }
    \fi
  }
  % Starting positions (white border keeps the fill inside the tile grid)
  \filldraw[fill=cyan, draw=white, line width=0.3pt] (0*\cs, -2*\cs) rectangle ++(\cs, -\cs);
  \filldraw[fill=cyan, draw=white, line width=0.3pt] (0*\cs, -3*\cs) rectangle ++(\cs, -\cs);
  \filldraw[fill=cyan, draw=white, line width=0.3pt] (0*\cs, -8*\cs) rectangle ++(\cs, -\cs);
  % Gray lines separating the three rooms (drawn last, on top of tiles)
  \draw[separator] (0, -3*\cs) -- (8*\cs, -3*\cs);
  \draw[separator] (0, -6*\cs) -- (8*\cs, -6*\cs);
\end{tikzpicture}%
}%
\caption{\stt{ThreeRoom}}
\label{fig:env-threeroom}
\end{subfigure}
\hfill
\begin{subfigure}[b]{0.38\textwidth}
\centering
\resizebox{\linewidth}{!}{%
\begin{tikzpicture}[
  empty/.style={fill=black, draw=white, line width=0.3pt},
  wall/.style={fill=gray!60, draw=white, line width=0.3pt},
  arrow/.style={->, >=stealth, red, line width=1.2pt},
  start/.style={fill=cyan, draw=white, line width=0.3pt},
  goal/.style={fill=green!70!black, draw=white, line width=0.3pt},
  qmark/.style={yellow, font=\fontsize{4}{4}\selectfont\bfseries},
]
  \def\cs{0.20}
  \foreach \row [count=\y from 0] in {
    {W,W,W,W,W,W,W,W,W,W,W,W,W},
    {W,.,.,.,.,.,W,.,.,.,.,.,W},
    {W,.,.,.,.,.,W,.,.,.,.,.,W},
    {W,Q,Q,Q,.,.,.,.,.,.,.,.,W},
    {W,W,.,W,W,W,W,W,.,.,.,.,W},
    {W,.,.,.,.,.,.,W,.,.,.,.,W},
    {W,.,.,.,.,.,.,W,.,.,.,.,W},
    {W,.,.,.,.,.,.,W,W,W,.,W,W},
    {W,.,.,.,.,.,.,W,.,.,.,.,W},
    {W,.,.,.,.,.,.,W,.,.,.,.,W},
    {W,.,.,.,.,.,.,.,.,.,.,.,W},
    {W,.,.,.,.,.,.,W,.,.,.,G,W},
    {W,W,W,W,W,W,W,W,W,W,W,W,W},
    {.}
  } {
    \ifnum\y<13
      \foreach \cell [count=\x from 0] in \row {
        \ifthenelse{\equal{\cell}{W}}{
          \fill[wall] (\x*\cs, -\y*\cs) rectangle ++(\cs, -\cs);
          \draw[wall] (\x*\cs, -\y*\cs) rectangle ++(\cs, -\cs);
        }{
          \ifthenelse{\equal{\cell}{G}}{
            \fill[goal] (\x*\cs, -\y*\cs) rectangle ++(\cs, -\cs);
            \draw[goal] (\x*\cs, -\y*\cs) rectangle ++(\cs, -\cs);
          }{
            \fill[empty] (\x*\cs, -\y*\cs) rectangle ++(\cs, -\cs);
            \draw[empty] (\x*\cs, -\y*\cs) rectangle ++(\cs, -\cs);
            \ifthenelse{\equal{\cell}{Q}}{
              \node[qmark] at (\x*\cs + \cs/2, -\y*\cs - \cs/2) {?};
            }{}
          }
        }
      }
    \fi
  }
  % (row 4, col 2): down
  \draw[arrow] (2*\cs + \cs/2, -4*\cs - 0.02) -- (2*\cs + \cs/2, -5*\cs + 0.02);
  % Column 3, rows 5-10: left
  \foreach \r in {5,...,10} {
    \draw[arrow] (4*\cs - 0.02, -\r*\cs - \cs/2) -- (3*\cs + 0.02, -\r*\cs - \cs/2);
  }
  % (row 5, col 4): left
  \draw[arrow] (5*\cs - 0.02, -5*\cs - \cs/2) -- (4*\cs + 0.02, -5*\cs - \cs/2);
  % Column 4, rows 6-10: right
  \foreach \r in {6,...,10} {
    \draw[arrow] (4*\cs + 0.02, -\r*\cs - \cs/2) -- (5*\cs - 0.02, -\r*\cs - \cs/2);
  }
  % Row 11, cols 3-4: right
  \foreach \c in {3,4} {
    \draw[arrow] (\c*\cs + 0.02, -11*\cs - \cs/2) -- (\c*\cs + \cs - 0.02, -11*\cs - \cs/2);
  }
  % (row 10, col 7): left
  \draw[arrow] (8*\cs - 0.02, -10*\cs - \cs/2) -- (7*\cs + 0.02, -10*\cs - \cs/2);
  % Starting position (top-left, just inside the wall)
  \fill[start] (1*\cs, -1*\cs) rectangle ++(\cs, -\cs);
\end{tikzpicture}%
}%
\caption{\stt{FourRoomStuck}}
\label{fig:env-fourroomstuck}
\end{subfigure}
\hfill
\begin{subfigure}[b]{0.33\textwidth}
\centering
\resizebox{\linewidth}{!}{%
\begin{tikzpicture}[
  empty/.style={fill=black, draw=white, line width=0.3pt},
  wall/.style={fill=gray!60, draw=white, line width=0.3pt},
  start/.style={fill=cyan, draw=white, line width=0.3pt},
  goal/.style={fill=green!70!black, draw=white, line width=0.3pt},
]

  \def\cs{0.20}
  \foreach \row [count=\y from 0] in {
    {.,.,.,.,.,.,.,.,.,.,.,.},
    {.,W,.,.,.,.,.,.,.,.,.,.},
    {W,.,.,.,.,W,W,W,W,.,W,.},
    {.,W,.,.,W,W,.,.,.,.,W,.},
    {.,.,.,W,W,.,.,.,.,.,W,.},
    {.,.,W,W,.,.,.,.,.,.,W,.},
    {.,W,W,.,.,.,.,.,W,W,W,.},
    {.,.,W,.,.,.,.,.,.,.,W,.},
    {.,.,.,.,.,W,W,W,.,.,W,.},
    {W,W,.,.,W,.,.,.,.,.,W,.},
    {.,.,W,.,W,.,.,.,.,.,W,.},
    {.,.,W,.,.,.,.,.,.,.,.,.},
    {.}
  } {
    \ifnum\y<12
      \foreach \cell [count=\x from 0] in \row {
        \ifthenelse{\equal{\cell}{W}}{
          \fill[wall] (\x*\cs, -\y*\cs) rectangle ++(\cs, -\cs);
          \draw[wall] (\x*\cs, -\y*\cs) rectangle ++(\cs, -\cs);
        }{
          \fill[empty] (\x*\cs, -\y*\cs) rectangle ++(\cs, -\cs);
          \draw[empty] (\x*\cs, -\y*\cs) rectangle ++(\cs, -\cs);
        }
      }
    \fi
  }
  % Starting position (bottom-left)
  \filldraw[start] (0*\cs, -11*\cs) rectangle ++(\cs, -\cs);
  % Goal position (row 1, col 10)
  \filldraw[goal] (10*\cs, -1*\cs) rectangle ++(\cs, -\cs);
\end{tikzpicture}%
}%
\caption{\stt{GridMaze}}
\label{fig:env-gridmaze}
\end{subfigure}
\end{minipage}
\hfill
\begin{minipage}[c]{0.4\linewidth}
\caption{\textbf{Gridworlds.} Black tiles are empty; gray tiles are walls; cyan tiles are starting positions; green tiles are terminal positions where the episode ends. Red arrows mark one-way tiles where only the action matching the arrow succeeds. In yellow \stt{?} tiles, movement is randomized with 50\% probability.}
\label{fig:gridworlds}
\end{minipage}
\end{figure}

\begin{figure}[!b]
\centering
\setlength{\fboxsep}{0pt}%
\setlength{\fboxrule}{0.4pt}% border thickness, adjust to taste
\begin{subfigure}[b]{0.25\textwidth}
\centering
\fbox{\includegraphics[width=\linewidth,keepaspectratio]{fig/env/lunar}}
\caption{\stt{LunarLander}}
\label{fig:env-lunar}
\end{subfigure}
\hfill
\begin{subfigure}[b]{0.23\textwidth}
\centering
\fbox{\includegraphics[width=\linewidth,keepaspectratio]{fig/env/mountain}}
\caption{\stt{MountainCar}}
\label{fig:env-mountain}
\end{subfigure}
\hfill
\begin{subfigure}[b]{0.14\textwidth}
\centering
\fbox{\includegraphics[width=\linewidth,keepaspectratio]{fig/env/pendulum}}
\caption{\stt{Pendulum}}
\label{fig:env-pendulum}
\end{subfigure}
\hfill
\begin{subfigure}[b]{0.14\textwidth}
\centering
\fbox{\includegraphics[width=\linewidth,keepaspectratio]{fig/env/acrobot}}
\caption{\stt{Acrobot}}
\label{fig:env-acrobot}
\end{subfigure}
\hfill
\begin{subfigure}[b]{0.19\textwidth}
\centering
\fbox{\includegraphics[width=\linewidth,keepaspectratio]{fig/env/cart}}
\caption{\stt{CartPole}}
\label{fig:env-cart}
\end{subfigure}
\captionsetup{skip=2pt}
\caption{\textbf{Classic control} environments with continuous states and discrete actions. The original \stt{Pendulum} has a continuous one-dimensional action; here we discretize it into eight actions.}
\vspace*{-8pt}
\label{fig:classic_control}
\end{figure}

\textbf{Gridworlds.} Novel environments shown in Figure~\ref{fig:gridworlds}.
% \citep{Parisi_Gym-Gridworlds}. 
The observation is a one-hot encoding of the agent's tile. 
The goal space is $\statespace \!\times\! \actionspace$, i.e., the agent should do \textit{every action in all states}.
\begin{itemize}[leftmargin=*, itemsep=-1pt, topsep=-2pt]
\item \stt{ThreeRoom}: three rooms separated by walls. The agent spawns non-uniformly across three positions (cyan tiles): 47.5\% chance in the first room, 47.5\% in the second, 5\% in the third. There are four actions: left, right, up, down. Episode horizon: 100 steps.
\item \stt{FourRoomStuck}: a variation of the classic four-room~\citep{sutton1999between}. The bottom-left room can be entered but not exited, and cannot be traversed freely due to one-way tiles. There are four actions: left, right, up, down. Episode horizon: 200 steps. Episodes also end in the green tile.
\item \stt{GridMaze}: maze with nine actions (left, right, up, down, up-left, down-left, up-right, down-right, stay). Exploration is hard due to the narrow passage near the starting position, that can be traversed only with ``up-right''. Episode horizon: 200 steps. Episodes end on action ``stay'' in the green tile.
\end{itemize}

\textbf{Classic control.} Open-sourced classic RL benchmarks~\citep{towers2024gymnasium} shown in Figure \ref{fig:classic_control}. The goal space is $\smash{\tilde{\statespace}} \!\times\! \actionspace$, where $\smash{\tilde{\statespace}}$ is a subset of the state space that depends on the environment.%
\footnote{Gymnasium's~\citep{towers2024gymnasium} documented bounds are ``advisory'' and do not correspond to the actual region the agent can visit. For example, \stt{CartPole}'s documented bounds are $[-4.8, 4.8]$ and $[-24^\circ, 24^\circ]$, but episodes terminate if the cart leaves $[-2.4, 2.4]$ or the pole falls outside $[-12^\circ, 12^\circ]$. Similarly, \stt{LunarLander} episodes terminate if $x$ leaves $[-1, 1]$, despite documented bounds of $[-2.5, 2.5]$.}
\begin{itemize}[leftmargin=*, itemsep=-1pt, topsep=-2pt]
\item \stt{LunarLander}: the $(x, y)$ coordinate of the lander, in $[-1, 1] \times [-0.59, \infty]$ (unbounded).
\item \stt{LunarLander(Full)}: full eight-dimensional state space.
\item \stt{MountainCar}: the position $x \in [-1.2, 0.6]$ and the velocity $\dot x \in [-0.07, 0.07]$.
\item \stt{Pendulum}: the whole state space, i.e., the sine and cosine of the pendulum angle (both bounded in $[-1, 1]$) and its angular velocity (bounded in $[-8, 8]$).
\item \stt{Acrobot}: the sine and cosine of the joint angles, for a total of four dimensions bounded in $[-1, 1]$.
\item \stt{CartPole}: the $x$ coordinate of the cart and the angle of the pole, in $[-2.4, 2.4] \times [-12^\circ, 12^\circ]$.
\end{itemize}

\begin{figure}[!t]
\centering
\begin{subfigure}[b]{0.17\textwidth}
\centering
\includegraphics[width=\linewidth,keepaspectratio]{fig/env/rendered_POINT_MAZE_SIMPLE_top.png}
\caption{\stt{PointMaze-S}}
\label{fig:env-point_maze_s}
\end{subfigure}
\hfill
\begin{subfigure}[b]{0.17\textwidth}
\centering
\includegraphics[width=\linewidth,keepaspectratio]{fig/env/rendered_ANT_MAZE_SIMPLE_top.png}
\caption{\stt{PointMaze-H}}
\label{fig:env-point_maze_h}
\end{subfigure}
\hfill
\begin{subfigure}[b]{0.17\textwidth}
\centering
\includegraphics[width=\linewidth,keepaspectratio]{fig/env/antmaze_simple_top.png}
\caption{\stt{AntMaze-S}}
\label{fig:env-ant_maze_s}
\end{subfigure}
\hfill
\begin{subfigure}[b]{0.17\textwidth}
\centering
\includegraphics[width=\linewidth,keepaspectratio]{fig/env/antmaze_hard_top.png}
\caption{\stt{AntMaze-H}}
\label{fig:env-ant_maze_h}
\end{subfigure}
\hfill
\begin{subfigure}[b]{0.24\textwidth}
\centering
\includegraphics[width=\linewidth,keepaspectratio]{fig/env/armpusher_hard.png}
\caption{\stt{ArmPush-H}}
\label{fig:env-arm_push_h}
\end{subfigure}
\caption{\textbf{GCRL environments} with continuous states and actions.\textcolor{red}{The pointmaze-h pic is not correct, should be a 2D projecttion of 4D maze}}
\label{fig:jax_environments}
\end{figure}

\textbf{GCRL Control.} Open-sourced GCRL benchmarks~\citep{bortkiewicz2025accelerating} shown in Figure~\ref{fig:jax_environments}. The goal space is $\smash{\tilde\statespace}$; the action is continuous and not part of the goal space.
\begin{itemize}[leftmargin=*, itemsep=-1pt, topsep=-2pt]
\item \stt{PointMaze-S}: a point-mass agent navigates a maze. The goal space is its $(x, y)$ planar position, bounded by the maze layout, i.e., $x \in [-11, 12], y \in [-11, 12]$. 
\item \stt{PointMaze-H}: the area (and the goal space) the agent navigates is four-dimensional, i.e., $[-6, 7]^4$. Its heatmaps only show the first two dimensions.
\item \stt{AntMaze-S/H}: like \stt{PointMaze-S}, but the agent is an ant-like quadruped. Goal bounds are $x \in [-17.5, 17.5], y \in [-17.5, 17.5]$ (S) and $x \in [-27.5, 27.5], y \in [-27.5, 27.5]$ (H).
\item \stt{ArmPush-H}: a Franka Panda pushes a green cube on a plane. The goal space is the cube's planar $(x, y)$ unbounded position. The blue/red region of the plane in Figure~\ref{fig:jax_environments}, located at $x \in [-0.45, -0.35], y \in [0.60, 0.70]$, is where the cube spawns at the beginning of an episode. %Its heatmaps only show the first two dimensions.
\end{itemize}

\textbf{Evaluation metrics.} In Gridworlds, Shannon entropy and coverage can be computed exactly, since the set of visitable states is finite and known. In control tasks, we discretize the continuous goal space into 50 bins per dimension, which gives accurate estimates of both metrics. These estimates can be conservative, though: \stt{Acrobot} and \stt{MountainCar}, for example, have unreachable position-velocity configurations, yet we normalize entropy and coverage as if the entire binned space were visitable.
Also note that the goal spaces of \stt{LunarLander} and \stt{ArmPush-H} are unbounded. In the former, the $y$ position has no upper limit, and we compute metrics using $[-0.25, 10]$ as its bounds. In the latter, both $x, y$ are unbounded, and we compute metrics using $[-1.00, 1.00] \times [0.15, 1.15]$ as its bounds.
\\[2pt]
For \stt{LunarLander(Full)}, the goal space is eight-dimensional: the first six coordinates are continuous, and the last two are binary. Binning the six continuous dimensions is challenging: fine binning is computationally expensive, whereas coarse binning is inaccurate. For example, Figure~\ref{fig:lunar_full_stats} shows coverage and entropy when the first six coordinates are discretized into twelve bins. Coverage is extremely low (below 0.0002\%), so entropy is largely determined by the number of occupied cells when they carry comparable mass. Consequently, an algorithm with slightly higher coverage (even as little as <0.0002\%, as in DISCOVER) over clustered bins may exhibit higher entropy than one with slightly lower coverage over more dispersed bins (such as SUN). Figure~\ref{fig:kl_entropy_example} illustrates this issue.

\begin{figure}[b]
\centering
\begin{minipage}[c]{0.4\textwidth}
\centering
\includegraphics[width=\linewidth]{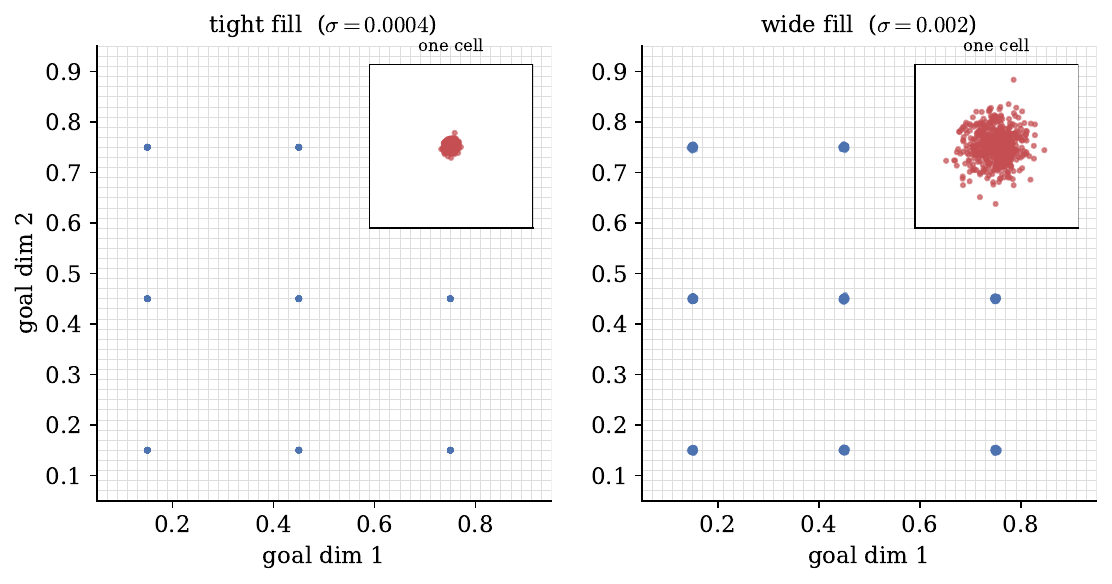}
\end{minipage}\hfill
\begin{minipage}[c]{0.56\textwidth}
% \tiny%
\centering%
\begin{tabular}{lrrr}
\toprule
& $\sigma$ (spread) & $H_\textrm{Shannon}$ & $H_\textrm{k-NN}$ \\
\midrule
tight fill & 0.0004 & 2.1972 & $-10.6512$ \\
wide fill  & 0.0020 & 2.1972 & $-7.4323$ \\
\bottomrule
\end{tabular}
\end{minipage}
\caption{\label{fig:kl_entropy_example}Two distributions occupying the \emph{same} nine bins with equal
counts. The insets zoom into one bin to reveal the tight vs.\ wide fill.
Shannon entropy is identical ($\log 9$); $k$-NN differential entropy differs by >3 nats. Note that differential entropy can be negative, unlike discrete Shannon entropy.}
\end{figure}

This is why in Figure \ref{fig:main_results} (and again below in Figure \ref{fig:lunar_full_stats}) we report approximate continuous differential entropy via the Kozachenko-Leonenko $k$-NN estimator~\citep{kozachenko1987sample}. This estimator does not suffer from the binning issue: it measures the continuous density around each sample, so it keeps tracking how the buffer redistributes throughout training and separates methods cleanly. This too must be approximated, however: brute-force computation of the $k$-NN entropy on the full buffer costs $\mathcal{O}(N^2 d)$, where $d$ is the goal dimensionality and $N$ the buffer size, which is infeasible as the buffer grows. We therefore average the estimator over $n_{\mathrm{rep}} = 5$ random subsamples of size $n_{\mathrm{sub}} = 10^4$, making the cost independent of $N$. On each subsample,
\begingroup
\setlength{\abovedisplayskip}{2pt}
\setlength{\belowdisplayskip}{2pt}
\begin{equation}
\hat H
= \frac{d}{m} \sum_{i=1}^m \log \rho_k(i) + \log(m - 1) - \psi(k) + \log c_d, \label{eq:kl_entropy}
\end{equation}
\endgroup
where $\rho_k(i)$ is the Euclidean distance from $s_i$ to its $k$-th nearest neighbour ($k = 3$), $\psi$ is the digamma function, and $\smash{c_d = \pi^{d/2} / \Gamma(d/2 + 1)}$ is the volume of the unit ball in $\mathbb{R}^d$. Observations are min-max normalized to $[0,1]^d$ using the observation-space bounds, so that no dimension dominates the Euclidean metric; the correction $\smash{\sum_{j=1}^d \log(u_j - \ell_j)}$ recovers the entropy in the original units. For discrete actions, we report the joint entropy $\smash{\hat H(s, a) = \hat H(a) + \sum_a \hat p(a)\, \hat H(s \mid a)}$, estimating each conditional $\smash{\hat H(s \mid a)}$ on the corresponding action subset. 

\begin{figure}[t]
\centering
\begin{minipage}[b]{0.6\textwidth}
\centering
\includegraphics[width=0.355\linewidth]{\detokenize{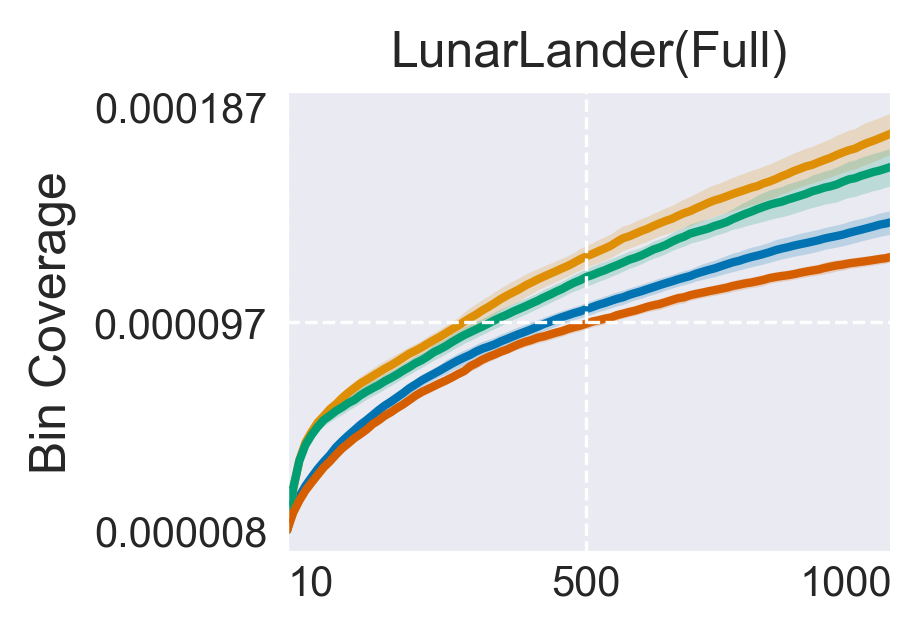}}\hfill
\includegraphics[width=0.32\linewidth]{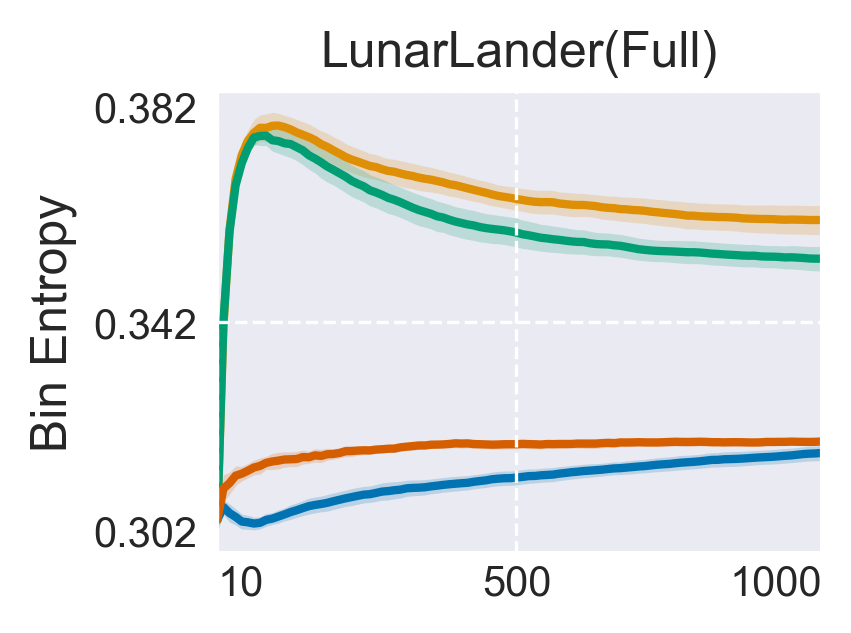}\hfill
\includegraphics[width=0.32\linewidth]{plots/lunar_full/sa_h_knn_full_curves.png}
\end{minipage}\hfill
\begin{minipage}[b]{0.38\textwidth}
\centering
\includegraphics[width=\linewidth]{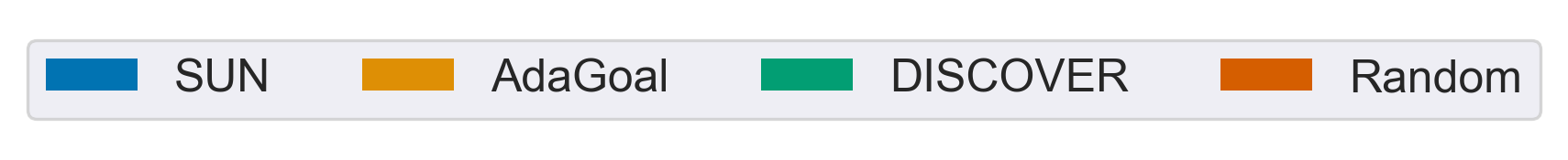}
\\[-4pt]
\caption{\label{fig:lunar_full_stats} Coverage (left) and entropy (center) with bins vs. $k$-NN differential entropy from Eq. \eqref{eq:kl_entropy} (right).}
\end{minipage}
\end{figure}

\section{Source Code, Compute Details, and Runtimes}
\label{app:compute}

We ran experiments on SLURM-based clusters, always saving all data and statistics available (e.g., visit and goal counts maps). 
Runs were parallelized whenever possible. 
For Gridworlds and Classic control, code is in PyTorch and ran on AMD Turin 9965 CPUs.
For GCRL control, code is in JAX and ran on a mix of NVIDIA V100 (32\,GB), A100 (80\,GB) and GH200 (96\,GB) GPUs. Operations are JIT-compiled and execute on a single GPU per run. Source code available at \textbf{link soon}.
\\[2pt]
Wall-clock time per run varies with environment dimensionality and training steps.
Note that the comparison is not like-for-like: AdaGoal and DISCOVER select a goal once per episode, whereas SUN scores a candidate batch whenever its adaptive check fires --- often early in training, and progressively less as the agent is trained (Section~\ref{subsec:ablation_results}). Table \ref{tab:runtime} below therefore compare SUN under a heavier selection workload against baselines under a lighter one.
% \\[1pt]
On Gridworlds and Classic control, SUN is nonetheless on average $3.54\times$ faster than AdaGoal and $3.43\times$ faster than DISCOVER ($2.41\times$ on \stt{LunarLander(Full)}), because both call four critics at every update and action-selection step (see Appendix~\ref{subsec:dqn_details}).
% \\[1pt]
On GCRL environments the three are comparable, for two reasons. First, AdaGoal and DISCOVER use a single actor, so their four critics are called only at update time and not at action-selection (see Appendix~\ref{subsec:td3_details}). Second, JAX parallelizes the critic updates. Even here SUN is no slower on average despite its more frequent selection, which is further evidence that the pseudocount adds negligible overhead.

\begin{table}[!b]
\centering
\caption{\textbf{Average wall-clock runtime}, in minutes.}
\label{tab:runtime}
\begin{tabular}{lrrrr}
\toprule
Environment & SUN & AdaGoal & DISCOVER & Random \\
\midrule
ThreeRoom     &  56.6 & 192.1 & 190.5 & 1.0 \\
FourRoomStuck & 108.9 & 469.5 & 405.8 & 1.7 \\
GridMaze      & 205.9 & 665.5 & 690.6 & 1.1 \\
MountainCar   & 151.6 & 464.1 & 494.4 & 0.8 \\
Pendulum      &  77.8 & 261.9 & 267.3 & 0.6 \\
LunarLander   & 229.6 & 866.1 & 867.4 & 1.3 \\
LunarLander(Full)  & 1294.5  & 3116.0 & 3,120.0 & 51.0 \\
Acrobot       &  81.2 & 323.7 & 277.1 & 1.0 \\
CartPole      & 157.0 & 506.6 & 489.6 & 0.8 \\
PointMaze-S   & 55.7 & 42.7 & 42.4 & 24.9 \\
PointMaze-H   & 54.4 & 41.0 & 40.7 & 24.5 \\
AntMaze-H   & 49.9 & 42.8 & 42.7 & 32.4 \\
AntMaze-H   & 63.3 & 78.5 & 78.3 & 43.7 \\
ArmPush-H   & 180.9 & 181.0 & 180.3 & 140.8 \\
\bottomrule
\end{tabular}
\vspace*{-5pt}
\end{table}

\clearpage

\section{Training Hyperparameters}
\label{app:hyper}
\begin{tcolorbox}[colback=gray!10, colframe=gray!40, boxrule=0.5pt, arc=2pt, left=6pt, right=6pt, top=4pt, bottom=4pt]
Hyperparameters have not been tuned, with the exception of the pseudocount radius $\rho$ (see below). For GCRL control, we used the official implementation of DISCOVER (TD3). For Gridworlds and Classic control, we implemented a simple version of DQN and used common hyperparameters without any tuning.
\end{tcolorbox}

\textbf{Replay buffer.} Before training starts, the replay buffer is ``warmed up'' with data collected with a random-action policy. For Gridworlds and Classic control, we collect 10,000 samples (except \stt{ThreeRoom}, where we collect 5,000). For GCRL control, we collect 1,000.
The replay buffer then stores all samples collected until the end of training, i.e., no data is ever evicted.
\\[2pt]
\textbf{Goal-selection.} Algorithms draw a pool of candidates from the replay buffer. For Gridworlds and Classic control, episodic algorithms (SUN Episodic, DISCOVER, AdaGoal) sample 2500 candidates (once at the beginning of the episode), others 256 (possibly multiple times per episode). 
% For GCRL control the pool is smaller and set per environment: 256 candidates on \stt{PointMaze-S/H}, 512 on \stt{AntMaze-S/H} and \stt{ArmPush-H} (the only exception is SUN, that draws 512 candidates on \stt{PointMaze-H}). \simone{this is bad, SUN should not draw more. And episodic should draw 2500}
\\[2pt]
\textbf{Goal-reached.} In GCRL tasks, the threshold $\eta$ in Algorithm~\ref{alg:sun}:7 is given by the environment. In Classic control, we reuse the pseudocount radius, i.e.\ a goal is reached if it falls within $\rho$ of $s_t$ in the standardized feature space (Appendix~\ref{app:pseudocount}). Gridworlds states are discrete and no threshold is needed.
\\[2pt]
\textbf{Algorithm-specific.} SUN's pseudocounts are computed using $\rho = 0.1$, except on \stt{PointMaze-H} ($\rho = 0.5$) and \stt{ArmPush-H} ($\rho = 0.01$).
%: that environment's goal space is four-dimensional, and a radius calibrated for $d_g = 2$ covers a vanishing fraction of the whitened volume $V_d\rho^{d}$, collapsing per-cell counts to $0$ or $1$. 
DISCOVER's novelty coefficient is $\beta = 10$ (for Gridworlds and Classic control) and $\beta = 1$ (for GCRL control).

\begin{table}[!b]
\centering
\setlength{\tabcolsep}{4pt}
\renewcommand{\arraystretch}{1.01}
\caption{Hyperparameters used in our experiments.}
\label{tab:hyperparams}
\begin{tabular}[t]{ll}
\toprule
\textbf{Gridworlds and Classic control} & \\
\midrule
Discount factor $\gamma$           & $0.99$ \\
Trace factor $\lambda$             & $0.95$ \\
Target network copy frequency      & $1$ \\
Target network Polyak coefficient  & $0.001$ \\
Minibatch size                     & 16 \\
TD($\lambda$) horizon $T$          & 16 \\
Update frequency                   & 1 step \\
Updates per step                   & $1$ \\
Clip reward                        & False \\
Optimizer                          & AdamW \\
Learning rate                      & $10^{-3}$ \\
Loss                               & Huber \\
Gradient norm clipped at           & $1.0$ \\
\bottomrule
\end{tabular}
\\[3pt]
\begin{tabular}[t]{ll}
\toprule
\textbf{GCRL control} & \\
\midrule
Discount factor $\gamma$           & $0.99$ \\
Trace factor $\lambda$             & $0.95$ \\
Target network Polyak coefficient (critic)        & $0.005$ \\
Target network Polyak coefficient (actor)         & $5\!\!\times\!\!10^{-7}$ \\
Target networks copy frequency     & $2$ \\
Policy delay                       & $2$ \\
Minibatch size                     & 256 \\
TD($\lambda$) horizon $T$          & 3 \\
Update frequency                   & 1 step \\
Updates per step                   & $1$ \\
Target-policy smoothing std        & $0.2$ \\
Target-action noise clip           & $0.5$ \\
Optimizer                          & AdamW \\
Learning rate                      & $10^{-3}$ \\
Critic loss                        & MSE \\
Gradient norm clipped at           & --- \\
\bottomrule
\end{tabular}
\end{table}

\clearpage

\subsection{Gridworlds and Classic Control}
\label{subsec:dqn_details}
The goal is composed of a state component $g_s$ (environment-specific, see Section \ref{app:envs}), and an action component $g_a$. 
That is, we explicitly learn SVFs whose goal is to perform specific actions in specific states. 
Because actions are discrete, we learn $\Qtheta(s, a, g_s, g_a)$: the Q-network takes state and the goal-state as input, and outputs the action-value for every goal-action. 
\\[2pt]
$\Qtheta$ is trained with Double DQN \citep{hasselt2010double} with TD($\lambda$) using Watkins's cutting traces \citep{watkins1992q} with two practical modifications motivated by the bounded scale of the SVFs, i.e., $Q^\pi \in [0, 1]$.\footnote{When $(g_s, g_a) = (s, a)$ the reward in Eq. \eqref{eq:w-function} is 1 and the transition is terminal, otherwise the reward is 0.}
\textit{First}, to mitigate overestimation bias, in TD targets we clip $\max_a \Qtheta(s', a, g_s, g_a)$ to 1. \textit{Second}, we relax the strict $\arg\max$ used by Watkins's cutting: an action is considered greedy if its Q-value lies within $1\!-\!\gamma$ of the maximum. 
This tolerance matches the natural scale of one-step Bellman backups, and prevents traces from being cut too aggressively by neural-network approximation error, to which a strict $\arg\max$ is overly sensitive. 
\\[2pt]
The policy in Algorithm \ref{alg:sun} is $\varepsilon$-greedy with respect to $\Qtheta$, with $\varepsilon = 0.1$. Action-level noise is needed with neural-network approximators, and is aligned with DISCOVER official implementation (with built-in noise via Gaussian perturbations on the actor's output, see below).
% Note that this does not go against what claimed in Section \ref{sec:method}: the noise introduced is minimal, constant, and not tuned (unlike classic methods where it is either learned or carefully decayed).
\\[2pt]
AdaGoal and DISCOVER use an ensemble of four critics, each with its own target network (DQN-style). They are trained with different random batches, and TD targets from one of the target networks randomly selected. 
Their policy is $\varepsilon$-greedy with respect to $\argmax_a \arg\min_i \Qtheta_i(s_t, a, g_s, g_a)$. %\footnote{\citeauthor{tarbouriech2022adaptive} apply the $\softmax$ operator rather than $\max$. Here, we use $\max$ for consistency with SUN and DISCOVER, but all methods can use $\softmax$ as well.}
\\[2pt]
For training, we uniformly sample 16 batches from the replay buffer, and append all the following $T-1$ samples, for a total of 256 samples. These sequences may have samples from different consecutive trajectories, but are kept separate thanks to truncation flags.
Then, we relabel goals with HER (Section~\ref{app:notes}) with one modification. The original HER ``future'' strategy assigns each timestep its own future goal, yielding $T$ sub-sequences of \textit{varying} length. For more efficient batch training, we implement a ``segmented'' variant: we concatenate contiguous sub-sequences (each with its own future goal) so that their total length matches the original $T$ (Figure~\ref{fig:her-segmented}). Relabeling is repeated 4 times per sequence, so each state is trained against 4 future goals. These samples are all positives; to balance them, we draw 4 random negatives from the replay buffer (one per full sequence of length $T = 16$). As a result, each DQN update uses $16\!\times\!16\!\times\!4\!\times\!2 = 2,048$ data points.

\begin{figure}[t]
\centering
\begin{minipage}[c]{0.46\linewidth}
\centering
\vspace*{-9pt}
\resizebox{\linewidth}{!}{
\begin{tikzpicture}[
  trajnode/.style={circle, draw=black!50, fill=gray!15, minimum size=6mm, font=\small, inner sep=0pt},
  goalnode/.style={circle, draw=blue!70!black, fill=blue!20, minimum size=6mm, font=\small, line width=0.7pt, inner sep=0pt},
  negnode/.style={circle, draw=red!60!black, fill=red!15, minimum size=6mm, font=\small, line width=0.7pt, inner sep=0pt},
  mainarrow/.style={-{Stealth[length=2mm]}, line width=0.8pt, black},
  assign/.style={-{Stealth[length=1.5mm]}, line width=0.5pt, blue!60, dashed},
  seg/.style={rounded corners=2pt, line width=0.7pt},
]
% Main trajectory (top row): s_0 ... s_6, goal g
\foreach \i in {0,...,6} {
  \node[trajnode] (m\i) at (\i*1.0, 3.0) {$s_\i$};
}
\node[font=\Large, gray!70] at (7.0, 3.0) {$\cdots$};
\node[negnode] (mg) at (7.8, 3.0) {$g$};
\foreach \i [evaluate=\i as \j using int(\i+1)] in {0,...,5} {
  \draw[mainarrow] (m\i) -- (m\j);
}

% Segment brackets + assigned goals (below)
% Segment [0,2] -> goal s_2
\draw[seg, blue!60] (-0.3, 2.35) -- (-0.3, 2.15) -- (2.3, 2.15) -- (2.3, 2.35);
\node[goalnode] (g0) at (1.0, 1.6) {$s_2$};
\draw[assign] (1.0, 2.1) -- (g0);

% Segment [3,4] -> goal s_4
\draw[seg, blue!60] (2.7, 2.35) -- (2.7, 2.15) -- (4.3, 2.15) -- (4.3, 2.35);
\node[goalnode] (g1) at (3.5, 1.6) {$s_4$};
\draw[assign] (3.5, 2.1) -- (g1);

% Segment [5,6] -> goal s_6
\draw[seg, blue!60] (4.7, 2.35) -- (4.7, 2.15) -- (6.3, 2.15) -- (6.3, 2.35);
\node[goalnode] (g2) at (5.5, 1.6) {$s_6$};
\draw[assign] (5.5, 2.1) -- (g2);
\end{tikzpicture}}
\end{minipage}%
\hfill
\begin{minipage}[c]{0.51\linewidth}
\caption{\textbf{Example of HER ``segmented future'' relabeling.} The trajectory is split into contiguous \textit{segments}: a \textit{future} cut timestep $k$ is sampled, its state $s_k$ (blue) is assigned as the goal to all timesteps in the segment, and the next segment starts at $k+1$. In this example the cuts fall at $2, 4, 6$.}
\label{fig:her-segmented}
\end{minipage}
\vspace*{-1.em}
\end{figure}

% \begin{figure}[t]
% \begin{minipage}{\columnwidth}
% \begin{wrapfigure}{l}{0.481\columnwidth}
%   \vspace{-\baselineskip}
%   \vspace*{-3pt}
%   \centering
%   \input{fig/segmented_her}
%   \vspace{-\baselineskip}
%   \vspace*{-10pt}
% \end{wrapfigure}
% \refstepcounter{figure}\label{fig:her-segmented}%
% \noindent\textbf{Figure \thefigure:} \textbf{Example of HER ``segmented future'' relabeling.} The trajectory is split into contiguous \textit{segments}: a \textit{future} cut timestep $k$ is sampled, its state $s_k$ (blue) is assigned as the goal to all timesteps in the segment, and the next segment starts at $k+1$. In this example the cuts fall at $2, 4, 6$.
% \end{minipage}
% \end{figure}

\subsection{GCRL Control}
\label{subsec:td3_details}
We build on DISCOVER official implementation, and learn $\Vtheta(s, g_s)$ and a goal-conditioned policy $\pi(a | s, g_s)$. The action (continuous) is not part of the goal.
% \\[2pt]
$\Vtheta$ is trained with TD3 \citep{fujimoto2018addressing} with TD($\lambda$) targets \textit{without} Watkins's cutting traces (with continuous actions, $\argmax$ is infeasible) and \textit{without} clipping $\Vtheta(s', g_s)$ to 1. 
% \\[2pt]
Once the goal is selected, the policy explores with noise $\mathcal{N}(0, 0.4)$ added to the action, as in the official DISCOVER implementation.
\\[2pt]
AdaGoal and DISCOVER use an ensemble of four critics, each with its own target network (DQN-style). They are trained with different random batches, and with TD targets from its own target network. 
There is one policy $\pi$, trained against the average value returned by all critics.
\\[2pt]
All algorithms relabel goals with HER (Section~\ref{app:notes}), without modification. At each update, a batch of 256 samples is drawn uniformly from the replay buffer. For a randomly selected half, we sample a future goal from within the next $T_{\max} = 50$ steps of the original trajectory (or until termination, whichever comes first) and compute TD($\lambda$) targets up to that goal. For the other half, we sample the relabel goal uniformly from the environment and compute one-step TD targets.\footnote{The original DISCOVER implementation uses one-step TD targets rather than TD($\lambda$). In our experiments, however, eligibility traces improved the performance of all algorithms.} %, consistent with the multi-step HER of \citet{yang2021bias}.} 
Note that, unlike in DQN, the intermediate steps between a batch point and its relabeled goal are used only to compute the TD($\lambda$) target, \textit{not} for gradient updates. Each TD3 update therefore uses exactly 256 data points.

\subsection{Networks Architecture}

$\Qtheta$, $\Vtheta$ and $\pi$ are neural networks with architectures shown in Figure \ref{fig:arch_dqn} and \ref{fig:arch_td3}. 
%Details of the components are below.

\textbf{Encoder.}
Gridworlds do not need an encoder because their observation is already appropriate (one-hot encoding of the agent's position).
For continuous control, the state and state-goal encoders are a radial basis function layer \citep{poggio1990networks}. The layer places $C$ tile centers $\mu_1, \ldots, \mu_C$ per input dimension, initialized uniformly between $v_{\min}$ and $v_{\max}$, and learns both the centers and per-tile bandwidths $h_c > 0$ end-to-end via gradient descent, jointly with the rest of the network. For each scalar input $x$, the layer computes Gaussian activations $\varphi_c(x) = \exp(-0.5((x-\mu_c)/h_c)^2)$. To prevent vanishing gradients for inputs outside $[v_{\min}, v_{\max}]$, we add a linear leak to the boundary tiles, so that the activation grows linearly with distance once $x$ falls below $\mu_1$ or above $\mu_C$. The output is normalized to sum to one along the tile dimension, yielding a vector in $[0,1]^C$ per input unit. We use $C = 20$ tiles initialized with $v_{\min} = -1$, $v_{\max} = 1$. Environment raw observations are standardized using running mean and standard deviation tracked with Welford's online algorithm.
\\[2pt]
We observed that the Gaussian encoding significantly improved performance for \textit{all} algorithms on \stt{LunarLander}, while neither helping nor hurting performance on the other Classic control environments. We suspect this is due to \stt{LunarLander}'s distinctive observation space: some observations are unbounded, asymmetric, and have differing scales.

\textbf{Feature fusion.}
We combine the state and goal features by concatenating $f_s$, $f_g$, and their element-wise product $f_s \odot f_g$. %, following the matching architecture used in InferSent~\citep{conneau2017supervised}.
The element-wise product provides a multiplicative interaction term that makes pairwise alignment between corresponding components of $f_s$ and $f_g$ directly available to the downstream layers, complementing the information carried by the concatenation of $f_s$ and $f_g$ alone.

\textbf{Maxout.}
After feature fusion we apply a \emph{Maxout} unit \citep{goodfellow2013maxout}, which computes $K = 4$ parallel linear projections of its input and takes the element-wise maximum across them. 
% Maxout acts as a learned, piecewise-linear activation: it is a universal approximator of convex activation functions.

% Requires: \usepackage{tikz}
%           \usepackage{caption}   % for \captionof (already loaded by subcaption)
%           \usetikzlibrary{positioning, arrows.meta, calc, backgrounds, fit}

% ── Shared TikZ styles ──────────────────────────────────────────────────
\tikzset{
  io/.style    = {font=\scriptsize},
  enc/.style   = {draw, rounded corners=3pt, fill=cyan!12,
                  minimum width=1.6cm, minimum height=0.55cm,
                  align=center, font=\scriptsize},
  sub/.style   = {draw, rounded corners=2pt, fill=orange!18,
                  minimum width=1.5cm, minimum height=0.45cm,
                  align=center, font=\scriptsize},
  subv/.style  = {draw, rounded corners=2pt, fill=violet!18,
                  minimum width=1.5cm, minimum height=0.45cm,
                  align=center, font=\scriptsize},
  mlin/.style  = {draw, rounded corners=2pt, fill=violet!30,
                  minimum width=1.15cm, minimum height=0.5cm,
                  align=center, font=\tiny},
  maxbox/.style= {draw, rounded corners=2pt, fill=violet!30,
                  minimum width=1.5cm, minimum height=0.45cm,
                  align=center, font=\scriptsize},
  group/.style = {draw, dashed, rounded corners=4pt, inner sep=4pt},
  bigroup/.style = {draw, dashed, rounded corners=5pt,
                    fill=violet!4, inner sep=6pt},
  fus/.style   = {draw, rounded corners=3pt, fill=teal!12,
                  minimum width=4.4cm, minimum height=0.75cm,
                  align=center, font=\scriptsize},
  albl/.style  = {font=\scriptsize, inner sep=1pt},
  arr/.style   = {-{Stealth[length=4pt, width=3.5pt]}, semithick,
                  rounded corners=4pt},
  iarr/.style  = {-{Stealth[length=3pt, width=3pt]}, thin},
}

\begin{figure}[h]
\centering
% ════════════════════════════════════════════════════════════════════════
%  Q^theta
% ════════════════════════════════════════════════════════════════════════
\begin{minipage}[t]{0.48\textwidth}
\centering
\resizebox{0.65\linewidth}{!}{%
\begin{tikzpicture}

%% ─── State branch (top-left) ───────────────────────────────────────────
\node[io]  (s)     at (0,0) {$s$};
\node[enc] (enc_s) [below=0.35cm of s] {Encoder $\phi_s$};
\node[sub] (s_lin)  [below=0.55cm of enc_s] {Linear 64};
\node[sub] (s_ln)   [below=0.14cm of s_lin]  {LayerNorm};
\node[sub] (s_drop) [below=0.14cm of s_ln]   {Dropout 0.1};
\node[sub] (s_silu) [below=0.14cm of s_drop] {SiLU};
\begin{pgfonlayer}{background}
  \node[group, fit=(s_lin)(s_silu)] (body_s) {};
\end{pgfonlayer}

%% ─── Goal branch (top-right) ───────────────────────────────────────────
\node[io]  (g)     at (3.6,0) {$g_s$};
\node[enc] (enc_g) [below=0.35cm of g] {Encoder $\phi_g$};
\node[sub] (g_lin)  [below=0.55cm of enc_g] {Linear 64};
\node[sub] (g_ln)   [below=0.14cm of g_lin]  {LayerNorm};
\node[sub] (g_drop) [below=0.14cm of g_ln]   {Dropout 0.1};
\node[sub] (g_silu) [below=0.14cm of g_drop] {SiLU};
\begin{pgfonlayer}{background}
  \node[group, fit=(g_lin)(g_silu)] (body_g) {};
\end{pgfonlayer}

%% ─── Input arrows ──────────────────────────────────────────────────────
\draw[arr] (s) -- (enc_s);
\draw[arr] (enc_s.south) -- (body_s.north);
\draw[arr] (g) -- (enc_g);
\draw[arr] (enc_g.south) -- (body_g.north);
\draw[iarr] (s_lin) -- (s_ln); \draw[iarr] (s_ln) -- (s_drop); \draw[iarr] (s_drop) -- (s_silu);
\draw[iarr] (g_lin) -- (g_ln); \draw[iarr] (g_ln) -- (g_drop); \draw[iarr] (g_drop) -- (g_silu);

%% ─── Fusion (single row, wide box) ─────────────────────────────────────
\node[fus] (fuse) at ($(s_silu.south)!0.5!(g_silu.south) + (0,-1.1)$)
  {$\mathbf{h} = \mathrm{concat}(f_s,\, f_g,\, f_s \odot f_g)$};
\draw[arr] (body_s.south) -- ++(0,-0.25) -| ([xshift=-0.6cm]fuse.north);
\draw[arr] (body_g.south) -- ++(0,-0.25) -| ([xshift= 0.6cm]fuse.north);
\node[albl, anchor=east] at ([xshift=-0.08cm, yshift=-0.22cm]body_s.south) {$f_s$};
\node[albl, anchor=west] at ([xshift= 0.08cm, yshift=-0.22cm]body_g.south) {$f_g$};

%% ─── Maxout unit: 4 parallel linears -> max ────────────────────────────
\node[mlin] (mx1) at ([xshift=-1.845cm, yshift=-0.8cm]fuse.south) {Linear 64};
\node[mlin] (mx2) [right=0.08cm of mx1] {Linear 64};
\node[mlin] (mx3) [right=0.08cm of mx2] {Linear 64};
\node[mlin] (mx4) [right=0.08cm of mx3] {Linear 64};
\node[maxbox] (mxmax) at ([yshift=-0.9cm]$(mx1)!0.5!(mx4)$) {Max};
% \node[io, anchor=west] (maxlabel) at ([xshift=0.28cm, yshift=-0.3cm]mx4.east) {Maxout};

%% ─── LayerNorm after max (outside Maxout dash) + lower stack ───────────
\node[subv] (b2_ln0) [below=0.6cm of mxmax]   {LayerNorm};
\node[subv] (b2_lr1) [below=0.14cm of b2_ln0] {Leaky ReLU};
\node[subv] (b2_l1)  [below=0.14cm of b2_lr1] {Linear 64};
\node[subv] (b2_ln)  [below=0.14cm of b2_l1]  {LayerNorm};
\node[subv] (b2_lr2) [below=0.14cm of b2_ln]  {Leaky ReLU};
\node[subv] (b2_l2)  [below=0.14cm of b2_lr2] {Linear};

%% ─── Dashed containers ────────────────────────────────────────────────
\begin{pgfonlayer}{background}
  \node[bigroup, fit=(b2_ln0)(b2_l2)] (sharedbg) {};
  \node[group,   fit=(mx1)(mx4)(mxmax)] (maxoutbox) {};
\end{pgfonlayer}

%% ─── Maxout fan + shared arrows ────────────────────────────────────────
\draw[iarr] (fuse.south) -- (mx1.north);
\draw[iarr] (fuse.south) -- (mx2.north);
\draw[iarr] (fuse.south) -- (mx3.north);
\draw[iarr] (fuse.south) -- (mx4.north);
\draw[iarr] (mx1.south) -- (mxmax.north);
\draw[iarr] (mx2.south) -- (mxmax.north);
\draw[iarr] (mx3.south) -- (mxmax.north);
\draw[iarr] (mx4.south) -- (mxmax.north);
\draw[arr]  (mxmax.south) -- (b2_ln0.north);
\draw[iarr] (b2_ln0) -- (b2_lr1); \draw[iarr] (b2_lr1) -- (b2_l1); \draw[iarr] (b2_l1) -- (b2_ln);
\draw[iarr] (b2_ln) -- (b2_lr2); \draw[iarr] (b2_lr2) -- (b2_l2);

%% ─── Output ────────────────────────────────────────────────────────────
\node[io] (qout) [below=0.5cm of b2_l2] {};
\draw[arr] (b2_l2.south) -- (qout.north);

\end{tikzpicture}%
}
\captionsetup{skip=1pt}
\captionof{figure}{\textbf{Architecture of $Q^\theta$}. All linear layers are initialized with weights close to zero
(drawn from a normal distribution with 0.01 standard deviation). All but the
last layers have no bias. The output size is $|\actionspace| \times |\actionspace|$.}
\label{fig:arch_dqn}
\end{minipage}
\hfill
% ════════════════════════════════════════════════════════════════════════
%  V^theta and pi
% ════════════════════════════════════════════════════════════════════════
\begin{minipage}[t]{0.48\textwidth}
\centering
\resizebox{0.7\linewidth}{!}{%
\begin{tikzpicture}

%% ─── (s,a) branch (top-left) ───────────────────────────────────────────
\node[io]  (sa)     at (0,0) {$(s,a)$};
\node[enc] (enc_sa) [below=0.35cm of sa] {Encoder $\phi_{sa}$};
\node[sub] (sa_l1)  [below=0.55cm of enc_sa] {Linear 256};
\node[sub] (sa_ln)  [below=0.14cm of sa_l1]  {LayerNorm};
\node[sub] (sa_lr)  [below=0.14cm of sa_ln]  {Leaky ReLU};
\node[sub] (sa_rep) [below=0.14cm of sa_lr]  {Linear 64};
\begin{pgfonlayer}{background}
  \node[group, fit=(sa_l1)(sa_rep)] (body_sa) {};
\end{pgfonlayer}

%% ─── Goal branch (top-right) ───────────────────────────────────────────
\node[io]  (g)     at (3.6,0) {$g_s$};
\node[enc] (enc_g) [below=0.35cm of g] {Encoder $\phi_g$};
\node[sub] (g_l1)  [below=0.55cm of enc_g] {Linear 256};
\node[sub] (g_ln)  [below=0.14cm of g_l1]  {LayerNorm};
\node[sub] (g_lr)  [below=0.14cm of g_ln]  {Leaky ReLU};
\node[sub] (g_rep) [below=0.14cm of g_lr]  {Linear 64};
\begin{pgfonlayer}{background}
  \node[group, fit=(g_l1)(g_rep)] (body_g) {};
\end{pgfonlayer}

%% ─── Input arrows ──────────────────────────────────────────────────────
\draw[arr] (sa) -- (enc_sa);
\draw[arr] (enc_sa.south) -- (body_sa.north);
\draw[arr] (g) -- (enc_g);
\draw[arr] (enc_g.south) -- (body_g.north);
\draw[iarr] (sa_l1) -- (sa_ln); \draw[iarr] (sa_ln) -- (sa_lr); \draw[iarr] (sa_lr) -- (sa_rep);
\draw[iarr] (g_l1) -- (g_ln);   \draw[iarr] (g_ln) -- (g_lr);   \draw[iarr] (g_lr) -- (g_rep);

%% ─── Fusion (single row, wide box) ─────────────────────────────────────
\node[fus] (fuse) at ($(sa_rep.south)!0.5!(g_rep.south) + (0,-1.1)$)
  {$\mathbf{h} = \mathrm{concat}(f_{sa},\, f_g,\, f_{sa} \odot f_g)$};
\draw[arr] (body_sa.south) -- ++(0,-0.25) -| ([xshift=-0.6cm]fuse.north);
\draw[arr] (body_g.south)  -- ++(0,-0.25) -| ([xshift= 0.6cm]fuse.north);
\node[albl, anchor=east] at ([xshift=-0.08cm, yshift=-0.22cm]body_sa.south) {$f_{sa}$};
\node[albl, anchor=west] at ([xshift= 0.08cm, yshift=-0.22cm]body_g.south)  {$f_g$};

%% ─── Maxout unit: 4 parallel linears -> max (feeds directly from fusion)─
\node[mlin] (mx1) at ([xshift=-1.845cm, yshift=-0.8cm]fuse.south) {Linear 1024};
\node[mlin] (mx2) [right=0.08cm of mx1] {Linear 1024};
\node[mlin] (mx3) [right=0.08cm of mx2] {Linear 1024};
\node[mlin] (mx4) [right=0.08cm of mx3] {Linear 1024};
\node[maxbox] (mxmax) at ([yshift=-0.9cm]$(mx1)!0.5!(mx4)$) {Max};
% \node[io, anchor=west] (maxlabel) at ([xshift=0.28cm, yshift=-0.3cm]mx4.east) {Maxout};

%% ─── Lower shared stack ────────────────────────────────────────────────
\node[subv] (sh_ln)   [below=0.6cm of mxmax]   {LayerNorm};
\node[subv] (sh_lr1)  [below=0.14cm of sh_ln]  {Leaky ReLU};
\node[subv] (sh_l2)   [below=0.14cm of sh_lr1] {Linear 256};
\node[subv] (sh_lr2)  [below=0.14cm of sh_l2]  {Leaky ReLU};
\node[subv] (sh_l3)   [below=0.14cm of sh_lr2] {Linear};

%% ─── Dashed containers ────────────────────────────────────────────────
\begin{pgfonlayer}{background}
  \node[bigroup, fit=(sh_ln)(sh_l3)] (sharedbg) {};
  \node[group,   fit=(mx1)(mx4)(mxmax)] (maxoutbox) {};
\end{pgfonlayer}

%% ─── Maxout fan + shared arrows ────────────────────────────────────────
\draw[iarr] (fuse.south) -- (mx1.north);
\draw[iarr] (fuse.south) -- (mx2.north);
\draw[iarr] (fuse.south) -- (mx3.north);
\draw[iarr] (fuse.south) -- (mx4.north);
\draw[iarr] (mx1.south) -- (mxmax.north);
\draw[iarr] (mx2.south) -- (mxmax.north);
\draw[iarr] (mx3.south) -- (mxmax.north);
\draw[iarr] (mx4.south) -- (mxmax.north);
\draw[arr]  (mxmax.south) -- (sh_ln.north);
\draw[iarr] (sh_ln) -- (sh_lr1); \draw[iarr] (sh_lr1) -- (sh_l2);
\draw[iarr] (sh_l2) -- (sh_lr2); \draw[iarr] (sh_lr2) -- (sh_l3);

%% ─── Output ────────────────────────────────────────────────────────────
\node[io] (wout) [below=0.5cm of sh_l3] {};
\draw[arr] (sh_l3.south) -- (wout.north);

\end{tikzpicture}%
}
\captionsetup{skip=1pt}
\captionof{figure}{\textbf{Architecture of $V^\theta$ and $\pi$.} $V^\theta$ output size is 1, and it applies the
\textrm{softplus} operator at the end, $\pi$ applies the \textrm{softmax}
operator. The two networks do not share any layer.}
\label{fig:arch_td3}
\end{minipage}
\end{figure}

\clearpage

\begin{figure}[t]
    % \centering
    \includegraphics[trim={0 2.0em 0 0}, clip, width=\linewidth]{\detokenize{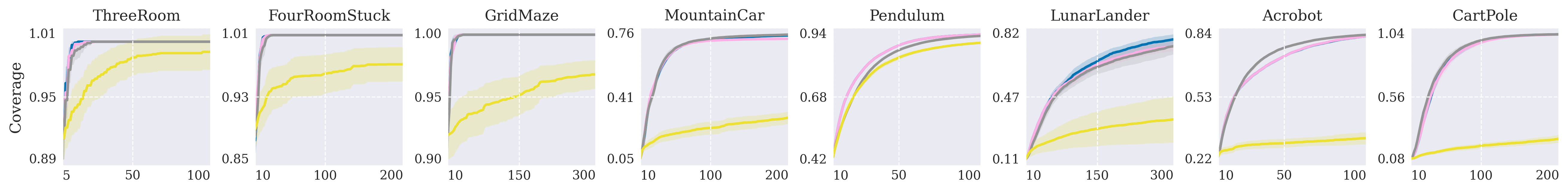}}
    \\[1pt]
    \includegraphics[trim={0 0 0 2.2em}, clip, width=\linewidth]{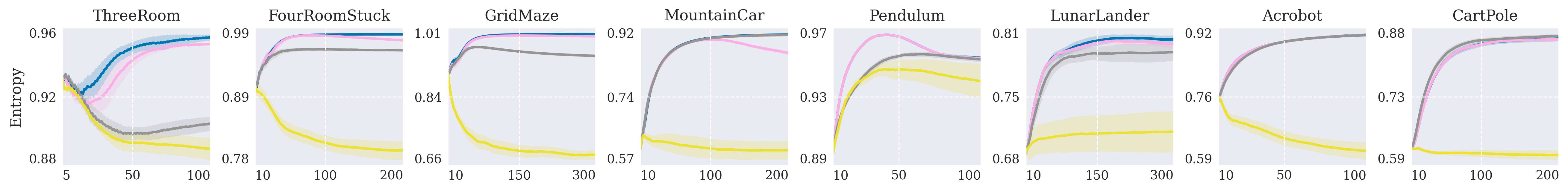}
    \\[1pt]
    \begin{minipage}[b]{0.64\linewidth}
    \includegraphics[trim={0 2.0em 0 0}, clip, width=\linewidth]{\detokenize{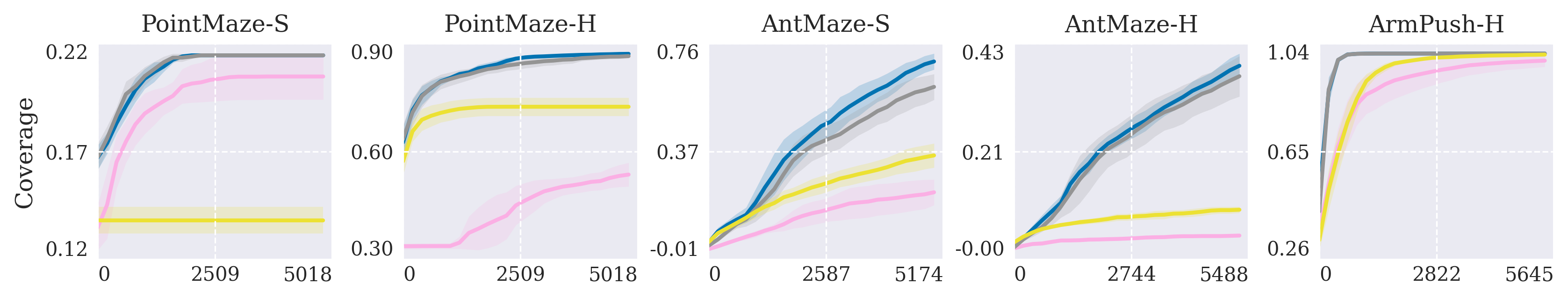}}
    \\[1pt]
    \includegraphics[trim={0 0 0 2.2em}, clip, width=\linewidth]{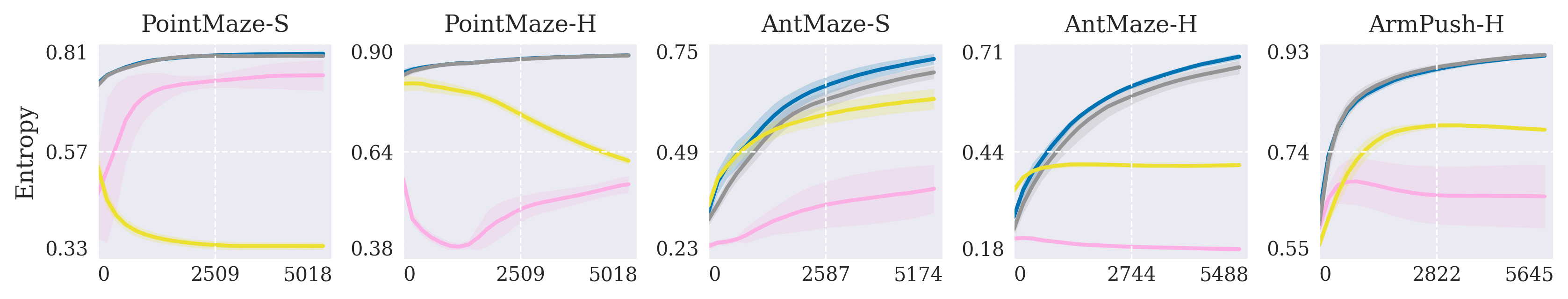}
    \end{minipage}
    \hfill
    \centering
    \includegraphics[width=0.23\linewidth]{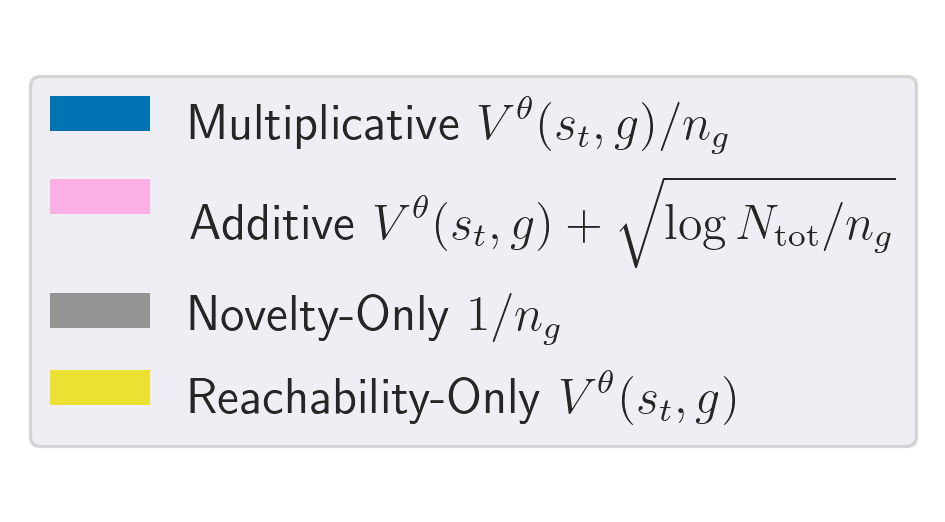}
    \\[2pt]
    \includegraphics[trim={0 0 5pt 0}, clip, width=0.617\linewidth]{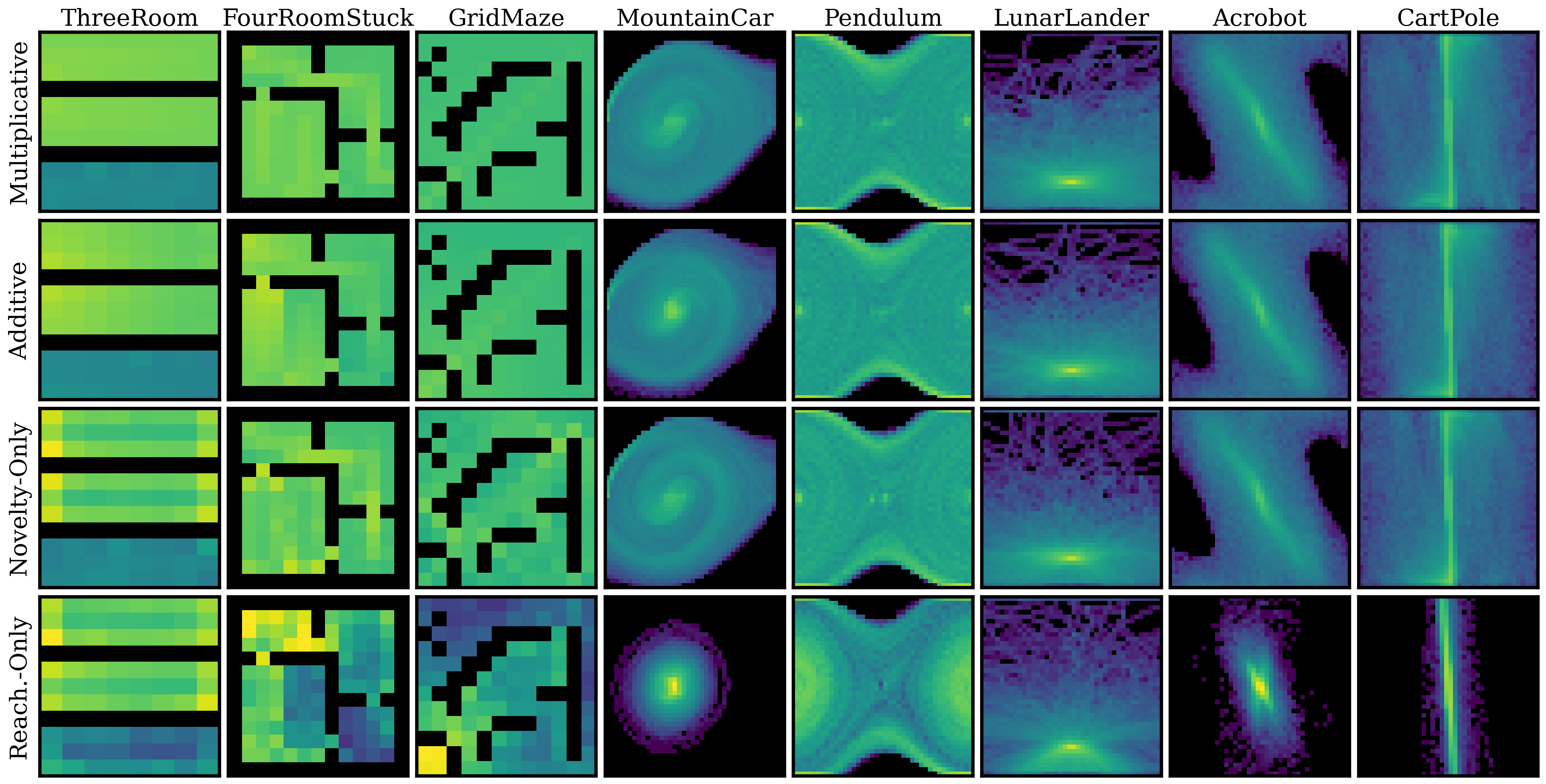}%
    \includegraphics[width=0.3829\linewidth]{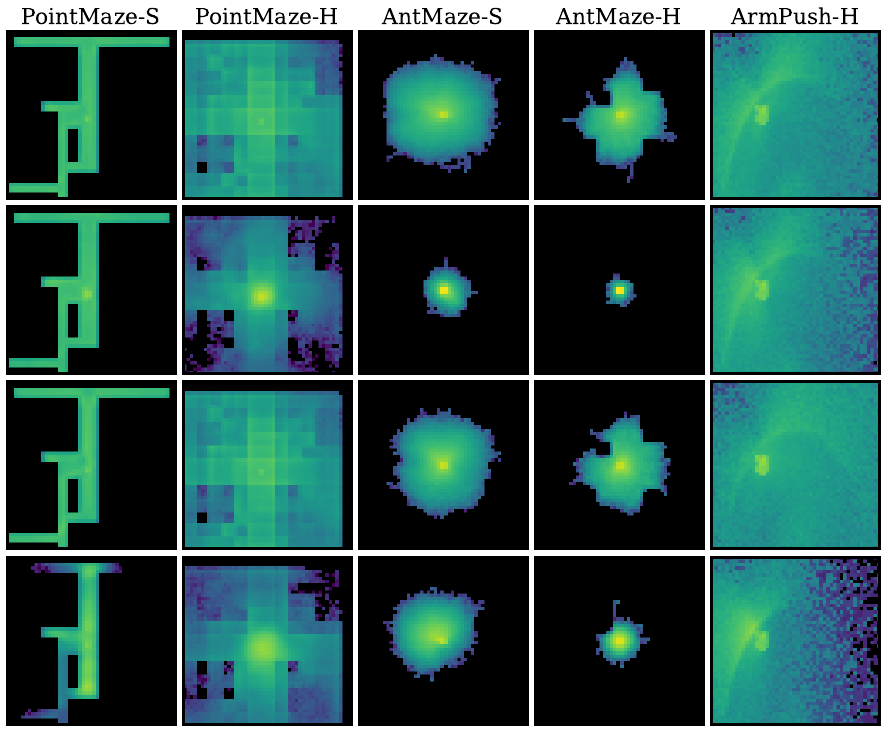}
    \caption{\label{fig:ablation_score_all}\textbf{Indicator ablation.} Full version of Figure \ref{fig:ablation_score}.}
\end{figure}

\begin{figure}[t]
    % \centering
    \includegraphics[trim={0 2.0em 0 0}, clip, width=\linewidth]{\detokenize{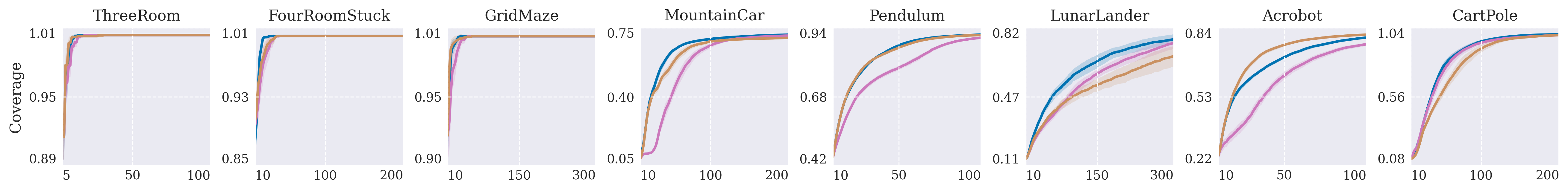}}
    \\[1pt]
    \includegraphics[trim={0 0 0 2.2em}, clip, width=\linewidth]{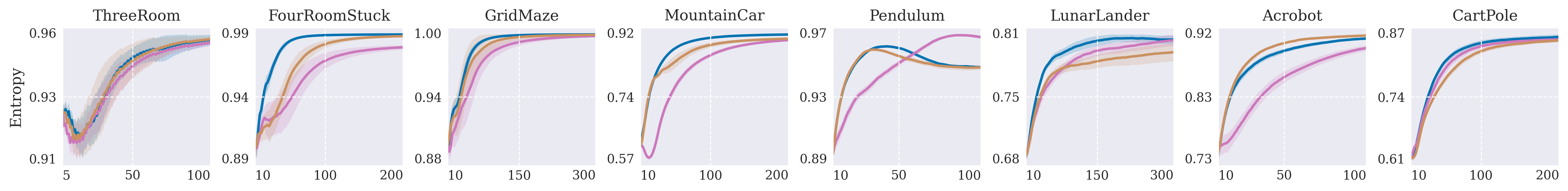}
    \\[1pt]
    \begin{minipage}[b]{0.64\linewidth}
    \includegraphics[trim={0 2.0em 0 0}, clip, width=\linewidth]{\detokenize{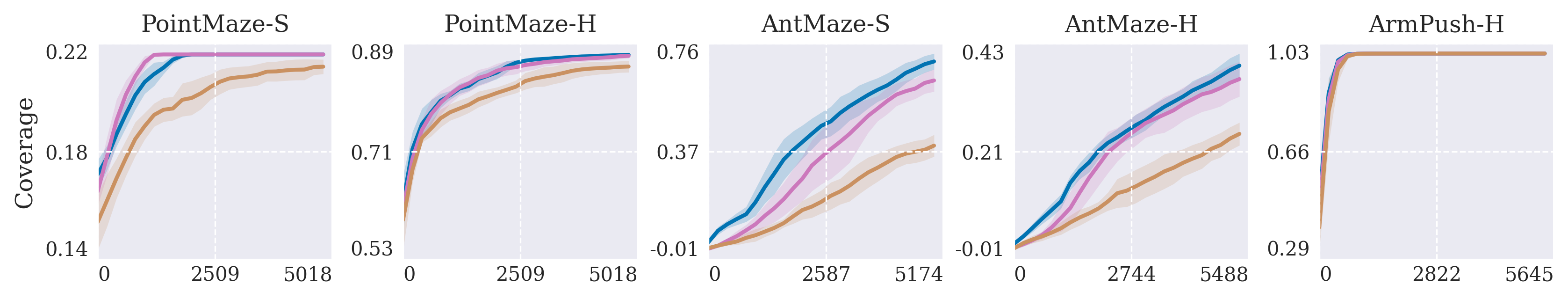}}
    \\[1pt]
    \includegraphics[trim={0 0 0 2.2em}, clip, width=\linewidth]{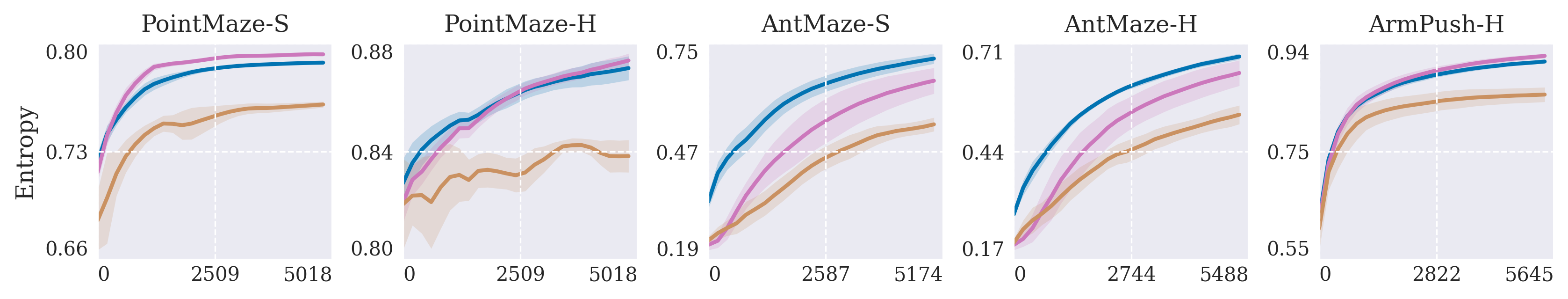}
    \end{minipage}
    \hfill
    \centering
    \includegraphics[width=0.25\linewidth]{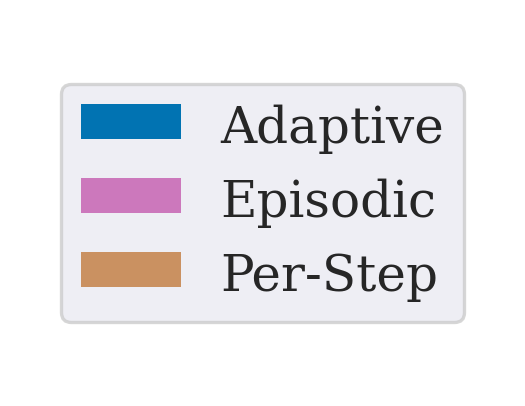}
    \\[2pt]
    \includegraphics[trim={0 0 5pt 0}, clip, width=0.617\linewidth]{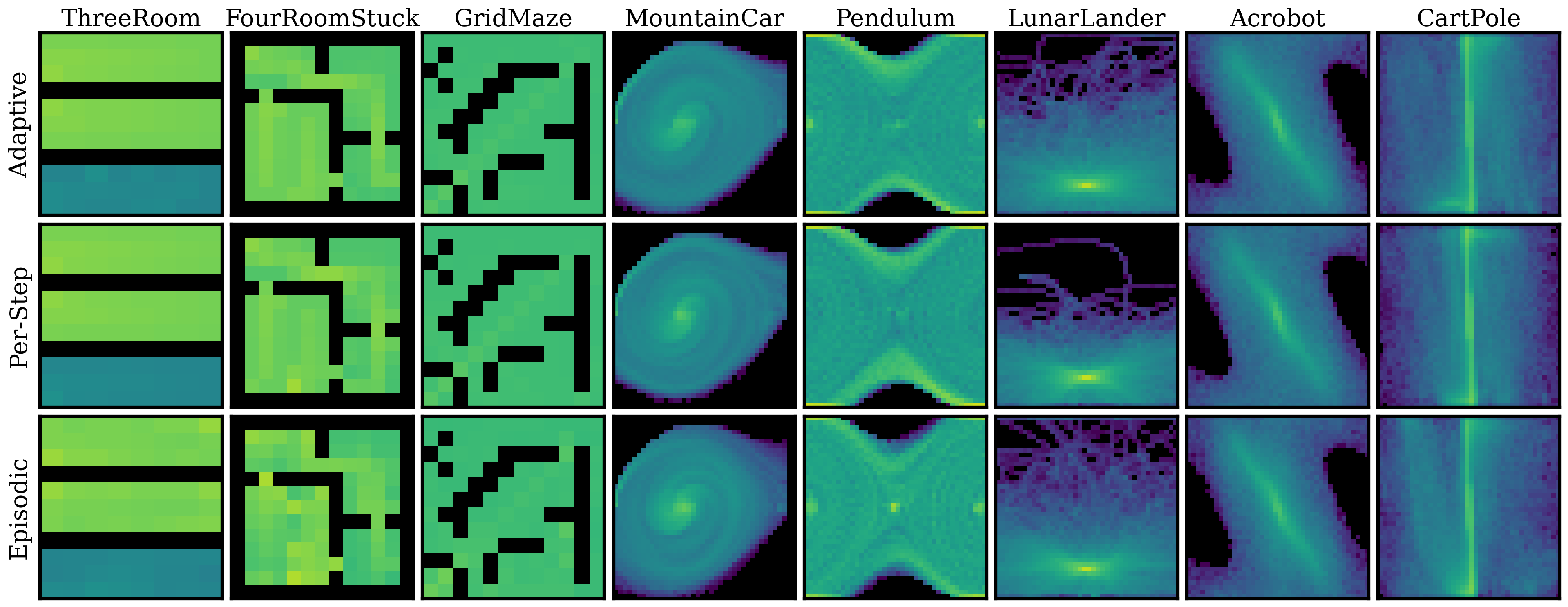}%
    \includegraphics[width=0.3830\linewidth]{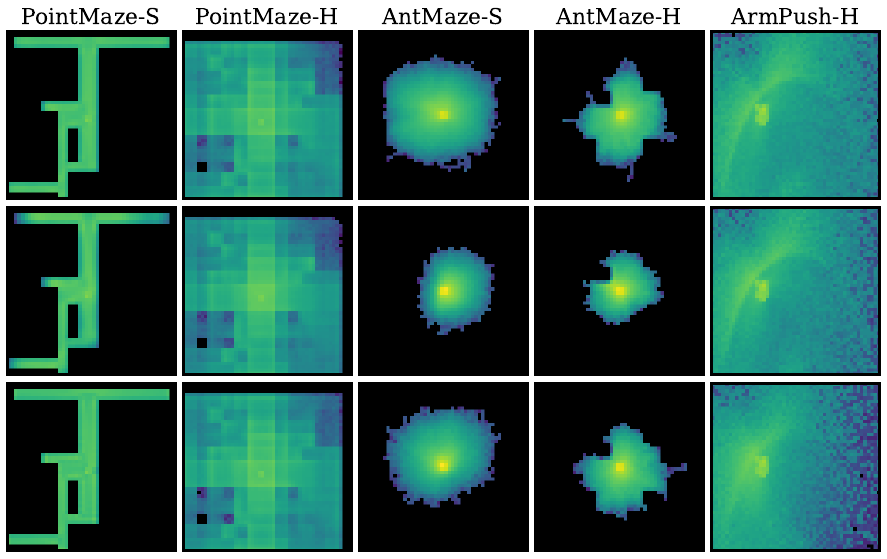}
    \caption{\label{fig:ablation_selection_all}\textbf{Goal-selection ablation.} Full version of Figure~\ref{fig:ablation_selection}.}
\end{figure}

\clearpage

\section{Indicator and Goal-Selection Ablations}
\label{app:ablation_full}
Figures \ref{fig:ablation_score_all} and \ref{fig:ablation_selection_all} report the full training curves and end-of-training visitation heatmaps for the ablations discussed in Section~\ref{subsec:ablation_results}. We exclude \stt{LunarLander(Full)} due to its computational expense. Per-environment goal-selection heatmaps with additional statistics are in Appendix~\ref{app:extra_details}
\\[2pt]
The ablations exhibit the same trends described in the main text. In a few environments in Figure~\ref{fig:ablation_selection_all}, Adaptive SUN is not the best, though only by a small margin: Per-Step is slightly better in \stt{Acrobot}, and Episodic in \stt{PointMaze-S}.

\section{Pseudocounts Ablation}
\label{app:pseudocount_ablation}

Here we evaluate how $\rho$ affects the pseudocount approximation (Section~\ref{app:pseudocount}). Figure~\ref{fig:pseudo_vs_true_plots} shows SUN performance for varying $\rho$ against \textit{true counts}, i.e., counts obtained by discretizing the goal space with 50 bins per dimension. Figure~\ref{fig:pseudo_vs_true_heatmap} shows instead pseudocounts computed \textit{on the same data} (collected while training with \textit{true counts}).
Results show that our pseudocounts closely track the coverage and entropy of training with true counts across all environments for most values of $\rho$, except for values that are too small or too large.
This is expected: the estimator is a uniform-kernel density estimate, and $\rho$ is its bandwidth, so both extremes make $\nu(g) = 1/n_g$ constant across candidates.
When $\rho$ is too small, the ball around an inserted sample rarely contains any other buffer entry, so most samples end up with small \textit{uniform} counts. 
When $\rho$ is too large, the opposite degeneracy occurs: the ball contains a large fraction of the buffer for every entry, so counts become comparable across the visited set and $\nu(g)$ again fails to discriminate.
In both cases, the $\argmax$ of Eq.~\eqref{eq:sun-score} reduces to an $\argmax$ over the SVF alone, which is sensitive to approximation noise when $V^\pi$ is a neural network, especially early in training.

\vspace*{-0.9em}
    
\begin{figure}[!b]
    \centering
    \includegraphics[width=0.63\linewidth]{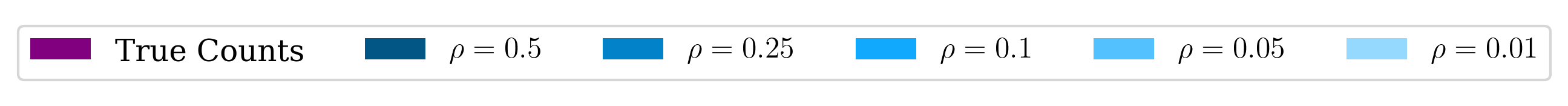}
    \\[-2pt]
    \includegraphics[trim={0 2.0em 0 0}, clip, width=0.5\linewidth]{\detokenize{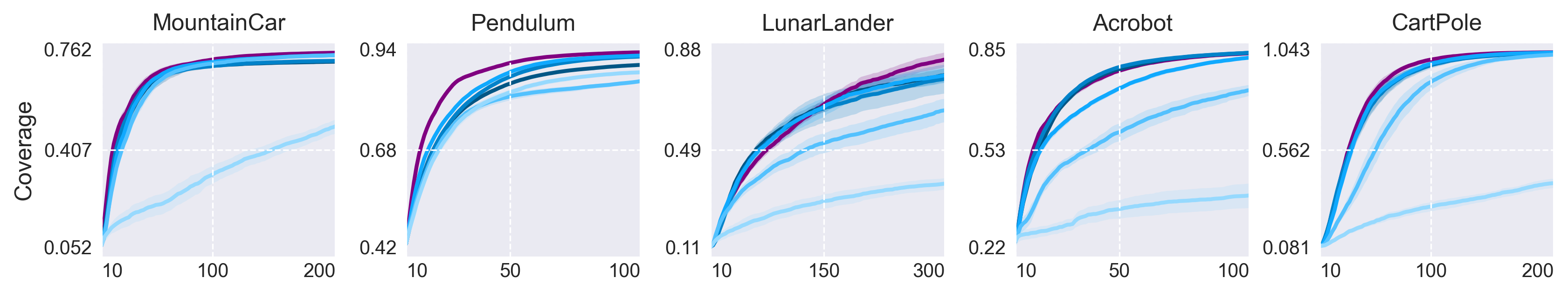}} \hfill
    \includegraphics[trim={2.2em 2.0em 0 0}, clip, width=0.48\linewidth]{\detokenize{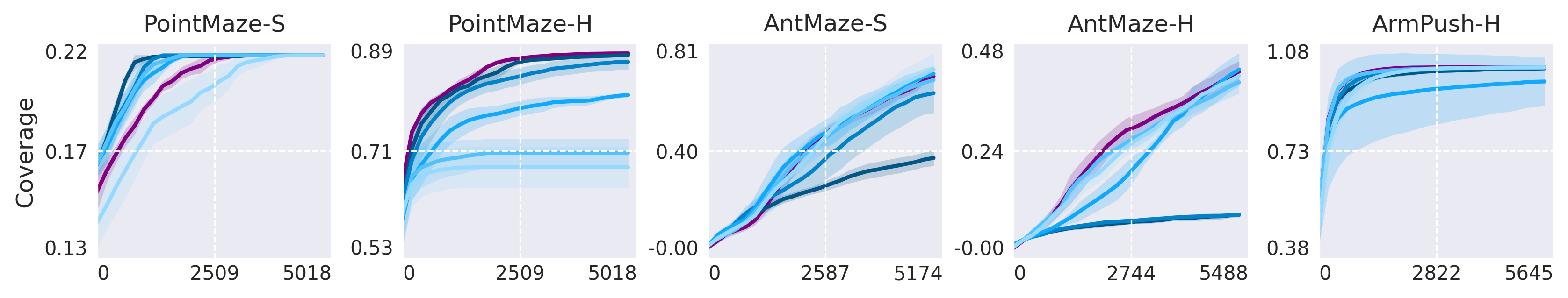}}
    \\[1pt]
    \includegraphics[trim={0 0 0 2.2em}, clip, width=0.5\linewidth]{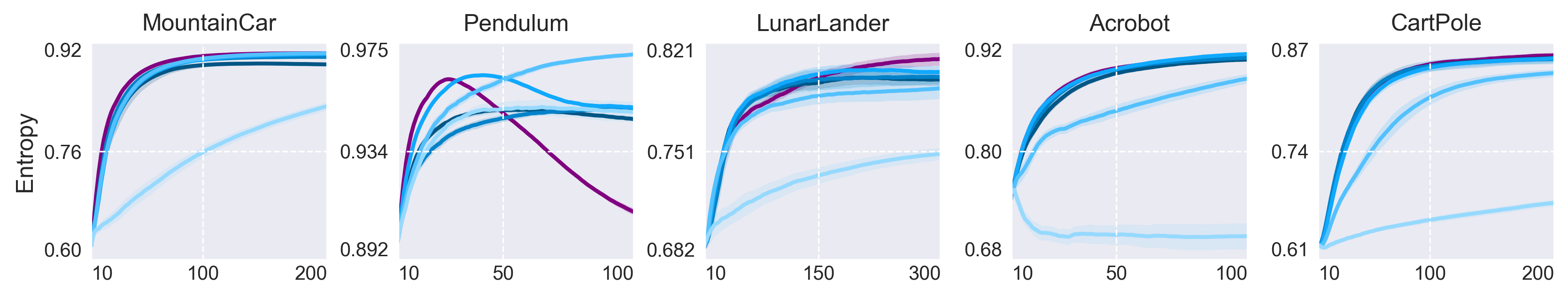}
    \hfill
    \includegraphics[trim={2.2em 0 0 2.2em}, clip, width=0.48\linewidth]{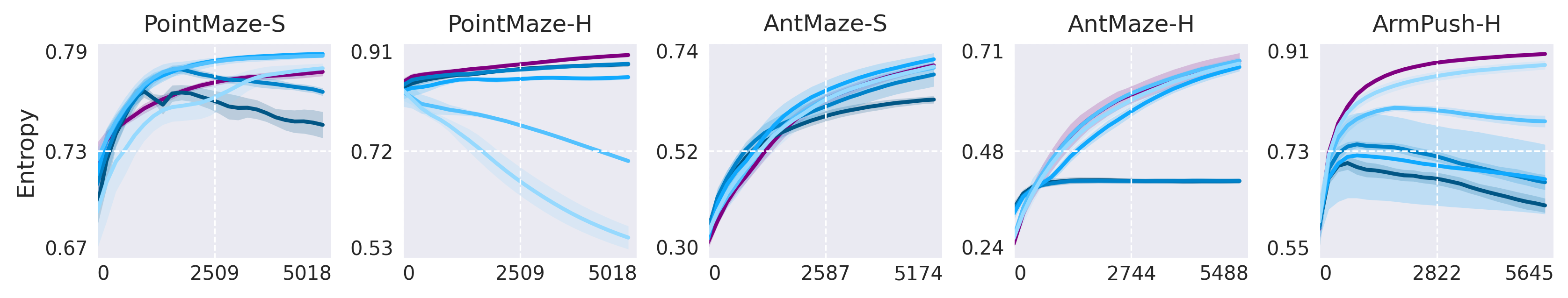}
    \caption{\textbf{Pseudocounts vs.\ true counts.} Gridworlds are omitted, since there pseudocounts coincide with the counts. True counts improve coverage and entropy in almost all environments. One exception is \stt{Pendulum}, where entropy drops sharply, the same behavior discussed in Section~\ref{subsec:main_results}, amplified by exact counting. Another is \stt{PointMaze-S}, where $\rho = 0.1$ attains the best entropy and the fastest coverage growth, although all radii reach the same terminal coverage. We attribute this to the narrow corridors of this maze, which make {true counts} sensitive to the bin discretization.}
    \label{fig:pseudo_vs_true_plots}
\end{figure}

\begin{figure}[!b]
    \centering
    \includegraphics[width=0.395\linewidth]{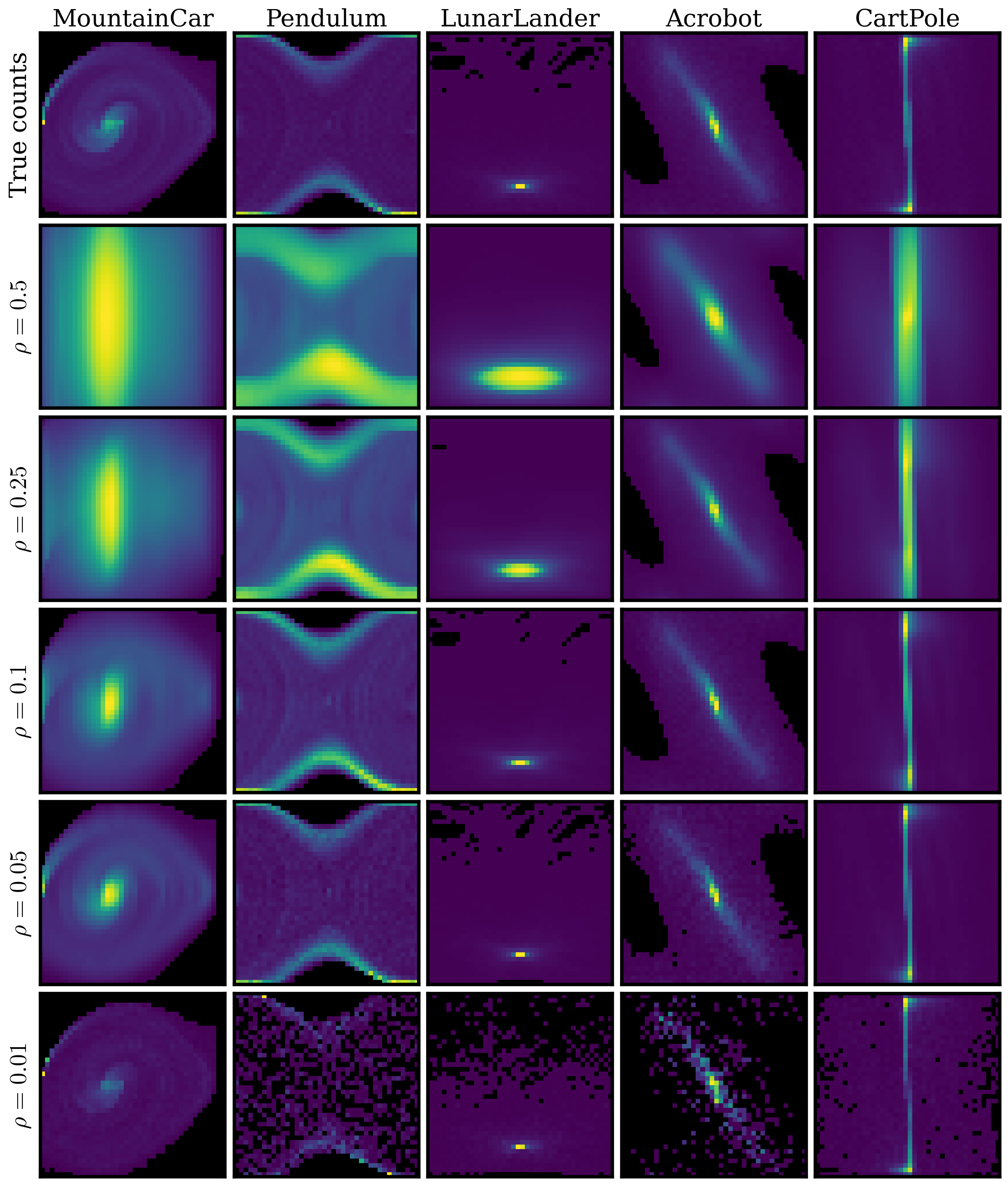}%
    \hfill
    \includegraphics[width=0.3809\linewidth]{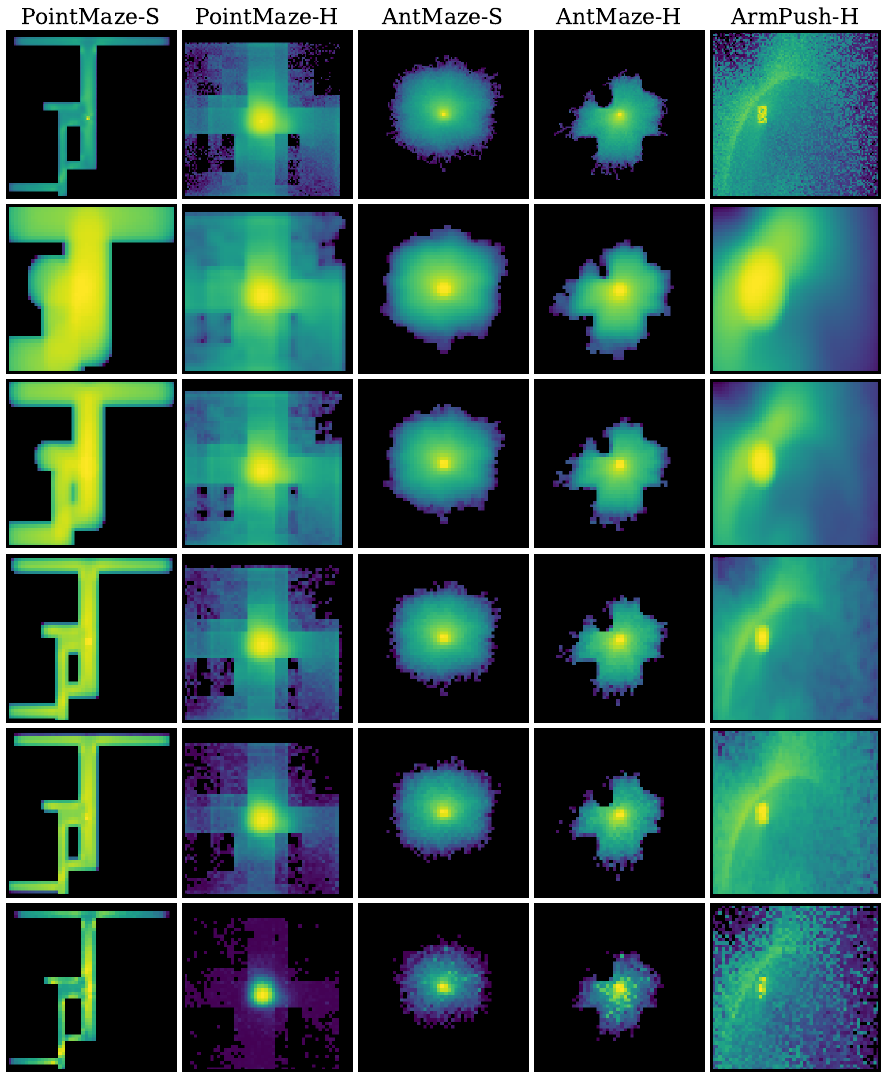}%
    \hfill
    \begin{minipage}[b]{0.19\textwidth}
        \caption{\label{fig:pseudo_vs_true_heatmap}\textbf{Log-scale pseudocount heatmaps.} The top row shows the true visit counts at the end of training, for the first seed. The rows below show pseudocounts computed on that same data at decreasing radii $\rho$. All heatmaps share the same scale within an environment.}
    \end{minipage}
\end{figure}

% \begin{figure}[b]
%     \centering
%     \includegraphics[width=0.5071\linewidth]{heatmaps/rho/control_100.png}%
%     \hfill
%     \includegraphics[width=0.4909\linewidth]{heatmaps/rho/jaxgcrl_heatmaps.pdf}
%     \caption{\label{fig:pseudo_vs_true_heatmap}\textbf{Log-scale pseudocount heatmaps.} The top row shows the true visit counts at the end of training, for the first seed. The rows below show pseudocounts computed on that same data at decreasing radii $\rho$. All heatmaps share the same scale within an environment.}
% \end{figure}

\clearpage

Which of the two regimes is harmful depends on the geometry of the support.
Where the support is broad, a large radius merely over-smooths, and large $\rho$ values remain competitive even though the estimated density spreads beyond the boundary of the visited set into never-visited regions.
This is the case of Classic control environments and \stt{PointMaze-H}: in Figure~\ref{fig:pseudo_vs_true_plots} entropy and coverage degrade as $\rho$ decreases, and the corresponding heatmaps in Figure~\ref{fig:pseudo_vs_true_heatmap} show large zero-count areas at the smallest radii.
Performance improves as $\rho$ grows, although $\rho = 0.5$ is not always the best choice.
Where the support is compact, on the contrary, large $\rho$ values are fatal: in \stt{AntMaze} a ball of radius $0.5$ or $0.25$ spans most of the visited blob, and both settings plateau within the first few hundred steps and do not recover, in coverage and in entropy alike. The same happens in \stt{ArmPush-H}, where the reachable set is a thin arc and large radii replace it with a broad unimodal density over the bounding box.

% This is why in our main experiments we used $\rho = 0.1$ in all but two environments. \stt{PointMaze-H} uses $\rho = 0.5$: its support is the widest of the maze environments, so the same standardized radius corresponds to a ball that holds few neighbors at the sample densities reached during training, and $\rho \le 0.1$ falls into the saturation regime described above. \stt{ArmPush-H} uses $\rho = 0.01$: the reachable set is a thin arc within a much wider box, and since the standardization scales are dominated by the along-arc spread, a radius calibrated in those units exceeds the transverse thickness of the arc. 

\section{Detailed Analysis Of All Environments}
\label{app:extra_details}
For each environment, we report the following goal-related statistics, characterizing the behavior of SUN (with adaptive, episodic, and per-step goal-selection), AdaGoal, and DISCOVER.
\begin{itemize}[leftmargin=*, noitemsep, topsep=2pt]
\item \textbf{Goals selected and reached}, measured at four stages of training (25/50/75/100\% of the training steps). All heatmaps share the same log-scale color range.
\item \textbf{Goal success}: the ratio of goals reached to goals selected.
\item \textbf{Steps-to-goal}: the average number of steps taken to reach a goal. %, over all goals successfully reached.
\item \textbf{Goal reselections} (SUN Adaptive and Per-Step): the fraction of steps at which $g_t \neq g_{t-1}$, normalized to $[0, 1]$ by the total number of steps. The two variants differ in what triggers a reselection. In \emph{adaptive} SUN, the goal is discarded and a fresh candidate batch is sampled only when $\Vtheta(s_t, g_t) < \Vtheta(s_{t_{\mathrm{sel}}}, g_t)$ fires. In \emph{per-step} SUN, a fresh batch is sampled at every step, and the goal changes if the batch has a candidate with a higher SUN score (Eq.~\eqref{eq:sun-score}) than the current one.
\item \textbf{Random exploration} (SUN Episodic, AdaGoal, DISCOVER): the fraction of steps at which the agent acts randomly. In \textit{episodic} goal-selection, this is triggered once the agent reaches its goal, following the original DISCOVER implementation.
\end{itemize}

The following important trends emerge.
\begin{itemize}[leftmargin=*, noitemsep, topsep=2pt]
\item Episodic selection fails when goals can become unreachable mid-episode (Figures~\ref{fig:extra_three_room} and~\ref{fig:extra_four_room}).
\item In-episode reselection avoids this problem, but Per-Step reselects far more often than Adaptive, which hurts performance. In almost all environments, Adaptive's reselection rate drops to zero over training, indicating that the agent has learned to reach the goals it commits to.
\item DISCOVER is biased toward reachability: across all heatmaps it selects far more goals near the starting states than the other algorithms, and attains high goal success from the very beginning. This may stem from its coefficient $\beta$, which must balance the scales of the reachability and novelty terms --- a problem SUN sidesteps entirely thanks to its multiplicative indicator.
\item AdaGoal is biased toward novelty at the expense of reachability: across all heatmaps it selects mostly goals far from the starting states, ``skipping'' intermediate ones. This is not unexpected. As noted in Section~\ref{subsec:summary}, we evaluate the deep-RL variant of AdaGoal, which --- unlike the tabular version --- does not constrain goal selection by the estimated goal-hitting time. \citeauthor{tarbouriech2022adaptive} argue that this is approximated implicitly by the disagreement of the value ensemble. Our results suggest that this does not hold in more complex environments. Novelty alone is a reasonable signal for goals the agent can actually reach, since visiting them resolves the disagreement at little cost. Unreachable goals, however, resolve only after enough failed attempts for every ensemble member to recognize them as such, and each of those attempts is an episode spent without useful experience. The rule cannot distinguish the two cases, and where the reachable set is a small fraction of the goal space the latter dominates --- precisely the behavior the reachability constraint was meant to prevent.
\item In Gridworlds, SUN tends to select short-horizon goals, as shown by steps-to-goal tending to one. In control tasks the opposite happens: the metric grows over training, meaning SUN selects goals that are progressively further away. This is expected, since control tasks require coherent action sequences to reach distant states --- in \stt{MountainCar}, for instance, the agent must build up momentum to escape the valley.
\end{itemize}

\begin{figure}[t]
    \centering
    % \\[-1pt]
    \begin{subfigure}[b]{0.325\textwidth}
        \includegraphics[trim=3 0 3 3, clip, width=\linewidth]{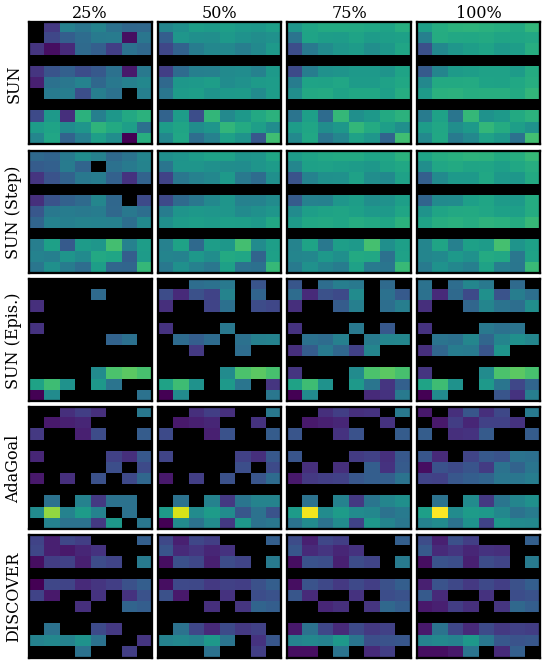}
        \caption{{{Goals selected.}}}
        \end{subfigure}
    \hfill
    \begin{subfigure}[b]{0.325\textwidth}
        \includegraphics[trim=3 0 3 3, clip, width=\linewidth]{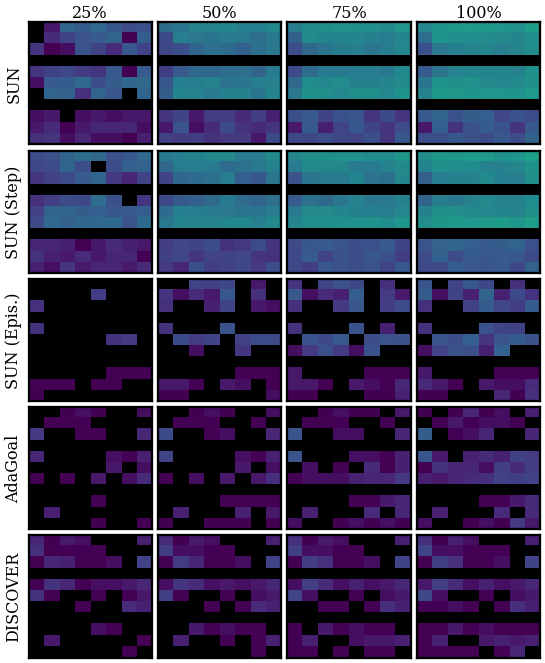}
        \subcaption{{{Goals reached.}}}
    \end{subfigure}
    \hfill
    \begin{subfigure}[b]{0.33\textwidth}
        \includegraphics[width=\linewidth]{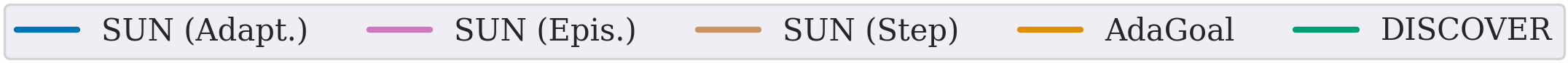}
        \\[-1pt]
        \includegraphics[width=\linewidth]{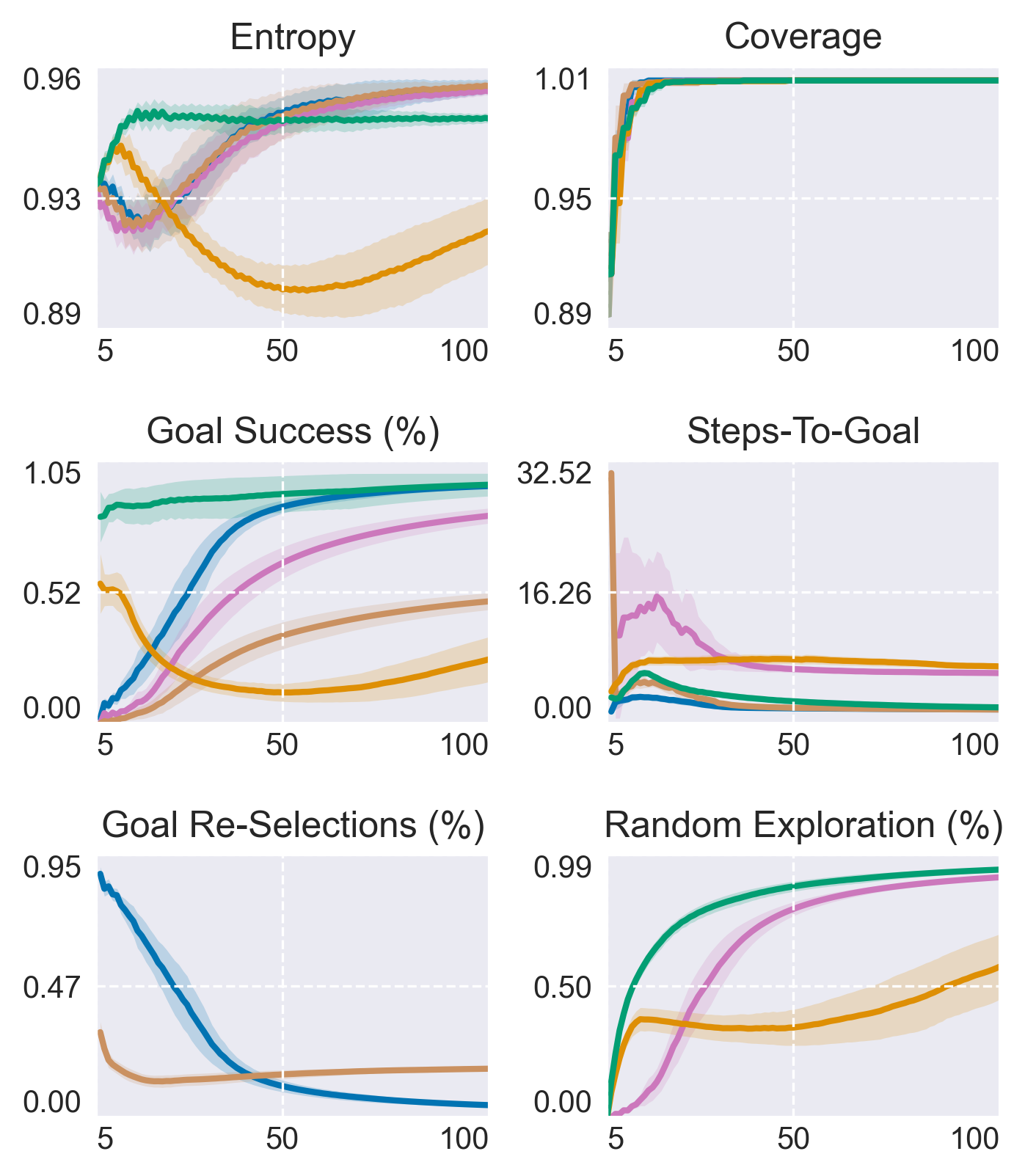}
        \captionsetup{skip=0pt}
        \subcaption{Training curves.}
    \end{subfigure}
    \caption{\label{fig:extra_three_room}\stt{ThreeRoom}. This environment shows the superiority of the SUN indicator, and the importance of properly trading off novelty and reachability. AdaGoal fails because it frequently selects unreachable goals: in (a), it is the only method that does \emph{not} under-select the bottom room (which is often unreachable). DISCOVER, conversely, is biased toward reachability: it selects fewer goals far from the starting states (left of the rooms). The curves in (c) confirm this: AdaGoal has the lowest success rate, while DISCOVER reaches a high success rate almost immediately --- it selects easily-reachable goals early on --- and therefore spends most of its time exploring randomly. All three versions of SUN, in contrast, achieve high entropy: over time they select goals uniformly and progressively further to the right (away from starting states). The choice of goal-selection strategy matters little here, since the environment is small and occasional poor selections are inconsequential.}
\end{figure}

\begin{figure}[t]
    \centering
    % \\[-1pt]
    \begin{subfigure}[b]{0.325\textwidth}
        \includegraphics[trim=3 0 3 3, clip, width=\linewidth]{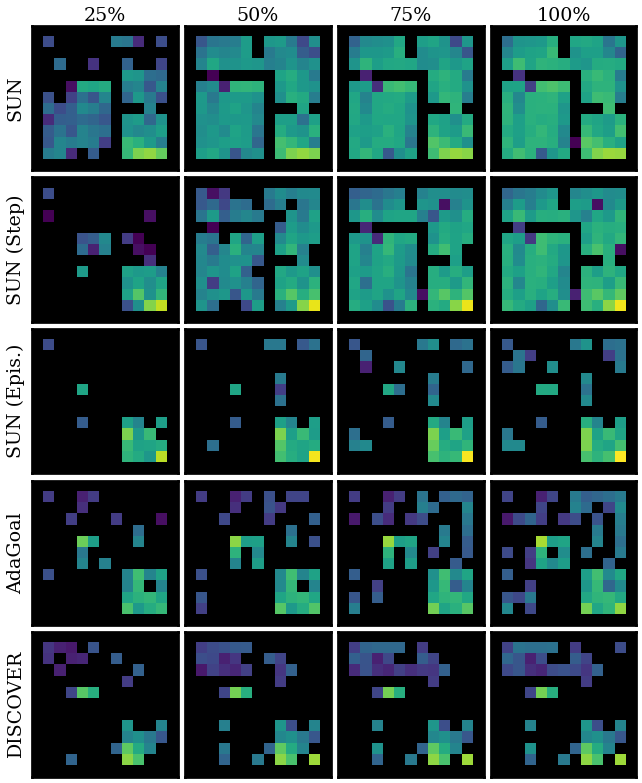}
        \subcaption{{{Goals selected.}}}
        \end{subfigure}
    \hfill
    \begin{subfigure}[b]{0.325\textwidth}
        \includegraphics[trim=3 0 3 3, clip, width=\linewidth]{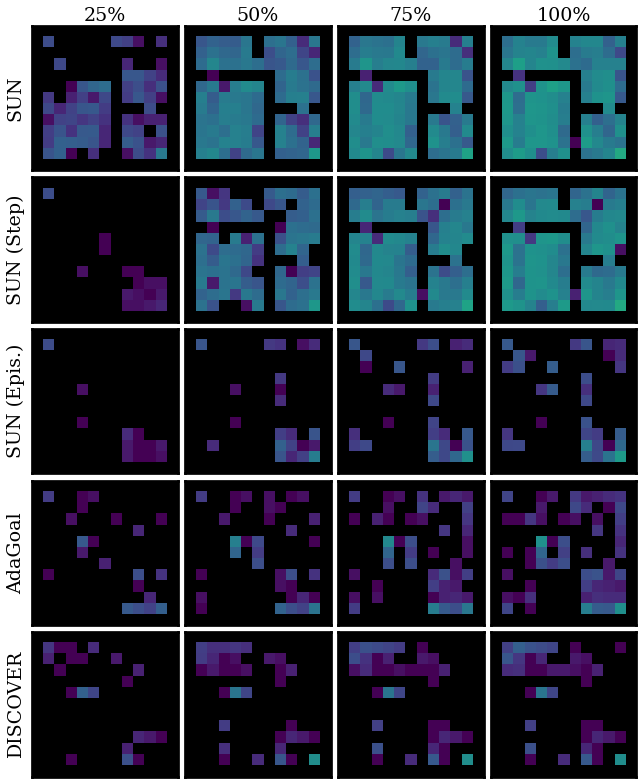}
        \subcaption{{{Goals reached.}}}
    \end{subfigure}
    \hfill
    \begin{subfigure}[b]{0.33\textwidth}
        \includegraphics[width=\linewidth]{plots/goal_stats/legend.png}
        \\[-1pt]
        \includegraphics[width=\linewidth]{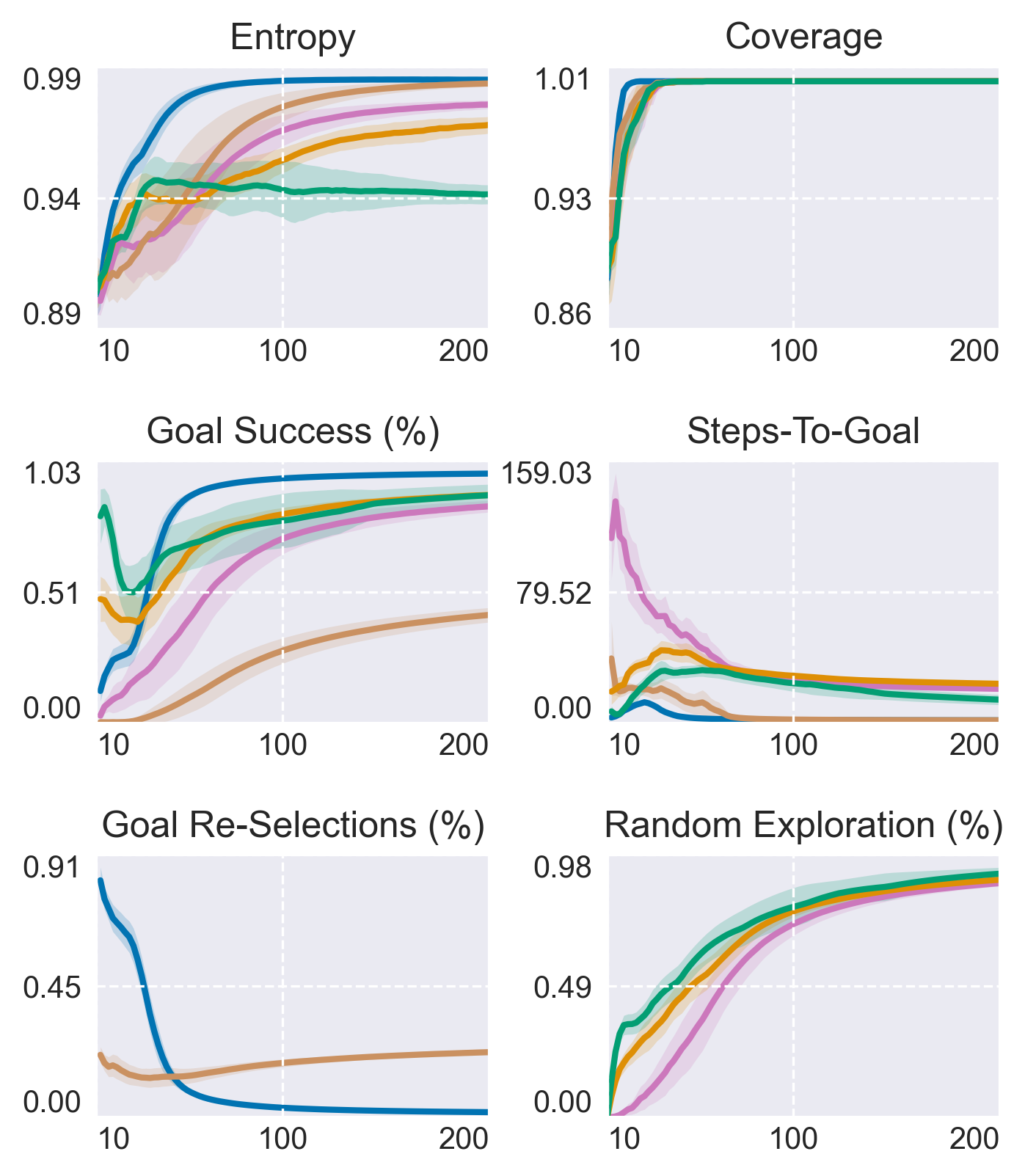}
        \captionsetup{skip=0pt}
        \subcaption{Training curves.}
    \end{subfigure}
    \caption{\label{fig:extra_four_room}\stt{FourRoomStuck}. SUN Episodic, DISCOVER, and AdaGoal perform poorly: the agent can accidentally get trapped in the bottom-left room, at which point the previously selected goal becomes unreachable yet stays fixed for the rest of the episode. SUN Adaptive and Per-Step avoid this through goal reselection, and both attain high entropy. Per-Step entropy increases more slowly, though: it reselects goals far more often than Adaptive, making it unstable early in training.}
\end{figure}

\clearpage

\begin{figure}[!t]
    \centering
    % \\[-1pt]
    \begin{subfigure}[b]{0.325\textwidth}
        \includegraphics[trim=3 0 3 3, clip, width=\linewidth]{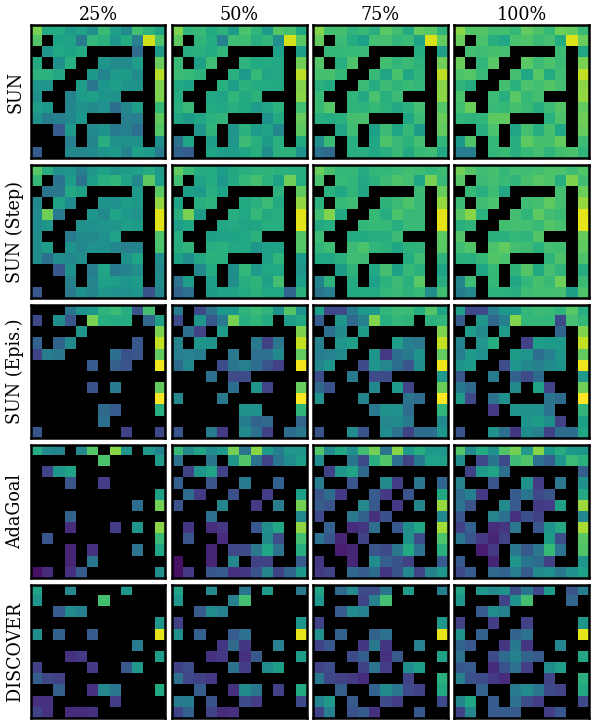}
        \subcaption{{{Goals selected.}}}
        \end{subfigure}
    \hfill
    \begin{subfigure}[b]{0.325\textwidth}
        \includegraphics[trim=3 0 3 3, clip, width=\linewidth]{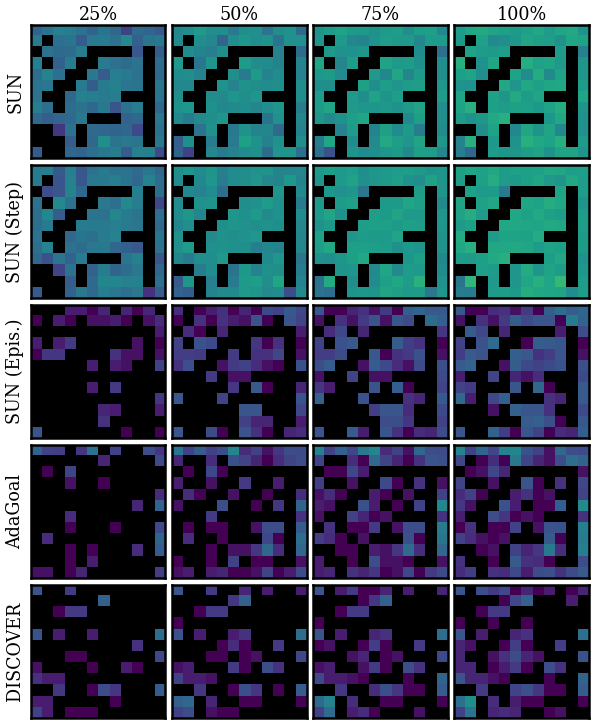}
        \subcaption{{{Goals reached.}}}
    \end{subfigure}
    \hfill
    \begin{subfigure}[b]{0.33\textwidth}
        \includegraphics[width=\linewidth]{plots/goal_stats/legend.png}
        \\[-1pt]
        \includegraphics[width=\linewidth]{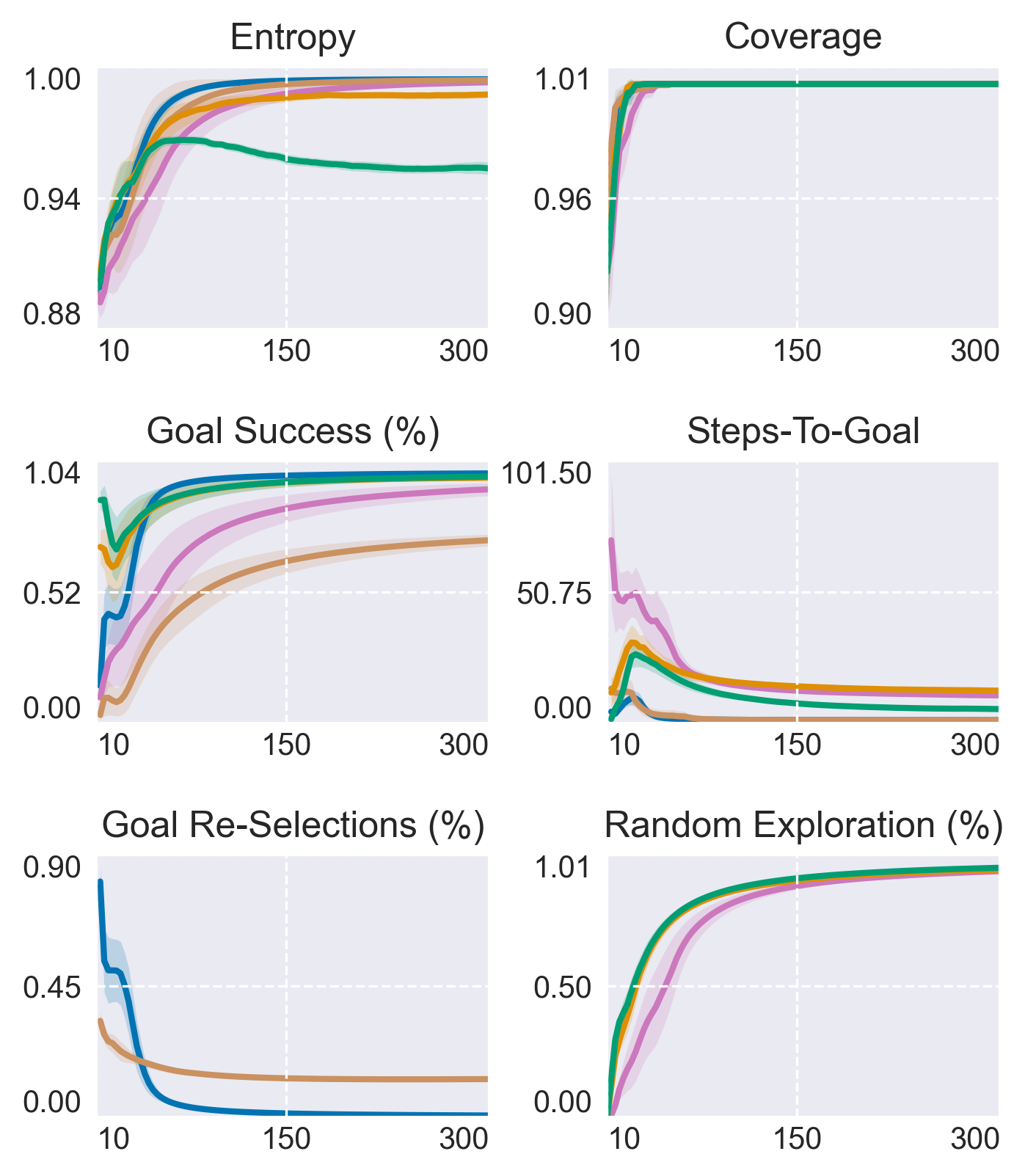}
        \captionsetup{skip=0pt}
        \subcaption{Training curves.}
    \end{subfigure}
    \caption{\label{fig:extra_maze}\stt{GridMaze}. Similar trends to Figure \ref{fig:extra_four_room} emerge, e.g., DISCOVER bias towards easy-to-reach goals and Per-Step higher reselection rate.}
    \centering
    % \\[-1pt]
    \begin{subfigure}[b]{0.325\textwidth}
        \includegraphics[trim=3 0 3 3, clip, width=\linewidth]{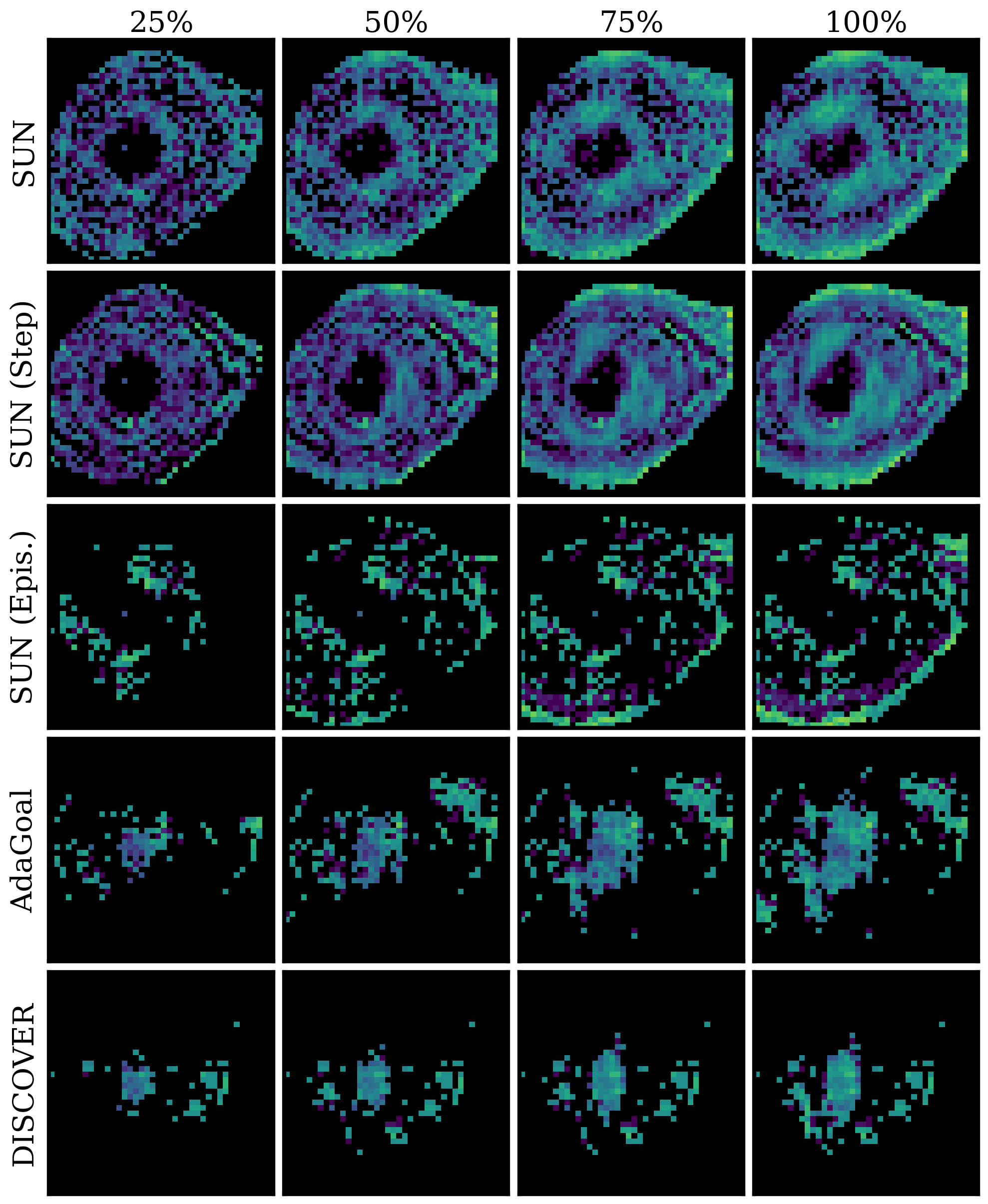}
        \subcaption{{{Goals selected.}}}
        \end{subfigure}
    \hfill
    \begin{subfigure}[b]{0.325\textwidth}
        \includegraphics[trim=3 0 3 3, clip, width=\linewidth]{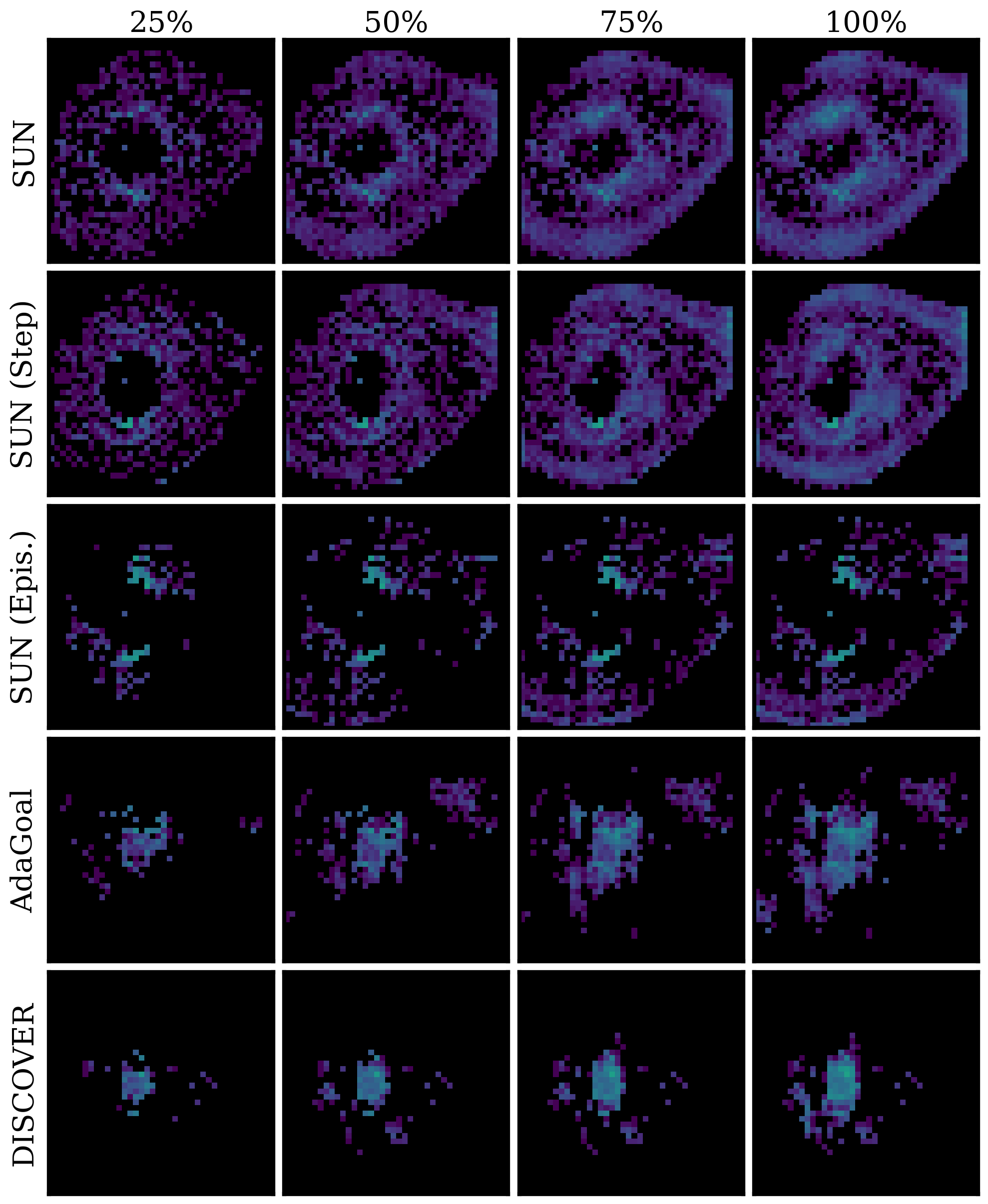}
        \subcaption{{{Goals reached.}}}
    \end{subfigure}
    \hfill
    \begin{subfigure}[b]{0.33\textwidth}
        \includegraphics[width=\linewidth]{plots/goal_stats/legend.png}
        \\[-1pt]
        \includegraphics[width=\linewidth]{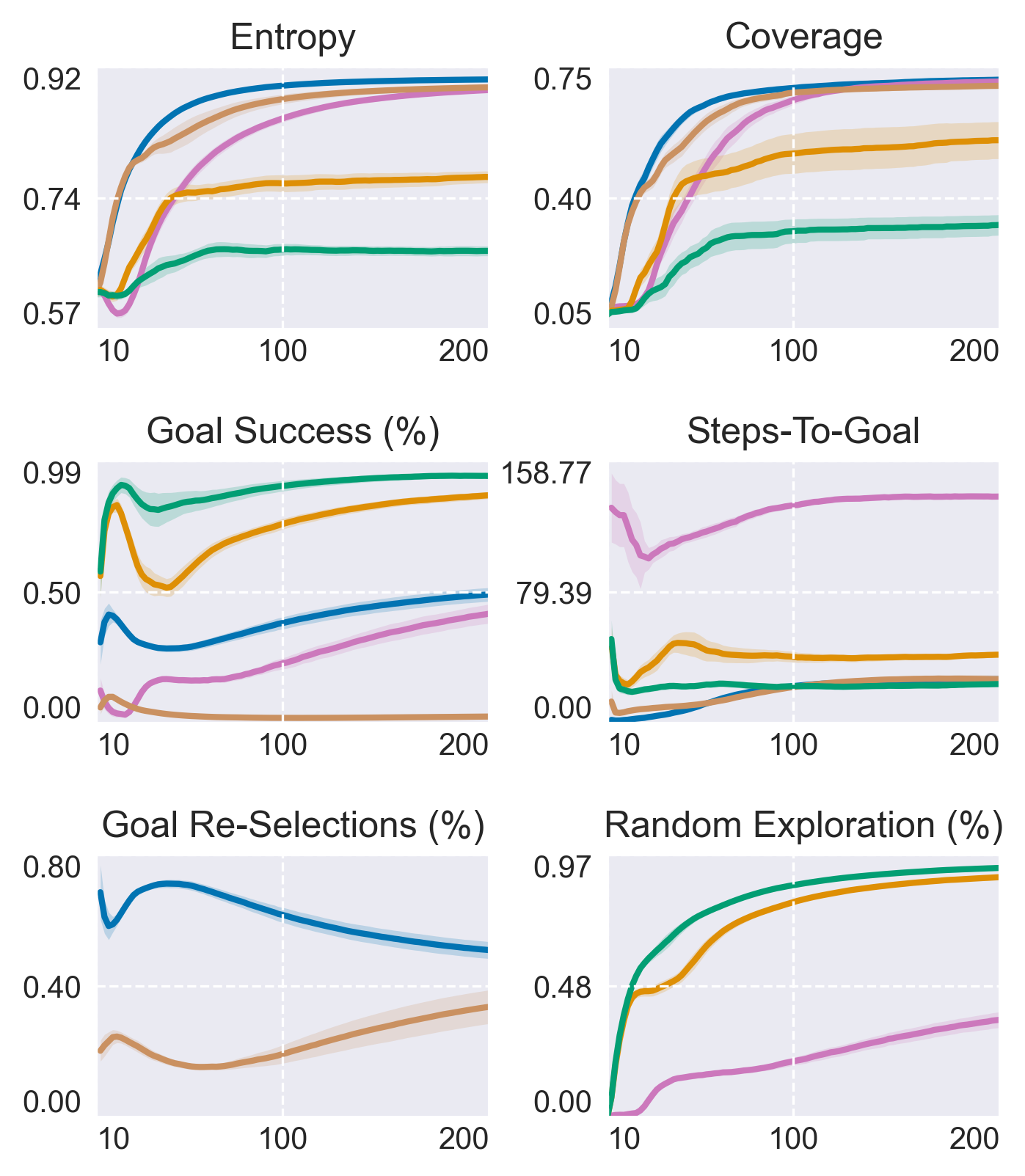}
        \captionsetup{skip=0pt}
        \subcaption{Training curves.}
    \end{subfigure}
    \caption{\label{fig:extra_car}\stt{MountainCar}. The main difference from Gridworlds is that SUN's steps-to-goal increases over training. This is expected, as \stt{MountainCar} requires coherent action sequences to build up the momentum needed to escape the valley. Another notable difference is Per-Step's higher reselection rate, likely due to the larger goal space: at every step there is a greater chance that the replay buffer yields a better candidate. This environment further highlights DISCOVER's reachability bias, as most of its goals lie close to the agent's starting position.}
    \centering
    % \\[-1pt]
    \begin{subfigure}[b]{0.325\textwidth}
        \includegraphics[trim=3 0 3 3, clip, width=\linewidth]{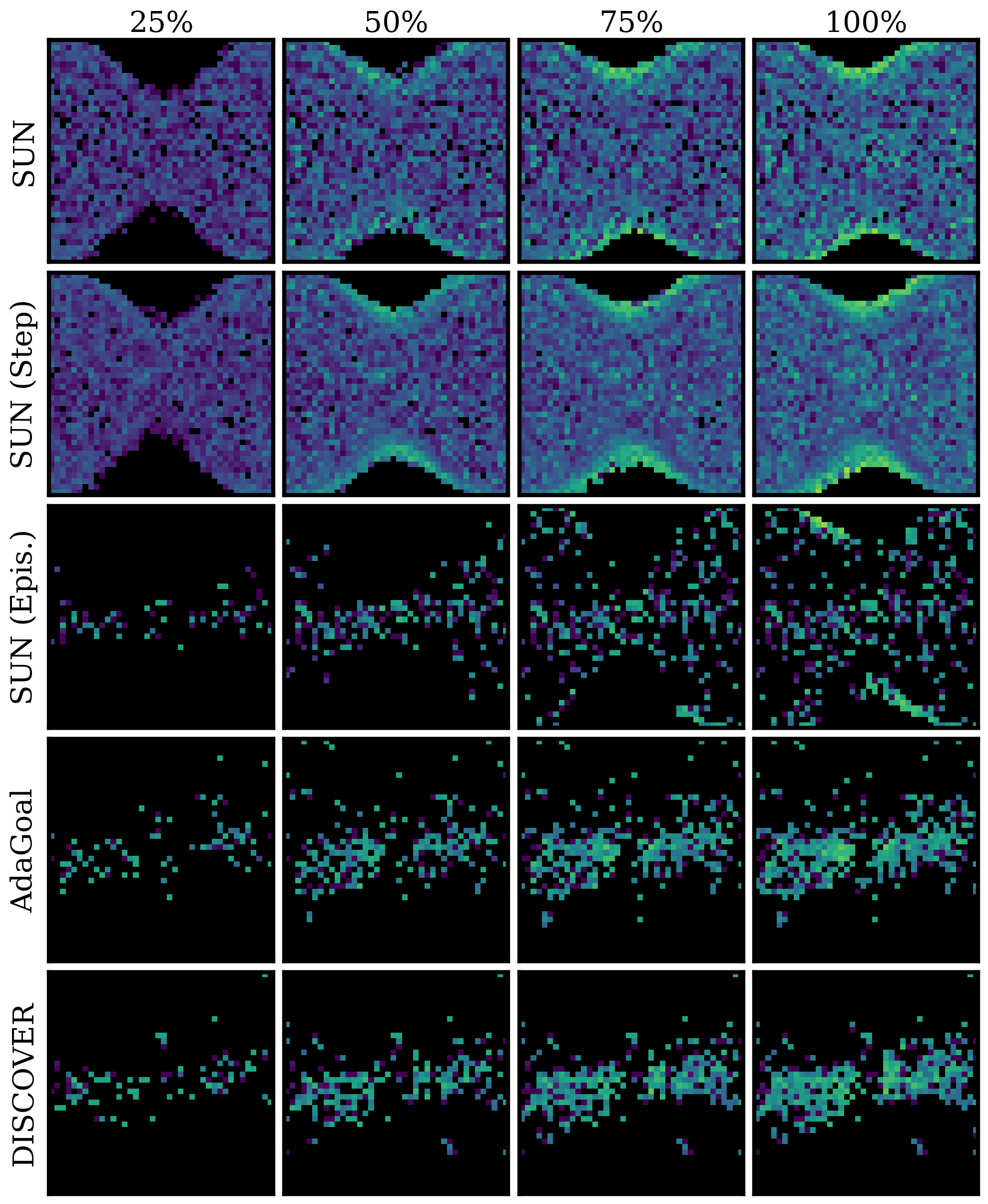}
        \subcaption{{{Goals selected.}}}
        \end{subfigure}
    \hfill
    \begin{subfigure}[b]{0.325\textwidth}
        \includegraphics[trim=3 0 3 3, clip, width=\linewidth]{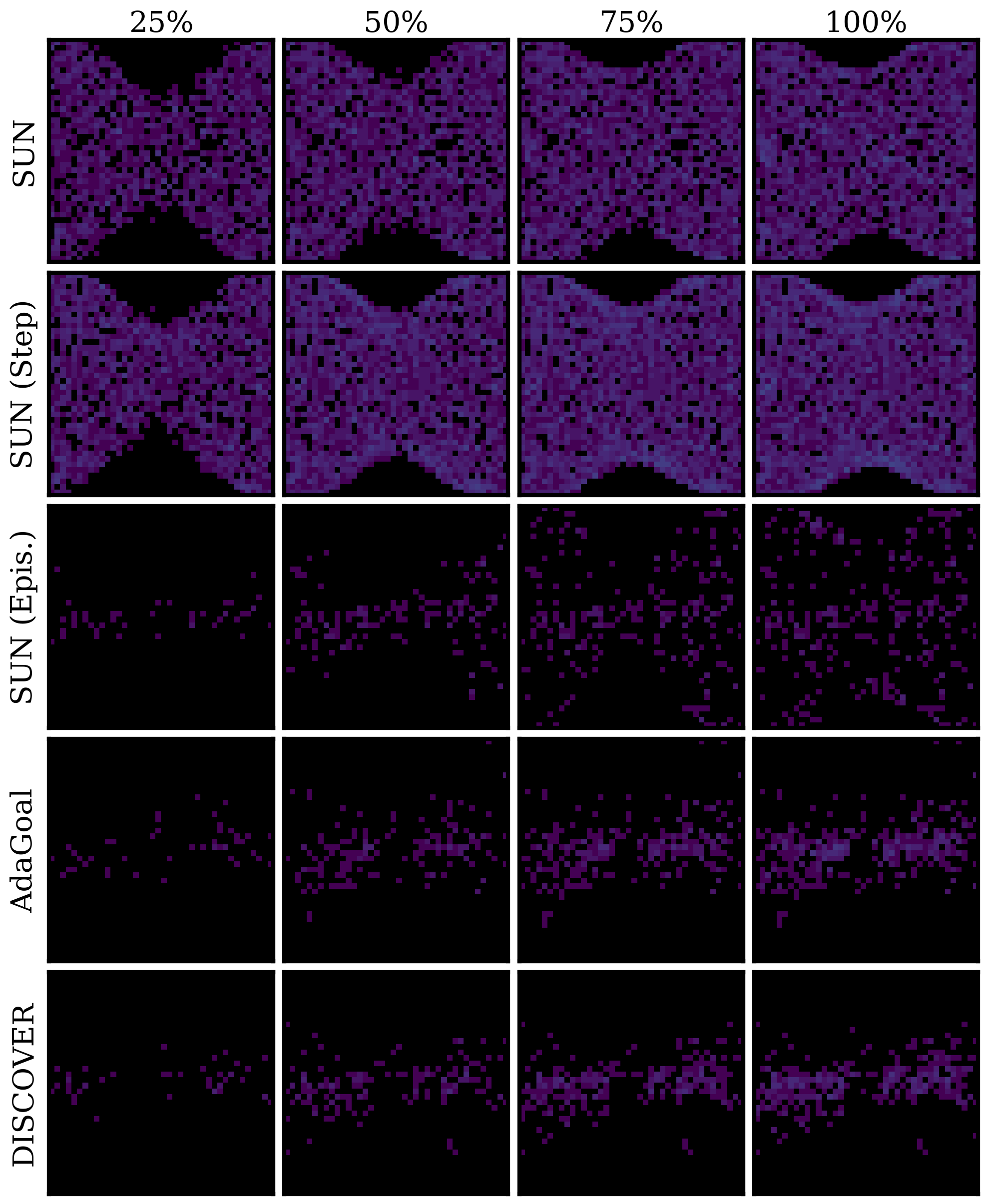}
        \subcaption{{{Goals reached.}}}
    \end{subfigure}
    \hfill
    \begin{subfigure}[b]{0.33\textwidth}
        \includegraphics[width=\linewidth]{plots/goal_stats/legend.png}
        \\[-1pt]
        \includegraphics[width=\linewidth]{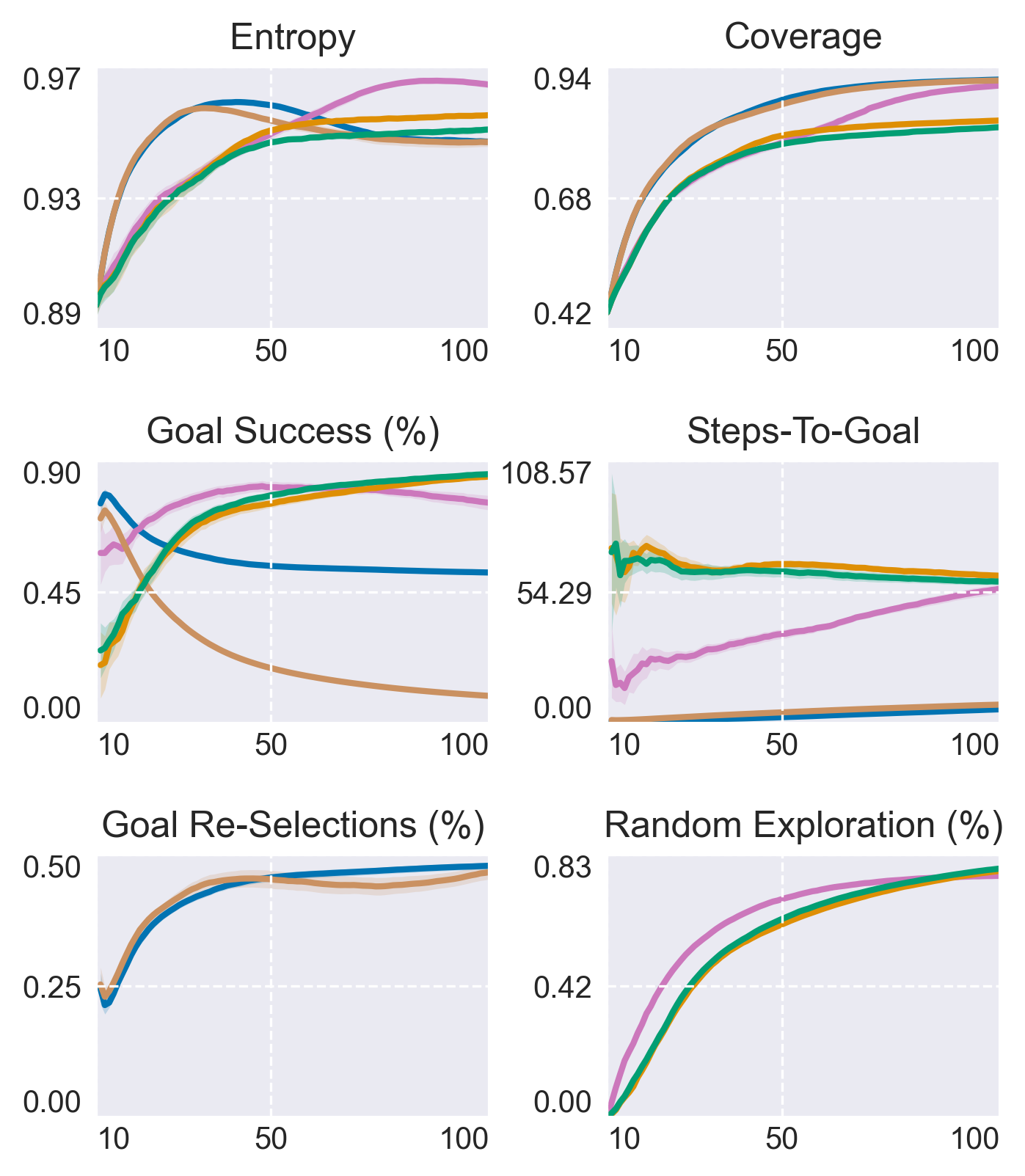}
        \captionsetup{skip=0pt}
        \subcaption{Training curves.}
    \end{subfigure}
    \caption{\label{fig:extra_pendulum}\stt{Pendulum}. As discussed in Section~\ref{sec:experiments}, the distinctive trait of this environment is the presence of hard-to-reach goals that can only be visited by repeatedly traversing easy-to-reach ones, causing entropy to decay even as coverage increases.}
\end{figure}

\clearpage

\begin{figure}[!t]
    \centering
    % \\[-1pt]
    \begin{subfigure}[b]{0.325\textwidth}
        \includegraphics[trim=3 0 3 3, clip, width=\linewidth]{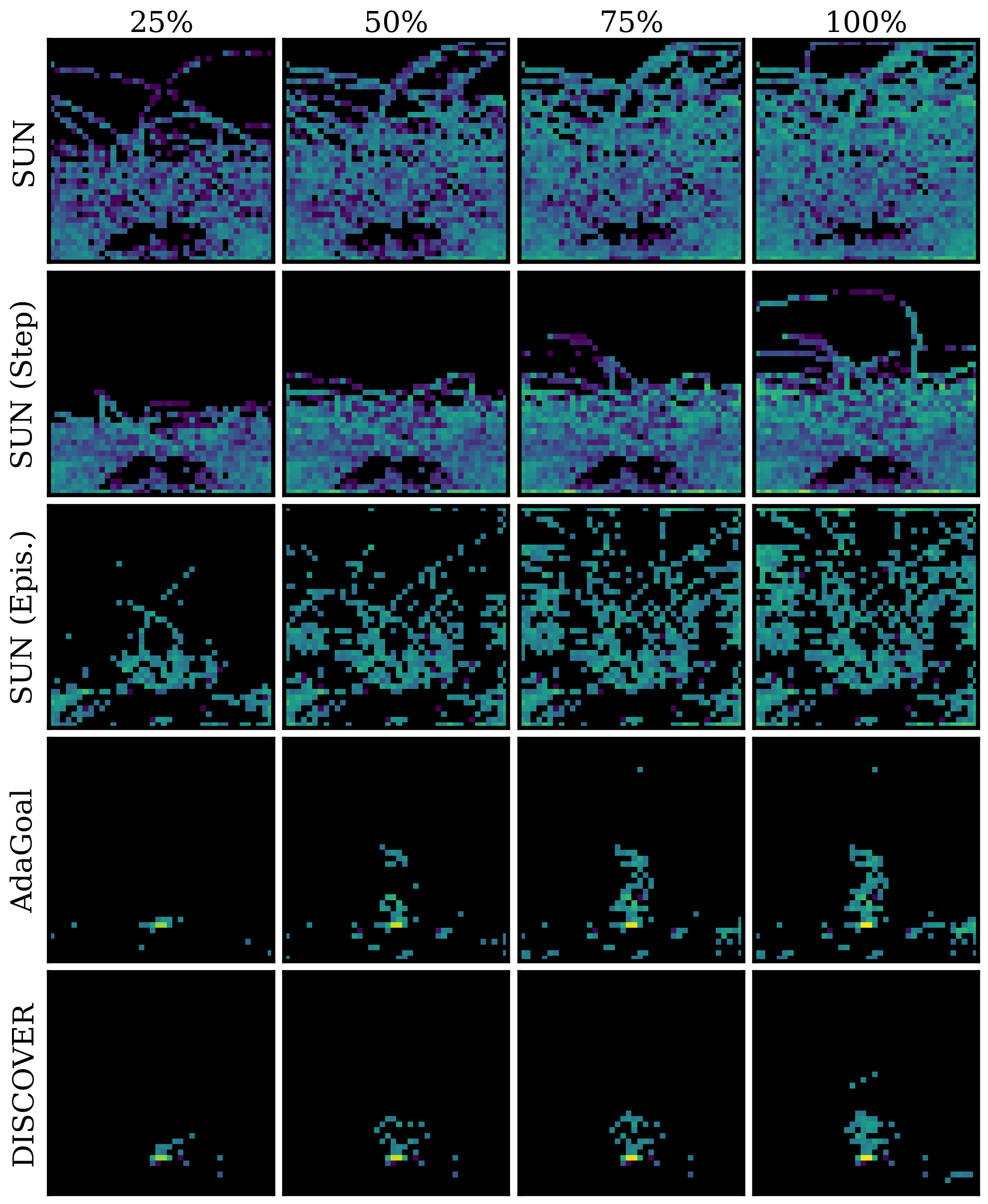}
        \subcaption{{{Goals selected.}}}
        \end{subfigure}
    \hfill
    \begin{subfigure}[b]{0.325\textwidth}
        \includegraphics[trim=3 0 3 3, clip, width=\linewidth]{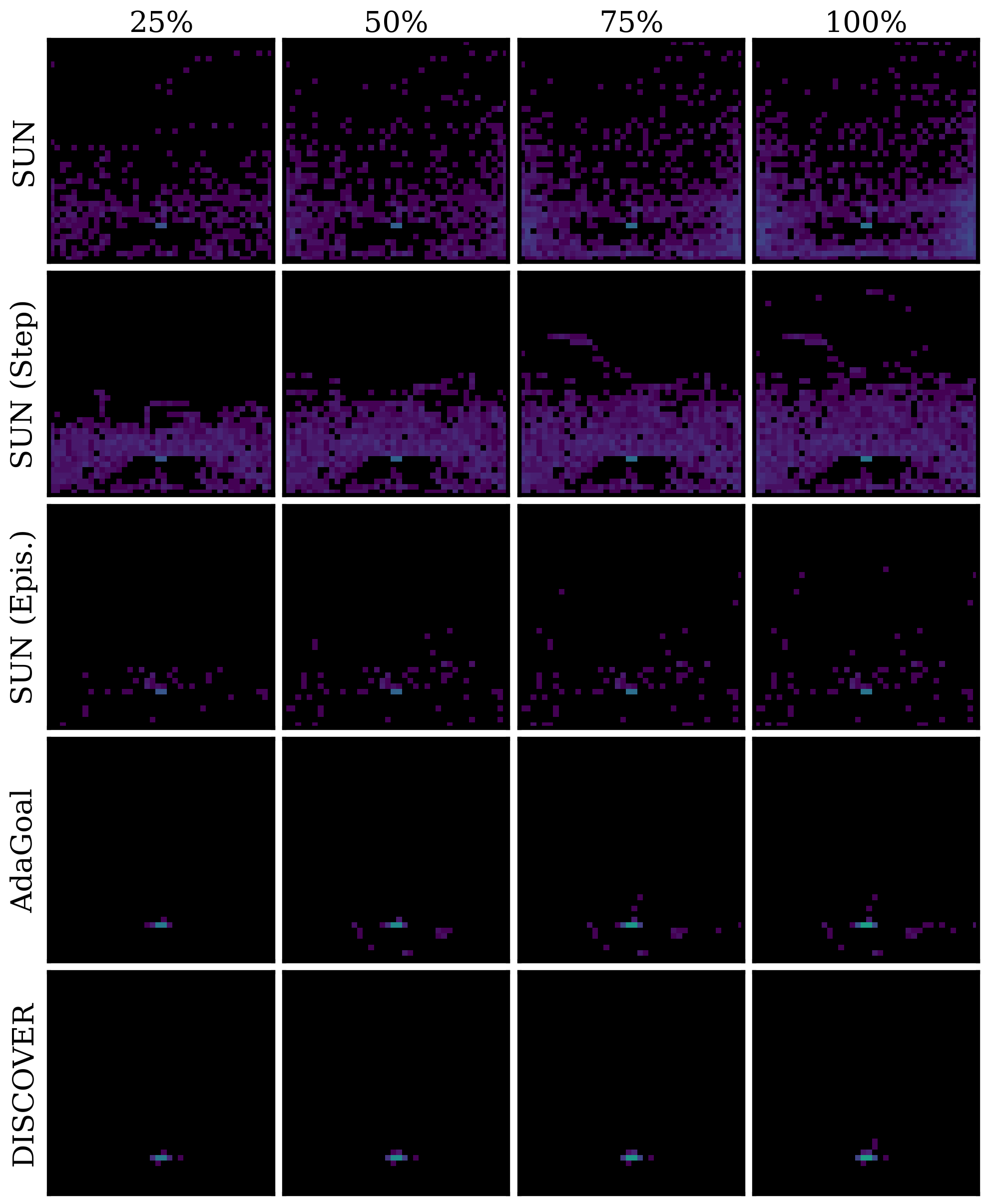}
        \subcaption{{{Goals reached.}}}
    \end{subfigure}
    \hfill
    \begin{subfigure}[b]{0.33\textwidth}
        \includegraphics[width=\linewidth]{plots/goal_stats/legend.png}
        \\[-1pt]
        \includegraphics[width=\linewidth]{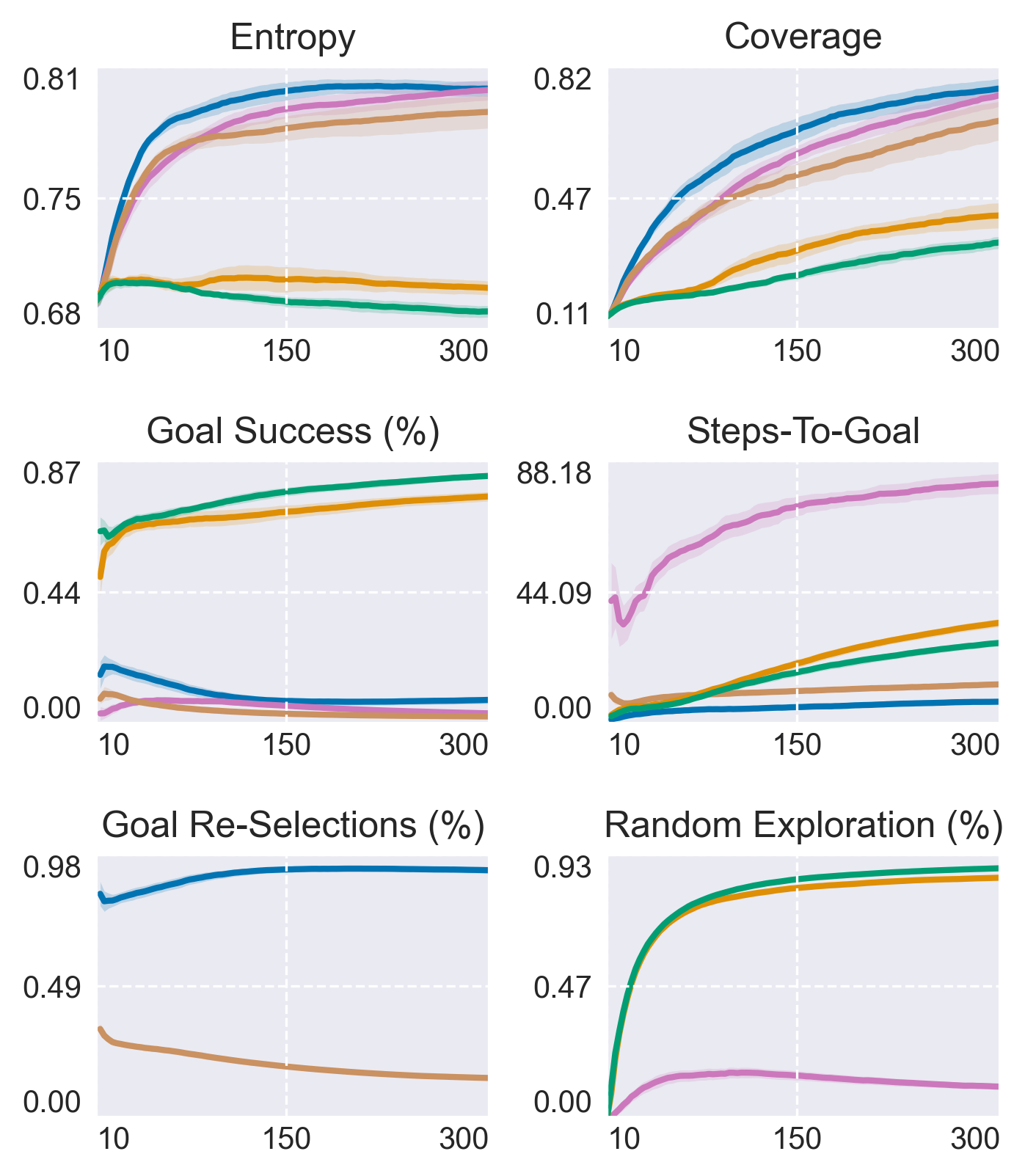}
        \captionsetup{skip=0pt}
        \subcaption{Training curves.}
    \end{subfigure}
    \caption{\label{fig:extra_lunar}\stt{LunarLander}. Among the Classic control tasks, this is the most challenging: it has harder dynamics, higher dimensionality, and a larger goal space. Here, SUN's reselection rate does not tend to zero. Compared to \stt{MountainCar} and \stt{Pendulum}, the gap between SUN Adaptive and Per-Step is more evident, both in selected goals, entropy, and coverage. Furthermore, the difference between the SUN and AdaGoal/DISCOVER is more apparent here: SUN's selected goals are spread more uniformly across the whole space. DISCOVER selects mostly goals near the starting state, whereas AdaGoal spreads them out more but often picks goals too far away for the agent to reach (e.g., the isolated goal at the top of its heatmaps), confirming its bias toward novelty.}
    \centering
    % \\[-1pt]
    \begin{subfigure}[b]{0.325\textwidth}
        \includegraphics[trim=3 0 3 3, clip, width=\linewidth]{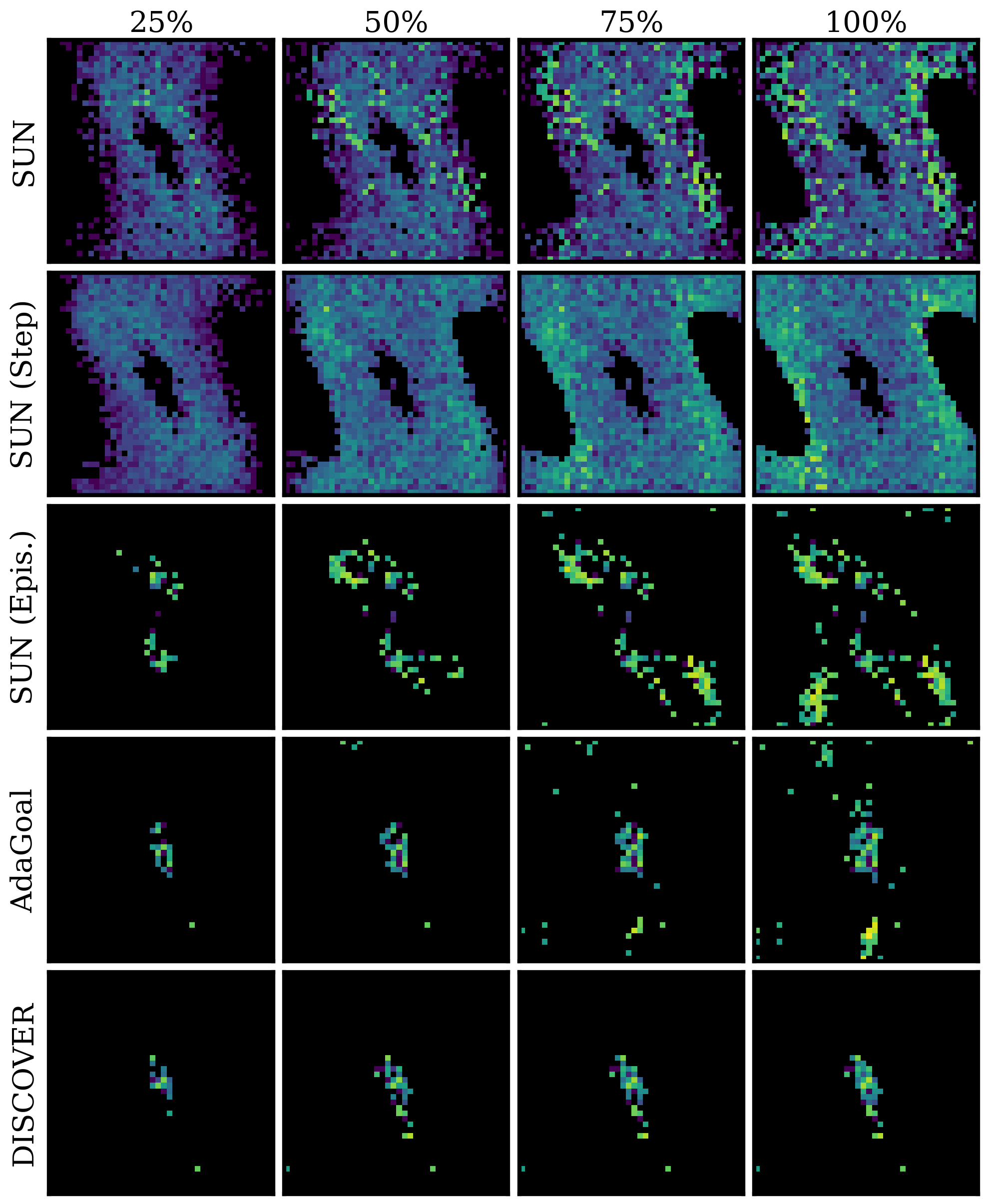}
        \subcaption{{{Goals selected.}}}
        \end{subfigure}
    \hfill
    \begin{subfigure}[b]{0.325\textwidth}
        \includegraphics[trim=3 0 3 3, clip, width=\linewidth]{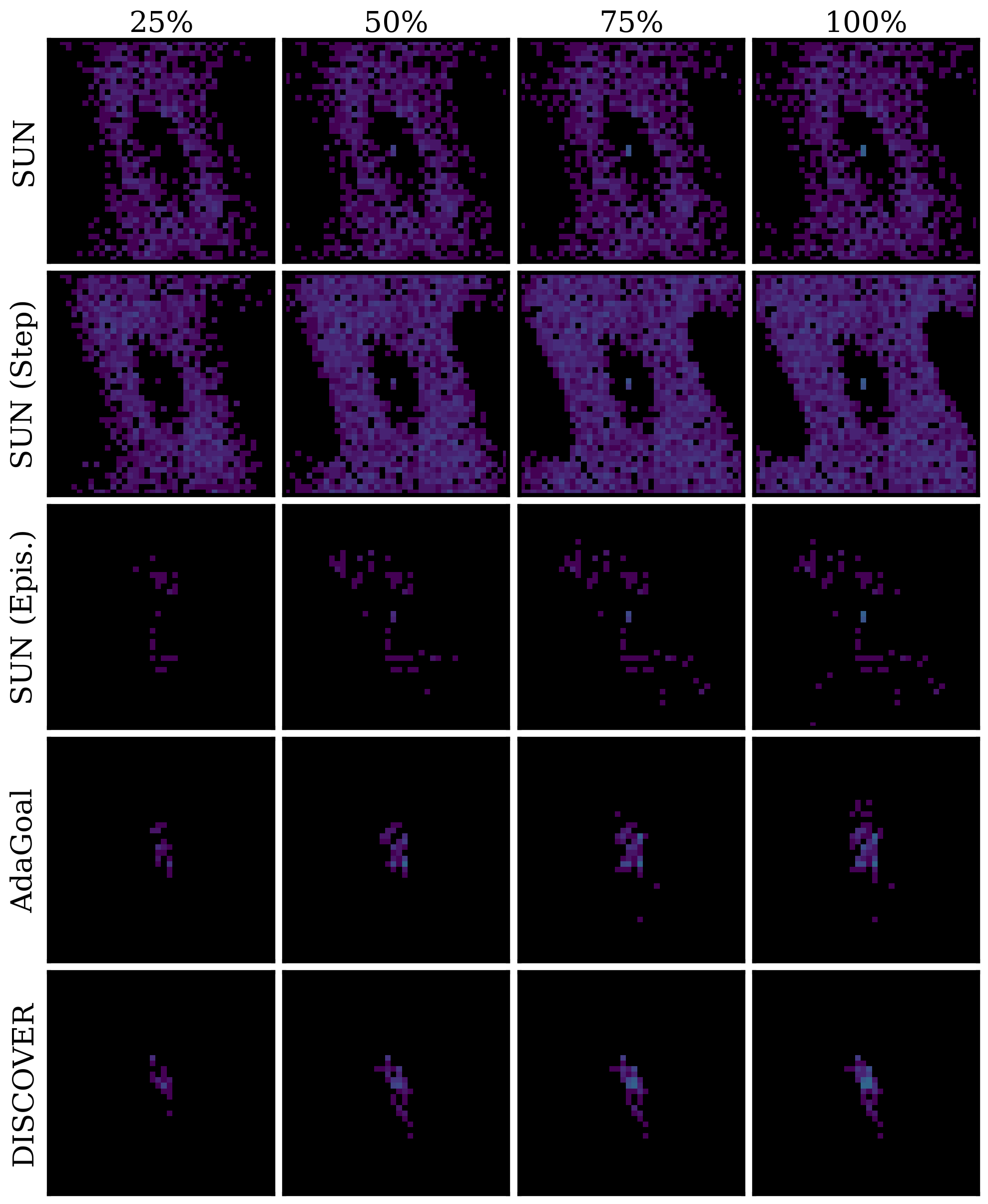}
        \subcaption{{{Goals reached.}}}
    \end{subfigure}
    \hfill
    \begin{subfigure}[b]{0.33\textwidth}
        \includegraphics[width=\linewidth]{plots/goal_stats/legend.png}
        \\[-1pt]
        \includegraphics[width=\linewidth]{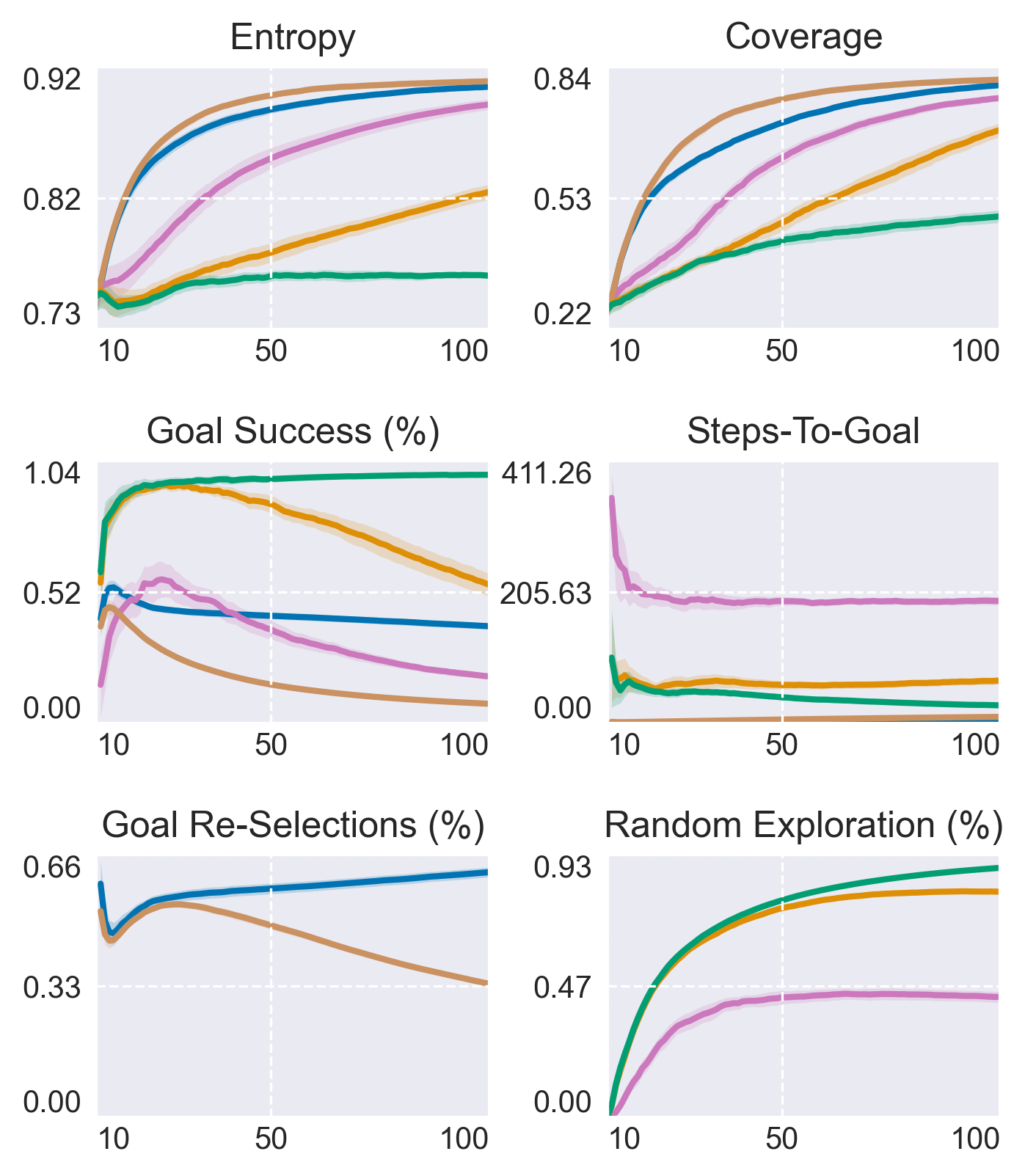}
        \captionsetup{skip=0pt}
        \subcaption{Training curves.}
    \end{subfigure}
    \caption{\label{fig:extra_acro}\stt{Acrobot}. All trends discussed so far emerge here as well, most notably the different spread of goals selected by SUN compared to AdaGoal and DISCOVER.}
    \centering
    % \\[-1pt]
    \begin{subfigure}[b]{0.325\textwidth}
        \includegraphics[trim=3 0 3 3, clip, width=\linewidth]{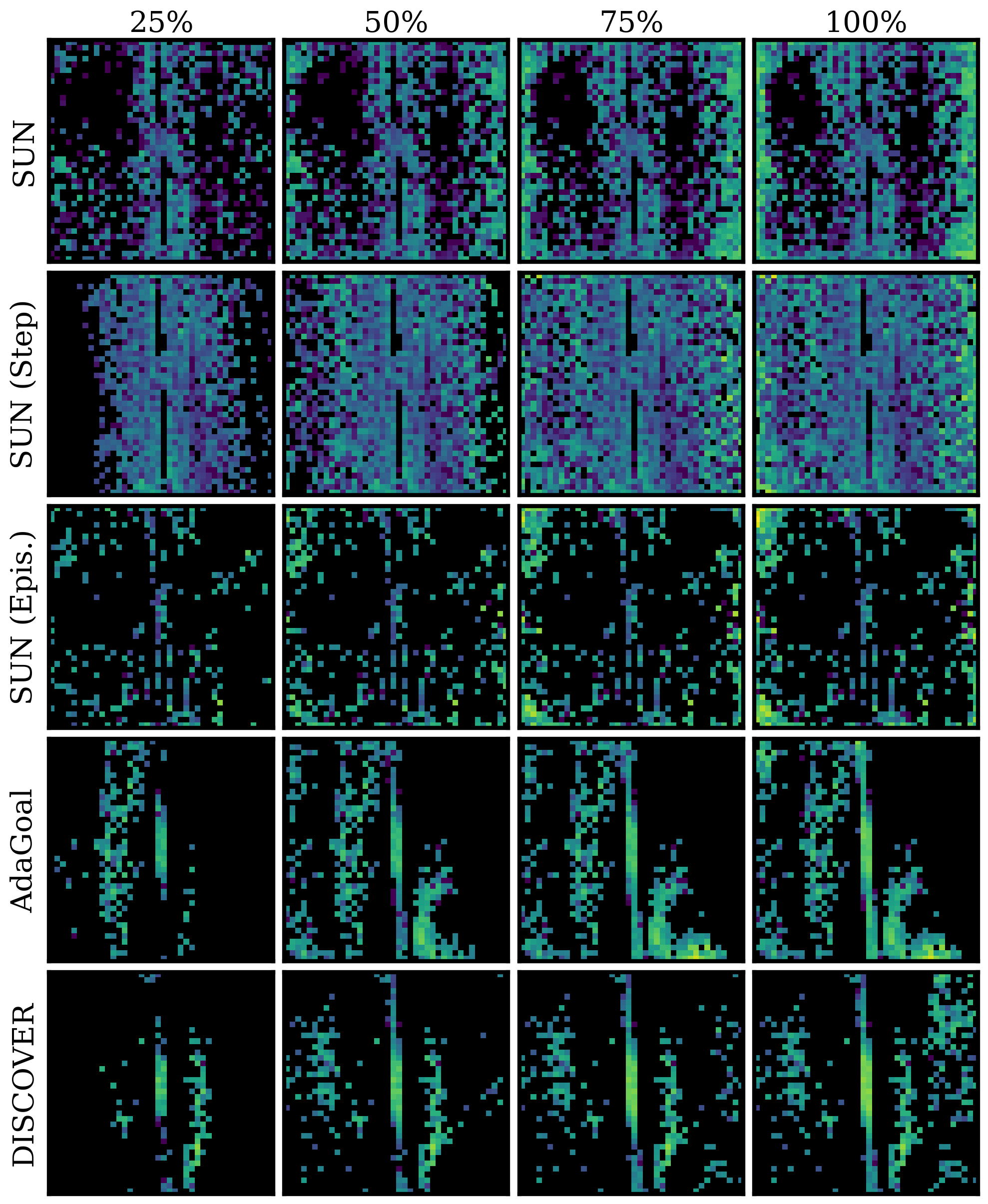}
        \subcaption{{{Goals selected.}}}
        \end{subfigure}
    \hfill
    \begin{subfigure}[b]{0.325\textwidth}
        \includegraphics[trim=3 0 3 3, clip, width=\linewidth]{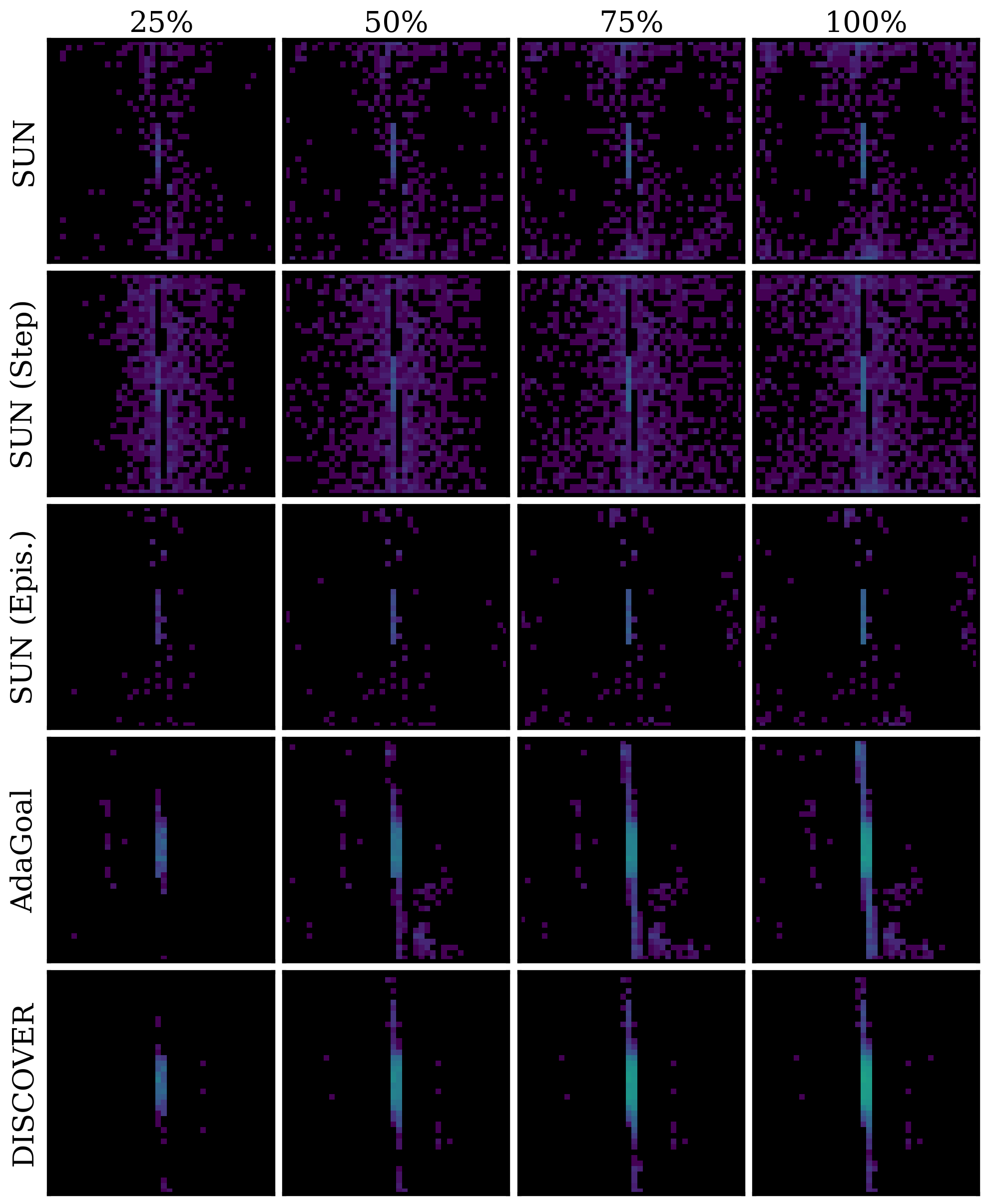}
        \subcaption{{{Goals reached.}}}
    \end{subfigure}
    \hfill
    \begin{subfigure}[b]{0.33\textwidth}
        \includegraphics[width=\linewidth]{plots/goal_stats/legend.png}
        \\[-1pt]
        \includegraphics[width=\linewidth]{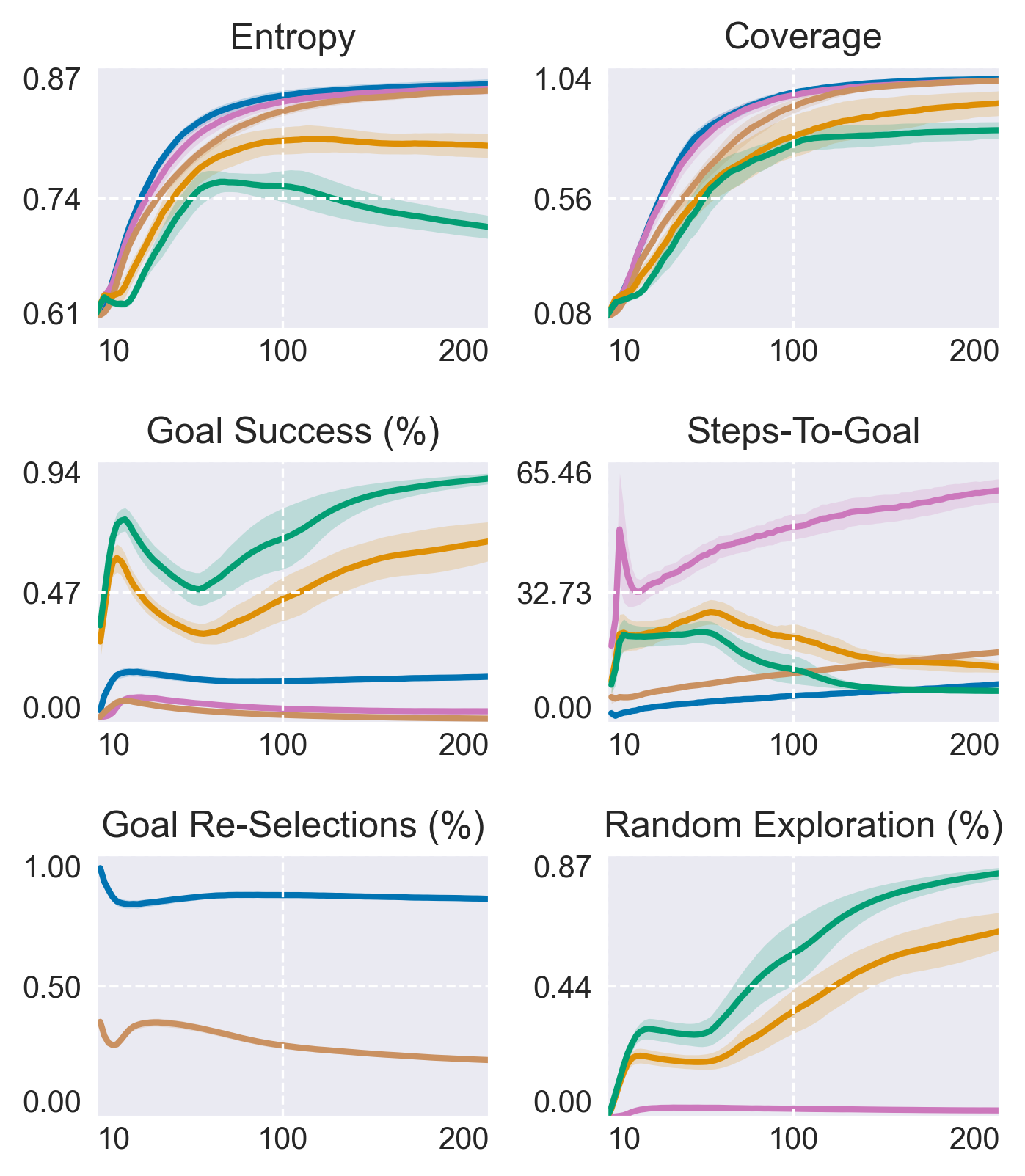}
        \captionsetup{skip=0pt}
        \subcaption{Training curves.}
    \end{subfigure}
    \caption{\label{fig:extra_cart}\stt{CartPole}. All trends discussed so far emerge here as well.}
\end{figure}

\clearpage

% ---- GCRL environments (v10 runs; same goal-statistics analysis). ----------
% Reach test = the training criterion (dist < goal_reach_thresh: mazes 0.5m,
% ArmPush 0.1m, 2-D goal projection); heatmaps = seed-averaged counts of the
% commanded goal at 25/50/75/100% of training (canonical extents).

\begin{figure}[t]
    \centering
    % \\[-1pt]
    \begin{subfigure}[b]{0.325\textwidth}
        \includegraphics[trim=3 0 3 3, clip, width=\linewidth]{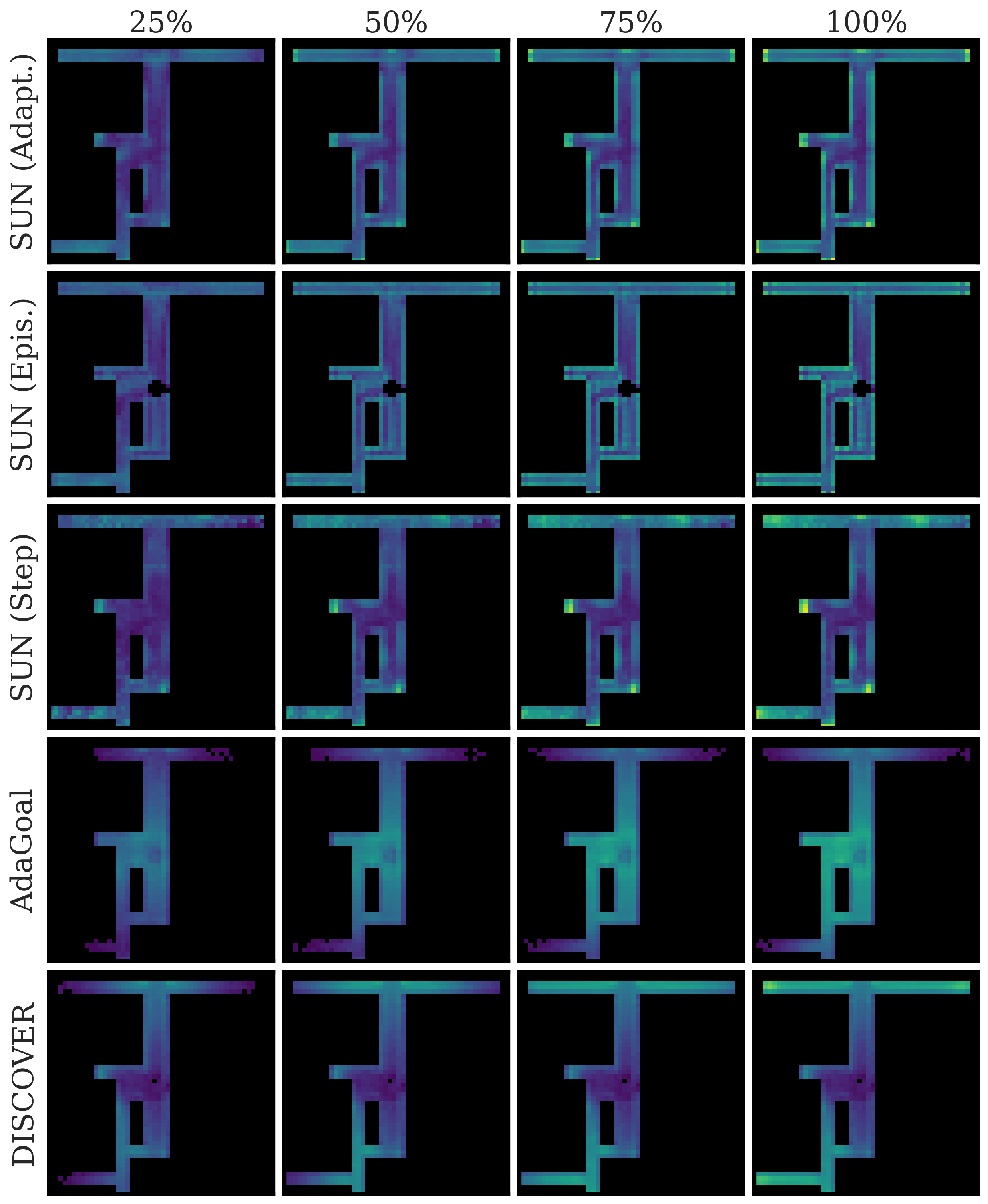}
        \subcaption{{{Goals selected.}}}
        \end{subfigure}
    \hfill
    \begin{subfigure}[b]{0.325\textwidth}
        \includegraphics[trim=3 0 3 3, clip, width=\linewidth]{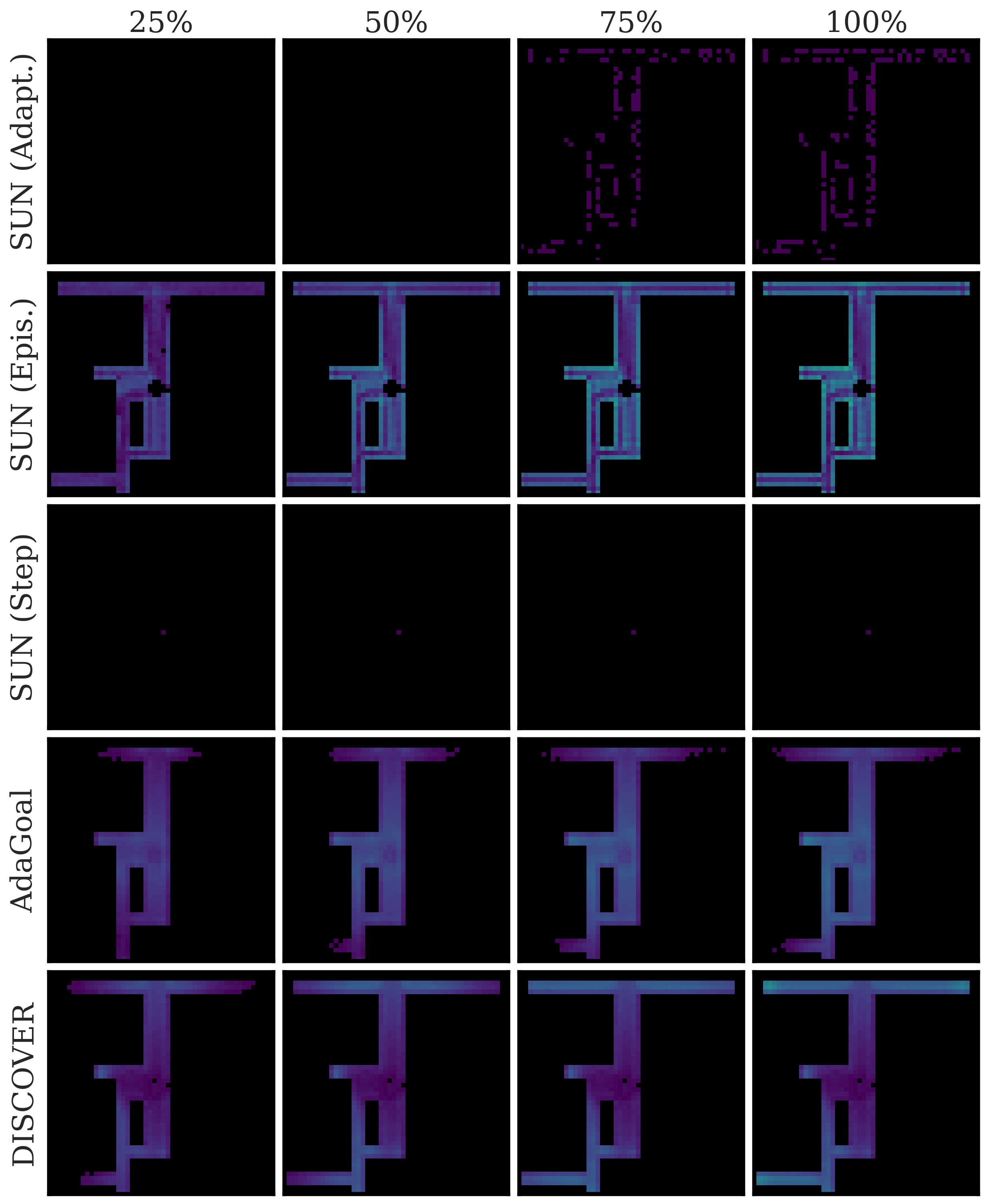}
        \subcaption{{{Goals reached.}}}
    \end{subfigure}
    \hfill
    \begin{subfigure}[b]{0.33\textwidth}
        \includegraphics[width=\linewidth]{plots/goal_stats/legend.png}
        \\[-1pt]
        \includegraphics[width=\linewidth]{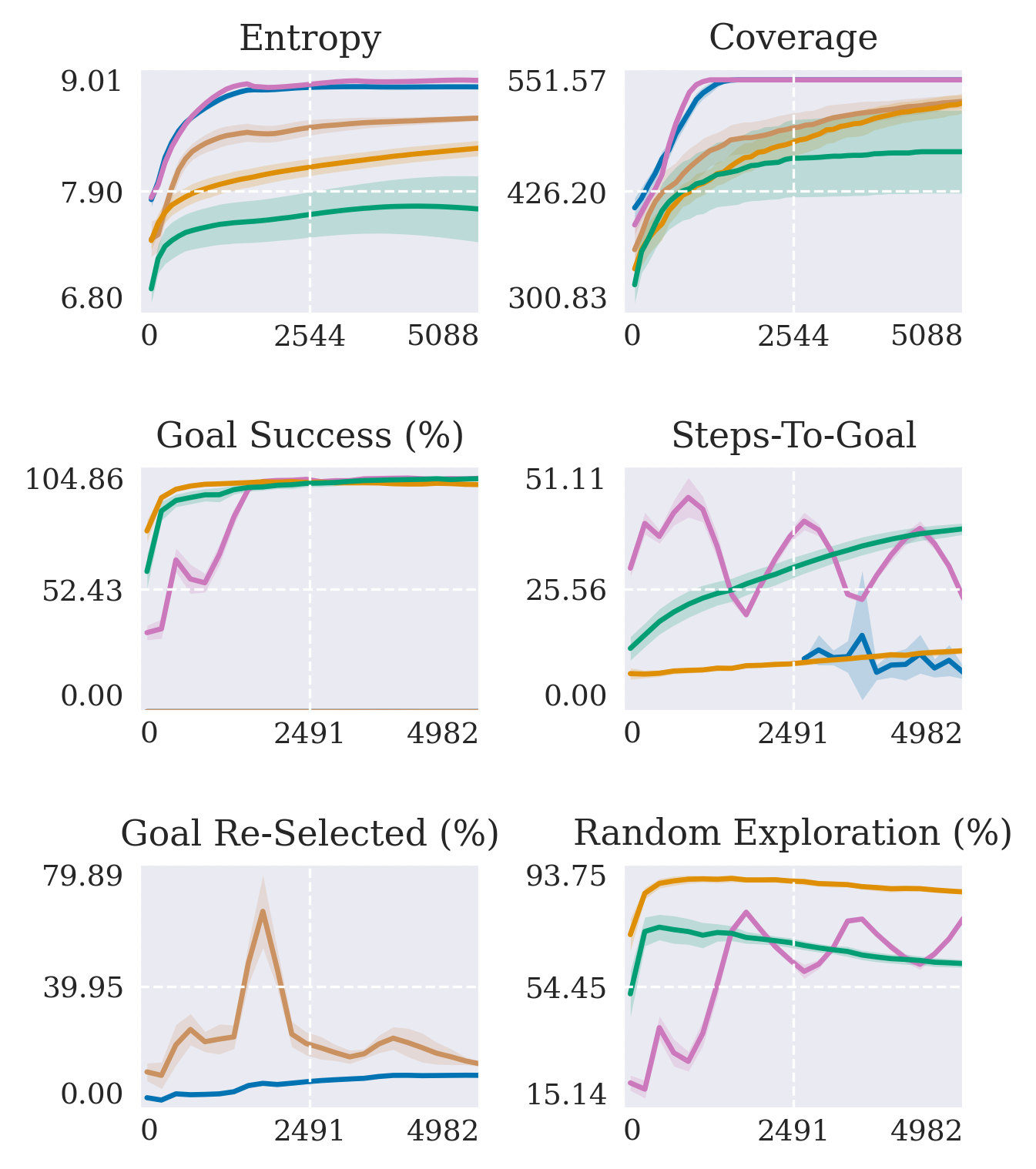}
        \captionsetup{skip=0pt}
        \subcaption{Training curves.}
    \end{subfigure}
    \caption{\label{fig:extra_point_maze_simple}\stt{PointMaze-S}.}
\end{figure}

\begin{figure}[t]
    \centering
    % \\[-1pt]
    \begin{subfigure}[b]{0.325\textwidth}
        \includegraphics[trim=3 0 3 3, clip, width=\linewidth]{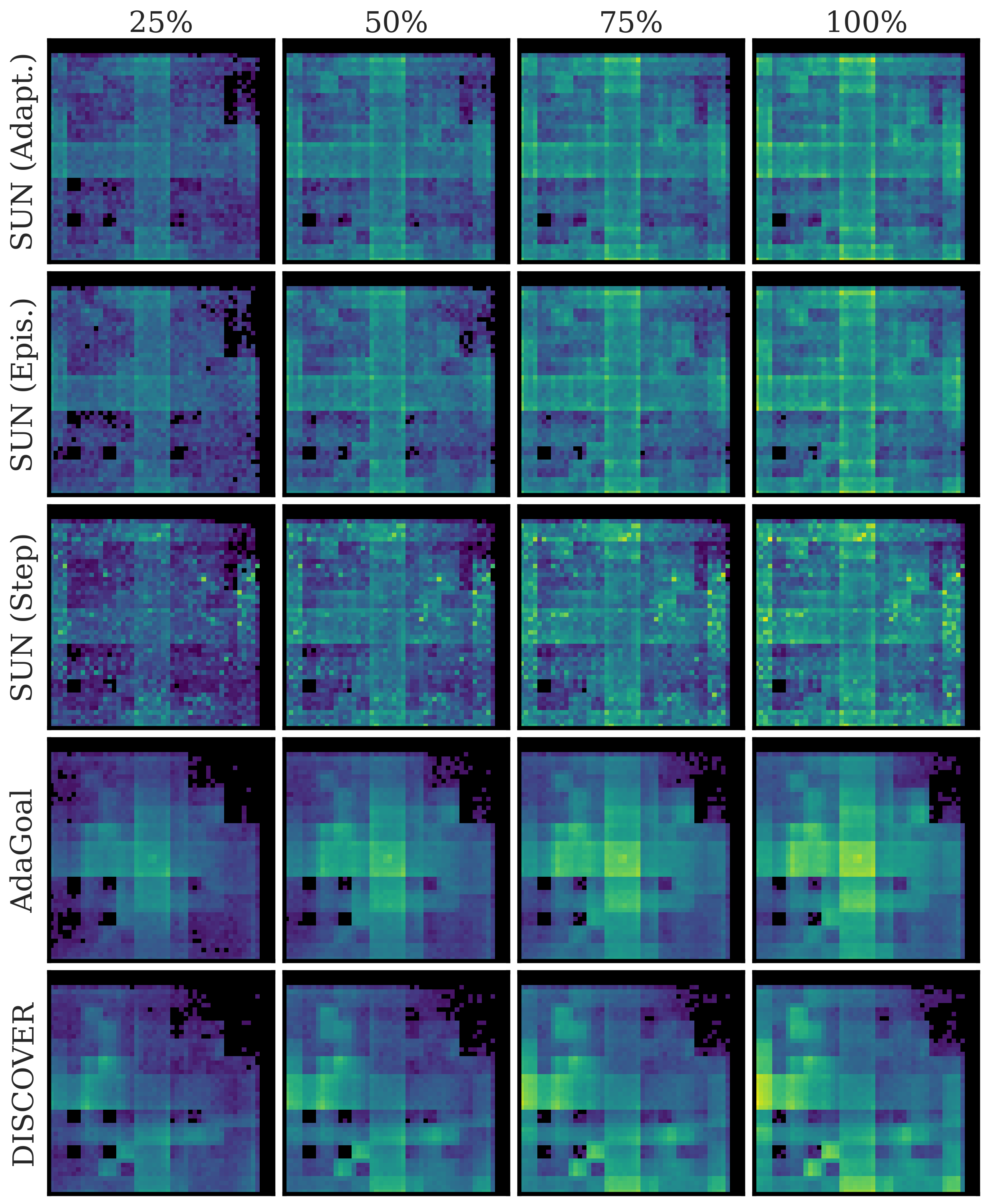}
        \subcaption{{{Goals selected.}}}
        \end{subfigure}
    \hfill
    \begin{subfigure}[b]{0.325\textwidth}
        \includegraphics[trim=3 0 3 3, clip, width=\linewidth]{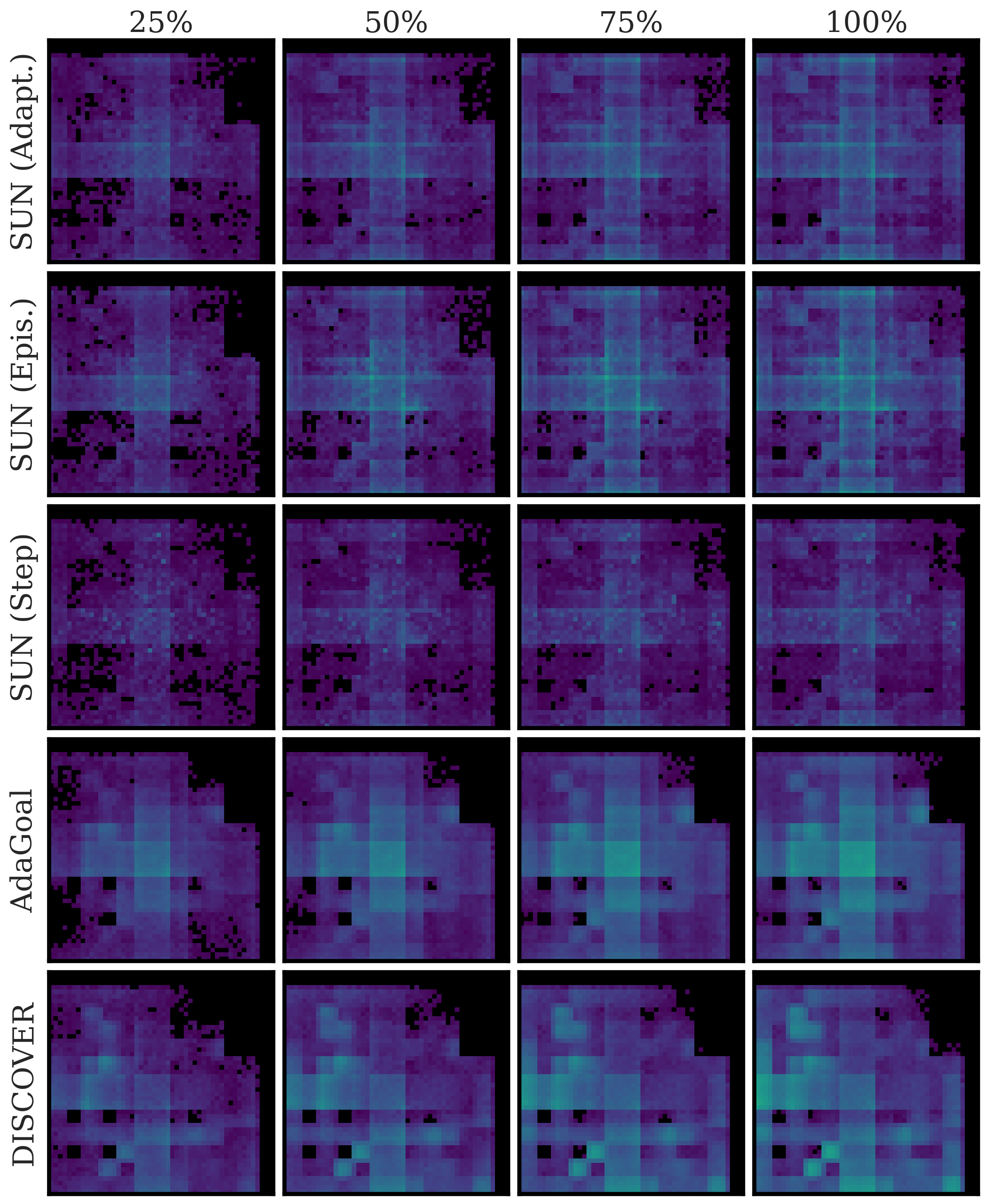}
        \subcaption{{{Goals reached.}}}
    \end{subfigure}
    \hfill
    \begin{subfigure}[b]{0.33\textwidth}
        \includegraphics[width=\linewidth]{plots/goal_stats/legend.png}
        \\[-1pt]
        \includegraphics[width=\linewidth]{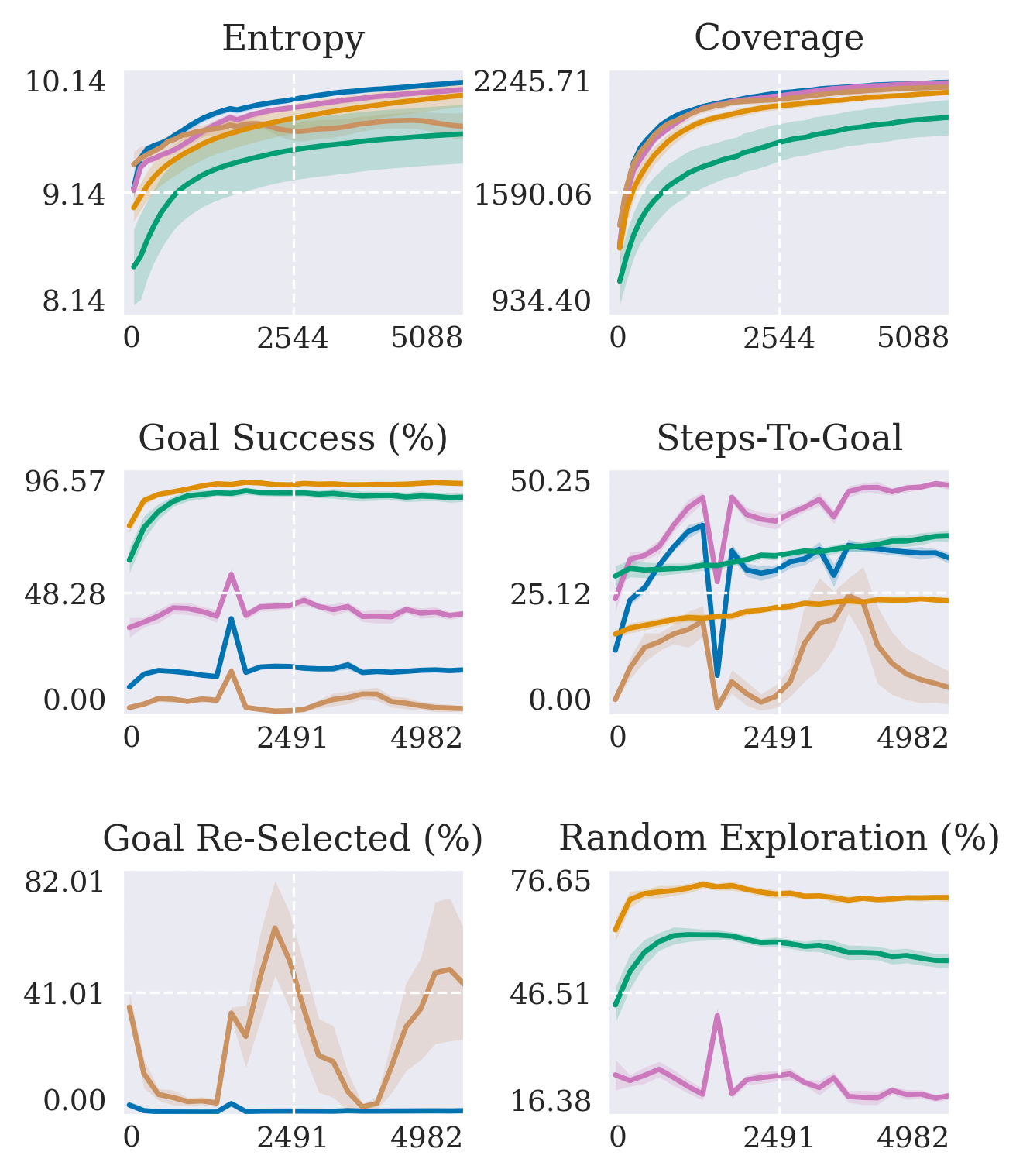}
        \captionsetup{skip=0pt}
        \subcaption{Training curves.}
    \end{subfigure}
    \caption{\label{fig:extra_point_maze_hard}\stt{PointMaze-H}.}
\end{figure}

\begin{figure}[t]
    \centering
    % \\[-1pt]
    \begin{subfigure}[b]{0.325\textwidth}
        \includegraphics[trim=3 0 3 3, clip, width=\linewidth]{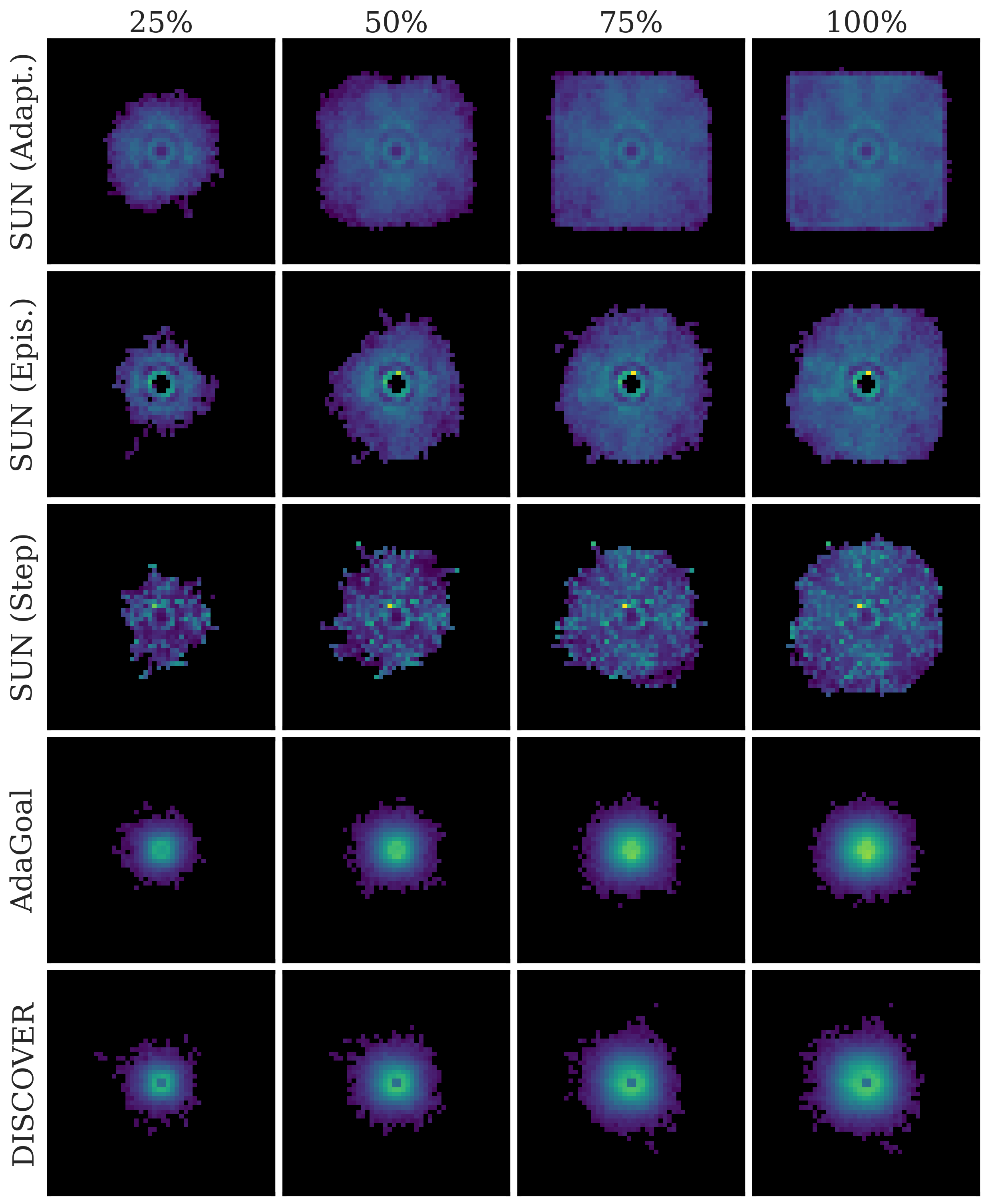}
        \subcaption{{{Goals selected.}}}
        \end{subfigure}
    \hfill
    \begin{subfigure}[b]{0.325\textwidth}
        \includegraphics[trim=3 0 3 3, clip, width=\linewidth]{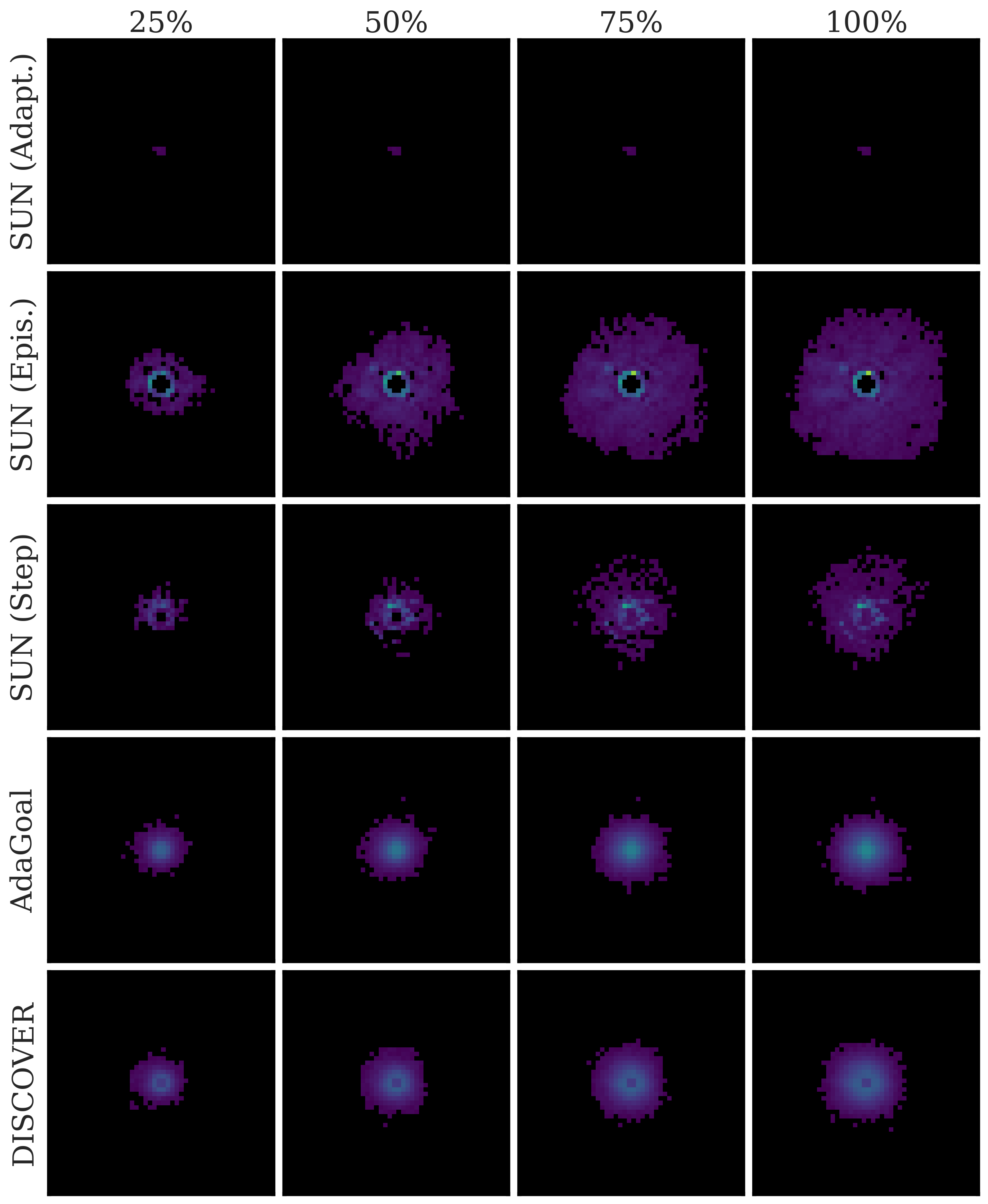}
        \subcaption{{{Goals reached.}}}
    \end{subfigure}
    \hfill
    \begin{subfigure}[b]{0.33\textwidth}
        \includegraphics[width=\linewidth]{plots/goal_stats/legend.png}
        \\[-1pt]
        \includegraphics[width=\linewidth]{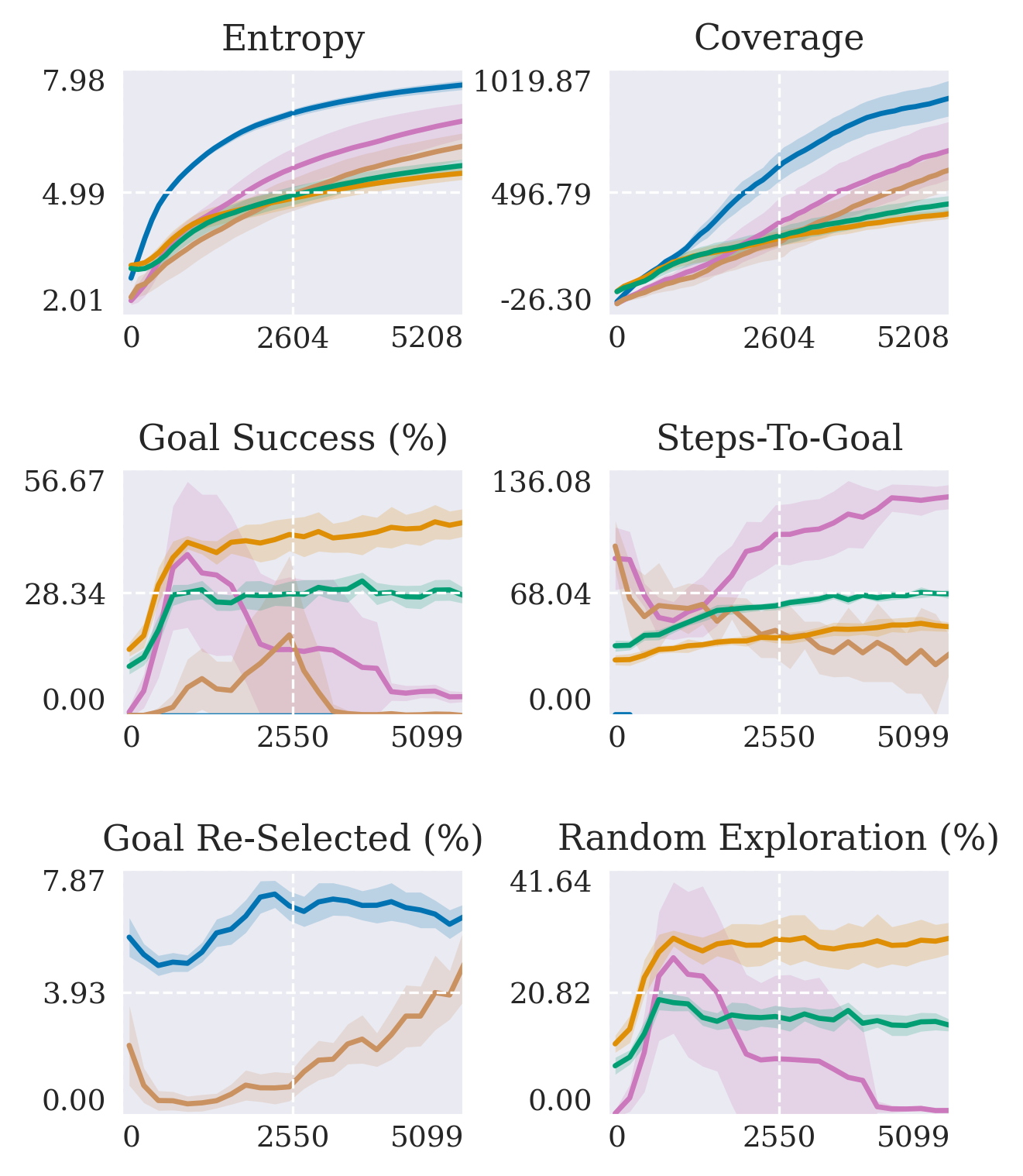}
        \captionsetup{skip=0pt}
        \subcaption{Training curves.}
    \end{subfigure}
    \caption{\label{fig:extra_ant_maze_simple}\stt{AntMaze-S}.}
\end{figure}

\clearpage

\begin{figure}[t]
    \centering
    % \\[-1pt]
    \begin{subfigure}[b]{0.325\textwidth}
        \includegraphics[trim=3 0 3 3, clip, width=\linewidth]{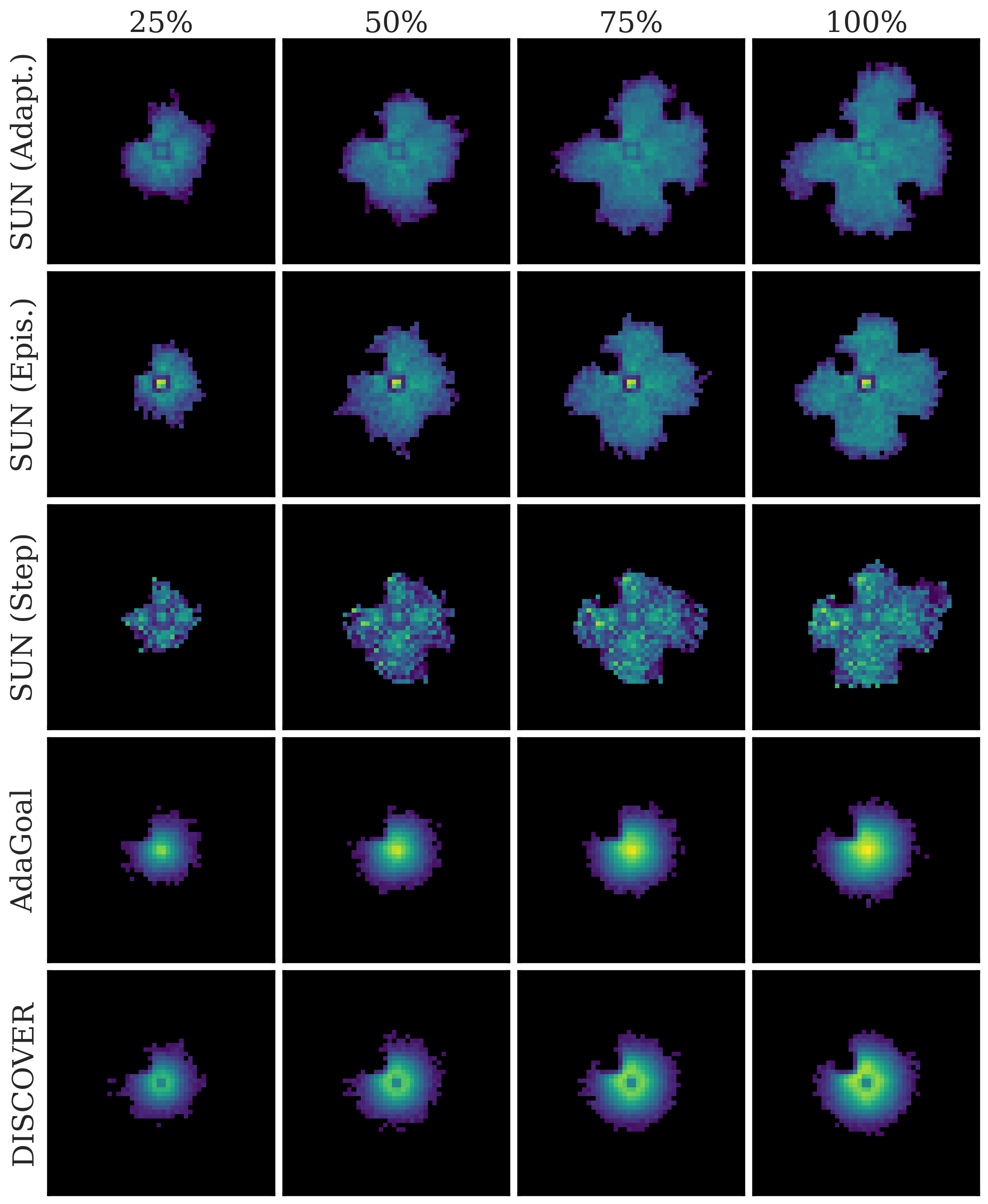}
        \subcaption{{{Goals selected.}}}
        \end{subfigure}
    \hfill
    \begin{subfigure}[b]{0.325\textwidth}
        \includegraphics[trim=3 0 3 3, clip, width=\linewidth]{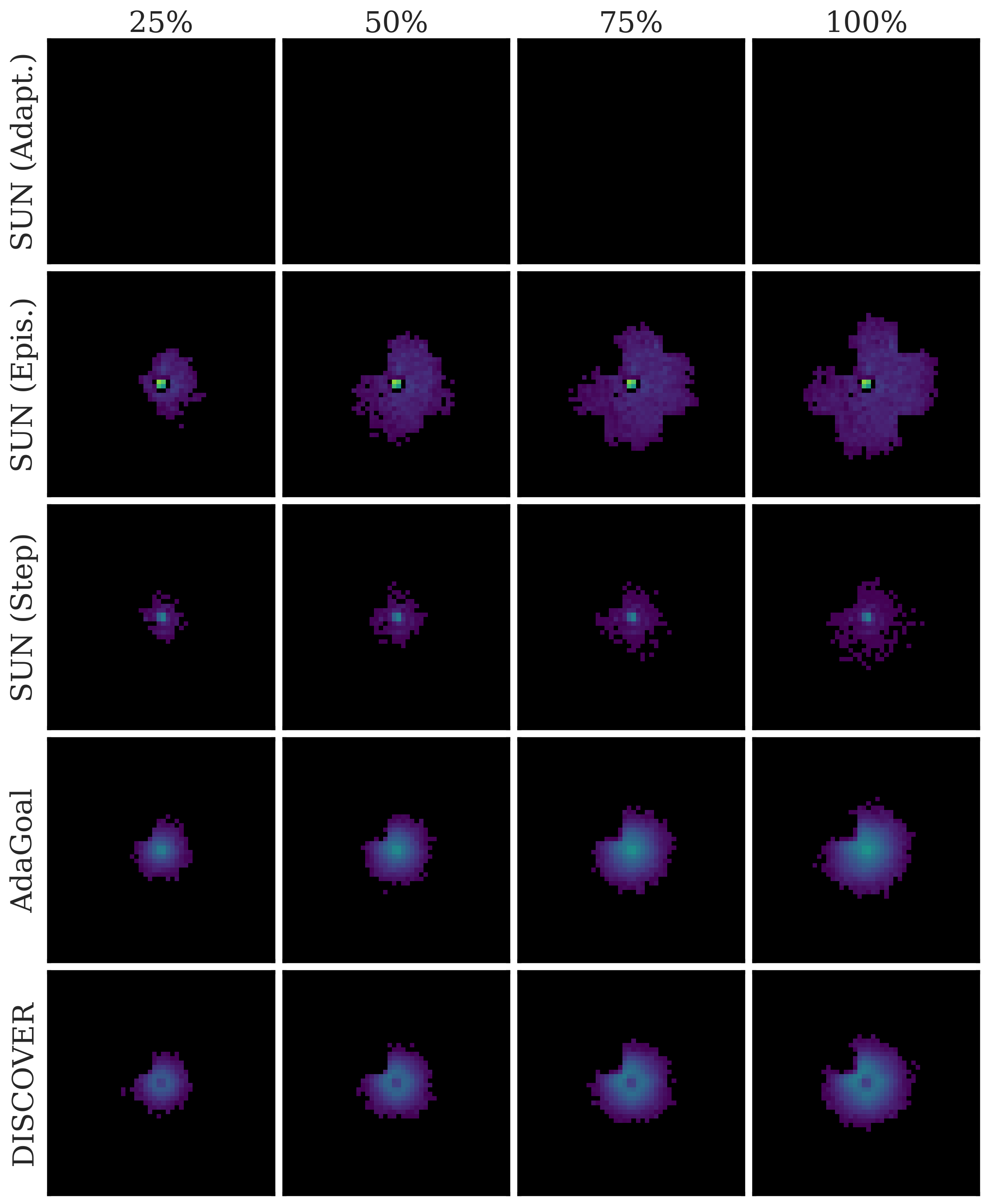}
        \subcaption{{{Goals reached.}}}
    \end{subfigure}
    \hfill
    \begin{subfigure}[b]{0.33\textwidth}
        \includegraphics[width=\linewidth]{plots/goal_stats/legend.png}
        \\[-1pt]
        \includegraphics[width=\linewidth]{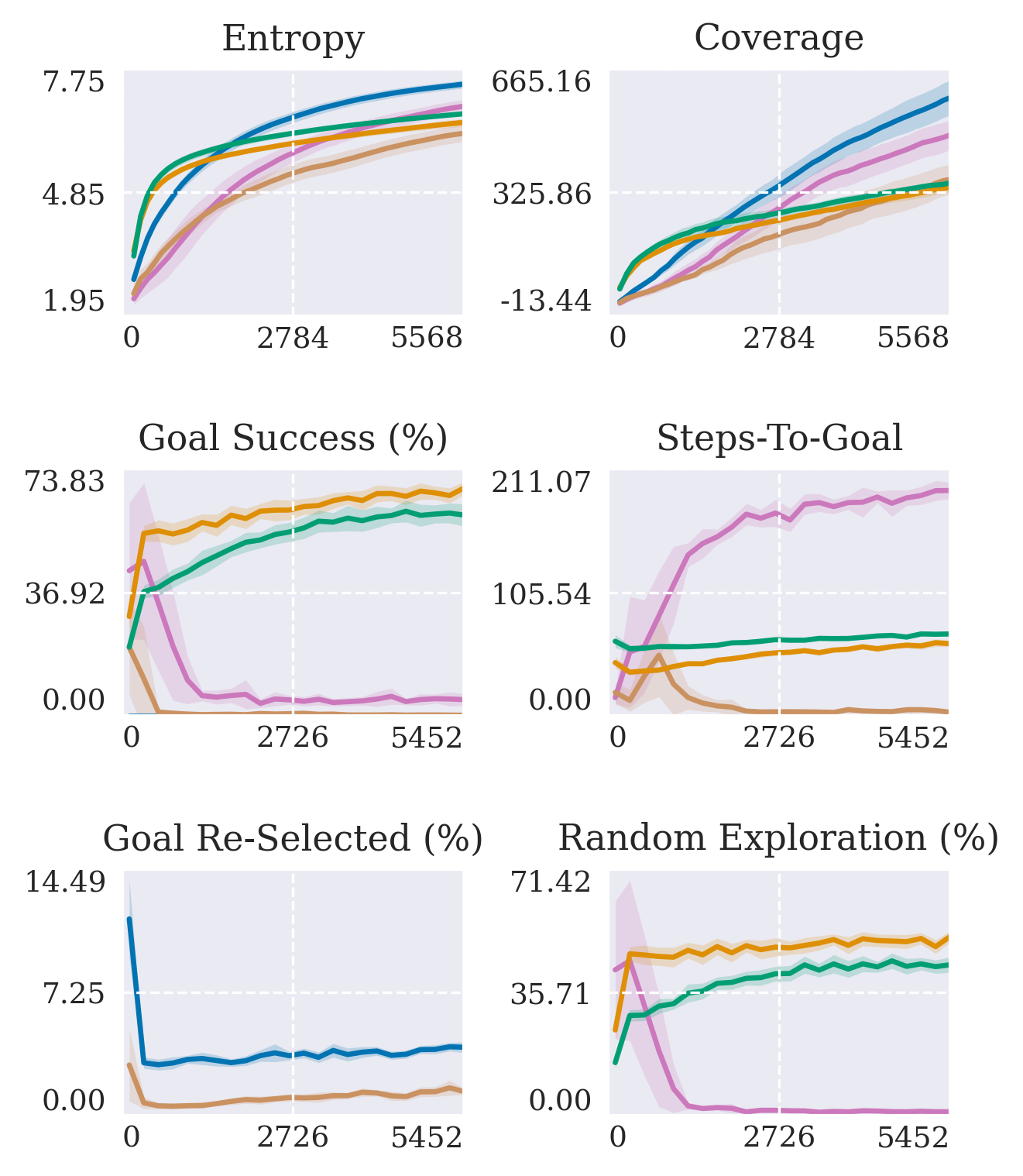}
        \captionsetup{skip=0pt}
        \subcaption{Training curves.}
    \end{subfigure}
    \caption{\label{fig:extra_ant_maze_hard}\stt{AntMaze-H}.}
\end{figure}

\begin{figure}[t]
    \centering
    % \\[-1pt]
    \begin{subfigure}[b]{0.325\textwidth}
        \includegraphics[trim=3 0 3 3, clip, width=\linewidth]{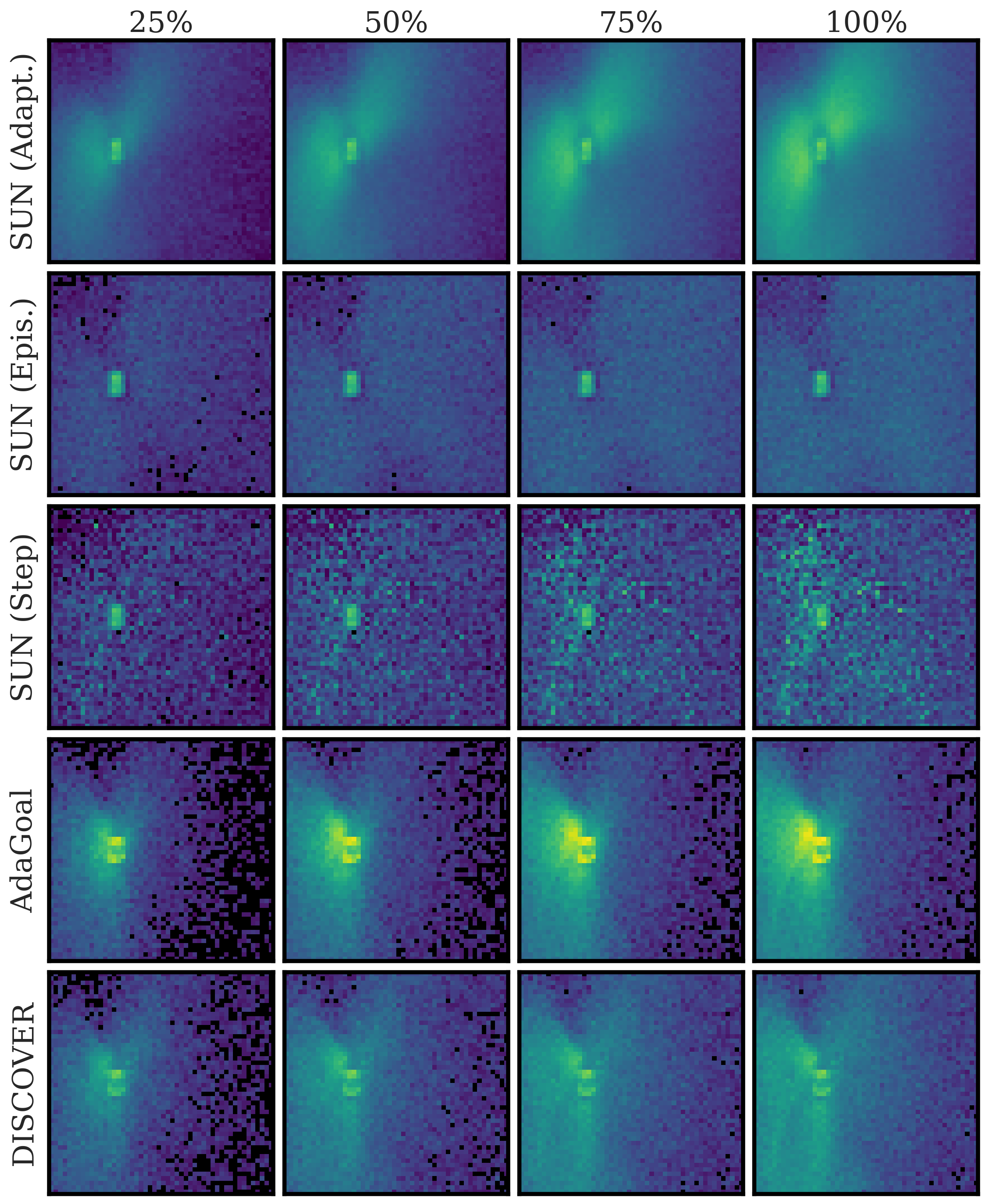}
        \subcaption{{{Goals selected.}}}
        \end{subfigure}
    \hfill
    \begin{subfigure}[b]{0.325\textwidth}
        \includegraphics[trim=3 0 3 3, clip, width=\linewidth]{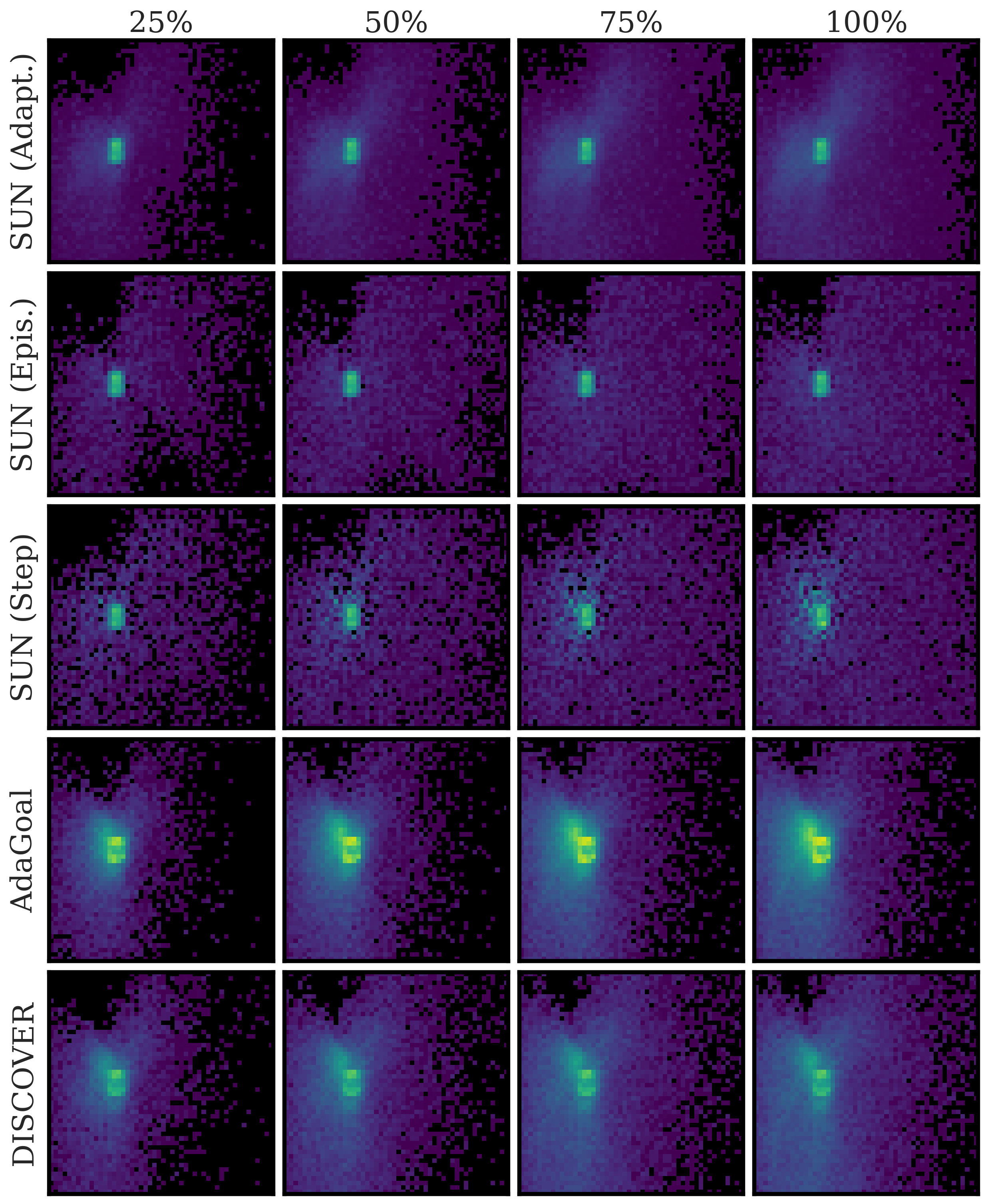}
        \subcaption{{{Goals reached.}}}
    \end{subfigure}
    \hfill
    \begin{subfigure}[b]{0.33\textwidth}
        \includegraphics[width=\linewidth]{plots/goal_stats/legend.png}
        \\[-1pt]
        \includegraphics[width=\linewidth]{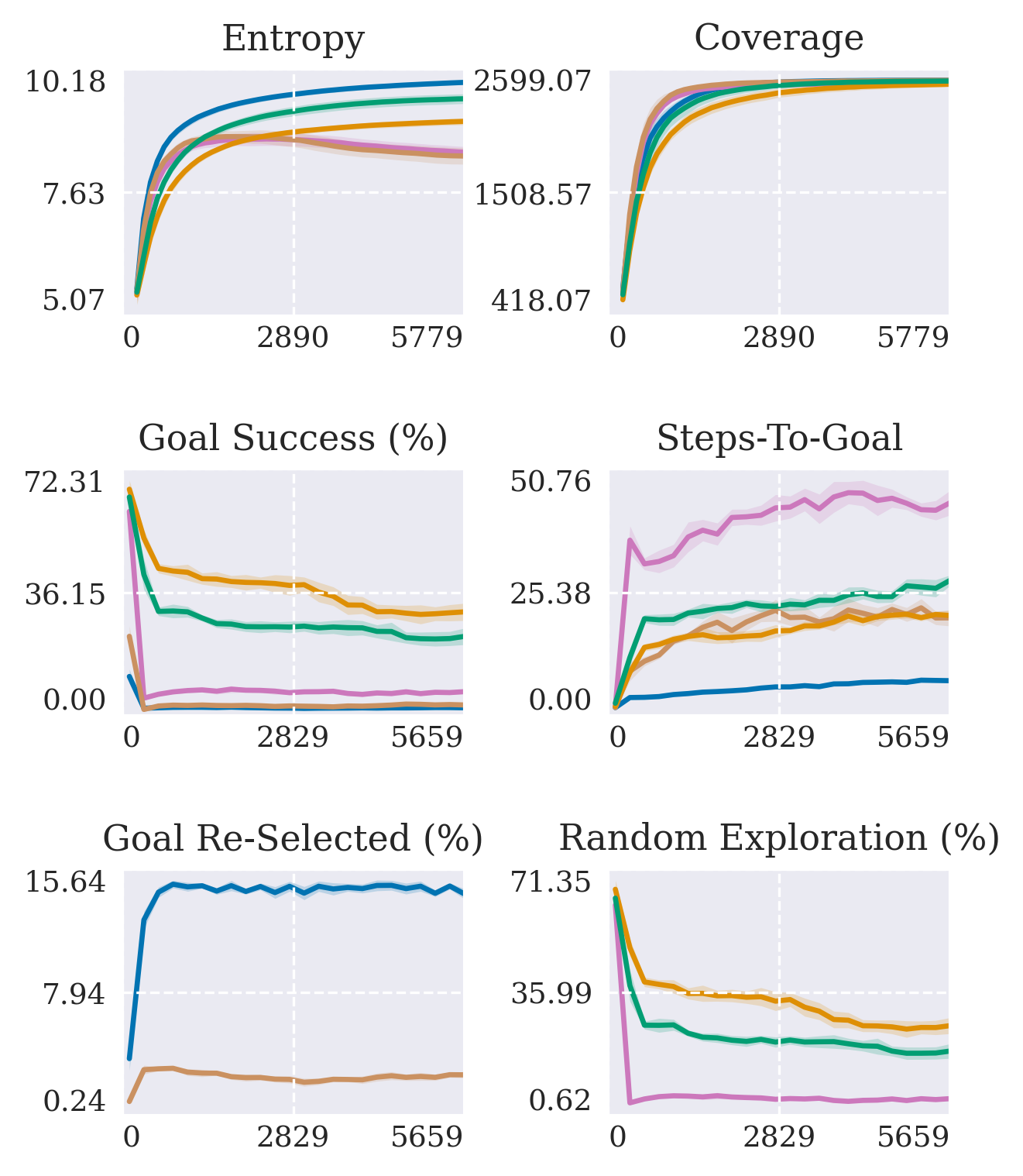}
        \captionsetup{skip=0pt}
        \subcaption{Training curves.}
    \end{subfigure}
    \caption{\label{fig:extra_arm_push_hard}\stt{ArmPush-H}. Reached goals concentrate near the cube's start region (the red/blue area in Figure \ref{fig:jax_environments}). To move it, the agent must first make contact with it.}
\end{figure}

\section{Successor Value Function Visualization}
\label{app:svf_plots}
In Figure \ref{fig:value_function_heatmaps}, we visualize the SVFs learned by SUN, showing that they are indeed accurate. This is possible only for environments with two-dimensional state and goal spaces, and for \stt{Pendulum}, whose sine/cosine state can be transformed into an angle. Note that the agent learns $\Qtheta(s, a, g_s, g_a)$, and we visualize $\Vtheta(s, g_s) = \max_{a, g_a} \Qtheta(s, a, g_s, g_a)$.

Each heatmap is composed of many sub-heatmaps, one per goal state. For example, the magnified region of \stt{FourRoomStuck} shows $\Vtheta(s, g_\textrm{middle})$, the value of reaching the middle tile from every other tile. The pattern is clear: states near the goal have higher value. Zooming in other sub-heatmaps, one can see that tiles inside the bottom-left room have zero value for goals outside it, reflecting the room's irreversible transitions. The bottom-right tile also has zero value in all heatmaps, as it is a terminal state. Goals corresponding to walls are never visited but occasionally take non-zero value due to their proximity to reachable goals. 
\\
Similar patterns appear across all heatmaps.

\clearpage

\begin{figure}[t]
    \centering
    % ---- Row 1 ----
    \begin{minipage}[t]{0.48\textwidth}
        \centering
        \includegraphics[trim=3 3 3 3, clip, width=\linewidth]{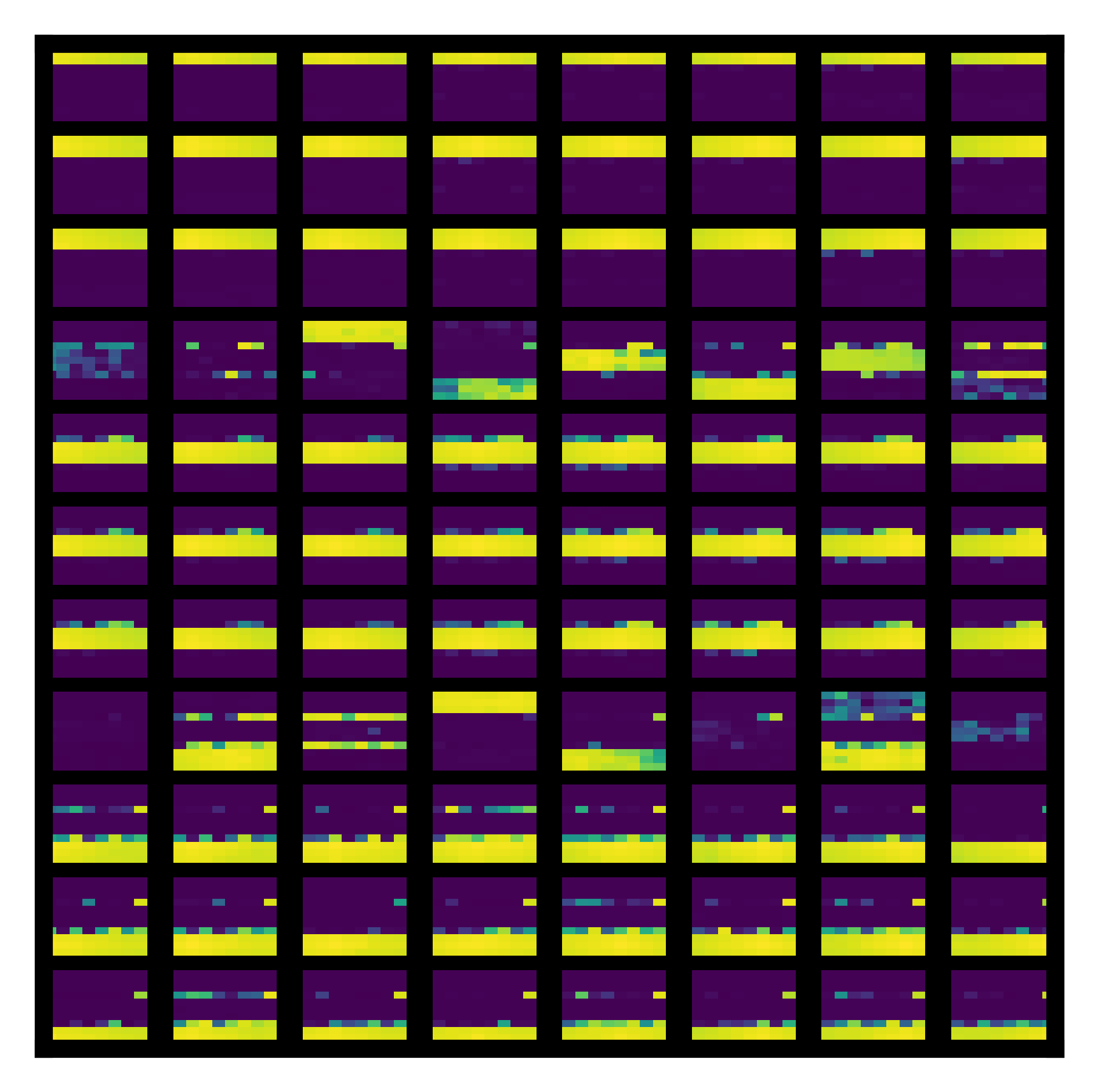}
        \subcaption{\stt{ThreeRoom}}
    \end{minipage}
    \hfill
    \begin{minipage}[t]{0.48\textwidth}
        \centering
        \begin{tikzpicture}[
            spy using outlines={
                magnification=5,
                size=3.2cm,
                connect spies,
                red,
                every spy on node/.append style={
                    draw=red,
                    ultra thick
                },
                every spy in node/.append style={
                    draw=red,
                    ultra thick
                },
                spy connection path={
                    \draw[red, ultra thick] (tikzspyonnode) -- (tikzspyinnode);
                }
            },
        ]
            \node[inner sep=0] (mcimg) {
                \includegraphics[trim=3 3 3 3, clip, width=\linewidth]{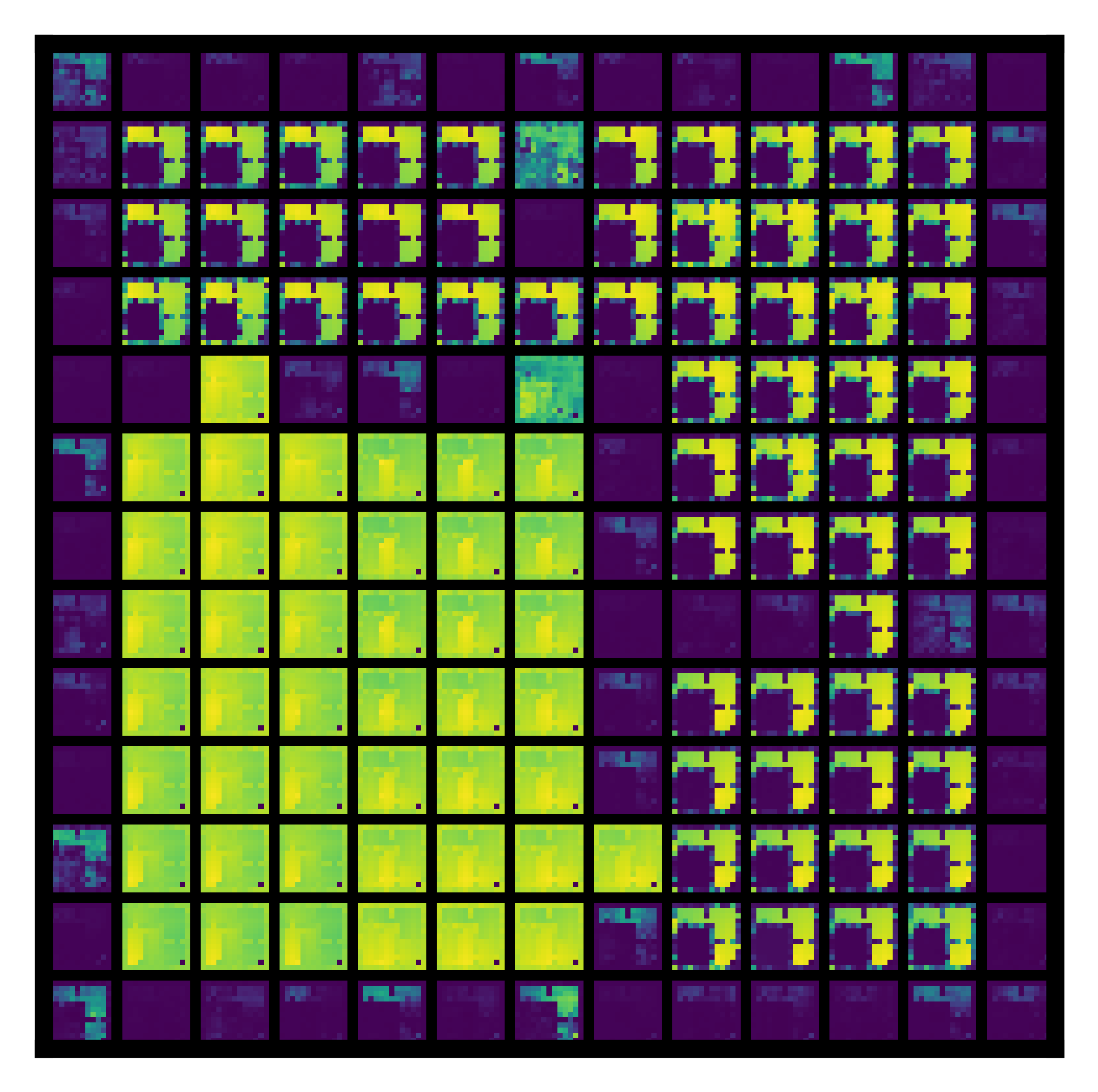}
            };
            \coordinate (mcx) at ($(mcimg.south west)!0.50!(mcimg.south east)$);  % middle x
            \coordinate (mcy) at ($(mcimg.south west)!0.50!(mcimg.north west)$);  % middle y
            \coordinate (mczoom) at (mcx |- mcy);
            \spy[overlay] on (mczoom) in node at ($(mcimg.south)+(2cm,-11cm)$);
        \end{tikzpicture}
        \subcaption{\stt{FourRoomStuck}}
    \end{minipage}
    \\[2pt]
    % ---- Row 2 ----
    \begin{minipage}[t]{0.48\textwidth}
        \centering
        \includegraphics[trim=3 3 3 3, clip, width=\linewidth]{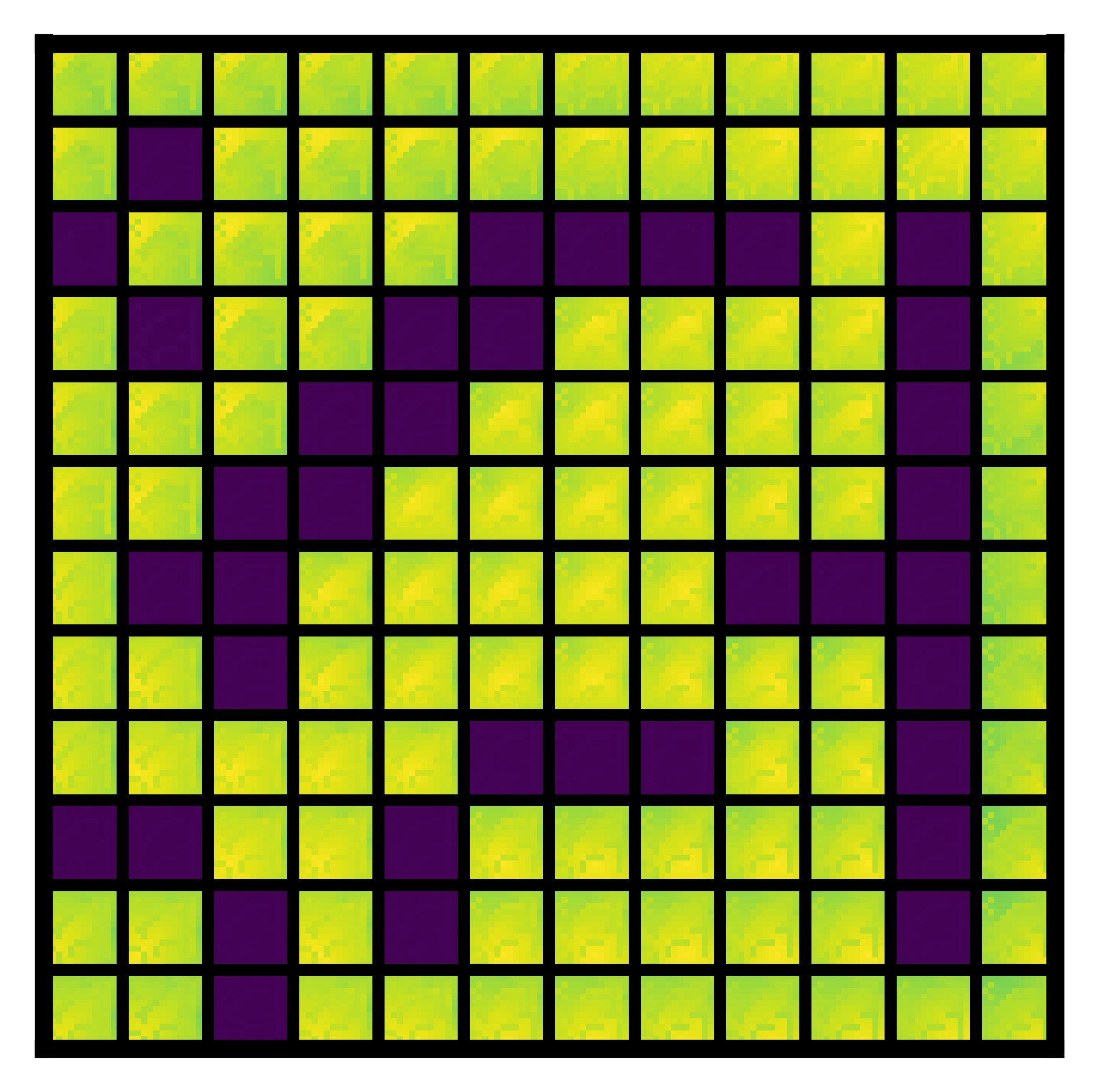}
        \subcaption{\stt{GridMaze}}
    \end{minipage}
    \hfill
    \begin{minipage}[t]{0.48\textwidth}
        \centering
        \begin{tikzpicture}[
            spy using outlines={
                magnification=5,
                size=3.2cm,
                connect spies,
                red,
                every spy on node/.append style={
                    draw=red,
                    ultra thick
                },
                every spy in node/.append style={
                    draw=red,
                    ultra thick
                },
                spy connection path={
                    \draw[red, ultra thick] (tikzspyonnode) -- (tikzspyinnode);
                }
            },
            ]
            \node[inner sep=0] (mcimg) {\includegraphics[trim=3 3 3 3, clip, width=\linewidth]{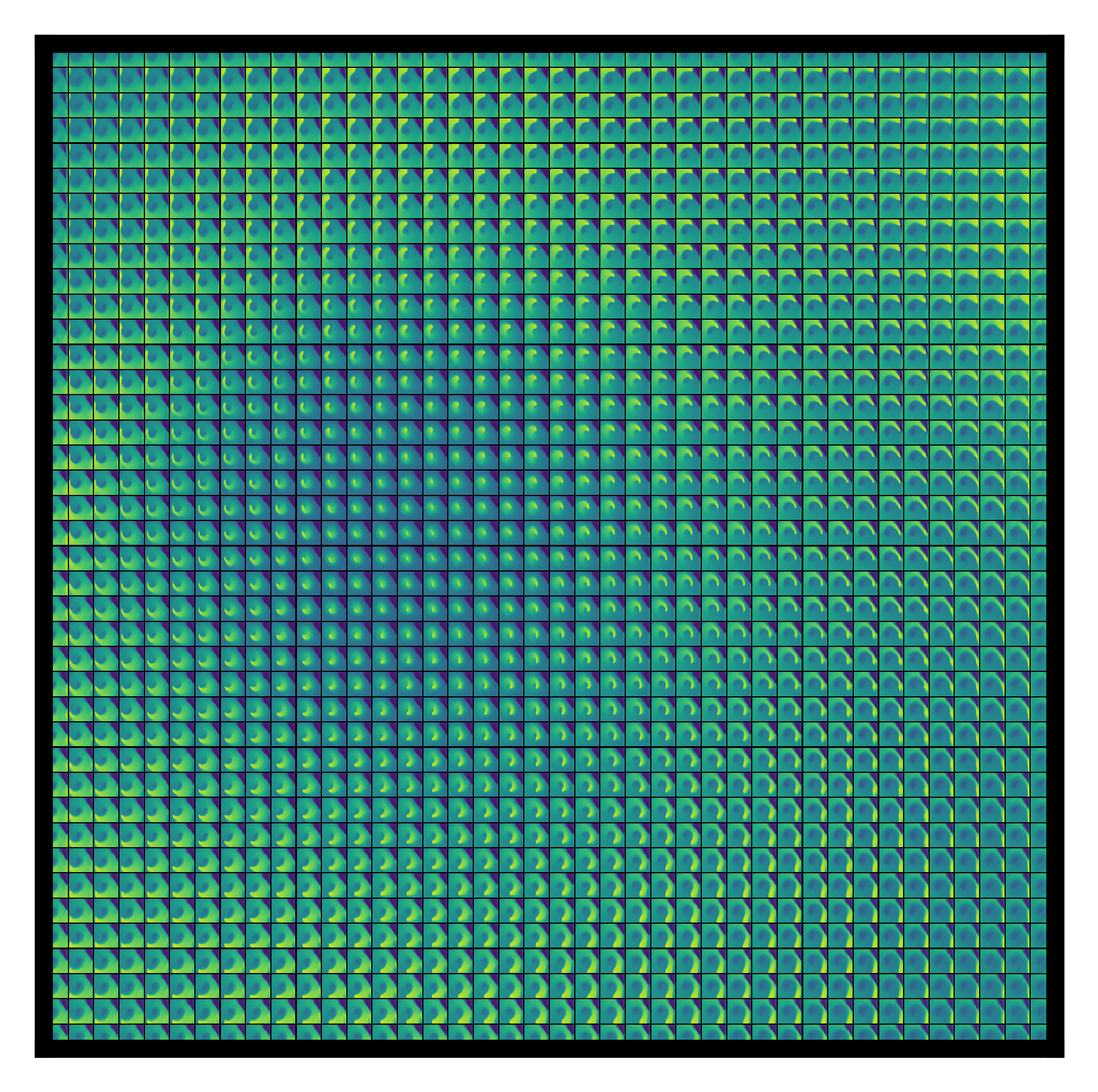}};
            \coordinate (mcx) at ($(mcimg.south west)!0.50!(mcimg.south east)$);  % middle x
            \coordinate (mcy) at ($(mcimg.south west)!0.50!(mcimg.north west)$);  % middle y
            \coordinate (mczoom) at (mcx |- mcy);
            \spy[overlay] on (mczoom) in node at ($(mcimg.south)+(-1.8cm,-2.3cm)$);
        \end{tikzpicture}
        \subcaption{\stt{MountainCar}}
    \end{minipage}
    \\[2pt]
    % ---- Row 3 ----
    \begin{minipage}[t]{0.48\textwidth}
        \centering
        \begin{tikzpicture}[
            spy using outlines={
                magnification=5,
                size=3.2cm,
                connect spies,
                red,
                every spy on node/.append style={
                    draw=red,
                    ultra thick
                },
                every spy in node/.append style={
                    draw=red,
                    ultra thick
                },
                spy connection path={
                    \draw[red, ultra thick] (tikzspyonnode) -- (tikzspyinnode);
                }
            },
            ]
            \node[inner sep=0] (penimg) {\includegraphics[trim=3 3 3 3, clip, width=\linewidth]{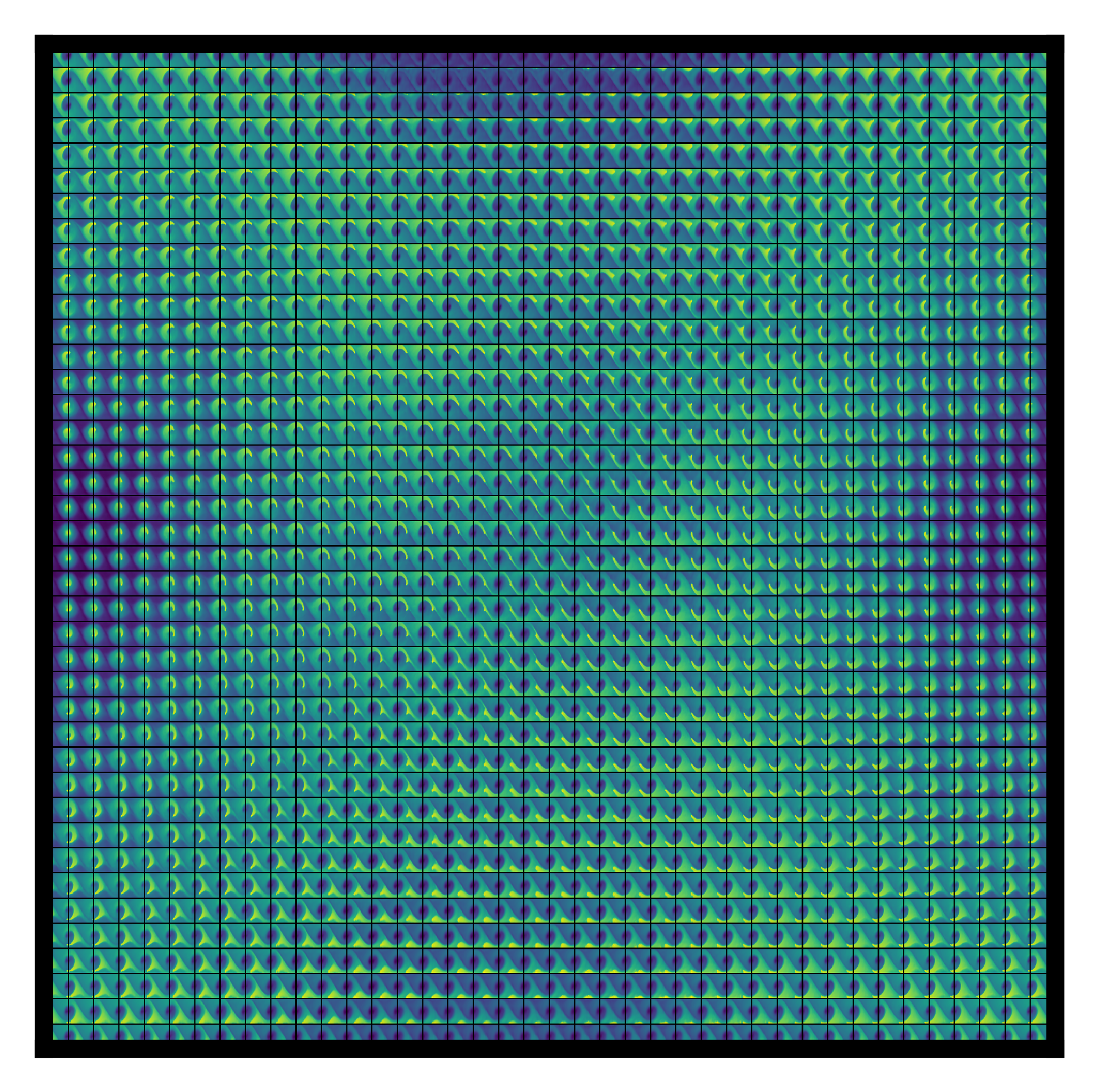}};
            \coordinate (penx) at ($(penimg.south west)!0.50!(penimg.south east)$);  % middle x
            \coordinate (peny) at ($(penimg.south west)!0.155!(penimg.north west)$);  % near bottom
            \coordinate (penzoom) at (penx |- peny);
            \spy[overlay] on (penzoom) in node at ($(penimg.east)+(2.4cm,-1.8cm)$);
        \end{tikzpicture}
        \subcaption{\stt{Pendulum}}
    \end{minipage}
    \hfill
    \begin{minipage}[t]{0.48\textwidth}
        \phantom{\rule{0pt}{4.5cm}}
    \end{minipage}
    \caption{\label{fig:value_function_heatmaps}\textbf{SVFs $\Vtheta$} learned by SUN. Each figure has many sub-heatmaps, one per goal state (see magnified regions). In \stt{MountainCar} and \stt{Pendulum}, axes are position (x) and velocity (y).}
\end{figure}

\clearpage

\section{SUN-UCB: A Structural Connection to PAC Analysis}
\label{app:pac}

The structural results in Appendix~\ref{app:theory} characterize SUN's selection rule under an oracle but do not provide sample-complexity guarantees.
Here we sketch how a UCB-like variant of SUN (with additive reachability and novelty) connects to the PAC framework of \citet{tarbouriech2022adaptive}. We state the algorithm, prove its unreachability filter sound, and show that its selection rule requires the two terms to be rescaled against each other (Remark~\ref{rem:beta}) --- the tabular counterpart of the design argument in Section~\ref{subsec:theory}. We do \textit{not} claim a complete sample-complexity result. Two components are missing. First, the observed-transition filter is sound but not complete: it never selects an unreachable goal, but nothing in the algorithm guarantees that every $L$-reachable goal eventually enters it, and a passive filter admits initialization failures in which the reachable set never grows. A rigorous bound would require an explicit frontier-expansion mechanism with a discovery guarantee, as in \citet{lim2012autonomous} and \citet{tarbouriech2020improved}. Second, our concentration statement bounds the deviation of the empirical mean from the average value of the executed policies, which does not by itself certify near-optimality of the returned policies. We therefore present the analysis as a structural connection rather than a proof, and leave the complete argument to future work.

\subsection{Setup}

We consider tabular MDPs $\langle \statespace, \actionspace, P, s_0 \rangle$ with finite spaces $|\statespace| = S$ and $|\actionspace| = A$, and stochastic transitions $P(s' \mid s, a)$. We adopt all assumptions in Assumption~\ref{ass:ideal} except (C2): we relax deterministic dynamics to stochastic with almost-sure hitting (as in Section~\ref{subsec:thm-stochastic}). All counts $n_g$ are exact (tabular setting). Under first-hit termination (C4), the SVF $V^\pi(s, g) = \EV_\pi[\gamma^{\tau_g} \ind{\scriptstyle\{\tau_g < \infty\}}] \in [0, 1]$.

We use $\delta \in (0, 1)$ as the confidence parameter: the statements below hold with probability at least $1 - \delta/3$.

Let $\mathcal{G}_L \triangleq \{g \in \statespace : \exists \pi \text{ s.t. } \EV_\pi[\tau_g] \leq L\}$ be the set of $L$-reachable goals, with $|\mathcal{G}_L| \leq S$. We set the episode length equal to the reachability horizon $L$, as in AdaGoal, and reset to $s_0$ at the end of each episode.

\subsection{SUN-UCB Algorithm}

\textbf{Estimators.} For each goal $g$, let $n_t(g)$ be the number of episodes up to time $t$ in which $g$ was the selected goal, and let $\hat V_t(s_0, g)$ be the empirical mean of the returns collected in those episodes, with $\hat V_t(s_0, g) \triangleq 0$ when $n_t(g) = 0$. Define the uncertainty
\begin{equation}
\label{eq:sun-ucb-bonus}
U_t(g) \,\triangleq\, \min\!\left\{1, \; \sqrt{\frac{\log(SAT/\delta)}{n_t(g)}}\right\},
\qquad U_t(g) \triangleq 1 \text{ when } n_t(g) = 0.
\end{equation}
The cap and the convention at $n_t(g) = 0$ are well defined because $V^\pi(s_0, g) \in [0, 1]$ under first-hit termination, so no uncertainty larger than the value range is informative.

\textbf{Truncation.} An episode targeting $g$ contributes the sample $\gamma^{\tau_g}$ if $g$ is hit at some step $\tau_g \leq L$, and $0$ otherwise. The return is therefore truncated at the episode horizon, and $\hat V_t$ estimates $\EV[\gamma^{\tau_g} \ind{\scriptstyle\{\tau_g \leq L\}}]$ rather than $V^\pi(s_0, g)$, a downward bias of at most $\gamma^{L+1}$. Taking $L \geq \log(2/\varepsilon)/\log(1/\gamma)$ bounds this by $\varepsilon/2$, which is absorbed into the accuracy target of Conjecture~\ref{conj:pac}.

\textbf{Empirical reachability filter.} Let $E_t \triangleq \{(s, s') : \exists a \text{ s.t.\ } N_t(s, a, s') > 0\}$ be the set of transitions observed up to time $t$, and let
\begin{equation}
\label{eq:reach-filter}
\mathcal{G}_t^{\mathrm{reach}} \,\triangleq\, \{g \in \statespace : g \text{ is reachable from } s_0 \text{ along edges of } E_t\}.
\end{equation}
The filter is computed from observed transitions, not from value estimates. This matters: a filter of the form $\{g : \hat V_t(s_0, g) > 0\}$ would exclude every goal that has never been \emph{hit}, not merely every goal that is \emph{unreachable}, and would therefore block exploration toward the frontier. A goal that has been observed but never targeted lies in $\mathcal{G}_t^{\mathrm{reach}}$ and remains selectable.

At step $t$, given $s_t$ and $n_t(\cdot)$:
\begin{enumerate}[leftmargin=*, itemsep=1pt, topsep=2pt]
\item Update $\hat V_t$, $n_t$, and $E_t$ from the last episode.
\item Compute the filter $\mathcal{G}_t^{\mathrm{reach}}$ of Eq.~\eqref{eq:reach-filter}.
\item Select
\begin{equation}
\label{eq:sun-ucb}
g_t \,=\, \argmax_{g \in \mathcal{G}_t^{\mathrm{reach}}} \,\hat V_t(s_0, g) \,+\, \beta \, U_t(g),
\qquad \beta = 2/\varepsilon.
\end{equation}
\item Run UCBVI~\citep{azar2017minimax} targeting $g_t$ for one episode of length $L$; record whether $g_t$ was hit and at which step.
\end{enumerate}

The filter in step 2 is the algorithmic counterpart of unreachability rejection (Theorem~\ref{thm:unreachable}). The coefficient $\beta$ in step 3 is not cosmetic: Lemma~\ref{lem:select-uncert} fails without it, for reasons discussed in Remark~\ref{rem:beta}.

\subsection{Conjectured Sample Complexity}

\begin{conjecture}[Sample complexity of SUN-UCB]
\label{conj:pac}
Let $\varepsilon \in (0, 1)$ and $\delta \in (0, 1)$. Suppose SUN-UCB is augmented with a frontier-expansion mechanism guaranteeing that every $g \in \mathcal{G}_L$ enters $\mathcal{G}_t^{\mathrm{reach}}$ within $\tilde{\mathcal{O}}(L^3 S A / \varepsilon^2)$ steps. Then, with probability at least $1 - \delta$, it returns goal-conditioned policies $\hat\pi$ satisfying $V^{\hat\pi}(s_0, g) \geq V^\star(s_0, g) - \varepsilon$ for all $g \in \mathcal{G}_L$ after at most $T = \tilde{\mathcal{O}}(L^3 S A / \varepsilon^2)$ exploration steps.
\end{conjecture}

\subsection{Partial Analysis}

We prove four components --- concentration of $\hat V$, soundness of the reachability filter, uncertainty of the selected goal, and a pigeonhole over goals inside the filter --- and then identify what a complete argument would additionally require.

\textbf{Step 1: Concentration.}
\begin{lemma}[Concentration]
\label{lem:concentration}
There exists an event $\mathcal{E}_1$ of probability at least $1 - \delta/3$ on which, for every $g \in \statespace$ and every $t$ with $n_t(g) \geq 1$,
\begin{equation}
\bigl|\hat V_t(s_0, g) - \bar V_t(s_0, g)\bigr| \,\leq\, U_t(g),
\end{equation}
where $\bar V_t(s_0, g)$ is the mean of the truncated values $\EV_{\hat\pi_j}[\gamma^{\tau_g}\ind{\scriptstyle\{\tau_g \leq L\}}]$ over the $n_t(g)$ episodes $j$ in which $g$ was the selected goal.
\end{lemma}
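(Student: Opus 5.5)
The proof is a martingale concentration argument combined with a union bound. Fix a goal $g$ and index by $j = 1, 2, \dots$ the episodes in which $g$ is the selected goal. For the $j$-th such episode let $X_j \in [0,1]$ be the observed truncated return: $\gamma^{\tau_g}$ if $g$ is hit at some $\tau_g \leq L$, and $0$ otherwise. Let $m_j \triangleq \EV_{\hat\pi_j}[\gamma^{\tau_g}\ind{\scriptstyle\{\tau_g \leq L\}}]$ be its conditional mean given the history up to the start of that episode. The policy $\hat\pi_j$ (the UCBVI policy targeting $g$) and the decision to select $g$ are both measurable with respect to that history, so $D_j \triangleq X_j - m_j$ is a martingale difference sequence with $|D_j| \leq 1$.

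The random number of selections $n_t(g)$ is handled by working with the sequence indexed by $j$ directly. Because the $j$-th selection happens at a stopping time, optional stopping shows that $(D_j)$ remains a bounded martingale difference sequence with respect to the filtration sampled at those times.

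For a fixed count $n$, Azuma--Hoeffding gives
\[
\Pr\Bigl[\bigl|\tfrac{1}{n}{\textstyle\sum_{j \leq n}} D_j\bigr| > \epsilon\Bigr] \leq 2\exp(-n\epsilon^2/2).
\]
Here $\tfrac{1}{n}\sum_{j\le n} X_j = \hat V_t(s_0,g)$ and $\tfrac{1}{n}\sum_{j \le n} m_j = \bar V_t(s_0,g)$ whenever $n_t(g) = n$. Setting $\epsilon = \sqrt{\log(SAT/\delta)/n}$ bounds the failure probability by $2(\delta/SAT)^{1/2}$.

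A union bound over $g \in \statespace$ (at most $S$ goals) and $n \in \{1,\dots,T\}$ then controls every $(g,t)$ with $n_t(g) \ge 1$. The statement holds simultaneously for all $t$ because the bound depends on $t$ only through $n_t(g)$. Finally, the $\min\{1,\cdot\}$ cap in $U_t(g)$ is harmless: both $\hat V_t$ and $\bar V_t$ lie in $[0,1]$, so the deviation is at most $1$ anyway.

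The main obstacle is the constant bookkeeping. With $\epsilon^2 = \log(SAT/\delta)/n$, Azuma's exponent $n\epsilon^2/2$ gives only a square-root power of $\delta/(SAT)$. Summed over $S T$ pairs, this need not fall below $\delta/3$ unless $SAT/\delta$ is large relative to $ST$.

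I would fix this in one of two ways. The first is to use the sharper Hoeffding constant $2\exp(-2n\epsilon^2)$, which is valid for variables in $[0,1]$, since $D_j$ has range width one given the past. This yields $2(\delta/SAT)^2$ per pair, and summing over at most $ST$ pairs gives at most $2\delta^2/(SA^2T) \le \delta/3$ whenever $A\ge 1$, $T \ge 6$ (or after the routine enlargement of the log argument). The second is to inflate the logarithm by a constant factor, which the $\tilde{\mathcal O}$ in Conjecture~\ref{conj:pac} absorbs. I would take the first route so that $U_t$ stays exactly as defined in Eq.~\eqref{eq:sun-ucb-bonus}, and call the resulting event $\mathcal{E}_1$.
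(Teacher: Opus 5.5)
Your proposal is correct and follows the paper's route. The paper also applies Azuma--Hoeffding to the bounded martingale differences of the per-selection truncated returns, takes a union bound over at most $S$ goals and $T$ indices, and uses the trivial deviation bound $1$ to justify the cap; you union-bound over counts $n \le T$ rather than times $t$, which is equivalent because $\hat V_t$, $\bar V_t$ and $U_t$ depend on $t$ only through $n_t(g)$.

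Your use of the conditional-range form $2\exp(-2n\epsilon^2)$ tightens the paper's bookkeeping. It shows that the per-pair failure probability $2(\delta/SAT)^2 \le \delta/(3ST)$ holds (once $SA^2T \ge 6$) with $U_t$ exactly as defined, whereas the paper only asserts this ``after adjusting constants inside the logarithm.'' To make the stopping-time step airtight, add the convention $D_j = 0$ when the $j$-th selection never occurs.
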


\begin{proof}
The samples take values in $[0, 1]$, and the $j$-th sample has conditional mean $\EV_{\hat\pi_j}[\gamma^{\tau_g}\ind{\scriptstyle\{\tau_g \leq L\}}]$ given the history preceding episode $j$. The centered samples therefore form a bounded martingale difference sequence with respect to the filtration generated by the history, and Azuma--Hoeffding gives
\begin{equation}
\Pr\!\left[\bigl|\hat V_t(s_0, g) - \bar V_t(s_0, g)\bigr| > \sqrt{\frac{\log(SAT/\delta)}{n_t(g)}}\,\right] \,\leq\, \frac{\delta}{3ST}
\end{equation}
for a fixed pair $(g, t)$, after adjusting constants inside the logarithm. A union bound over the at most $S$ goals and $T$ steps yields $\mathcal{E}_1$. The deviation is also trivially bounded by $1$, since both quantities lie in $[0, 1]$, which justifies the cap in Eq.~\eqref{eq:sun-ucb-bonus}.
\end{proof}

Note that $\bar V_t$ is a historical average over the policies actually executed, not the optimal value $V^\star(s_0, g)$ nor the value of the returned policy. Bridging that difference is one of the two gaps discussed below.

\textbf{Step 2: Soundness of the reachability filter.}
\begin{lemma}[Filter soundness and monotonicity]
\label{lem:reject}
Deterministically, for every $t$:
\begin{enumerate}[leftmargin=*, noitemsep, topsep=2pt, label=\textup{(\roman*)}]
\item $\mathcal{G}_t^{\mathrm{reach}} \subseteq \{g \in \statespace : g \text{ is reachable from } s_0\}$;
\item $\mathcal{G}_t^{\mathrm{reach}} \subseteq \mathcal{G}_{t+1}^{\mathrm{reach}}$.
\end{enumerate}
\end{lemma}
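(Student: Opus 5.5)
Both parts follow directly from the definition of $E_t$ in Eq.~\eqref{eq:reach-filter}, so the plan is a direct combinatorial argument rather than a probabilistic one. This is why the statement holds deterministically, with no concentration event needed.

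For (i), I would first show that every edge in $E_t$ is a genuine transition of the MDP. If $(s, s') \in E_t$, then some action $a$ has $N_t(s, a, s') > 0$. A transition can only be recorded if it actually occurred, so $P(s' \mid s, a) > 0$. Now take $g \in \mathcal{G}_t^{\mathrm{reach}}$. By definition there is a finite path $s_0 = x_0, x_1, \dots, x_m = g$ whose consecutive pairs all lie in $E_t$. For each $i$, pick an action $a_i$ witnessing $(x_i, x_{i+1}) \in E_t$. Consider the (possibly non-stationary, or history-dependent) policy that plays $a_i$ at step $i$. It follows the path with probability $\prod_i P(x_{i+1} \mid x_i, a_i) > 0$, so $g$ is hit with positive probability. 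That is the notion of reachability used in Theorem~\ref{thm:unreachable}, namely $\Pr_\pi[\tau_g < \infty] > 0$ for some $\pi$. The case $m = 0$ is $g = s_0$, which is trivially reachable.

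For (ii), I would note that the counts $N_t(s, a, s')$ are non-decreasing in $t$, since transitions are only ever added and never removed. Hence $E_t \subseteq E_{t+1}$. Any path from $s_0$ to $g$ using edges of $E_t$ is then also a path in $E_{t+1}$, which gives $\mathcal{G}_t^{\mathrm{reach}} \subseteq \mathcal{G}_{t+1}^{\mathrm{reach}}$.

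The only real subtlety is in (i): it depends on the reachability notion and on the policy class. The hitting event $\{\tau_g < \infty\}$ has positive probability under the constructed open-loop policy. However, if $\Pi$ is restricted to stationary Markov policies, a path that revisits a state might need different actions at the same state. I would handle this by shortening the path to a simple path, that is, by removing cycles. A simple path visits each state at most once, so a stationary policy can play $a_i$ at $x_i$ and the argument goes through. In this way (i) gives positive hitting probability, which is the soundness needed to mirror Theorem~\ref{thm:unreachable}. The lemma does not require $\mathcal{G}_L$-reachability.
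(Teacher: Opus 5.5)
Your proof is correct and follows essentially the same argument as the paper: each edge of $E_t$ certifies $P(s' \mid s, a) > 0$, so an $E_t$-path from $s_0$ to $g$ has positive probability in the true MDP, and since $N_t$ is non-decreasing we get $E_t \subseteq E_{t+1}$. Two points in your proposal are harmless refinements that the paper leaves implicit: shortening the path to a simple one so that a stationary policy suffices, and stating that reachability here means positive hitting probability, as in Theorem~\ref{thm:unreachable}.
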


\begin{proof}
(i) Every edge $(s, s') \in E_t$ was traversed by the agent, so $P(s' \mid s, a) > 0$ for the action $a$ that produced it. A path from $s_0$ to $g$ using only edges of $E_t$ is therefore a path of positive probability in the true MDP, and $g$ is reachable. (ii) $N_t(s, a, s')$ is non-decreasing in $t$, hence $E_t \subseteq E_{t+1}$ and reachability along $E_t$ implies reachability along $E_{t+1}$.
\end{proof}

SUN-UCB therefore \emph{never selects a truly unreachable goal}, regardless of how large the bonus $U_t(g)$ may be for it --- and unlike a value-based filter, this holds deterministically rather than on a high-probability event. This is the formal counterpart of Theorem~\ref{thm:unreachable} in the learning setting. By (ii), no reachable goal is permanently excluded once a path to it has been observed. Note, however, that (ii) only \textit{preserves} what has been discovered; it does not guarantee discovery, which is the first gap discussed below.

\textbf{Step 3: The selected goal has near-maximal uncertainty.}
\begin{lemma}[High-uncertainty selection]
\label{lem:select-uncert}
On $\mathcal{E}_1$, at any step $t$ at which some $g \in \mathcal{G}_L \cap \mathcal{G}_t^{\mathrm{reach}}$ satisfies $U_t(g) > \varepsilon$, the selected goal satisfies
\begin{equation}
U_t(g_t) \,\geq\, \tfrac{1}{2} \max_{g \in \mathcal{G}_L \cap \mathcal{G}_t^{\mathrm{reach}}} U_t(g).
\end{equation}
\end{lemma}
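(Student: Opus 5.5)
The plan is a direct comparison of UCB scores. It uses only the selection rule in Eq.~\eqref{eq:sun-ucb}, the choice $\beta = 2/\varepsilon$, and the fact that empirical values lie in $[0,1]$. I do not expect to need the concentration event $\mathcal{E}_1$ of Lemma~\ref{lem:concentration} in any essential way; the statement is conditioned on it, but the argument should hold deterministically.

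\textbf{Step 1 (fix a comparator).} Let $g^\star \in \argmax_{g \in \mathcal{G}_L \cap \mathcal{G}_t^{\mathrm{reach}}} U_t(g)$. This set is finite and, by hypothesis, nonempty with $U_t(g^\star) > \varepsilon$. Since $g^\star \in \mathcal{G}_t^{\mathrm{reach}}$, it is one of the candidates over which the $\argmax$ in Eq.~\eqref{eq:sun-ucb} is taken. The selected $g_t$ lies in $\mathcal{G}_t^{\mathrm{reach}}$ but need not lie in $\mathcal{G}_L$. The claim concerns only $U_t(g_t)$, so this causes no difficulty.

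\textbf{Step 2 (score inequality).} Optimality of $g_t$ gives
\[
\hat V_t(s_0, g_t) + \beta U_t(g_t) \;\geq\; \hat V_t(s_0, g^\star) + \beta U_t(g^\star).
\]
Every $\hat V_t(s_0, \cdot)$ is an empirical mean of samples $\gamma^{\tau_g}\ind{\scriptstyle\{\tau_g \leq L\}} \in [0,1]$, or it equals $0$ by convention. Hence $\hat V_t(s_0, g_t) \leq 1$ and $\hat V_t(s_0, g^\star) \geq 0$. Substituting these bounds gives
\[
\beta U_t(g_t) \;\geq\; \beta U_t(g^\star) - 1,
\qquad\text{that is,}\qquad
U_t(g_t) \;\geq\; U_t(g^\star) - \varepsilon/2 .
\]

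\textbf{Step 3 (conclude).} From $U_t(g^\star) > \varepsilon$ we get $\varepsilon/2 < U_t(g^\star)/2$. Therefore $U_t(g_t) > U_t(g^\star)/2$, which is the claim.

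The only delicate point is the role of $\beta$, and this is the substance of Remark~\ref{rem:beta}. Without rescaling, that is with $\beta = 1$, Step~2 yields only $U_t(g_t) \geq U_t(g^\star) - 1$. Because $U_t$ is capped at $1$, this bound is vacuous. The value range $[0,1]$ must therefore be made small relative to the uncertainty scale $\varepsilon$. This is the tabular counterpart of the calibration issue raised for additive indicators in Section~\ref{subsec:theory}.
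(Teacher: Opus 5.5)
Your proof is correct and matches the paper's argument step for step: compare the score of $g_t$ against the $U_t$-maximizer $g^\star_t$ over $\mathcal{G}_L \cap \mathcal{G}_t^{\mathrm{reach}}$, bound the empirical values by $1$ and $0$, and use $1/\beta = \varepsilon/2 < U_t(g^\star_t)/2$. You are also right that $\mathcal{E}_1$ is never needed; the paper's proof does not invoke it either.
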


\begin{proof}
Let $g^\star_t \triangleq \argmax_{g \in \mathcal{G}_L \cap \mathcal{G}_t^{\mathrm{reach}}} U_t(g)$, so that $U_t(g^\star_t) > \varepsilon$ by hypothesis. Both $g_t$ and $g^\star_t$ lie in $\mathcal{G}_t^{\mathrm{reach}}$, so by optimality of $g_t$ in Eq.~\eqref{eq:sun-ucb},
\begin{equation}
\hat V_t(s_0, g_t) + \beta \, U_t(g_t) \,\geq\, \hat V_t(s_0, g^\star_t) + \beta \, U_t(g^\star_t).
\end{equation}
Bounding $\hat V_t(s_0, g_t) \leq 1$ and $\hat V_t(s_0, g^\star_t) \geq 0$ and rearranging,
\begin{equation}
U_t(g_t) \,\geq\, U_t(g^\star_t) - \frac{1}{\beta} \,=\, U_t(g^\star_t) - \frac{\varepsilon}{2} \,\geq\, U_t(g^\star_t) - \frac{U_t(g^\star_t)}{2} \,=\, \frac{U_t(g^\star_t)}{2},
\end{equation}
where the second inequality uses $\varepsilon < U_t(g^\star_t)$.
\end{proof}

\begin{remark}[The additive form requires a scale coefficient]
\label{rem:beta}
With $\beta = 1$ the same argument yields only $U_t(g_t) \geq U_t(g^\star_t) - 1$, which is vacuous because $U_t \leq 1$ by construction --- and vacuous precisely in the regime of interest, where all uncertainties have already fallen below the range of $\hat V$. The additive rule therefore tracks uncertainty only once its bonus is scaled to dominate that range, with $\beta$ tied to the target accuracy $\varepsilon$. This is the tabular counterpart of the scaling sensitivity discussed in Section~\ref{subsec:theory}: an additive combination of reachability and novelty carries a free coefficient that must be set correctly for the rule to work at all, whereas the multiplicative form of Eq.~\eqref{eq:sun-score} carries none.
\end{remark}

\textbf{Step 4: Pigeonhole on goal samples.}
This step bounds the number of episodes needed to drive the uncertainty of goals \textit{already in the filter} below $\varepsilon$; whether every $g \in \mathcal{G}_L$ enters the filter is addressed separately below. For $U_T(g) \leq \varepsilon$ it suffices that $n_T(g) \geq \log(SAT/\delta) / \varepsilon^2$. Consider any episode started at a step $t$ at which some goal in $\mathcal{G}_L \cap \mathcal{G}_t^{\mathrm{reach}}$ still has $U_t(g) > \varepsilon$. By Lemma~\ref{lem:select-uncert}, $U_t(g_t) > \varepsilon/2$, and inverting Eq.~\eqref{eq:sun-ucb-bonus},
\begin{equation}
n_t(g_t) \,<\, \frac{4 \log(SAT/\delta)}{\varepsilon^2}.
\end{equation}
Every such episode therefore increments the count of a goal whose count is still strictly below $4\log(SAT/\delta)/\varepsilon^2$. Since at most $S$ distinct goals can ever be selected, at most
\begin{equation}
N_{\mathrm{ep}} \,=\, \frac{4 \, S \log(SAT/\delta)}{\varepsilon^2}
\end{equation}
such episodes can occur before every $g \in \mathcal{G}_L \cap \mathcal{G}_T^{\mathrm{reach}}$ satisfies $U_T(g) \leq \varepsilon$. At $L$ steps per episode, this phase costs $\tilde{\mathcal{O}}(S L / \varepsilon^2)$ exploration steps.

\textbf{What remains.} Two components are needed for a complete result, and neither follows from the steps above.

\textit{Discovery.} The filter of Eq.~\eqref{eq:reach-filter} is sound but not complete. Nothing in the algorithm guarantees that every $g \in \mathcal{G}_L$ eventually enters $\mathcal{G}_t^{\mathrm{reach}}$, and Lemma~\ref{lem:reject}(ii) only preserves goals already discovered. A passive filter admits initialization failures. Consider a two-state deterministic MDP with two actions, one moving from $s_0$ to a distinct goal $g$ and one staying at $s_0$, with the identity of each unknown. Before any edge is observed, $\mathcal{G}_0^{\mathrm{reach}} = \{s_0\}$; selecting $s_0$ triggers first-hit termination at time zero, so no new edge is ever observed and the filter never grows. The two MDPs obtained by swapping the actions remain indistinguishable. A complete argument therefore requires an explicit frontier-expansion mechanism with its own discovery guarantee, as in \citet{lim2012autonomous} and \citet{tarbouriech2020improved}; a finite warm-up phase would equally require one.

\textit{Policy optimality.} Lemma~\ref{lem:concentration} bounds the deviation of $\hat V_t$ from $\bar V_t$, the average value of the \textit{executed} policies. Conjecture~\ref{conj:pac} instead concerns the returned policies $\hat\pi$ relative to $V^\star$. Closing this requires a stopping rule, a specification of which policy is returned for each goal, and an argument bounding its suboptimality --- none of which is supplied by invoking a finite-horizon regret analysis such as \citet{azar2017minimax}, since that analysis addresses a single fixed objective rather than the all-goal scheduling problem SUN-UCB poses.

\subsection{Discussion}
\label{subsec:pac-discussion}

\textbf{Comparison with AdaGoal.} Conjecture~\ref{conj:pac} targets the same rate as AdaGoal-UCBVI~\citep{tarbouriech2022adaptive}, but the two address different objectives: AdaGoal's guarantee concerns expected hitting-time accuracy over an incrementally identified reachable set, whereas ours concerns discounted first-hit value error. A reduction between the two would be needed before either rate or its associated lower bound could be inherited. The two methods do filter unreachable goals through related mechanisms: AdaGoal imposes an explicit constraint on the estimated hitting time $\mathcal{D}_k(g) \leq L$, while SUN-UCB restricts selection to goals reachable along observed transitions (Lemma~\ref{lem:reject}). Both filters are sound --- neither can select a goal outside the true reachable set --- and both grow monotonically as data accumulates. The difference is that AdaGoal pairs its filter with an expansion procedure that provably grows the reachable set, which is precisely the component SUN-UCB lacks.

\textbf{Role of unreachability rejection.} Theorem~\ref{thm:unreachable} (proved in the oracle setting in Section~\ref{subsec:thm-unreachable}) is used \emph{algorithmically} in step 2 of SUN-UCB, and is the load-bearing property in Lemma~\ref{lem:reject}. Without a filter, unreachable goals would be selected repeatedly, since the bonus $U_t(g)$ is largest exactly where $n_t(g) = 0$, and the pigeonhole of Step 4 would range over the whole of $\statespace$ rather than over the reachable set. What this analysis adds to the oracle statement is that the filter must be built from observed \emph{transitions} rather than from value estimates, since the latter cannot distinguish an unreachable goal from a reachable one that has not yet been targeted.

\subsection{Additive vs.\ Multiplicative SUN}
\label{app:sun_ucb_exp}
The analysis above concerns the additive score $V + \beta\,U_t(g)$, which differs from the multiplicative form $V \cdot 1/n_g$ used in our experiments (Section~\ref{sec:experiments}). Remark~\ref{rem:beta} makes the difference concrete: the additive rule tracks uncertainty only once $\beta$ is scaled to the target accuracy, since with $\beta = 1$ the novelty term is dominated by the range of $V$. In the tabular setting this is a mild requirement, as $\varepsilon$ is given and $V \in [0, 1]$ is known exactly. In deep RL neither holds: $\Vtheta$ is approximate, its effective range varies across environments and over training, and there is no target accuracy from which to derive $\beta$. The coefficient must therefore be tuned --- which is precisely the failure mode we observe for DISCOVER in Section~\ref{subsec:sun_better}.
SUN's multiplicative form carries no such coefficient: the two signals share a common ``zero'' (an unreachable or already-saturated goal scores zero on either factor and is rejected regardless of the other) and a common scale (both lie in $[0, 1]$).

We already validated this empirically in Section~\ref{subsec:ablation_results} (Figure~\ref{fig:sun_all_relative_improvement}) and Appendix~\ref{app:ablation_full}.
Note that for the sake of simplicity, we used the UCB1-style form $\sqrt{\log N_{\mathrm{tot}}/n_g}$, with $N_{\mathrm{tot}} = \Sigma_g n_g$, rather than the scaled bonus $\beta\,U_t(g)$ of Eq.~\eqref{eq:sun-ucb}.\footnote{Both forms decrease in $n_g$ and grow logarithmically in the total count, and $\Sigma_g n_g = T$ after $T$ exploration steps, so they differ in the placement of $n_g$ and in constants rather than in behavior. The UCB1-style coefficient does not, however, reproduce the $\varepsilon$-dependent scaling $\beta = 2/\varepsilon$ that Lemma~\ref{lem:select-uncert} requires, so the additive variant we evaluate is the unscaled one --- the regime Remark~\ref{rem:beta} predicts should fail. Its poor entropy in Figure~\ref{fig:sun_all_relative_improvement} is consistent with that prediction.}
\end{appendix}

% \clearpage
% \input{checklist.tex}

\end{document}